\newtheorem{assumption}{Assumption}
\newcommand\bovermat[2]{    \makebox[0pt][l]{$\smash{\overbrace{\phantom{                    \begin{matrix}#2\end{matrix}}}^{\text{#1}}}$}#2}
\DeclareMathOperator{\var}{Var}
\begin{document}

\title{Empirical Risk Minimization under Random Censorship:\\ Theory and Practice}

\author{\name Guillaume Ausset \email guillaume.ausset@telecom-paris.fr \\
              \addr LTCI, T\'el\'ecom Paris, Institut Polytechnique de Paris\\
              \addr BNP Paribas\\
       \AND
       \name Stephan Cl\'emen\c{c}on \email stephan.clemencon@telecom-paris.fr \\
                     \addr LTCI, T\'el\'ecom Paris, Institut Polytechnique de Paris\\       
                 \AND
                    \name Fran\c{c}ois Portier \email francois.portier@telecom-paris.fr \\
              \addr LTCI, T\'el\'ecom Paris, Institut Polytechnique de Paris}

\editor{}

\maketitle

\begin{abstract}
We consider the classic supervised learning problem, where a continuous non-negative random label $Y$ (\textit{i.e.} a random duration) is to be predicted based upon observing a random vector $X$ valued in $\mathbb{R}^d$ with $d\geq 1$ by means of a regression rule with minimum least square error. In various applications, ranging from  industrial quality control to public health through credit risk analysis for instance, training observations can be \textit{right censored}, meaning that, rather than on independent copies of $(X,Y)$, statistical learning relies on a collection of $n\geq 1$ independent realizations of the triplet $(X, \; \min\{Y,\; C\},\; \delta)$, where $C$ is a nonnegative r.v. with unknown distribution, modeling censorship and $\delta=\mathbb{I}\{Y\leq C\}$ indicates whether the duration is right censored or not. As ignoring censorship in the risk computation may clearly lead to a severe underestimation of the target duration and jeopardize prediction, we propose to consider a \textit{plug-in} estimate of the true risk based on a Kaplan-Meier estimator of the conditional survival function of the censorship $C$ given $X$, referred to as \textit{Kaplan-Meier risk}, in order to perform empirical risk minimization.  It is established, under mild conditions, that the learning rate of minimizers of this biased/weighted empirical risk functional is of order $O_{\mathbb{P}}(\sqrt{\log(n)/n})$ when ignoring model bias issues inherent to plug-in estimation, as can be attained in absence of censorship. Beyond theoretical results, numerical experiments are presented in order to illustrate the relevance of the approach developed.
\end{abstract}

\begin{keywords} Censored data, empirical risk minimization, $U$-processes, statistical learning theory, survival data analysis.
\end{keywords}

\section{Introduction}\label{sec:intro}

Covering a wide variety of practical applications, distribution-free regression can be considered as one of the flagship problems in statistical learning. In the most standard setup, $(X,Y)$ is a random pair defined on a certain probability space with (unknown) joint probability distribution $P$, where the output r.v. $Y$ is a real-valued square integrable r.v. and $X$ models some input information, valued in $\mathbb{R}^d$, supposedly useful to predict $Y$. In this context, one is interested in building a (measurable) function $f:\mathbb{R}^d\rightarrow \mathbb{R}$ minimizing  the (expected quadratic) risk
\begin{equation}\label{eq:risk}
 R_P(f)=\mathbb{E}\left[\left(Y- f(X)\right)^2\right],
\end{equation}
which is finite as soon as the r.v. $f(X)$ is square integrable. Obviously, the minimizer of \eqref{eq:risk} is the \textit{regression function} $f^{\star}(X)=\mathbb{E}[Y\mid X]$. As the distribution of $(X,Y)$ is unknown in practice, the Empirical Risk Minimization paradigm (ERM in abbreviated form, see \textit{e.g.} \cite{GKKW06}) suggests considering solutions $\widehat{f}_n$ of the minimization problem, also referred to as \textit{least squares regression},
$
\min_{f\in \mathcal{F}}\widehat{R}_n(f),
$
where $\widehat{R}_n(f)$ is a statistical estimate of the risk $R_P(f)$ computed from a training sample $\mathcal{D}_n=\{(X_1,Y_1),\; \ldots,\; (X_n,Y_n) \}$ of independent copies of $(X,Y)$. In general the empirical version
\begin{equation}\label{eq:emp_risk}
\widehat{R}_n(f)=\frac{1}{n}\sum_{i=1}^n\left( Y_i-f(X_i)\right)^2
\end{equation}
is considered. This boils down to replacing $P$ in the risk functional $R_P(\cdot)$ with the empirical distribution of the $(X_i,Y_i)$'s.   The class  $\mathcal{F}$ of predictive functions is  supposed to be of controlled complexity (\textit{e.g.} of finite {\sc VC} dimension), while being rich enough to contain a reasonable approximant of the minimizer of $R_P$, $f^{\star}(x)$.
In a framework stipulating in addition that the random variables $Y$ and $f(X)$, $f\in \mathcal{F}$, are sub-Gaussian, ERM is proved to yield rules with good generalization properties, see \textit{e.g.} \cite{GKKW06, BBM05, LM16} (notice, however, that, in heavy-tail situations, alternative strategies are preferred, refer to \cite{LM17} for instance).

In many applications such as  industrial reliability, see \cite{Mann75}, or clinical trials, the r.v. $Y$ to be predicted represents a duration, \textit{e.g.} the lifespan of a manufactured component or the time to recovery of a diseased patient, and it is far from uncommon in survival analysis that the data at disposal to learn a predictive rule are not composed of independent realizations $(X_1,Y_1),\; \ldots,\; (X_n,Y_n)$ of distribution $P$ but of observations $(X_1,\; \tilde{Y}_1,\; \delta_1),\; \ldots,\; (X_n,\; \tilde{Y}_n,\; \delta_n)$, where the observed durations are of the form
\begin{equation}\label{eq: train_data_cens}
\tilde{Y}_i=\min\{C_i,\; Y_i \}\; \text{ with } i\in\{1,\; \ldots,\; n\},
\end{equation}
the random variables $C_i$'s modelling a possible right censorship, and the $\delta_i$'s are binary variables indicating whether censorship has occurred for each duration.
Of course, other types of censorship (\textit{e.g.} left/interval/progressive censorship) can be encountered in practice and result in partially observed durations. Since the results established in this paper can be straightforwardly extended to a more general framework, focus is on the right censorship case here.
Whereas the asymptotic theory of statistical estimation based on censored data is very well documented in the literature (see \textit{e.g.} \cite{fleming+h:2011,andersen:2012} and the references therein), the issues raised by censorship in statistical learning has received much less attention and it is the major purpose of this article to investigate how ERM can be extended to this setup with sound generalization guarantees. As the empirical risk \eqref{eq:emp_risk} cannot be computed from the data available, we propose to build first a plug-in (biased) estimator of the risk \eqref{eq:risk} by means of a Kaplan-Meier type estimator of the conditional survival function of the censorship \citep{beran:1981,Dabrowska89,vankeilegom+v:1996} and minimize next the resulting risk estimate, referred to as \textit{Kaplan-Meier risk} and that can be interpreted as a weighted version of the empirical risk process based on the observations. The use of weights to account for the presence of censorship has been first considered in the seminal contributions of \cite{stute:1993,stute:1996} and refined recently in \cite{lopez:2011,lopez+p+v:2013}, where the asymptotics of such weighted averages are studied. In this paper, more in the spirit of the popular statistical learning theory of empirical risk minimization, nonasymptotic maximal deviation bounds for this risk functional, much more complex than a basic empirical process due to the strong dependency exhibited by the terms averaged to compute it, are established by means of linearization techniques combined with concentration results pertaining to the theory of $U$-processes. We prove that, under appropriate conditions, minimizers of the Kaplan-Meier risk proposed have good generalization properties, achieving learning rate bounds of order $O_{\mathbb{P}}(\sqrt{\log(n)/n})$ when ignoring the model bias impact on the plug-in estimation step, as ERM in absence of any censorship. Beyond this theoretical analysis, illustrative numerical results are also displayed, providing strong empirical evidence of the relevance of the approach promoted. They reveal in particular that, even if the estimator of the conditional survival function plugged is only moderately accurate, Kaplan-Meier risk minimizers significantly outperform approaches ignoring censorship. Eventually, we point out that some of the results established in this paper have been preliminarily presented in an elementary form at the 2018 NeurIPS ML4Health Workshop, see \cite{ML4H}.

The rest of the paper is organized as follows. The framework we consider for statistical learning based on censored training data is detailed in section \ref{sec:prel}, where notions pertaining to  survival data analysis involved in the subsequent study are also briefly recalled and a nonasymptotic uniform bound for a kernel-based Kaplan-Meier estimator of the conditional survival function of the censorship is also stated.  In section \ref{sec:main}, the statistical version of the expected quadratic risk we propose, based on the conditional Kaplan-Meier estimator previously studied, is introduced and the performance of its minimizers is analysed. Illustrative numerical results are displayed in section \ref{sec:exp}, while several concluding remarks are collected in section \ref{sec:conclusion}. Technical proofs are postponed to the Appendix section.
 
\section{Background - Preliminaries}\label{sec:prel}

In this section, we first describe at length the probabilistic setup considered in this paper and recall basic concepts of \textit{censored data analysis}, which the subsequent analysis relies on, such as (conditional) Kaplan-Meier estimation. Next, we establish a nonasymptotic bound for the deviation between the conditional survival function of the random censorship and its Kaplan-Meier estimator under adequate smoothness assumptions. Here and throughout, the indicator function of any event $\mathcal{E}$ is denoted by $\mathbb{I}\{ \mathcal{E}\}$, the Dirac mass at any point $x$ by $\delta_x$. When well-defined, the convolution product between two real-valued Borelian functions on $\mathbb{R}^d$ $g(x)$ and $w(x)$ is denoted by $(g\ast w)(x) = \int_{x'\in \mathbb{R}^d}g(x-x')w(x')dx'$. The left-limit at $s>0$ of any c\`adla\`g function $S$ on $\mathbb{R}_+$ is denoted by $S(s-)=\lim_{t\uparrow s}S(t)$.

\subsection{The Statistical Framework}
In this paper,  we consider a pair $(X,Y)$ of random variables defined on the same probability space $(\Omega,\; \mathcal{A},\mathbb{P})$, with unknown joint distribution $P$ and where $Y$, representing a duration, takes nonnegative values only and $X$ models some information valued in $\mathbb{R}^d$, $d\geq 1$, a priori useful to predict $Y$. We assume that $X$'s marginal distribution has a density $g(x)$ w.r.t. Lebesgue measure on $\mathbb{R}^d$. We are concerned with building a prediction rule $f:\mathbb{R}^d\rightarrow \mathbb{R}_+$ with minimum expected quadratic risk $R_P(f)$, see Eq. \eqref{eq:risk}, based on a training dataset $\tilde{D}_n=\{(X_1,\; \tilde{Y}_1,\; \delta_1),\; \ldots,\; (X_n,\; \tilde{Y}_n,\; \delta_n)\}$ composed of $n\geq 1$ independent realizations of the random triplet $(X,\; \tilde{Y},\; \delta)$, where $\tilde{Y}=\max\{Y,\; C\}$, $C$ is a nonnegative r.v. defined on $(\Omega,\; \mathcal{A},\mathbb{P})$ and $\delta=\mathbb{I}\{Y\leq C\}$ indicates whether the duration is (right) censored ($\delta=0$) or not ($\delta=1$). The following hypothesis is required in the present study.
\begin{assumption}\label{hyp:cond_ind} {\sc (Conditional independence)}
The random variables $Y$ and $C$ are conditionally independent given the input $X$ and we have $Y \neq C$ with probability one.
\end{assumption}
 Naturally, many other types of censorship can be encountered in practice. However, since the goal  of the present paper is to explain the main ideas to apply the ERM principle to censored data rather than dealing with the problem at the highest level of generality, we restrict our attention to the type of right random censorship introduced above. Though simple, it covers many situations. Addressing the problem in a more complex probabilistic framework, where $Y$ and $C$ are not conditionally independent given $X$ anymore for instance, will be the subject of future research. The assumption stipulating that $\{Y=C\}$ is a zero-probability event is quite general, insofar as it allows considering situations where $Y$ and/or $C$ are discrete variables. Under conditional independence, it is obviously satisfied when the r.v. $Y$ is continuous.

 Easy to state but difficult to solve, the statistical learning problem we consider here is of considerable importance. In a wide variety of applications, the input information is of increasing granularity  and described by a random vector of very large dimension $d$, while (censored) data are progressively becoming massively available. Machine-learning techniques are thus expected  to complement traditional approaches, based on statistical modelling, in order to produce more flexible/accurate predictive models based on censored data. Incidentally, we point out that the problem under study can be viewed as a very specific type of \textit{transfer learning} problem, see \textit{e.g.} \cite{5288526} insofar as,
due to the censorship, the distribution of the training/source data is not that of the test/target data. However, the source domain coincides here with the target one and the predictive task (regression) remains the same.
\medskip

\noindent {\bf Weighted empirical risk.} Discarding censored observations to evaluate the risk of a candidate function $f(x)$ would lead to the quantity
\begin{equation}\label{eq:naive_risk}
\bar{R}_n(f)=\sum_{i=1}^n\delta_i\left( \tilde{Y}_i-f(X_i) \right)^2/\sum_{i=1}^n\delta_i,
\end{equation}
with $0/0=0$ by convention, which is clearly a biased estimate of $R_P(f)$ in general, since, by virtue of the strong law of large numbers, it converges to $\mathbb{E}[(Y-f(X))^2\mid Y\leq C]$ with probability one. One may easily check that the minimizer of this functional is given by
$$
\bar{f}^*(X)=\mathbb{E}[Y\mathbb{I}\{Y\leq C \}\mid X]/\mathbb{P}\{Y\leq C\mid X\},
$$
which significantly differs from $f^*(X)$ in general. Observing that, by means of a straightforward conditioning argument, one can write the risk as
\begin{equation}\label{eq:risk2}
R_P(f)=\mathbb{E}\left[\frac{\delta(\tilde{Y}-f(X))^2 }{S_{C}( \tilde{Y}-\mid X)} \right],
\end{equation}
where $S_{C}(u\mid x)=\mathbb{P}\{ C> u \mid X=x \}$ denotes the conditional survival function of the random right censorship given $X$, we propose to estimate the risk \eqref{eq:risk} by computing first a nonparametric estimator $\hat{S}_{C}(u\mid x)$ of $S_{C}(u\mid x )$ and by plugging it next into \eqref{eq:risk2}, so as to obtain
\begin{equation}\label{eq:KMemp_risk}
\widetilde{R}_n(f)=\frac{1}{n}\sum_{i=1}^n\frac{\delta_i(\tilde{Y}_i-f(X_i))^2}{\hat{S}_{C}(\tilde{Y}_i - \mid X_i)},
\end{equation}
which approximates the unknown quantity whose expectation is equal to \eqref{eq:risk2}
\begin{equation}\label{eq:emp_risk2}
\frac{1}{n}\sum_{i=1}^n\frac{\delta_i(\tilde{Y}_i-f(X_i))^2}{S_{C}(\tilde{Y}_i - \mid X_i)},
\end{equation}
the conditional survival function of $C$ given $X$ being itself unknown.
Observe that the risk estimate \eqref{eq:KMemp_risk} can be viewed as a \textit{weighted version} of the sum of the observed squared errors $(\tilde{Y}_i-f(X_i))^2$, just like \eqref{eq:naive_risk} except that the $i$-th weight is not $\delta_i/\sum_{j\leq n}\delta_j$ anymore but $\delta_i/\hat{S}_{C}(\tilde{Y}_i - \mid X_i)$. In the terminology of survival analysis, the weighted empirical risk \eqref{eq:KMemp_risk} is usually referred to as an IPCW risk estimate, IPCW standing for \textit{inverse of the probability of censoring weight}, \textit{i.e.} the squared error related to the observation $(X_i,\tilde{Y}_i)$ being weighted by the inverse of the conditional probability of not being censored.
A natural strategy to learn a predictive function in the censored framework described above then consists in solving the minimization problem
\begin{equation}\label{eq:ERM_KM}
\inf_{f\in \mathcal{F}}\widetilde{R}_n(f),
\end{equation}
over an appropriate class $\mathcal{F}$. When using the Kaplan-Meier approach (\textit{cf} \cite{KM58}) to estimate $S_{C}(u\mid x)$, as detailed in the next subsection, the functional \eqref{eq:KMemp_risk} is referred to as the Kaplan-Meier risk throughout the article. Based on accuracy results for kernel-based Kaplan-Meier estimators of the conditional survival function $S_{C}(\cdot \mid x)$ such as those subsequently presented, the performance of solutions of \eqref{eq:ERM_KM} is investigated in the next section. We point out that, as highlighted in section \ref{sec:exp}, alternative inference strategies for conditional survival function estimation can be considered. For simplicity, here we restrict our attention to kernel-smoothing techniques, although the analysis carried out can be extended to other nonparametric methods (\textit{e.g.} partition-based techniques, nearest neighbours).
\medskip

\noindent {\bf Integration domain.} As any (conditional) survival function, $S_C(y\mid x)$ vanishes as $y$ tends to infinity. In order to avoid dealing with the asymptotic behaviour of the conditional survival function of the censorship and stipulating decay rate assumptions for its tail behaviour, in the analysis carried out in section \ref{sec:main}  we restrict the study of the prediction problem to a (borelian) domain $\mathcal{K}\subset \mathbb{R}_+\times \mathbb{R}^d$ such that  $S_C(y\mid x)$ stays bounded away from $0$ on it and consider the risk
\begin{equation}\label{eq:risk_K}
R_{P,\mathcal{K}}(f)=\mathbb{E}\left[ \frac{\delta \left(\tilde{Y}-f(X)\right)^2}{S_C(\tilde{Y}-\mid X)}\mathbb{I}\{ (\tilde{Y}, X)\in \mathcal{K} \} \right],
\end{equation}
as well as its Kaplan-Meier counterpart
\begin{equation}\label{eq:KMemp_risk_K}
\frac{1}{n}\sum_{i=1}^n\frac{\delta_i(\tilde{Y}_i-f(X_i))^2}{S_{C}(\tilde{Y}_i - \mid X_i)}\mathbb{I}\{(\tilde{Y}_i, X_i)\in \mathcal{K}  \}.
\end{equation}

\noindent {\bf Related work.}
Because the risk considered here can be expressed as an integral with respect to the joint distribution of $(Y,X)$, the predictive problem under study can be linked to other works, dealing with the estimation of the joint distribution of $(Y,X)$ in particular. This problem is investigated in \cite{stute:1993,stute:1996} where the authors propose a weighted approach based on the estimation of the conditional survival function of $C$ given $X$. Incidentally, observe that, even if the censorship model is free from any parametric modelling, the assumptions involved in this analysis are quite strong as the distribution of $C$ is supposed to be independent from $X$. In particular, the weights used are independent from $X$. Application to parametric predictive modelling such as \textit{linear} regression is also considered. Other approaches are considered in  \cite{akritas:1994,vankeilegom+a:1999}, where the joint distribution estimator is computed from an empirical average over $X$ of the Kaplan-Meier estimate of the conditional distribution of $Y$ given $X$.
In \cite{lopez:2011}, the author proposes a kernel-based weighted method, more general than that proposed in \cite{stute:1993,stute:1996} relaxing in particular the restrictive assumption on the dependence between $C$ and $X$.  An asymptotic representation of the estimation error is established when the input variable is univariate ($d=1$). An extension with a single index model is considered in \cite{lopez+p+v:2013}. The proof technique is based on the asymptotic equicontinuity of the empirical process and imposes strong conditions on the bandwidth choice, e.g.  $nh^{3}\to \infty $ (see Theorem 3.3 in \cite{lopez:2011} and Theorem 3.1 in \cite{lopez+p+v:2013}). The (nonasymptotic) analysis carried out in this paper is quite different, since it is carried out in two steps: 1) linearize the risk estimate and 2) use concentration results for generalized $U$-processes to describe its behaviour (see \textit{e.g.} \cite{clemenccon+p:2018}). Notice additionally that the approach we adopted to establish nonasymptotic rate bounds requires weaker conditions, only that $nh^d/|\log(h)|\to \infty$ in the $d$-dimensional case.
Similar approaches were proposed in \cite{bang+t:2002} and \cite{orbe_comparing_2002} where $S(Y \mid X)$ is modelled in a parametric fashion and the Kaplan-Meier (KM) risk formulation with nonparametric Kaplan-Meier weights is then used to estimate the parameters. Alternatively, it is possible to use parametric estimate of  $S_C(Y \mid X)$ (instead of KM) in order to obtain an estimator of a certain risk, as in \cite{rotnitzky_recovery_1992, van_der_laan_unified_2003} for instance. Other related approaches can be found in \cite{gerds_kaplan-meier_2017}.

\subsection{Preliminary Results}\label{subsec:prel}
 In this subsection, we briefly recall the Kaplan-Meier approach to estimate a (conditional) survival function by means of a kernel smoothing procedure and state a uniform bound for the deviations between the conditional survival function of $C$ given $X$ and its Kaplan-Meier estimator, involved in statistical learning framework developed in the next section for distribution-free censored regression. As shall be discussed below, this result refines those obtained in \cite{Dabrowska89} and \cite{du+a:2002}, which are of similar nature, except that they are related to the estimation of the conditional survival function of the duration $Y$ given $X$, denoted by $S_Y(u\mid x)=\mathbb{P}\{ Y>u \mid X=x\}$, rather than that of the conditional survival function of the censorship $C$ given $X$. Define the conditional integrated hazard function of the right censorship $C$ given $X$
\begin{align}\label{eq:cond_int_hazard}
\Lambda_C(u  \mid x) &= - \int_0^u \frac{S_C(ds\mid x)}{S_C(s-\mid x)} .
\end{align}
and the conditional subsurvival functions
$H(u \mid x) = \mathbb{P} \{ \tilde{Y} > u \mid X=x\}$ and
  $H_0 (u \mid x) = \mathbb{P} \{ \tilde{Y} >u,\;  \delta = 0 \mid X=x\}$ for $u\geq 0$ and $x\in \mathbb{R}^d$.
  As we have (under Assumption \ref{hyp:cond_ind}), $H_0(du\mid x)=S_Y(u-\mid x)S_C(du\mid x)$ and $H(u- \mid x)=S_Y(u- \mid x)S_C(u-\mid x)$, we obtain
\begin{align*}
\Lambda_C(u  \mid x) &=- \int_0^u \frac{H_0(ds\mid x)}{H(s -\mid x)}.
\end{align*}
Here, we propose to build an estimate of $\Lambda_{C}(u\mid x)$ by plugging into formula \eqref{eq:cond_int_hazard} Nadaraya-Watson type kernel estimates of the conditional subsurvival functions and derive from it an estimator of $S_{C}(u\mid x)$. Of course, alternative estimation techniques can be considered for this purpose.
 Throughout the paper, $K:\mathbb{R}^d\rightarrow \mathbb R^+$ is a symmetric bounded \textit{kernel function}, \textit{i.e.} a bounded nonnegative Borelian function, integrable w.r.t. Lebesgue measure such that $\int K(x)dx=1$, $K(x)=K(-x)$ for all $x\in \mathbb{R}^d$, see \cite{WandJones94}. We assume it lies in the linear span of functions $w$, whose subgraphs $\{(s, u): w(s) \geq u\}$, can be represented as a finite number of
Boolean operations among sets of the form $\{(s, u): p(s,u)\geq \zeta(u)\}$, where $p$ is
a polynomial on $\mathbb{R}^d \times \mathbb{R}$ and $\zeta$ an arbitrary real-valued function. This assumption guarantees that the collection of functions
$$
\{  K((x-\cdot)/h):\; x\in \mathbb{R}^d,\; h>0\}
$$
is a bounded {\sc VC} type class, see \cite{gine2004}.
Although very technical at first glance, this hypothesis is very general and is satisfied by kernels of the form $K(x) = \zeta(p(x))$, $p$ being any polynomial and $\zeta$ any
bounded real function of bounded variation (see \cite{nolan+p:1987}) or when the graph of $K$ is a pyramid (truncated or not).
For any bandwidth $h>0$ and $x\in \mathbb R^d$, we set $K_h(x)=K(h^{-1}x)/h^d$. Based on the kernel estimators given by
\begin{eqnarray}
 \hat{H}_{0,n}(u,x)&=& \frac{1}{n}\sum_{i=1}^n \mathbb{I}\{\tilde{Y}_i>u,\;  \delta_i=0\}K_h(x-X_i),\label{eq:kern1}\\
 \hat{H}_n(u,x)&=& \frac{1}{n}\sum_{i=1}^n \mathbb{I}\{\tilde{Y}_i>u\}K_h(x-X_i),\label{eq:kern2}\\
\hat{g}_n(x) &=& \frac{1}{n}\sum_{i=1}^n K_h(x-X_i),\label{eq:kern3}
 \end{eqnarray}
 define the conditional subsurvival function estimates
 \begin{equation*}
 \hat{H}_{0,n}(u\mid x)=\frac{\hat{H}_{0,n}(u,x)}{\hat{g}_n(x)} \text{ and }
  \hat{H}_n(u\mid x)=\frac{\hat{H}_{n}(u,x)}{\hat{g}_n(x)},
 \end{equation*}
as well as the (biased) estimators of $\Lambda_C(u\mid x)$ and $S_C(u\mid x)$
\begin{eqnarray}
\hat{\Lambda}_{C,n}(u\mid x)&=& - \int_0^u  \frac{\hat{H}_{0,n}(ds\mid x)}{\hat{H}_n(s-\mid x)},\label{eq:CNA_est}\\
\hat{S}_{C,n}(u\mid x)&=&\prod_{s\leq u} \left(1-d\hat{\Lambda}_{C,n}(s\mid x) \right)\label{eq:CKM_est}
\end{eqnarray}
which are classically referred to as the conditional Nelson-Aalen and Kaplan-Meier estimators \citep{Dabrowska89}.
 Let $b>0$ and define the set
\begin{align*}
{\Gamma_{b}=\left\{ (y,x) \in \mathbb R_+ \times \mathbb R^d \,:\, S_Y(y|x) \wedge S_C(y|x) \wedge g(x) \geq b \right\},}
\end{align*}
which is supposed to be non-empty. On this set, one may guarantee that $\hat H_{0,n}(y, x)$ and $\hat H_{0,n}(y, x)$  are both away from $0$ with high probability, which permits the study of the fluctuations of \eqref{eq:CKM_est}. The mild H\"older smoothness assumption below is also required in the analysis, the definition of  H\"older classes is recalled in the Appendix section for completeness.

\begin{assumption}\label{hyp:smooth} For all $u\in \mathbb R_+$, the functions $x\mapsto H(u\mid x)g(x)$ and $x \mapsto H_0(u\mid x)g(x)$ belong to the H\"older class $\mathcal{H}_{2, L}( \mathbb{R}^d)$.
\end{assumption}
\begin{assumption}\label{hyp:bound1} The density $g$ is bounded by $R<+\infty$, \textit{i.e.} $\vert\vert g\vert\vert_{\infty}\leq R$.
\end{assumption}
The result stated below provides a uniform bound for the deviation between $S_C(u \mid x)$ and its estimator \eqref{eq:CKM_est}.

\begin{proposition}\label{prop:bound_CKM} Suppose that Assumptions \ref{hyp:cond_ind}, \ref{hyp:smooth} and \ref{hyp:bound1} are fulfilled. Then, there exist constants $M_1>0$, $M_2>0$ and $h_0>0$ depending on $b$, $R$, and $K$ only such that, for all $\epsilon\in (0,1)$, we have with probability greater than $1-\epsilon$:
\begin{equation*}
\sup_{(t,x) \in {\Gamma_{b}}} | \hat S_{C,n}(t\mid x) -  S_{C}(t\mid x)|  \leq  M_1 \times
\left\{ \sqrt{\frac{|\log(  h^{d/2} \epsilon)|  }{nh^d}} + h^2\right\},
\end{equation*}
as soon as $h\leq h_0$ and $nh^d\geq M_2 |\log (h^{d/2} \epsilon)|  $.
\end{proposition}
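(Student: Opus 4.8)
The plan is to decompose the deviation $\hat S_{C,n}(t\mid x) - S_C(t\mid x)$ through the standard ``product-integral'' machinery of survival analysis, reducing the analysis of the Kaplan-Meier estimator to that of the underlying Nelson-Aalen (integrated hazard) estimator, and then reducing the latter to uniform control of the kernel estimates $\hat H_{0,n}$, $\hat H_n$, $\hat g_n$ of the conditional subsurvival functions and the marginal density. First I would invoke a Duhamel-type identity (as in \cite{Dabrowska89, gill+j:1990}): on the set $\Gamma_b$, since the relevant survival functions are bounded below by $b$, there is a Lipschitz-type relationship
\begin{equation*}
\sup_{(t,x)\in\Gamma_b}\left| \hat S_{C,n}(t\mid x) - S_C(t\mid x)\right| \;\leq\; C_b \,\sup_{(t,x)\in\Gamma_b}\left| \hat\Lambda_{C,n}(t\mid x) - \Lambda_C(t\mid x)\right|,
\end{equation*}
valid on the high-probability event where the denominators $\hat H_n$ stay away from zero; the constant $C_b$ depends only on $b$ and on a uniform bound for the total mass of the hazard measures on $\Gamma_b$, which is finite because $-\log b$ bounds $\Lambda_C$ there. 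This is a deterministic step once the good event is in force.

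Next I would control $\hat\Lambda_{C,n}(t\mid x) - \Lambda_C(t\mid x)$. Writing $\hat\Lambda_{C,n}(t\mid x) = -\int_0^t \hat H_{0,n}(ds\mid x)/\hat H_n(s-\mid x)$ and $\Lambda_C(t\mid x) = -\int_0^t H_0(ds\mid x)/H(s-\mid x)$, I would add and subtract to split the error into (i) a term coming from replacing $\hat H_n(s-\mid x)$ by $H(s-\mid x)$ in the denominator, which is handled by the uniform lower bound $H(s-\mid x)\geq b^2$ on $\Gamma_b$ together with $\sup|\hat H_n - H|$, and (ii) a term $\int_0^t (\hat H_{0,n}-H_0)(ds\mid x)/H(s-\mid x)$, which, after integration by parts, reduces to $\sup|\hat H_{0,n}(\cdot\mid\cdot) - H_0(\cdot\mid\cdot)|$ on $\Gamma_b$ times the total variation of $s\mapsto 1/H(s-\mid x)$, again controlled by $b$. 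In both cases the key quantities are the uniform deviations of the \emph{conditional} subsurvival estimates, which in turn reduce to uniform deviations of the \emph{unconditional} kernel quantities $\hat H_{0,n}(u,x) - \mathbb{E}[\hat H_{0,n}(u,x)]$, $\hat H_n(u,x) - \mathbb{E}[\hat H_n(u,x)]$, $\hat g_n(x) - \mathbb{E}[\hat g_n(x)]$ (via the elementary identity $\hat H_{0,n}(u\mid x) - H_0(u\mid x) = (\hat H_{0,n}(u,x) - H_0(u\mid x)\hat g_n(x))/\hat g_n(x)$, and similarly for $\hat H_n$), plus the deterministic bias terms $\mathbb{E}[\hat H_{0,n}(u,x)] - H_0(u\mid x)g(x)$ etc.

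The bias terms are where Assumptions \ref{hyp:smooth} and \ref{hyp:bound1} enter: since $x\mapsto H_0(u\mid x)g(x)$ and $x\mapsto H(u\mid x)g(x)$ lie in $\mathcal{H}_{2,L}(\mathbb{R}^d)$, a second-order Taylor expansion together with the symmetry $\int K(x)\,dx = 1$, $\int xK(x)\,dx = 0$ gives $\sup_{u,x}\bigl|\mathbb{E}[\hat H_{0,n}(u,x)] - H_0(u\mid x)g(x)\bigr| \leq C h^2$, accounting for the $h^2$ term in the bound; the same for $\hat H_n$ and $\hat g_n$. For the stochastic terms, the point is that each of $\hat H_{0,n}(u,x)$, $\hat H_n(u,x)$, $\hat g_n(x)$ is an average of i.i.d.\ terms indexed by the class $\{(\tilde Y, \delta, X)\mapsto \mathbb{I}\{\tilde Y > u\}\mathbb{I}\{\delta = 0\}K_h((x-\cdot)/h) : u\geq 0, x\in\mathbb{R}^d\}$, which by the VC-type assumption on $K$ and the fact that indicators of half-lines form a VC class, is itself a bounded VC-type class with envelope $\|K\|_\infty/h^d$ and $L^2$-size $O(h^{-d/2})$. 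Applying a uniform concentration inequality for such classes (a Talagrand/Bousquet-type bound, cf.\ \cite{gine2004, einmahl+m:2005}, as used in the $U$-process literature cited) yields, with probability at least $1-\epsilon$,
\begin{equation*}
\sup_{u,x}\left| \hat H_{0,n}(u,x) - \mathbb{E}[\hat H_{0,n}(u,x)]\right| \;\leq\; C\sqrt{\frac{|\log(h^{d/2}\epsilon)|}{n h^d}},
\end{equation*}
and likewise for the other two kernel averages, provided $nh^d \geq M_2|\log(h^{d/2}\epsilon)|$ so that the variance term dominates the bounded-difference term. Collecting: on $\Gamma_b$ one has $\hat g_n(x)\geq b/2$ with high probability (from the density deviation bound once $h\leq h_0$ and $n h^d$ is large), hence the conditional deviations inherit the rate $\sqrt{|\log(h^{d/2}\epsilon)|/(nh^d)} + h^2$, and propagating this through the hazard and the product-integral steps preserves the rate up to the constant $M_1$. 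The main obstacle, I expect, is the bookkeeping of the product-integral/Duhamel step uniformly over $(t,x)\in\Gamma_b$ — ensuring the constant $C_b$ genuinely depends only on $b$ (not on $t$ or the unknown distributions) requires a uniform-in-$x$ bound on the total variation of the hazard measures on $\Gamma_b$, and one must be careful that the ``good event'' on which denominators stay away from zero is itself high-probability and handled with the same concentration budget $\epsilon$.
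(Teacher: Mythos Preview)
Your proposal is correct and follows essentially the same route as the paper: pass from the survival-function deviation to the hazard deviation via a Duhamel/product-integral Lipschitz bound (the paper's Lemma~\ref{lemma:survival}, giving constant $2/b$), then from the hazard deviation to the subsurvival deviations (Lemma~\ref{lemma:lambda}), and finally control bias by the H\"older assumption (Lemma~\ref{lem:bias}) and the stochastic part by VC-type concentration over the kernel class (Lemma~\ref{lemma:bound_prob}). One minor simplification the paper exploits that you can use to avoid your detour through $\hat g_n$: since $\hat g_n(x)$ cancels in the ratio $\hat H_{0,n}(ds\mid x)/\hat H_n(s-\mid x)=\hat H_{0,n}(ds,x)/\hat H_n(s-,x)$, the hazard-difference lemma is applied directly to the \emph{unnormalized} kernel sums $\hat H_{0,n}(\cdot,x)$, $\hat H_n(\cdot,x)$ versus $H_0(\cdot\mid x)g(x)$, $H(\cdot\mid x)g(x)$, so no separate control of the density estimator is needed.
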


The technical proof is given in the Appendix section (refer to the latter for a description of the constants $C_1$, $C_2$ and $h_0$ involved in the result stated above). A similar result, for the conditional survival function of $Y$ given $X$, is proved in \cite{Dabrowska89}, see Theorem 2.1 therein. Observe also that choosing $h=h_n\sim n^{-1/(d+4)}$ yields a rate bound of order $O_{\mathbb{P}}(\sqrt{\log(n)/n^{d/(d+4)}})$.
Finally, as previously mentioned, alternative (local averaging) methods could be used to compute estimators of $H_0(u,x)$, $H(u,x)$ and $g(x)$ and consequently estimators of $S_C(u\mid x)$ and $\Lambda_C(u\mid x)$, including \textit{$k$-nearest neighbours}, \textit{decision trees} or \textit{random forest}. Refer to section \ref{sec:exp} for further details.
 
\section{Generalization Bounds for Kaplan-Meier Risk Minimizers}\label{sec:main}

It is the purpose of this section to investigate the excess of risk \eqref{eq:risk_K} related to a domain $\mathcal{K}\subset \mathbb{R}_+\times \mathbb{R}^d$ of minimizers $\widetilde{f}_n(x)$ of the Kaplan-Meier risk \eqref{eq:KMemp_risk_K} over a class $\mathcal{F}$ of predictive functions that is of controlled complexity (see the technical assumptions below), while being rich enough to yield a small bias $R(f^*)-R(\bar{f}^*)$, denoting $R_{P,\mathcal{K}}(\cdot)$ by $R(\cdot)$ for simplicity throughout the present section.
We consider here the situation where, for all $i\in\{1,\; \ldots,\; n\}$, the estimate of the quantity $S_C(\tilde{Y}_i\mid X_i)$ plugged into \eqref{eq:emp_risk2} is obtained by evaluating the kernel smoothing estimator of $S_C(y\mid x)$ investigated in subsection \ref{subsec:prel} and based on the subsample $\{(X_j,\; \tilde{Y}_j,\; \delta_j):\; 1\leq j\leq n,\; j\neq i  \}$ at $(y,x)=(\tilde{Y}_i, X_i)$. The corresponding versions of the kernel estimators \eqref{eq:kern1}, \eqref{eq:kern2}, \eqref{eq:kern3} and those of \eqref{eq:CNA_est} and \eqref{eq:CKM_est} are respectively denoted by $\hat{H}^{(i)}_{0,n}(y\mid x)$, $\hat{H}^{(i)}_{n}(y\mid x)$, $\hat{g}^{(i)}_n(x)$, $\hat{\Lambda}^{(i)}_{C,n}(y\mid x)$ and $\hat{S}^{(i)}_{C,n}(y\mid x)$. This yields the \textit{leave-one-out} estimator of the risk of any candidate $f$
\begin{equation}\label{eq:KMemp_risk2}
\widetilde{R}_n(f)=\frac{1}{n}\sum_{i=1}^n\frac{\delta_i(\tilde{Y}_i-f(X_i))^2}{\hat{S}^{(i)}_{C,n}(\tilde{Y}_i - \mid X_i)}\mathbb{I}\{ (\tilde{Y}_i, X_i)\in \mathcal{K} \},
\end{equation}
that is well-defined on the event $\bigcap_{i=1}^n \{ \hat{S}^{(i)}_{C,n}(\tilde{Y}_i - \mid X_i)>0  \}$. As we clearly have
$$
R(\widetilde{f}_n)-\inf_{f\in \mathcal{F}}R(f)\leq 2 \sup_{f\in \mathcal{F}}\left\vert \widetilde{R}_n(f)-R(f) \right\vert,
$$
the key of the analysis is the control of the fluctuations of the process $\{ \widetilde{R}_n(f)-R(f)  :\; f\in \mathcal{F}  \}$.
Slightly more generally, we establish below a uniform deviation bound for processes of type
\begin{equation*}
  Z_n(\varphi) = \left( \frac{1}{n}\sum_{i=1}^n\frac{\delta_i  \varphi(\tilde Y_i, X_i)}{\hat{S}^{(i)}_{C,n} (\tilde{Y}_i- \mid X_i)}\right) - \mathbb{E} \left[ \varphi(Y , X ) \right],\;\; \varphi \in \Phi,
\end{equation*}
where the indexing class $\Phi$ fulfils the following property.
\begin{assumption}\label{hyp:control}
There exists a domain $\mathcal{K}\subset \Gamma_{b}$ such that $\varphi(y,x)=0$ as soon as $(y,x)\notin \mathcal{K}$ for all $\varphi \in \Phi$.
\end{assumption}
Equipped with these notations, observe that $\widetilde{R}_n(f)-R(f) =Z_n(\varphi)$ when $\varphi(Y,X)=(Y-f(X))^2\mathbb{I}\{ (\tilde{Y}, X)\in \mathcal{K} \}$.
\medskip

\noindent {\bf Linearization.} Whereas in the standard regression framework or in classification ERM can be straightforwardly studied by means of maximal deviation inequalities for empirical processes, the form of the process $\{Z_n(\varphi):\; \varphi \in \Phi \}$ of interest is very complex since the terms averaged in \eqref{eq:KMemp_risk} are obviously far from being independent due to the presence of the plugged leave-one-out estimators of the quantities $S_C(\tilde{Y}_i-\mid X_i)$.
Our approach to the study of the fluctuations of the process $Z_n$ consists in linearizing the statistic $Z_n(\varphi)$, \textit{i.e.} approximating $Z_n(\varphi)$ by a standard i.i.d. average in the $L_2$-sense, as stated in the next proposition. The theory of $U$-processes is used next to describe the uniform behaviour of the residual. Such concentration results are also used in \cite{CLV08} and \cite{PCB16} in simpler situations, where the residuals take the form of a degenerate $U$-statistic, see \cite{GinePena}.
In order to make this decomposition explicit, further notations are needed.  Define
\begin{eqnarray*}
H_{0,h}(y,x) &=&\mathbb{E}\left[ \hat{H}_{0,n}(y,x) \right],\\
 H_{h}(y,x)&=&\mathbb{E}\left[ \hat{H}_{n}(y,x) \right],
\end{eqnarray*}
as well as the conditional hazard function
\begin{equation*}
 {\Lambda}_{C,h}(u\mid x)  =  - \int_{s=0}^u\frac{ d{H}_{0,h}(s,  x)}{{H}_h(s-, x)},
\end{equation*}
and the related conditional survival function ${S}_{C,h}(t\mid x)$ and $c_h(s\mid x)  =  S_{C,h} (s-\mid x) /S_{C,h} (s\mid x)$.
 We also set
 \begin{align*}
 \hat  \Delta^{(i)}_{n} (u\mid x)&=\hat{\Lambda}^{(i)}_{C,n}(u\mid x)-\Lambda_{C,h}(u\mid x),\\
\hat a_{n}^{(i)}(t\mid x)& = - \int_0^t \frac{ c_{h} (u\mid x)   }{H_h(u,x) }    d ( \hat H_{0,n}^{(i)} (u, x)  - H_{0,h} (u, x)) \\
&\qquad  +  \int_0^t  \frac{  c_{h} (u\mid x)   }{   H_h(u,x)^2 } ( \hat H_n^{(i)}(u,x) -H_h(u,x) )   d   \hat H_{0,n}^{(i)} (u, x)  ,\\
\hat b_{n}^{(i)}(t\mid x)& =  -\int_0^t  \frac{  c_{h} (u\mid x)   }{  H_h(u,x)^{2}\hat H^{(i)}_n(u,x)} ( \hat H_n^{(i)}(u,x) -H_h(u,x) )^2   d   \hat H^{(i)}_{0,n} (u, x) \\
&\qquad  -  \int_0^t ( \hat S_{C,n}^{(i)} (u - \mid x) -  S_{C,h} (u - \mid x) ) d  \hat  \Delta^{(i)}_{n} (u\mid x),
\end{align*}
for all $i\in\{1,\; \ldots,\; n\}$. Equipped with these notations, we can now state the following result.

\begin{proposition}\label{prop:decomp}{\sc (KM risk decomposition)} Suppose that Assumptions \ref{hyp:cond_ind}, \ref{hyp:smooth}, \ref{hyp:bound1} and \ref{hyp:control} are fulfilled.
There exist constants $h_0>0$ and $M_1>0$ that depends on $b$, $R$ and $K$ only such that
\begin{itemize}
\item[(i)] $\forall (y,x)\in \mathcal{K}$, $S_{C,h}(y\mid x)\geq {b/2}$, $H_{h}(y, x)\geq {3 b^3/4}$, provided that $h\leq h_0$.
\item[(ii)] Moreover, for any $n\geq 2$ and $\epsilon\in (0,1)$, provided that $h\leq h_0$ and $nh^d\geq M_1 |\log(h^{d/2} \epsilon)|$, the event
$$\mathcal{E}_n\overset{def}{=}\bigcap_{i\leq n}\left\{\forall (t,x)\in \mathcal{K},\;\; \hat{S}^{(i)}_{C,n}(t,x)\geq b/2 \text{ and } \hat H^{(i)}_n(t,x)\geq  3 b^3 / 4 \right\}$$
 occurs with probability greater than $1-\epsilon$.
\item[(iii)]
For all $\varphi \in \Phi$ and $n\geq 2$, we have on the event $\mathcal{E}_n$:
$$
 Z_n(\varphi)=B_n(\varphi) + L_n(\varphi) + V_n(\varphi) + R_n(\varphi),
$$
where
\begin{align*}
&B_n(\varphi) =   \mathbb{E}  \left[ \delta  \frac{\varphi(\tilde Y, X)}{{S}_{C, h}(\tilde Y \mid X)} \right]  -  \mathbb{E}  \left[ \delta  \frac{\varphi(\tilde Y, X) }{{S}_{C}(\tilde Y \mid X)} \right], \\
& L_n(\varphi) = \frac{1}{n} \sum_{i=1}^n  \left( \delta_i  \frac{ \varphi(\tilde Y_i , X_i)}{ {S}_{C, h}(\tilde Y_i \mid X_i)} - \mathbb{E}  \left[ \delta \frac{ \varphi(\tilde Y_i , X_i)}{{S}_{C, h}(\tilde Y \mid X)} \right] \right), \\
& V_n(\varphi) =  - \frac{1}{n} \sum_{i=1}^n \delta_i  \varphi(\tilde Y_i , X_i) \frac{\hat  a_{n}^{(i)} (\tilde Y_i \mid X_i) }{{S}_{C, h} (\tilde Y_i \mid X_i)}, \\
&R_n (\varphi)=   \frac{1}{n} \sum_{i=1}^n  \frac{ \delta_i   \varphi(\tilde Y_i , X_i) }{ {S}_{C, h} (\tilde Y_i \mid X_i)} \left\{ - \hat b_{n}^{(i)} (\tilde Y_i \mid X_i)   +  \frac{\left( {S}_{C, h} (\tilde Y_i \mid X_i) - \hat{S}^{(i)}_{C,n} (\tilde Y_i \mid X_i) \right)^2}{{S}_{C, h} (\tilde Y_i  \mid X_i) \hat{S}_{C,n}^{(i)} (\tilde Y _i \mid X_i)} \right\}.
\end{align*}
\end{itemize}

\end{proposition}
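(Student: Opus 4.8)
The plan is to prove the three assertions in turn: (i) and (ii) are quantitative bias/concentration statements about the kernel‑smoothed (sub)survival functions, while (iii) is an exact algebraic identity valid on the event $\mathcal{E}_n$ — so (i) and (ii) must come first, since they are exactly what guarantees that the denominators occurring in (iii) are bounded away from zero. For (i), I would start from $H_h(y,x)=\mathbb{E}[\hat H_n(y,x)]=(K_h\ast[H(y\mid\cdot)g])(x)$ and $H_{0,h}(y,x)=(K_h\ast[H_0(y\mid\cdot)g])(x)$, so that Assumption~\ref{hyp:smooth} together with the symmetry of $K$ gives the classical bias bounds $\sup_{y\ge 0}|H_h(y,x)-H(y\mid x)g(x)|\le c_K L\,h^2$ and likewise for $H_{0,h}$, with $c_K$ depending on $K$ only. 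On $\mathcal{K}\subseteq\Gamma_b$ one has, by Assumption~\ref{hyp:cond_ind}, $H(y\mid x)g(x)=S_Y(y\mid x)S_C(y\mid x)g(x)\ge b^3$, hence $H_h(y,x)\ge b^3-c_K L h^2\ge 3b^3/4$ once $h\le h_0:=(b^3/(4c_K L))^{1/2}$. For $S_{C,h}(y\mid x)\ge b/2$, I would deduce from the two bias bounds, and from $H_h(s-,x)\wedge H(s-\mid x)g(x)\ge 3b^3/4$ on $\mathcal{K}$ (using that $\Gamma_b$ is downward closed in the time variable), a bound $\sup_{(u,x)\in\mathcal{K}}|\Lambda_{C,h}(u\mid x)-\Lambda_C(u\mid x)|\le Ch^2$ — obtained by writing the difference of the two integrals in \eqref{eq:cond_int_hazard}, controlling a ratio difference $a/p-a'/p'$ term by term and using that $H_0(\cdot\mid x)g(x)$ has total variation at most $R$ — and then invoke the Lipschitz continuity of the product‑limit map (Gill–Johansen inequality), together with $\Lambda_C(\cdot\mid x)\le -\log b$ on $\Gamma_b$, to obtain $\sup_{(t,x)\in\mathcal{K}}|S_{C,h}(t\mid x)-S_C(t\mid x)|\le C'h^2$ and hence $S_{C,h}(t\mid x)\ge b-C'h^2\ge b/2$ after shrinking $h_0$ if needed.

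For (ii), the argument is the one behind Proposition~\ref{prop:bound_CKM}, transported to the leave‑one‑out estimators. Deleting a single observation perturbs $\hat H_{0,n},\hat H_n,\hat g_n$ by $O(1/(nh^d))$ uniformly, so it suffices, for each fixed $i$, to bound $\sup_{(t,x)\in\mathcal{K}}\bigl(|\hat H^{(i)}_{0,n}(t,x)-H_{0,h}(t,x)|\vee|\hat H^{(i)}_{n}(t,x)-H_h(t,x)|\bigr)$ by $C\sqrt{|\log(h^{d/2}\epsilon')|/(nh^d)}$ on an event of probability $\ge 1-\epsilon'$, via a Talagrand/Bernstein‑type maximal inequality for the bounded \textsc{VC}‑type class $\{(s,x)\mapsto \mathbb{I}\{\tilde y>s\}K_h(x-\cdot)\}$ ensured by the hypothesis made on $K$. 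Combining this with the product‑limit Lipschitz bound gives $\sup_{(t,x)\in\mathcal{K}}|\hat S^{(i)}_{C,n}(t\mid x)-S_{C,h}(t\mid x)|\le C\bigl(\sqrt{|\log(h^{d/2}\epsilon')|/(nh^d)}+h^2\bigr)$, which is below $b/4$ (resp.\ below $b^3/4$ for the $H$‑part) as soon as $h\le h_0$ and $nh^d\ge M_1|\log(h^{d/2}\epsilon')|$ with $M_1$ large enough; in view of (i) this forces $\hat S^{(i)}_{C,n}\ge b/2$ and $\hat H^{(i)}_{n}\ge 3b^3/4$ on $\mathcal{K}$. A union bound over $i\in\{1,\dots,n\}$, i.e.\ taking $\epsilon'=\epsilon/n$ (which only inflates the logarithmic factor and is absorbed into $M_1$), yields $\mathbb{P}(\mathcal{E}_n)\ge 1-\epsilon$.

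Item (iii) is then a deterministic identity on $\mathcal{E}_n$, where by (i) every denominator below is bounded away from $0$ so the manipulations are licit. Starting from $Z_n(\varphi)$ (and noting that, up to an a.s.\ event, $\hat S^{(i)}_{C,n}(\tilde Y_i-\mid X_i)=\hat S^{(i)}_{C,n}(\tilde Y_i\mid X_i)$), I would add and subtract the i.i.d.\ average $n^{-1}\sum_i\delta_i\varphi(\tilde Y_i,X_i)/S_{C,h}(\tilde Y_i\mid X_i)$ and its expectation; using $\mathbb{E}[\varphi(Y,X)]=\mathbb{E}[\delta\varphi(\tilde Y,X)/S_C(\tilde Y\mid X)]$ — which follows from $\mathbb{E}[\delta\mid Y,X]=S_C(Y-\mid X)$ and the fact that, by Assumption~\ref{hyp:cond_ind}, $Y$ charges no atom of the conditional law of $C$ so that $S_C(Y-\mid X)=S_C(Y\mid X)$ a.s., together with Assumption~\ref{hyp:control} confining the support of $\varphi$ to $\mathcal{K}\subseteq\Gamma_b$ — this isolates exactly $B_n(\varphi)$ and $L_n(\varphi)$ and leaves the plug‑in error $n^{-1}\sum_i\delta_i\varphi(\tilde Y_i,X_i)\{\hat S^{(i)}_{C,n}(\tilde Y_i\mid X_i)^{-1}-S_{C,h}(\tilde Y_i\mid X_i)^{-1}\}$. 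To this I apply first the elementary identity $\hat S^{-1}-S_{C,h}^{-1}=(S_{C,h}-\hat S)/S_{C,h}^{2}+(S_{C,h}-\hat S)^{2}/(S_{C,h}^{2}\hat S)$, the second summand (carrying the prefactor $\delta_i\varphi/S_{C,h}$) being precisely the last contribution to $R_n(\varphi)$. For the first summand, I use the Gill–Johansen product‑limit identity to write $S_{C,h}(t\mid x)-\hat S^{(i)}_{C,n}(t\mid x)$ as $S_{C,h}(t\mid x)$ times an integral against $d\hat\Delta^{(i)}_n(\cdot\mid x)$, split $\hat S^{(i)}_{C,n}(u-\mid x)=S_{C,h}(u-\mid x)+(\hat S^{(i)}_{C,n}(u-\mid x)-S_{C,h}(u-\mid x))$ to bring out the ratio $c_h(u\mid x)$, and then expand $1/\hat H^{(i)}_n(u-\mid x)$ to second order inside $d\hat\Delta^{(i)}_n$ via $1/\hat H=1/H_h-(\hat H-H_h)/H_h^{2}+(\hat H-H_h)^{2}/(H_h^{2}\hat H)$. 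Collecting the terms that are first order in $(\hat H^{(i)}_{0,n}-H_{0,h},\,\hat H^{(i)}_{n}-H_h)$ reconstitutes $\hat a^{(i)}_n(\tilde Y_i\mid X_i)$, hence $V_n(\varphi)$, and the quadratic leftovers (the $(\hat H^{(i)}_n-H_h)^2$ integral and the $(\hat S^{(i)}_{C,n}-S_{C,h})$‑against‑$d\hat\Delta^{(i)}_n$ integral) reconstitute $\hat b^{(i)}_n(\tilde Y_i\mid X_i)$, hence the remaining part of $R_n(\varphi)$.

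The main obstacle is precisely this bookkeeping in (iii): one must keep careful track of the signs introduced by $\hat\Lambda^{(i)}_{C,n}=-\int d\hat H^{(i)}_{0,n}/\hat H^{(i)}_{n}(\cdot-)$, of the harmless distinction between $H_h(u,x)$ and $H_h(u-,x)$ (which matters only at the at most countably many atoms of $C$ and contributes nothing in the end), and verify that no term is dropped, so that the displayed four‑term equality holds \emph{exactly}. The analytic inputs — the $h^2$ bias bounds, the product‑limit Lipschitz estimate, and the \textsc{VC}‑type maximal inequality — are by now standard, and item (i) guarantees that all the denominators appearing in the expansions stay bounded away from $0$ throughout, so that the only real work is the algebra.
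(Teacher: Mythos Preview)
Your proposal is essentially correct and follows the same architecture as the paper: bias bounds for $H_h,H_{0,h}$ plus the hazard/product-limit Lipschitz chain for (i), concentration of the kernel subsurvival estimators plus the same chain for (ii), and for (iii) the second-order Taylor identity $1/\hat S = 1/S_{C,h} - (\hat S-S_{C,h})/S_{C,h}^2 + (\hat S-S_{C,h})^2/(S_{C,h}^2\hat S)$ combined with the Duhamel/Fleming--Harrington representation of $(\hat S^{(i)}_{C,n}-S_{C,h})/S_{C,h}$ and a further second-order expansion of $1/\hat H^{(i)}_n$ around $1/H_h$ inside $d\hat\Delta^{(i)}_n$.

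The one place your write-up differs from the paper is the end of (ii). You invoke a union bound over $i\in\{1,\dots,n\}$ with $\epsilon'=\epsilon/n$, but this is unnecessary (and the claim that the resulting $\log n$ ``is absorbed into $M_1$'' is not literally true, since $M_1$ must be independent of $n$). You already have the right tool in hand: the deterministic bound $\sup_{(t,x)}|\hat H^{(i)}_n(t,x)-\hat H_n(t,x)|\le 2\|K\|_\infty/((n-1)h^d)$, and likewise for $\hat H^{(i)}_{0,n}$. The paper uses exactly this: one establishes a single high-probability event on which $\hat H_n$ (full sample) is uniformly close to $H_h$, and then the deterministic leave-one-out perturbation transfers this to \emph{all} $\hat H^{(i)}_n$ simultaneously without any union bound; the same goes for $\hat S^{(i)}_{C,n}$ via the product-limit Lipschitz estimate. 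Once you drop the superfluous union bound, your argument for (ii) coincides with the paper's.
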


The proof is given in the Appendix section.
Observe that the non-random quantity $B_n(\varphi)$ stands as a bias term in the decomposition. It vanishes at a rate depending on the smoothness assumptions stipulated.
The term $L_n(\varphi)$ is a basic centred i.i.d. sample mean statistic and its uniform rate of convergence $1/\sqrt n $ can be recovered by applying maximal deviation bounds for empirical processes under classic complexity assumptions such as those stipulated below, whereas the term $V_n(\varphi)$ is more complicated, since it involves multiple sums. It is dealt with by means of results pertaining to the theory of $U$-processes, by showing that it can be decomposed as $V_n (\varphi)= L_n'(\varphi) +R_n'(\varphi)$, the sum of a linear term and a second-order term.
The term $R_n(\varphi)+ R_n'(\varphi)$ is a remainder term (second order) and shall be proved to be negligible with respect to $L_n(\varphi)+L_n'(\varphi)$.

\begin{assumption}\label{hyp:complex} The set $\Phi$ of real-valued functions on $\mathbb R_+ \times \mathbb R^d$ forms a separable bounded class  of {\sc VC} type (w.r.t. the constant envelope $M_{\Phi}$), \textit{i.e.} there exist nonnegative constants $A$ and $v$ such that for all probability measures $Q$ on $\mathbb R_+ \times \mathbb R^d$ and any $\epsilon\in (0,1)$:
$
\mathcal{N}(\Phi, L_2(Q),\epsilon)\leq ( AM_{\Phi}/\epsilon )^{v}
$,
where $\mathcal{N}(\Phi, L_2(Q),\epsilon)$ denotes the smallest number of $L_2(Q)$-balls of radius less than $\epsilon$ required to cover class $\Phi$ (covering number), see \textit{e.g.} \cite{gine+g:2001}.
\end{assumption}
\begin{assumption}\label{hyp:bound2}
The densities $H_0(y\mid x)g(x)$ and $H(y\mid x)g(x)$ are both bounded by $R<+\infty$.
\end{assumption}
By means of these assumptions, the following result, proved in the Appendix section, describes the order of magnitude of the fluctuations of the process $Z_n$.

\begin{proposition}\label{prop:main_result}
Suppose that Assumptions \ref{hyp:cond_ind}-\ref{hyp:bound2} are fulfilled. There exist constants $h_0$, $M_1, M_2$ and $M_3$ that depend on $(A,v)$, $M_\Phi$, $R$, $K$ and $b$ only, such that, for all $n\geq 2$ and $\epsilon\in (0,1)$, the event
\begin{equation*}
\vert Z_n(\varphi)\vert\leq  M_1 \left( \sqrt{\frac{ \log\left(M_2 /\epsilon\right)}{n}} + \frac{ |  \log (\epsilon h^{d/2} ) | }{n h^d} + h^2 \right),
\end{equation*}
occurs with probability greater than $1-\epsilon$ provided that $h\leq h_0$, $nh^d\geq M_3|  \log(\epsilon h^{d/2}) |$.
\end{proposition}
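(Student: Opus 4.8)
The plan is to take as the starting point the exact decomposition $Z_n(\varphi) = B_n(\varphi) + L_n(\varphi) + V_n(\varphi) + R_n(\varphi)$ provided by Proposition~\ref{prop:decomp}(iii), which holds simultaneously for all $\varphi \in \Phi$ on the event $\mathcal{E}_n$, and to bound each of the four terms separately. By Proposition~\ref{prop:decomp}(ii), $\mathbb{P}(\mathcal{E}_n) \geq 1-\epsilon$ under the stated bandwidth condition, so every subsequent estimate only needs to hold on $\mathcal{E}_n$; moreover, on $\mathcal{E}_n$ and by part~(i), the denominators $S_{C,h}$, $\hat S^{(i)}_{C,n}$, $H_h$, $\hat H^{(i)}_n$ are all bounded away from $0$ uniformly on $\mathcal{K}$, so every weight appearing below is bounded by a constant depending only on $b$. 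The deterministic term $B_n(\varphi)$ is then controlled by writing it as $\mathbb{E}[\delta\,\varphi(\tilde Y,X)\,(S_{C,h}(\tilde Y\mid X)^{-1}-S_C(\tilde Y\mid X)^{-1})]$ and bounding $\sup_{\mathcal{K}}|S_{C,h}(t\mid x)-S_C(t\mid x)|$: since $H_{0,h}$ and $H_h$ are smoothed (mean) versions of $H_0 g$ and $Hg$ divided by the smoothed $g$, Assumption~\ref{hyp:smooth} (order-$2$ Hölder) produces a smoothing bias of order $h^2$, which propagates through the product-integral map defining $S_{C,h}$ exactly as in the deterministic part of the proof of Proposition~\ref{prop:bound_CKM}, giving $|B_n(\varphi)| \leq c\, M_\Phi h^2$.

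The term $L_n(\varphi)$ is a centred i.i.d. empirical process indexed by the class $\{(y,x,d)\mapsto d\,\varphi(y,x)S_{C,h}(y\mid x)^{-1}\mathbb{I}\{(y,x)\in\mathcal{K}\}:\varphi\in\Phi\}$; because $S_{C,h}$ is a fixed function bounded below by $b/2$ on $\mathcal{K}$ and $\Phi$ is bounded VC-type (Assumption~\ref{hyp:complex}), this class is again bounded VC-type with envelope $\lesssim M_\Phi$, and a Talagrand-type maximal deviation inequality for empirical processes over VC-type classes yields $\sup_\varphi|L_n(\varphi)| \leq c(\sqrt{v/n}+\sqrt{\log(1/\epsilon)/n})$ with probability $\geq 1-\epsilon$, i.e. the announced $\sqrt{\log(M_2/\epsilon)/n}$ rate. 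The term $R_n(\varphi)$ is a second-order remainder: on $\mathcal{E}_n$ its two pieces are dominated by $\sup_{\mathcal{K}}(S_{C,h}-\hat S^{(i)}_{C,n})^2$ and by $\sup_{\mathcal{K}}|\hat b^{(i)}_n|$, both quadratic in the stochastic deviations $\hat H^{(i)}_{0,n}-H_{0,h}$, $\hat H^{(i)}_n-H_h$, $\hat S^{(i)}_{C,n}-S_{C,h}$, which are $O_{\mathbb{P}}(\sqrt{|\log(\epsilon h^{d/2})|/(nh^d)}+h^2)$ uniformly by Proposition~\ref{prop:bound_CKM} applied to the leave-one-out estimator on $\Gamma_b\supset\mathcal{K}$ (the gap between the $n$- and $(n-1)$-sample versions being $O(1/(nh^d))$ and negligible); hence $\sup_\varphi|R_n(\varphi)| \leq c(|\log(\epsilon h^{d/2})|/(nh^d)+h^4)$, and $h^4$ is absorbed by $h^2$.

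The heart of the argument is the term $V_n(\varphi)$. Substituting the definition of $\hat a^{(i)}_n$ and expanding the increments $d(\hat H^{(i)}_{0,n}-H_{0,h})$ and $\hat H^{(i)}_n-H_h$ as empirical averages over the indices $j\neq i$ turns $V_n(\varphi)$ into a generalized ($n$- and $h$-dependent) $U$-process of degree two whose kernel is a product of elements drawn from bounded VC-type classes — $\Phi$, the survival-type weights, the indicators $\mathbb{I}\{\tilde Y_j>u\}$, and the rescaled kernel $K_h(x-X_j)$ — hence VC-type, but with envelope of order $h^{-d}$. I would apply the Hoeffding decomposition: the first-order projection is a linear i.i.d. term $L_n'(\varphi)$ whose summands, after the kernel is integrated against the smooth densities $H_0 g$, $Hg$, have variance $O(1)$ (the $h^{-d}$ factor being absorbed by the integration), so $\sup_\varphi|L_n'(\varphi)| \leq c\sqrt{\log(M_2/\epsilon)/n}$ by the empirical-process argument used for $L_n$; the completely degenerate part $R_n'(\varphi)$ is controlled by a concentration inequality for degenerate $U$-processes of VC type (as in \cite{clemenccon+p:2018,CLV08,PCB16}), which, tracking the $h^{-d}$ envelope and the smaller $h^{-d}$-type variance proxy, gives $\sup_\varphi|R_n'(\varphi)| \leq c\,|\log(\epsilon h^{d/2})|/(nh^d)$ on an event of probability $\geq 1-\epsilon$ whenever $nh^d\gtrsim|\log(\epsilon h^{d/2})|$. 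Collecting everything, $|Z_n(\varphi)| \leq |B_n(\varphi)|+|L_n(\varphi)|+|L_n'(\varphi)|+|R_n'(\varphi)|+|R_n(\varphi)|$, and a union bound over the three high-probability events together with $\mathcal{E}_n$, after replacing $\epsilon$ by $\epsilon/4$ (which only affects the constants inside the logarithms), yields the claimed bound under $h\leq h_0$ and $nh^d\geq M_3|\log(\epsilon h^{d/2})|$.

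The main obstacle I anticipate is precisely the treatment of $V_n(\varphi)$: correctly identifying the degree-two $U$-process structure once the leave-one-out kernel estimators are plugged in, verifying the VC-type property of the composite kernel class (in particular the interaction between $\Phi$, the indicators, and the class $\{K_h(x-\cdot):x\in\mathbb{R}^d,h>0\}$ whose VC-type character is granted by the assumption on $K$), and — most delicately — getting the bookkeeping of the $h^{-d}$ envelopes against the genuinely smaller variance proxies right, so that the Hájek projection comes out at the parametric $1/\sqrt n$ rate while the degenerate remainder is truly of smaller order $1/(nh^d)$. The rest — the deterministic $O(h^2)$ smoothing bias and the two empirical-process bounds — is routine given Proposition~\ref{prop:bound_CKM} and Assumptions~\ref{hyp:complex} and \ref{hyp:bound2}.
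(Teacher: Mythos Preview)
Your overall strategy — use the decomposition of Proposition~\ref{prop:decomp} and bound $B_n$, $L_n$, $V_n$, $R_n$ separately — is exactly the paper's, and your treatment of $B_n$ and $L_n$ is correct. There are, however, two genuine gaps.

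First, $V_n(\varphi)$ is \emph{not} a $U$-process of degree two. The second piece of $\hat a_n^{(i)}$, namely $\int_0^t c_h(u\mid x)H_h(u,x)^{-2}\bigl(\hat H_n^{(i)}(u,x)-H_h(u,x)\bigr)\,d\hat H_{0,n}^{(i)}(u,x)$, involves the product of two leave-one-out empirical quantities: $(\hat H_n^{(i)}-H_h)$ is a centred sum over $k\neq i$, while $d\hat H_{0,n}^{(i)}$ is an (uncentred) sum over $j\neq i$. Expanding both produces a triple sum $\tfrac{1}{n(n-1)^2}\sum_{i}\sum_{j\neq i}\sum_{k\neq i}v_{i,j,k}(\varphi)$ with envelope of order $h^{-2d}$. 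The paper (this is $V_{n,1}$) separates the diagonal $j=k$ and then performs a Hoeffding-type decomposition of the off-diagonal part into one linear term $L_n'$ plus three degenerate $U$-statistics $U_{n,1}^{(1)},U_{n,1}^{(2)},U_{n,1}^{(3)}$ of degrees $3,2,2$, each requiring its own application of the $U$-process concentration inequality with careful variance-versus-envelope bookkeeping. The other piece of $\hat a_n^{(i)}$ (your $d(\hat H_{0,n}^{(i)}-H_{0,h})$ term, the paper's $V_{n,2}$) is indeed degree two, so your description fits only half of $V_n$.

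Second, your claim that $\sup_{\mathcal K}|\hat b_n^{(i)}|$ is ``quadratic in the stochastic deviations'' is too coarse. The delicate piece of $\hat b_n^{(i)}$ is $\int_0^t(\hat S_{C,n}^{(i)}(u-\mid x)-S_{C,h}(u-\mid x))\,d\hat\Delta_n^{(i)}(u\mid x)$. Both factors are small in sup norm, but $d\hat\Delta_n^{(i)}$ does \emph{not} have small total variation (it is a difference of two increasing functions of size $O(1)$), and neither does $\hat S_{C,n}^{(i)}-S_{C,h}$, so neither a direct product bound nor a single integration by parts yields the squared rate you announce. The paper resolves this through Lemma~\ref{lem:res_tool}: iterating the Duhamel identity~\eqref{eq:int_by_part} reduces the integral to genuine squared sup-norm deviations of $\hat H_n^{(i)}-H_h$ and $\hat H_{0,n}^{(i)}-H_{0,h}$ \emph{plus} an irreducible double-integral term $\hat W_n^{(i)}(t,x)$, which is itself a degenerate $U$-statistic of degree two and must be bounded by yet another application of the $U$-process inequality (see the proof of Lemma~\ref{lemma:res'}). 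Without this extra step, $R_n'$ is not shown to be of order $|\log(\epsilon h^{d/2})|/(nh^d)$.
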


The risk excess probability bound stated in the following theorem shows that, remarkably, minimizers of the Kaplan-Meier risk attain the same learning rate as that achieved by classic empirical risk minimizers in absence of censorship, when ignoring the model bias effect induced by the plug-in estimation step (\textit{cf} choice of the bandwidth $h$).

\begin{theorem}\label{thm:unif_control}
Suppose that Assumptions \ref{hyp:cond_ind}-\ref{hyp:bound2} are fulfilled. There exist constants $h_0$, $M_1, M_2$ and $M_3$ that depend on $(A,v)$, $M_\Phi$, $R$, $K$ and $b$ only, such that, for all $n\geq 2$ and $\epsilon\in (0,1)$, the event
\begin{equation*}
\lvert R (\tilde{f}_n) - R(f^\star) \rvert   \leq  M_1 \left( \sqrt{\frac{ \log\left(M_2 /\epsilon\right)}{n}} + \frac{ |  \log (\epsilon h^{d/2} ) | }{n h^d} + h^2 \right),
\end{equation*}
occurs with probability greater than $1-\epsilon$ provided that $h\leq h_0$, $nh^d\geq M_3|  \log(\epsilon h^{d/2}) |$.
\end{theorem}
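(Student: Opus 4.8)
The plan is to deduce Theorem~\ref{thm:unif_control} from Proposition~\ref{prop:main_result} by the standard ERM argument, paying attention to the plug-in versus non-plug-in risk and to the model bias term $R(f^\star)-\inf_{f\in\mathcal F}R(f)$. First I would fix the class $\Phi=\{(y,x)\mapsto (y-f(x))^2\mathbb{I}\{(y,x)\in\mathcal K\}:f\in\mathcal F\}$ and check it satisfies Assumptions~\ref{hyp:control} and~\ref{hyp:complex}: the support restriction to $\mathcal K\subset\Gamma_b$ is immediate, while the VC-type covering bound follows from the VC-type (or finite VC-dimension) assumption on $\mathcal F$ together with the fact that $\mathcal K$ is bounded (so the $f(x)$'s and hence $(y-f(x))^2$ are uniformly bounded on $\mathcal K$) and that $t\mapsto t^2$ is Lipschitz on that bounded range, so composing with a squaring map and multiplying by a fixed indicator only changes the covering-number constants. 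This produces an envelope $M_\Phi$ and constants $(A,v)$ depending only on $\mathcal F$, $\mathcal K$, which feed into Proposition~\ref{prop:main_result}.

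Next I would invoke the elementary ERM inequality already displayed in the text,
\begin{equation*}
R(\widetilde f_n)-\inf_{f\in\mathcal F}R(f)\leq 2\sup_{f\in\mathcal F}\bigl\lvert \widetilde R_n(f)-R(f)\bigr\rvert
= 2\sup_{\varphi\in\Phi}\lvert Z_n(\varphi)\rvert,
\end{equation*}
the last equality being the observation recorded just after Assumption~\ref{hyp:control} that $\widetilde R_n(f)-R(f)=Z_n(\varphi)$ for the corresponding $\varphi$. Applying Proposition~\ref{prop:main_result} uniformly over $\Phi$ (the bound there is already a uniform-in-$\varphi$ statement, holding on a single event of probability at least $1-\epsilon$) gives, on that event,
\begin{equation*}
R(\widetilde f_n)-\inf_{f\in\mathcal F}R(f)\leq 2M_1\left(\sqrt{\frac{\log(M_2/\epsilon)}{n}}+\frac{\lvert\log(\epsilon h^{d/2})\rvert}{nh^d}+h^2\right).
\end{equation*}
Absorbing the factor $2$ into the constants yields the right-hand side of the theorem.

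The remaining point is bookkeeping about what ``$R(f^\star)$'' means in the statement: $f^\star$ is the true regression function $\mathbb{E}[Y\mid X]$, whereas the ERM inequality naturally controls the excess over $\inf_{f\in\mathcal F}R(f)$. So I would add and subtract, writing $\lvert R(\widetilde f_n)-R(f^\star)\rvert\le \bigl(R(\widetilde f_n)-\inf_{f\in\mathcal F}R(f)\bigr)+\bigl(\inf_{f\in\mathcal F}R(f)-R(f^\star)\bigr)$, and appeal to the richness/approximation assumption announced at the start of Section~\ref{sec:main} — namely that $\mathcal F$ is rich enough that the model bias $R(f^\star)-\inf_{f\in\mathcal F}R(f)$ is negligible (the text explicitly frames the result as ``ignoring the model bias impact''). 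Under that convention the bias term is treated as $0$ (or as an additive quantity subsumed into the statement), and the two estimates combine to give the claimed bound, with $h_0,M_1,M_2,M_3$ inherited directly from Proposition~\ref{prop:main_result}.

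The only genuinely non-routine step is the first one — verifying that the squared-error class $\Phi$ really is of VC type with constants controlled by those of $\mathcal F$ — and even that is standard: one uses that compositions with a fixed Lipschitz function and multiplication by a fixed bounded function preserve the VC-type covering property up to constants, together with boundedness of $\mathcal F$ and of $Y$ on the compact integration domain $\mathcal K$. Everything after that is the textbook symmetrization-free ERM reduction plus the triangle inequality for the model bias, so the heart of the theorem is entirely contained in Proposition~\ref{prop:main_result}, whose proof (via the linearization of Proposition~\ref{prop:decomp} and $U$-process concentration for the term $V_n$) is where the real work lies.
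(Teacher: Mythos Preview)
Your proposal is correct and matches the paper's approach: the paper's own proof is a single sentence, ``The proof is a direct application of Proposition~\ref{prop:main_result},'' relying on the ERM inequality $R(\widetilde f_n)-\inf_{f\in\mathcal F}R(f)\leq 2\sup_{f\in\mathcal F}\lvert\widetilde R_n(f)-R(f)\rvert$ displayed earlier in Section~\ref{sec:main}. Your additional checks (that $\Phi$ is of VC type and the handling of the model bias $\inf_{f\in\mathcal F}R(f)-R(f^\star)$) are elaborations the paper leaves implicit, but they are exactly the right things to verify and do not change the argument.
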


The proof is a direct application of Proposition \ref{prop:main_result}. A similar bound for the expectation of the risk excess of minimizers of the empirical Kaplan-Meier risk can be classically derived with quite similar arguments, details are left to the reader.
 
\section{Numerical Experiments}\label{sec:exp}

Beyond the theoretical generalization guarantees established in the previous section, we now examine at length the predictive performance of the approach we propose for distribution-free regression based on censored training observations through various experiments based on synthetic/real data, and compare it to that of alternative methods documented in the survival analysis literature standing as natural competitors. As shall be seen below, the experimental results we obtained provide strong empirical evidence of the relevance of the Kaplan-Meier empirical risk minimization approach.
All the experiments and figures displayed in this article can be reproduced using the code available at \url{https://github.com/aussetg/ipcw}.

Before presenting and discussing the numerical results obtained, a few remarks are in order.
In the theoretical analysis carried out in the previous section, we placed ourselves on a restricted set $\Gamma_{b}$. However, in practice, we simply remove the last jump in \eqref{eq:CKM_est} and plug the estimator:
\begin{align*}
  \tilde{S}_{C, n} (y \mid x) = \prod_{\substack{\tilde{Y_i} \leq y \\ \tilde{Y_i} < \max_{\delta = 0} Y_i}} \left(1-d\hat{\Lambda}_{C,n}(\tilde{Y}_i \mid x) \right) , \, y \geq 0, x \in \mathbb{R}^d.
\end{align*}
Observe that, though $\tilde{S}_{C, n}$ is not a survival function anymore, it is still an accurate estimator. This alleviates possible difficulties caused by the frequent edge case where the last individual is observed ($\delta = 1$), since, in the case where \eqref{eq:CKM_est} is used, we have then $\delta_n/\hat{S}_C ( \tilde{Y}_n \mid X_n ) = \infty$.

\subsection{Experimental Results based on Synthetic Data}

In the synthetic experiments detailed below, we generated train and test data according to a simple Cox proportional hazard model \citep{cox_analysis_2018}:

\begin{equation*}
  S_Y(y \mid x) = \exp\left(-e^{\beta^T x} y\right) \text{ and } S_C(y \mid x) = \exp\left(-e^{\beta_C^T x} y\right),
\end{equation*}
with $t \leq 0, x \in [0, 1]$ where $X \sim \mathcal{U}([0, 1]^d)$.
This model is easy to generate, since $Y \mid X \sim \mathcal{E} (\exp \beta^T X)$ and $C \mid X \sim \mathcal{E} (\exp \beta_C^T X)$.
So that the censoring is informative, we use
\hspace{1em}
\begin{align*}
      &\beta^T = \begin{bmatrix} \bovermat{$\lceil n/2 \rceil$}{1 & \cdots & 1} & 0 & \cdots & 0 \end{bmatrix} \\
      &\beta_c^T = \lambda \begin{bmatrix} 1 & 0 & 1 & 0 & 1 & \cdots \end{bmatrix}
\end{align*}
where the tuning parameter $\lambda$ controls the level of censorship $1 - p$ with $p = \mathbb{E} [\delta]$. We chose the appropriate $\lambda$ for the desired $p$ by Monte-Carlo simulations.
On the training set we only observe $\tilde Y_ i = (Y_i\wedge C_i)$, while we observe the true $Y$ on the test set in order to measure the performance without any special consideration for censorship.
We consider several approaches to build a function that nearly achieves the same predictive performance as  $f^\star(x)=\mathbb{E}[Y\mid X=x]$, consisting respectively in minimizing the (IPCW/weighted) empirical risks
\begin{equation}\label{losses}
\begin{aligned}
  \text{IPCW} \quad &\frac{1}{n} \sum_{i=1}^n \delta_ i \frac{(\tilde Y_i - f(X_i))^2}{\hat S_{C} (\tilde{Y}_i|X_i)} \qquad &\text{IPCW LoO} \quad &\frac{1}{n} \sum_{i=1}^n \delta_ i  \frac{(\tilde Y_i - f(X_i))^2}{\hat S_{C}^{(i)} (\tilde{Y}_i|X_i)} \\
  \text{IPCW Forest} \quad &\frac{1}{n} \sum_{i=1}^n \delta_ i \frac{(\tilde Y_i - f(X_i))^2}{\hat S_{C}^{\text{RF}}(\tilde{Y}_i|X_i)} \qquad &\text{IPCW Stute} \quad &\frac{1}{n} \sum_{i=1}^n \delta_ i \frac{(\tilde Y_i - f(X_i))^2}{\hat S_{C} (\tilde{Y}_i)} \\
  \text{IPCW KNN}  \quad &\frac{1}{n} \sum_{i=1}^n \delta_ i \frac{(\tilde Y_i - f(X_i))^2}{\hat S_{C}^{\text{KNN}}(\tilde{Y}_i|X_i)} \qquad & \text{IPCW Oracle} \quad &\frac{1}{n} \sum_{i=1}^n \delta_ i \frac{(\tilde Y_i - f(X_i))^2}{S_{C} (\tilde{Y}_i|X_i)} \\
  \text{Naive} \quad &\frac{1}{n} \sum_{i=1}^n (\tilde Y_i - f(X_i))^2 \qquad &\text{Observed} \quad &\frac{1}{n} \sum_{i=1}^n \delta_i (\tilde Y_i - f(X_i))^2\\
  \text{Oracle} \quad &\frac{1}{n} \sum_{i=1}^n (Y_i - f(X_i))^2 & &
\end{aligned}\nonumber
\end{equation}
where $\hat S_{C}^{(i)}$ is the leave-one-out version of $\hat S_{C}$, i.e. the same estimate but dropping-out the $i$-th observation, $\hat S_{C}^{\text{RF}}$ is estimated using random forests \citep{ishwaran_random_2008} and $\hat S_{C}^{\text{KNN}}$ uses a leave-one-out nearest neighbours approach instead of kernels. Observe incidentally that selection of the related hyperparameters is tricky, insofar as the estimator is itself involved in the definition of the objective risk function.
We use the notation $\hat S_{C} (\cdot)$ for the standard non-conditional Kaplan-Meier estimate of the survival function which coincides with the case of non-informative censorship found in \citet{Stute1995}. The last two risk functionals are oracle estimators and serve as a benchmark to quantify the negative impact of the plug-in estimation.
The various approaches are compared through the accuracy regarding the prediction and estimation tasks. 

\subsubsection{Prediction error}

We study the prediction risk $\mathds{E} [ ( Y - f(X) )^2 ]$ for several classes of functions $\mathcal{F}$ where $\mathcal{F}$ is either a RKHS (SVR), a collection of orthogonal piecewise constant functions (Breiman Random Forests) or a space of linear functions (Linear Regression). We also set the level of censorship $p$ to $1/4$,  $1/2$ or $3/4$. All synthetic results are presented for $d = 4$ but other values of $d$ are presented in Appendix \ref{appendix:results}. The prediction error is estimated by Monte Carlo when running the experiments $50$ times: each time a train set is generated and an estimator of $f^\star$ is learnt by minimizing one of the losses in \ref{losses}. The error is then measured on a completely observed test set (generated in the same way as the training set) of size $5000$ $\mathcal{D}_T$. The test error is then $\sum_{(Y_i, X_i) \in \mathcal{D}_T} \lvert Y_i - f^\star (X_i) \rvert^2$.
Regarding the choice of hyperparameters of $S_C$ we use for the kernelized Kaplan-Meier estimator $h = 5 \sigma  n^{-1 / (d + 2)}$ which follows (up to a constant) from Proposition \ref{prop:main_result}. For the KNN estimator, we use $K = 5$ and for the random survival forest version, we keep the default hyperparameters of the \texttt{randomForestSRC} \citep{randomForestSRC} package.

\begin{figure}[htbp]
  \centering
  \begin{subfigure}[b]{1.\textwidth}
        \includegraphics[width=\textwidth]{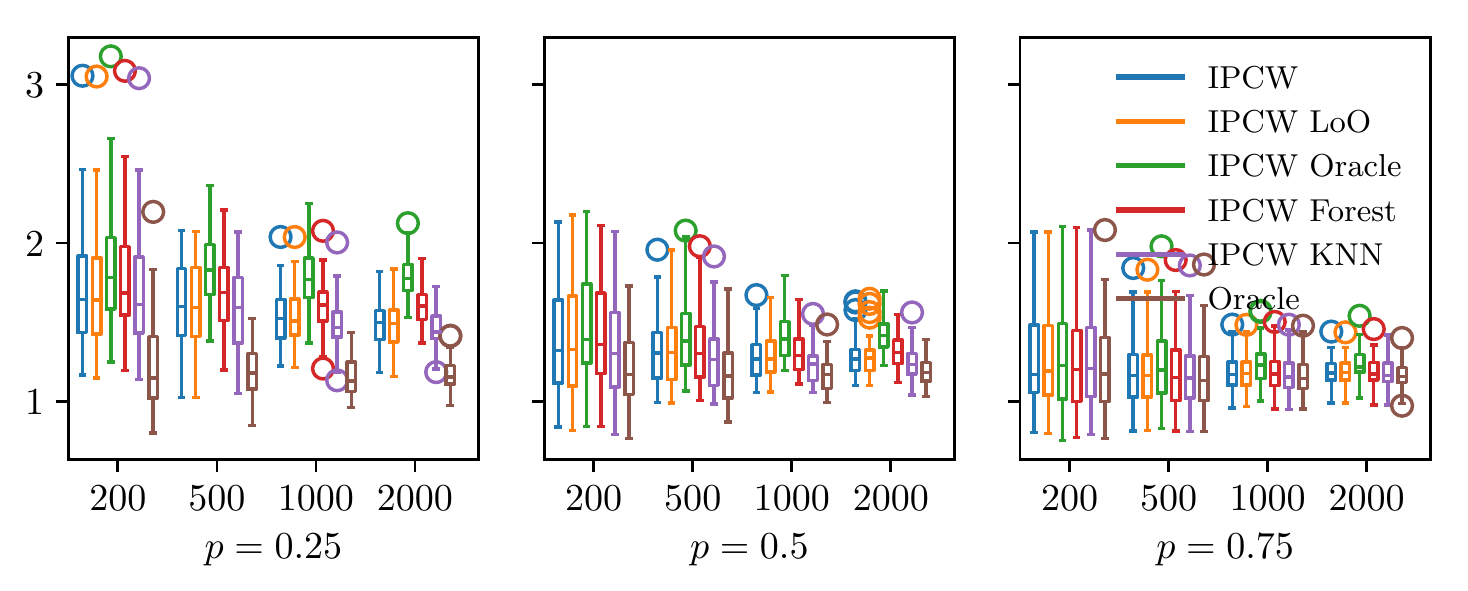}
        \caption{Linear Regression}
  \end{subfigure}
  \begin{subfigure}[b]{1.\textwidth}
        \includegraphics[width=\textwidth]{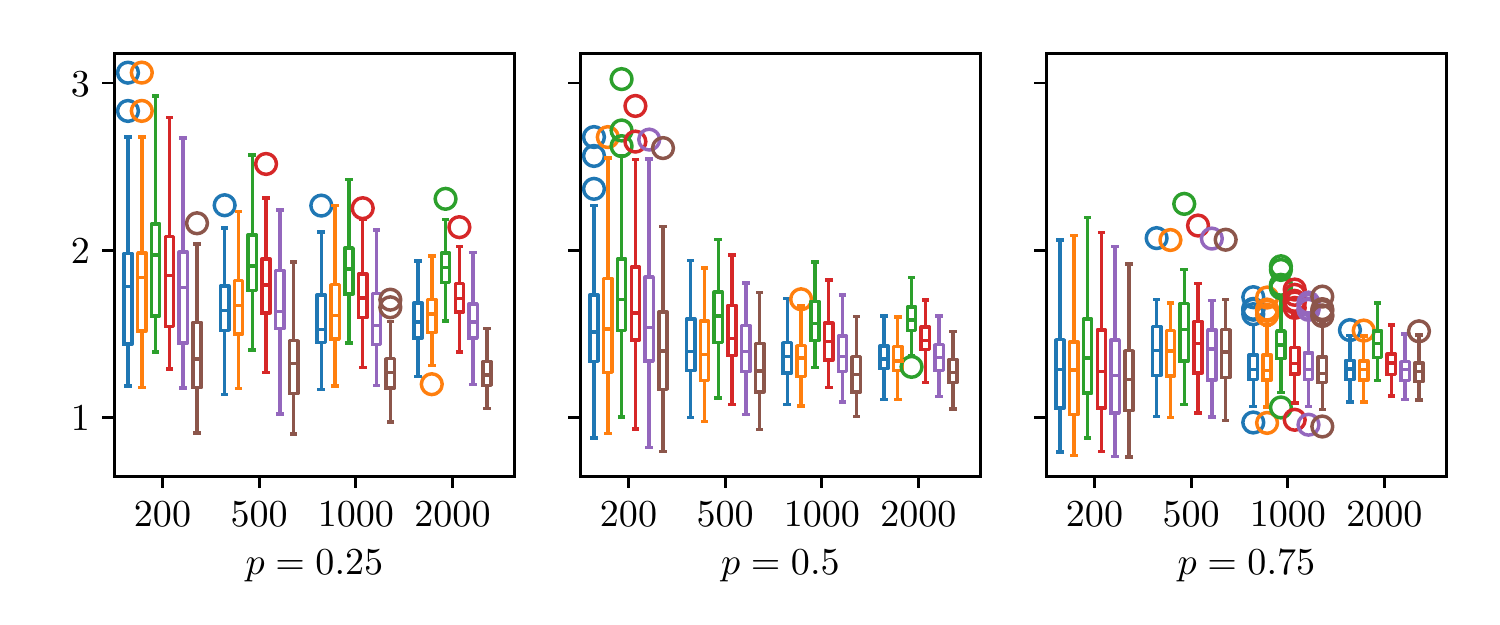}
        \caption{SVR}
  \end{subfigure}
  \begin{subfigure}[b]{1.\textwidth}
        \includegraphics[width=\textwidth]{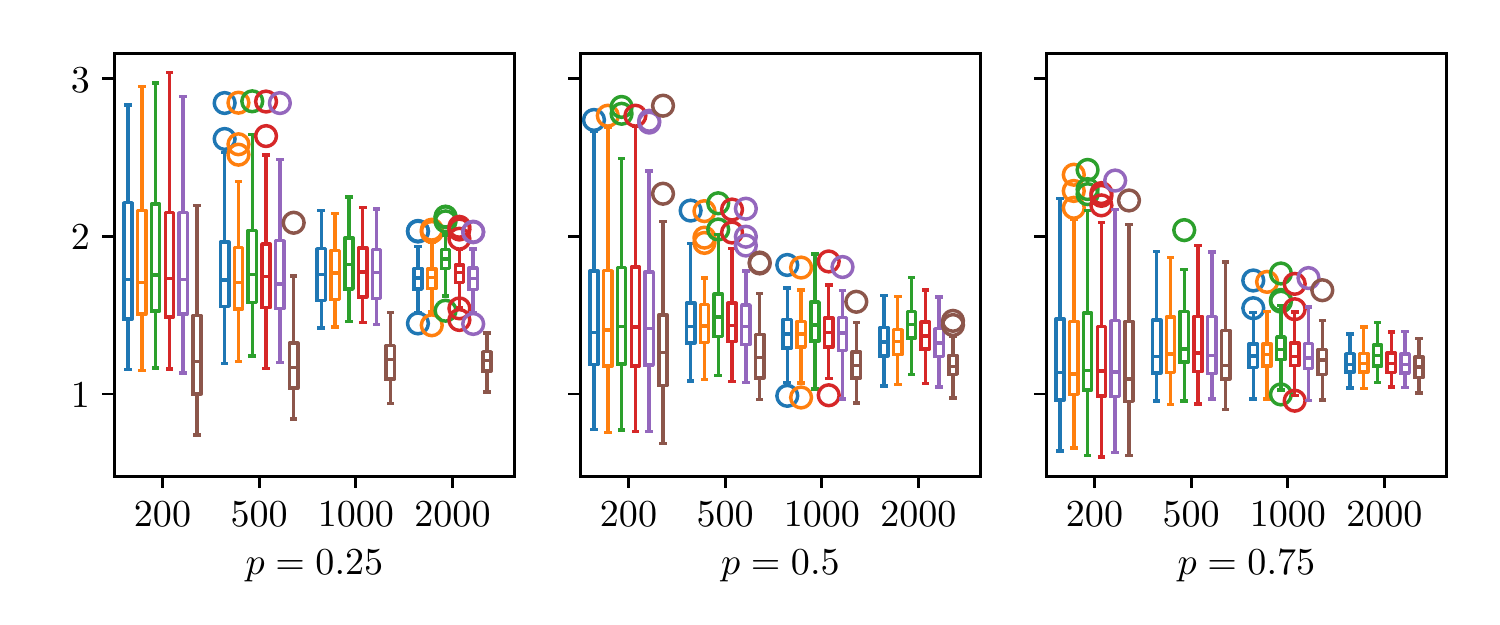}
        \caption{Random Forest}
  \end{subfigure}
  \caption{$L^2$ error of the different IPCW estimators}
  \label{fig:error}
\end{figure}

As shown in Figure \ref{fig:error}, the IPCW KNN estimator systematically outperforms the other estimators in our experiments, no matter the level of censorship $p$ or class $\mathcal{F}$, as such any further mention of IPCW implicitly refers to the IPCW KNN version without further notice.
In order to show the improvements brought by the IPCW reweighting, we compare the IPCW estimator to various naive approaches to the problem: one could either decide to fit an estimator directly from the censored values without any corrections or else discard the censored observations and next fit the estimator based on the uncensored values. These two approaches, corresponding to the Naive and Observed losses in \ref{losses}, are biased as they respectively estimate $\mathbb{E} [ (\tilde{Y} - f(X) )^2 )$ and $\mathbb{E} [ (Y - f(X) )^2 \mid \delta = 1]$. The former method can still be of interest in certain edge cases: when there are too few non-censored observations compared to the total number of available observations (i.e. $p n \ll n$) then the biased version may yield a better predictor simply because of the disparity of available effective data. Results are presented in Figure \ref{fig:error_vs}.

\begin{figure}[htbp]
  \centering
  \begin{subfigure}[b]{1.\textwidth}
        \includegraphics[width=\textwidth]{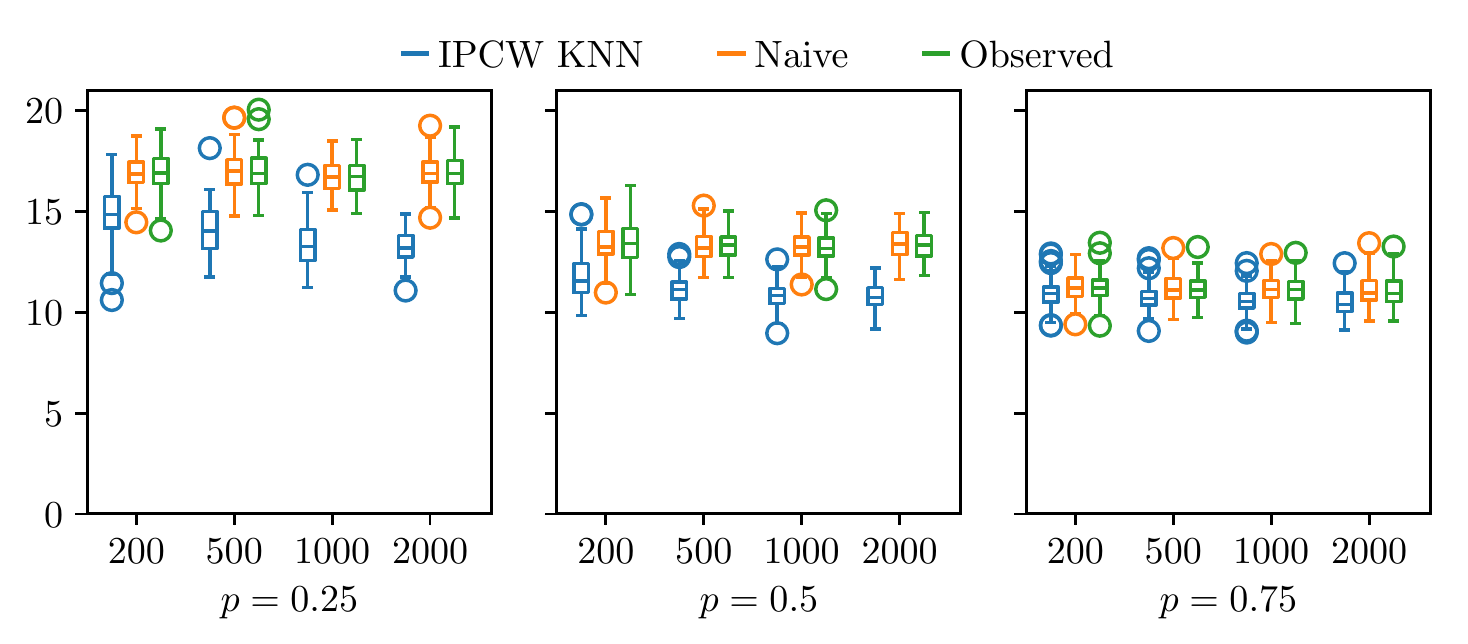}
        \caption{Linear Regression}
  \end{subfigure}
  \begin{subfigure}[b]{1.\textwidth}
        \includegraphics[width=\textwidth]{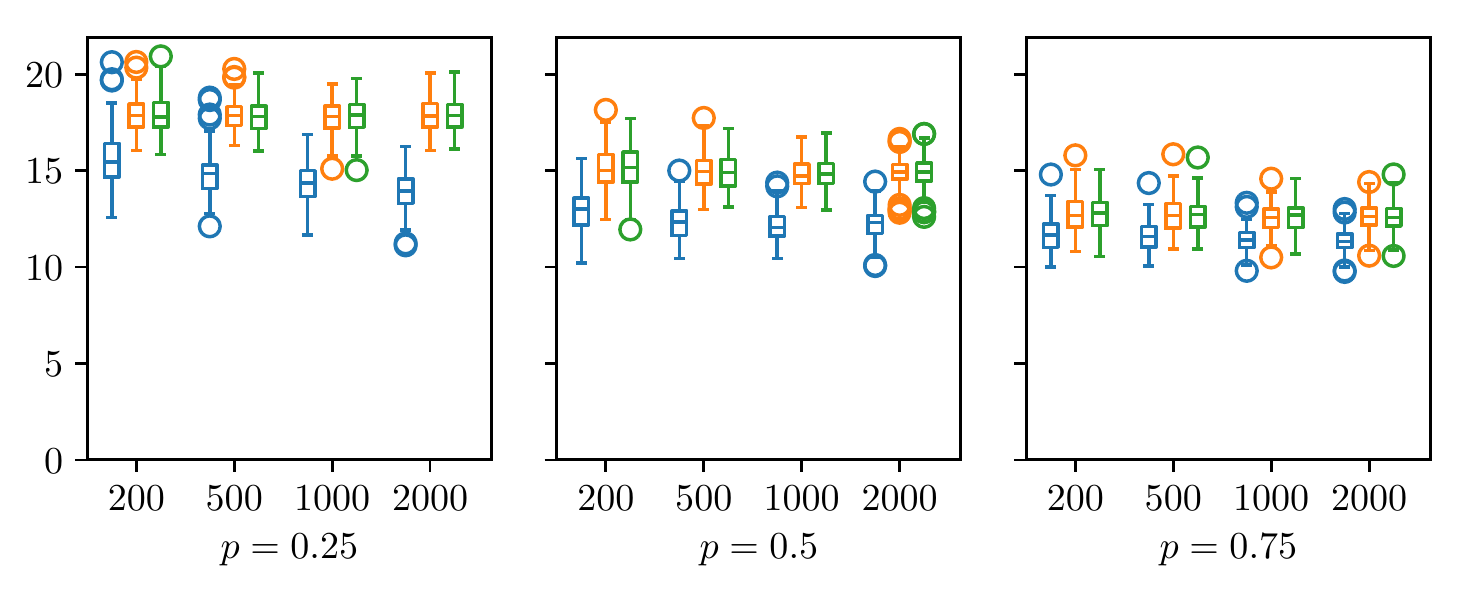}
        \caption{SVR}
  \end{subfigure}
  \begin{subfigure}[b]{1.\textwidth}
        \includegraphics[width=\textwidth]{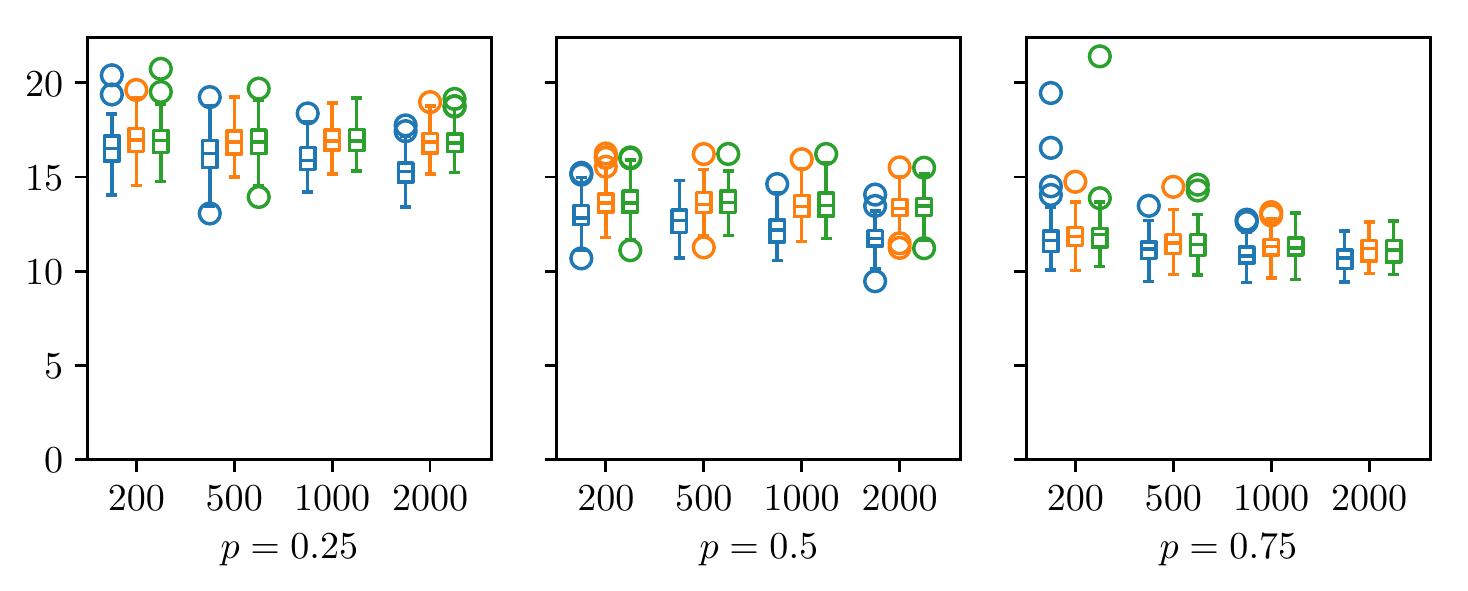}
        \caption{Random Forest}
  \end{subfigure}
  \caption{Prediction error $\mathbb{E}[( Y - \tilde{f}_n (X) )^2 ]$ of the IPCW estimators compared to the naive methods}
  \label{fig:error_vs}
\end{figure}

Learning $f$ on the corrected IPCW loss always outperforms the naive alternatives, with the differences in predictive performance becoming more pronounced as the censorship level $1 - p$ increases. Unsurprisingly, when most of the points are observed ($p \to 1$) all methods reach roughly the same error as all the losses in \ref{losses} are equal for $p = 1$.
We empirically observe that the IPCW problem with oracle weights (i.e. $\delta / S_C$ ) can yield worse estimators than the plug-in version (i.e. $\delta / \hat{S}_C$) and exhibits a much higher variance. Intuitively, this phenomenon can be explained by the fact that the active weights $1 / \hat{S}_C$ have a low variance while their oracle version $1 / S_C$ has a higher variance and can grow arbitrarily large for observations in the tail. Therefore it is advisable, and one can empirically verify this easily, to choose an estimator of $S_C$ with a low variance. Even the limit case of the non-conditional Kaplan-Meier estimate (corresponding to $h \to \infty$ in our estimator) offers reasonable performances, still in the case of informative censorship.
Finally, we compare popular machine learning methods reweighted by the IPCW technique to standard state-of-the-art procedures.
These include standard statistical methods based on the estimation of the survival already mentioned in Section \ref{sec:intro} or in \citet{van_der_laan_unified_2003} where the estimated survival can then be used to compute the downstream quantity of interest provided it can be written as an integral w.r.t. the survival function, for example the conditional mean $\int S(\mathrm{d} y \mid X = x)$. The other family of methods is more rooted in the machine learning methodology and designs losses specifically adapted to the censored regression problem, either through transformation models in \citet{van_belle_learning_2011}, or by adapting the SVM methodology as done in \citet{van_belle_support_2007, polsterl_fast_2015, polsterl_efficient_2016}.
We also include the method of \citet{hothorn_survival_2006} that follows the same methodology as this paper and uses a boosting technique to optimize a loss reweighted by (unconditional) Kaplan-Meier weights as well as the method of \citet{ishwaran_random_2008} that builds a recursive splitting of the feature space $\mathcal{X}$ by maximizing a notion of inter-cluster dissimilarity of the survival functions, the final clusters are then used for downstream tasks (classification, regression, quantile estimation). We compare $10$ estimators from the survival literature compared to $5$ standard learners with IPCW weights.
The standard machine learning techniques reweighted by IPCW, the methodology promoted in this paper, have been implemented by means of the software \citet{scikit-learn} coupled with our own implementation of our proposed LoO IPCW estimator, while the specific survival machine learning methods have been implemented using \texttt{scikit-surv}. Finally, we use the original Random Survival Forest of \citet{randomForestSRC}. The default values for the hyperparameters are used in every case.
All experiments are based on $200$ observations only, insofar as some of the SVM based techniques are impractical with large $n$.
Results for all methods can be found in Table \ref{table:soa_vs_ipcw}.

\begin{table}[h]
\centering
\begin{tabular}{@{}lrrr@{}} \toprule
& \multicolumn{3}{c}{$\sqrt{L^2 \text{ Error}}$} \\
\cmidrule(r){2-4} \\
Method & $\delta = 0.25$ & $\delta = 0.5$ & $\delta = 0.75$ \\
\midrule \\
Survival Gradient Boosting & $3.19$ & $3.55$ & $3.61$ \\
Component-wise Survival Gradient Boosting & $3.19$  & $3.87$ & $4.23$ \\
Cox Proportional Hazards & $7.86$ & $7.61$ & $7.03$ \\
Coxnet & $7.62$ & $7.39$ & $6.85$ \\
Kernel Survival SVM & $4.02$ & $3.92$ & $4.13$ \\
Survival SVM & $4.04$ & $4.09$ & $3.94$ \\
Hinge Loss Survival SVM & $8.10$ & $8.28$ & $8.09$ \\
Minlip Survival SVM & $3.27$  & $3.96$ & $4.22$ \\
Random Survival Forest & $2.01$  & $2.94$ & $2.78$ \\
\midrule \\
Ridge + IPCW & $\textbf{1.75}$ & $\textbf{1.49}$ & $\textbf{1.24}$ \\
Kernel Ridge + IPCW & $2.07$ & $1.60$ & $1.35$ \\
Linear Regression + IPCW & $1.81$ & $\textbf{1.49}$ & $\textbf{1.24}$ \\
Random Forest + IPCW & $1.85$ & $1.57$ & $1.36$ \\
SVR + IPCW & $1.87$ & $1.66$ & $1.42$ \\
\bottomrule
\end{tabular}
\caption{}
\label{table:soa_vs_ipcw}
\end{table}

\subsubsection{Error estimation}

While not the focus of our method, it is of interest to study the quality of the approximation of the risk $R(f) = \mathbb{E} \left[ \mathcal{L} (Y, f(X) ) \right]$.
We modify here slightly the estimator $\tilde{R}_n (f)$ presented in \ref{eq:KMemp_risk} by normalizing the weights; while not necessary for the regression problem, this ensures that our estimator represents an integral w.r.t. a proper measure.

To make things easier we can study risks of the form $R(\varphi) = \mathbb{E} \left[ Y \varphi(X) ) \right]$, by choosing $\varphi(X) = e^{- X^T \beta}$ we have $R(f) = 1$.
We sample the error $\lvert R(\varphi) - \tilde{R}_{n, \mathcal{D}} (\varphi)  \rvert$ for $N = 100$ random training sets $\mathcal{D} = \{ (X_i, \tilde{Y}_i, \delta_i) \}$.

\begin{figure}[htbp]
  \centering
  \includegraphics[width=1.\textwidth]{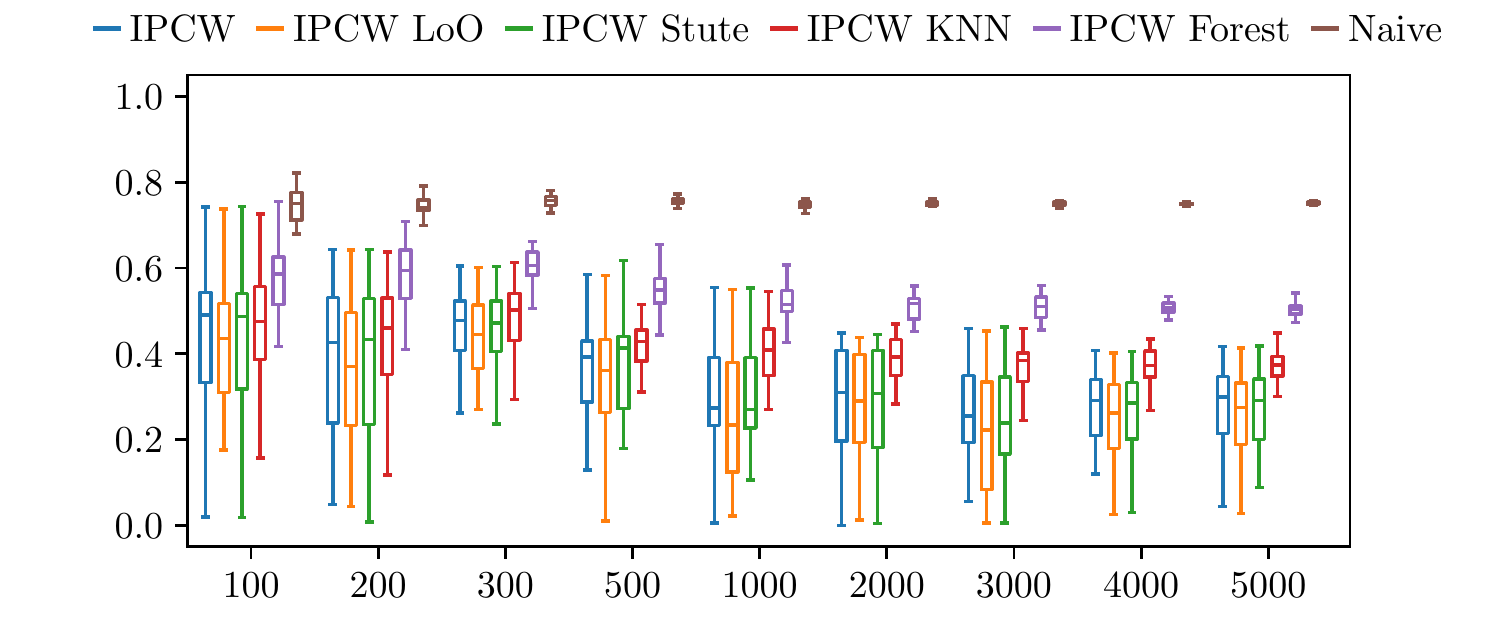}
  \caption{Estimated $L^2$ error of the IPCW estimator compared to naive methods for $p = \sfrac{1}{4}$}
  \label{fig:loss}
\end{figure}

As can be seen in Figure \ref{fig:loss}, while both naive methods offer poor approximations of the loss (as expected since they are biased), the IPCW reweighting methods converge towards the correct value of $R(f)$ at the expected rate.
We observe here that the best estimator of the error is based on the IPCW LoO reweighting while the best prediction error is achieved by IPCW KNN as shown by Figure \ref{fig:error}. We conjecture that low variance estimators of $S_C$ achieve better results for the prediction task. Our different experiments seem to empirically validate this hypothesis as high $K$ KNN estimators and large $h$ kernel estimators showed the best regression performances.

\subsection{Real Data}

The performance of the IPCW approach is now investigated on the TCGA Cancer data \citep{grossman_toward_2016} using solely the RNA transcriptomes as informative variables. All models are trained on $8080$ patients with a censorship rate of $18\%$, we measure on the remaining $1449$ observed patients the error as well as the concordance index $\frac{1}{\lvert \delta \rvert}\sum_{\delta_i = 1} \sum_{\tilde{Y}_j > \tilde{Y}_i} \mathds{1}_{f(X_i) < f(X_j)}$ and only use IPCW KNN without any tuning of $K$ ($K = 5$).

\begin{table}[h]
      \centering
      \begin{tabular}{@{}lrrrrrr@{}} \toprule
      & \multicolumn{2}{c}{IPCW} & \multicolumn{2}{c}{Naive} & \multicolumn{2}{c}{Observed} \\
      \cmidrule(r){2-3} \cmidrule(r){4-5} \cmidrule(r){6-7} \\
      Method & $\sqrt{L^2 \text{ Error}}$ (\emph{years}) & Concordance & $\sqrt{ \text{Error}}$ & C & $\sqrt{\text{Error}}$ & C \\
      \midrule \\
      Cox\footnotemark & \multicolumn{2}{r}{$76.4$} & \multicolumn{2}{r}{$0.6071$} \\
      \midrule \\
      SVR & $\mathbf{2.768}$ & $0.563$ & $2.796$ & $\mathbf{0.575}$ & $2.795$ & $0.543$ \\
      Linear Regression & $\mathbf{3.193}$ & $\mathbf{0.594}$ & $4.971$ & $0.557$ & $3.898$ & $0.508$ \\
      Ridge & $\mathbf{3.193}$ & $0.594$ & $4.962$ & $0.5573$ & $3.896$ & $0.5077$ \\
      Kernel Ridge & $\mathbf{2.683}$ & $\mathbf{0.597}$ & $2.704$ & $0.592$ & $2.956$ & $0.513$ \\
      Random Forest & $\mathbf{2.577}$ & $\mathbf{0.630}$ & $2.636$ & $0.603$ & $2.878$ & $0.542$ \\
      \bottomrule
      \end{tabular}
      \label{table:tcga}
\end{table}
 
\section{Conclusion}\label{sec:conclusion}

In the present article, we have presented both theoretical and experimental work on statistical learning based on censored data. Precisely, we considered the problem of learning a predictive/regression function when the output variables related to the training observations are subject to random right censorship under mild assumptions. Following in the footsteps of the approach introduced in \cite{Stute1995}, we studied from a nonasymptotic perspective the performance of predictive functions built by minimizing a weighted version of the empirical (quadratic) risk, constructed by means of the Kaplan-Meier methodology. Learning rate bounds describing the generalization ability of such predictive rules have been proved, through the study of the fluctuations of the Kaplan-Meier risk functional, relying on linearization techniques combined with concentration results for $U$-processes. These theoretical results have also been confirmed by various numerical experiments, supporting the approach promoted. A difficult question, that will be the subject of further research, is the design of model selection methods (structural risk minimization) to pick automatically the optimal hyperparameters for the plugged estimator $\hat{S}_{C,n}$. Indeed, this is far from straightforward, insofar as changing the hyperparameters or the model modifies the loss that is being optimized, which makes standard methods such as cross-validation unsuitable.
 
\appendix
\section{Auxiliary Lemmas}\label{AppendixA}
For completeness, classic approximation and concentration results, extensively used in the subsequent proofs, are recalled.

\noindent {\bf Kernel approximation.} We first recall the following classical approximation bound, see \textit{e.g.} Proposition 1.2 in \cite{Tsybakov09}. Define $\mathcal{H}_{\beta, L}(\Omega)$ as the space of functions $g$ in $\mathcal{C}^{\lfloor\beta\rfloor}(\Omega)$ with all derivatives up to order $\lfloor\beta\rfloor$ bounded by $L$ and such that, for any multi-index $\alpha\in \mathbb{N}^d$ with $\vert \alpha\vert\leq \lfloor\beta\rfloor$:
$$\forall (x,y)\in \Omega^2,\;\; \vert \partial_{\alpha} f(x)-\partial_{\alpha}f(y)\vert \leq L \vert\vert x-y\vert\vert^{\beta-\vert \alpha\vert},$$
denoting by $\vert\vert.\vert\vert$ the usual Euclidean norm on $\mathbb{R}^d$.

\begin{lemma}\label{lem:approx} Let $\beta>0$, $L>0$ and $\Omega$ an open convex subset of $\mathbb{R}^d$.
Suppose that $f$ belongs to the H\"older class $\mathcal{H}_{\beta, L}(\Omega)$, then, if the kernel $K$ is of order $\lfloor \beta \rfloor$,  we have: for all $h>0$,
\begin{equation}\label{eq:convol_bound}
\sup_{x\in \Omega}\left\vert  (K_h\ast f)(x)-f(x)\right\vert \leq C h^{\beta},
\end{equation}
where $ C=\frac{L}{\lfloor\beta\rfloor!} \sum_{\alpha\in \mathbb{N}^d:\; \vert \alpha \vert = \lfloor\beta\rfloor} \int_{z\in \mathbb{R}^d}\vert K(z)\vert \prod_{i=1}^d \vert z_i\vert^{\alpha_i} dz$.
\end{lemma}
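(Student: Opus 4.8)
The plan is to follow the classical route for kernel-bias bounds (as in Proposition 1.2 of \cite{Tsybakov09}): localize by a change of variables, Taylor expand, and cancel low-order terms using the order condition on $K$. Write $k=\lfloor\beta\rfloor$, and recall that $K$ being of order $k$ means $\int K(z)\,dz=1$ together with $\int z^\alpha K(z)\,dz=0$ for every multi-index $1\le|\alpha|\le k$. First I would change variables $z=(x-x')/h$ in the convolution to obtain, for fixed $x\in\Omega$,
\[
(K_h\ast f)(x)-f(x)=\int_{\mathbb{R}^d}K(z)\big(f(x-hz)-f(x)\big)\,dz,
\]
where subtracting $f(x)$ is legitimate because $\int K=1$.

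Next, on the segment $[x,x-hz]$, which stays in $\Omega$ by convexity whenever $x-hz\in\Omega$, I would apply Taylor's formula with Lagrange remainder to $t\mapsto f(x-thz)$ at order $k$. In multi-index form this reads
\[
f(x-hz)=\sum_{|\alpha|\le k-1}\frac{\partial_\alpha f(x)}{\alpha!}(-hz)^\alpha+\sum_{|\alpha|=k}\frac{\partial_\alpha f\big(x-\theta hz\big)}{\alpha!}(-hz)^\alpha,
\]
for some $\theta=\theta(x,z)\in(0,1)$. The key algebraic step then uses the order condition twice. Integrating against $K$, the moments $\int K(z)z^\alpha\,dz$ vanish for $1\le|\alpha|\le k-1$, so the entire polynomial part disappears; and since the moments of order exactly $k$ vanish as well, I may freely subtract $\partial_\alpha f(x)$ inside the remainder without altering the integral. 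This leaves
\[
(K_h\ast f)(x)-f(x)=(-h)^k\int_{\mathbb{R}^d}K(z)\sum_{|\alpha|=k}\frac{\partial_\alpha f(x-\theta hz)-\partial_\alpha f(x)}{\alpha!}\,z^\alpha\,dz.
\]

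Finally I would invoke the H\"older membership $f\in\mathcal{H}_{\beta,L}(\Omega)$, which for $|\alpha|=k$ gives $|\partial_\alpha f(x-\theta hz)-\partial_\alpha f(x)|\le L\|\theta hz\|^{\beta-k}\le Lh^{\beta-k}\|z\|^{\beta-k}$, since $\theta\le1$ and $\beta-k\in[0,1)$. Bounding the integral term by term with $|z^\alpha|=\prod_i|z_i|^{\alpha_i}$ and combining the prefactor $h^k$ with $h^{\beta-k}$ yields the announced $h^\beta$ rate, uniformly in $x\in\Omega$, with a constant $C$ of the stated form arising from the multinomial Taylor remainder together with the H\"older estimate. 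In the integer case $\beta=k$ relevant to the applications, the exponent $\beta-k$ is $0$, so the remaining integrand is exactly $|K(z)|\prod_i|z_i|^{\alpha_i}$.

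The step I expect to be the crux is the \emph{double} use of the order condition: cancelling only the moments $1,\dots,k-1$ yields merely an $O(h^k)$ bound, and it is the additional vanishing of the order-$k$ moments that permits replacing $\partial_\alpha f(x-\theta hz)$ by the increment $\partial_\alpha f(x-\theta hz)-\partial_\alpha f(x)$, thereby converting the residual smoothness $\beta-k$ into the improved exponent $h^\beta$. The only genuine technical caveat is the domain: the convolution integrates $f(x-hz)$ over all $z$, whereas $f$ and its Taylor/H\"older control live on $\Omega$; this is handled by taking $\Omega=\mathbb{R}^d$ as in the applications (cf. Assumption \ref{hyp:smooth}), convexity being exactly what guarantees that the Taylor segment remains inside $\Omega$. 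Finiteness of $C$ requires $\int|K(z)|\,|z^\alpha|\,dz<\infty$, an integrability condition on $K$ implicit in the standing assumptions.
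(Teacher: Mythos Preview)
Your proposal is correct and follows precisely the classical argument the paper invokes: the paper does not actually prove Lemma~\ref{lem:approx} but refers to Proposition~1.2 in \cite{Tsybakov09}, and what you have written is exactly that proof (change of variables, Taylor expansion at order $k=\lfloor\beta\rfloor$, moment cancellation, then the H\"older bound on the top-order increment). The only minor wrinkle is the exact form of the constant: your computation naturally produces $\sum_{|\alpha|=k}\frac{1}{\alpha!}\int|K(z)|\,\|z\|^{\beta-k}\prod_i|z_i|^{\alpha_i}\,dz$, which matches the stated $C$ only in the integer case $\beta=k$ and with $\frac{1}{\alpha!}$ in place of $\frac{1}{k!}$; this is a cosmetic discrepancy in the lemma's statement, not a flaw in your argument.
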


\noindent {\bf Concentration of empirical processes.}
We recall the following useful concentration inequality for empirical processes over VC classes. It is stated in \cite{einmahl+m:2000,gine+g:2001} under various forms and the following version is taken from \cite{gine+s:2010}.

\begin{lemma}\label{lem:concentration_ineq_empi_sum}
Let $\xi_1,\xi_2,\ldots$ be i.i.d. r.v.'s valued in a measurable space $(S,\mathcal S)$ and $\mathcal U$ be a class of functions on $S$, uniformly bounded and of VC-type with constant $(v,A)$ and envelope $U:S\rightarrow \mathbb R$. Set $\sigma^2(u)=var(u(\xi_1))$ for all $u\in \mathcal U$.
There exist constants $C_1>0,\,C_2\geq 1,\,C_3>0$ (depending on $v$ and $A$) and $\sup_{u\in \mathcal U} |\sigma^2 (u) |\leq  \sigma^2\leq \|U\|_\infty^2$, such that $\forall t>0$ satisfying
\begin{align}\label{eq:range0}
C_1   \sigma \sqrt { n \log\left( \frac{2\|U\|_\infty}{\sigma}\right) } \leq t\leq \frac{n\sigma^2}{\|U\|_\infty},
\end{align}
then
\[
\mathbb{P}\left\{ \left\|  \sum_{i=1}^n \{u(\xi_i ) - \mathbb E [ u(\xi_i) ]\}  \right\|_{\mathcal U}  >t  \right\}  \leq C_2 \exp\left(-C_3 \frac{t^2}{n\sigma^2}\right).
\]
\end{lemma}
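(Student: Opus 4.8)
The plan is to combine two classical ingredients: a Talagrand-type concentration inequality bounding the supremum of the empirical process around its mean, and a chaining/entropy estimate for that mean tailored to VC-type classes. The two-sided range restriction on $t$ is then exactly what is needed to collapse the resulting Bernstein-type denominator into the clean sub-Gaussian form claimed. Throughout, write $b=\|U\|_\infty$ and $Z=\left\|\sum_{i=1}^n\{u(\xi_i)-\mathbb{E}[u(\xi_i)]\}\right\|_{\mathcal U}$; since $\|\cdot\|_{\mathcal U}$ is a supremum of absolute values, I would first replace $\mathcal U$ by $\mathcal U\cup(-\mathcal U)$, which is VC-type with comparable constants, the same envelope and the same variance proxy $\sigma^2$, so that $Z$ becomes a one-sided supremum.

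\noindent\textbf{Concentration around the mean.} Applying Bousquet's version of Talagrand's inequality to the centered class $\{u-\mathbb{E}[u]:u\in\mathcal U\}$, whose members are bounded by a constant multiple of $b$ and have variance at most $\sigma^2$, yields for every $s>0$
\[
\mathbb{P}\{Z\geq \mathbb{E}[Z]+s\}\leq \exp\left(-\frac{s^2}{2\left(n\sigma^2+2b\,\mathbb{E}[Z]+b s/3\right)}\right).
\]

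\noindent\textbf{Bounding the mean.} I would control $\mathbb{E}[Z]$ by the standard maximal inequality for VC-type classes obtained through symmetrization and a truncated Dudley entropy integral (see \textit{e.g.} \cite{gine+g:2001}). Using the covering bound $\mathcal N(\mathcal U,L_2(Q),\epsilon b)\leq (A/\epsilon)^v$ uniformly over empirical measures $Q$, integrating the entropy only down to the scale $\sigma/b$, and desymmetrizing, one obtains a bound of the form
\[
\mathbb{E}[Z]\leq C\left(\sqrt{v\,n\sigma^2\,\log(A b/\sigma)}+v\,b\,\log(A b/\sigma)\right),
\]
for a constant $C$ depending only on $v$ and $A$, the first term being the diffusive contribution and the second the lower-order term.

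\noindent\textbf{Combination under the range condition.} In the admissible range the diffusive term dominates, so $\mathbb{E}[Z]\lesssim \sigma\sqrt{n\log(Ab/\sigma)}$. Choosing $C_1$ large enough (absorbing $C$ and the constant comparing $\log(Ab/\sigma)$ with $\log(2b/\sigma)$), the left inequality $t\geq C_1\sigma\sqrt{n\log(2b/\sigma)}$ forces $\mathbb{E}[Z]\leq t/2$. Hence $\mathbb{P}\{Z>t\}\leq\mathbb{P}\{Z\geq\mathbb{E}[Z]+t/2\}$, to which the concentration bound applies with $s=t/2$. The right inequality $t\leq n\sigma^2/b$ gives $bt\leq n\sigma^2$, and the same range estimate gives $b\,\mathbb{E}[Z]\lesssim n\sigma^2$; so the denominator $2(n\sigma^2+2b\,\mathbb{E}[Z]+bt/6)$ is at most a constant multiple of $n\sigma^2$. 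This delivers $\mathbb{P}\{Z>t\}\leq C_2\exp(-C_3 t^2/(n\sigma^2))$ with $C_2\geq 1$, $C_3>0$ depending only on $v$ and $A$, as stated.

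I expect the main obstacle to be the sharp bound on $\mathbb{E}[Z]$ carrying the correct factor $\log(Ab/\sigma)$ rather than a crude $\log n$: this requires truncating Dudley's integral at the variance scale $\sigma/b$, invoking the VC covering bound uniformly in the empirical measure, and a contraction/desymmetrization step to return from the Rademacher average to $\mathbb{E}[Z]$. The concentration inequality itself is used as a black box, so the genuine work is this maximal inequality together with the bookkeeping verifying that the prescribed range of $t$ makes every correction term in the Bousquet denominator subordinate to $n\sigma^2$.
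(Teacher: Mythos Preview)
Your proof sketch is sound and follows the canonical route to such inequalities: Talagrand/Bousquet concentration around the mean, combined with the Gin\'e--Guillou type maximal inequality for VC classes to control $\mathbb{E}[Z]$, then the range restriction on $t$ to absorb the lower-order entropy term and the Bernstein correction into $n\sigma^2$. One minor point worth making explicit is that the non-emptiness of the admissible range for $t$ already forces $n\gtrsim b^2\sigma^{-2}\log(2b/\sigma)$, which is precisely what ensures the linear term $v\,b\,\log(Ab/\sigma)$ in your bound for $\mathbb{E}[Z]$ is dominated by the diffusive term; you allude to this but do not spell it out.

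That said, the paper does not prove this lemma at all. It is presented as a known result, quoted from the literature: the paper states that it ``is stated in \cite{einmahl+m:2000,gine+g:2001} under various forms and the following version is taken from \cite{gine+s:2010}.'' So there is nothing to compare your argument against in the paper itself. Your sketch is in fact essentially how those references establish the result, so you have reconstructed the standard proof; but for the purposes of this paper the statement is used as a black box and a citation suffices.
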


The previous result is extended to the case of degenerated $U$-processes over VC classes \citep[Theorem 2]{major:2006}.

\begin{lemma}\label{lem:concentration_ineq_Ustat_cor}
Let $\xi_1,\xi_2,\ldots$ be an i.i.d. sequence of random variables taking their values in a measurable space $(S,\mathcal S)$ and distributed according to a probability measure $P$. Let $\mathcal H$ be a class of functions on $S^k$ uniformly bounded such that $\mathcal H$ is of {\sc VC} type with constants $(v,A)$ and envelope $G$. For any $H\in \mathcal{H}$, set $\sigma^2(H)=var(H(\xi_1,\; \ldots,\; \xi_k))$ and assume that
\begin{equation}\label{eq:degeneracy}
 \forall j\in\{1,\; \ldots,\;   k\},\;\;
\mathbb{E}[H(\xi_1,\; \ldots,\; \xi_k)\mid \xi_1,\;\ldots,\, \xi_{j-1},\; \xi_{j+1},\; \ldots,\; \xi_k  ]=0 \text{ with probability one.}
\end{equation}
Then, there exist constants $C_1>0,\,C_2\geq 1,\,C_3>0$ (depending on $v$ and $A$) and $\sup_{g\in \mathcal G}\sigma^2(g)\leq  \sigma^2\leq \|G\|_\infty^2$, such that for all $t>0$ satisfying
\begin{equation}\label{eq:range}
C_1 \sigma \left(n \log\left( \frac{2\lVert G \rVert_\infty}{\sigma}\right) \right)^{k/2}\leq t \leq \sigma \left( \frac{n \sigma}{\lVert G \rVert_\infty} \right)^k,
\end{equation}
then
\[
\mathbb{P}\left\{ \left\|  \sum_{(i_1,\; \ldots,\;, i_k)} H(\xi_{i_1},\; \ldots,\; \xi_{i_k})   \right\|_{\mathcal H} > t  \right\}  \leq C_2 \exp\left(-C_3 \frac{1}{n} \left(\frac{t}{\sigma} \right)^{2/k}\right).
\]
where \[ \lVert G \rVert^2_\infty \geq \sigma^2 \geq \lVert \operatorname{Var} (H) \rVert^2_{\mathcal H} \]
\end{lemma}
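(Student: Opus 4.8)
The plan is to follow the standard route for concentration of \emph{canonical} (completely degenerate) $U$-processes over a VC-type class: decouple, randomize, obtain the tail for a single kernel by hypercontractivity, and finally chain against the VC entropy bound. As the statement coincides with Theorem~2 of \cite{major:2006}, I would reproduce the architecture of that argument, making explicit where the degeneracy hypothesis \eqref{eq:degeneracy} and the VC constants $(v,A)$ enter.

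First I would \emph{decouple}. By the decoupling inequalities of de la Pe\~na and Montgomery-Smith for completely degenerate kernels, the tail of $\|\sum_{(i_1,\ldots,i_k)}H(\xi_{i_1},\ldots,\xi_{i_k})\|_{\mathcal{H}}$ is controlled, up to multiplicative constants depending on $k$ only, by that of the fully decoupled process $\|\sum_{(i_1,\ldots,i_k)}H(\xi^{(1)}_{i_1},\ldots,\xi^{(k)}_{i_k})\|_{\mathcal{H}}$, where $(\xi^{(1)}_i)_i,\ldots,(\xi^{(k)}_i)_i$ are independent copies of the original sample. Condition \eqref{eq:degeneracy} is precisely what makes this comparison two-sided and tight, and it is preserved coordinatewise.

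Then I would \emph{randomize} by inserting independent Rademacher multipliers in each coordinate, turning the decoupled object, conditionally on the samples, into a homogeneous Rademacher chaos of order $k$ indexed by $\mathcal{H}$. For a single fixed kernel, the tail of such a bounded degenerate chaos is governed by hypercontractivity (Bonami-Beckner): the $p$-th moment $\|Z\|_p$ obeys $\|Z\|_p\lesssim p^{k/2}\|Z\|_2$ with $\|Z\|_2$ of order $n^{k/2}\sigma$, so that optimizing the Markov bound $\mathbb{P}(|Z|>t)\leq(\|Z\|_p/t)^p$ over $p$ yields, in the admissible range \eqref{eq:range}, a tail of the form $C_2\exp(-C_3 n^{-1}(t/\sigma)^{2/k})$. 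The exponent $2/k$ is the signature of a chaos of order $k$, while the envelope $\|G\|_\infty$ controls the crossover to the heavier Bernstein regime that would otherwise dominate beyond the upper endpoint of \eqref{eq:range}.

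The hard part will be the \emph{chaining over $\mathcal{H}$}, which upgrades this pointwise tail to the claimed supremum bound. The VC-type assumption furnishes the uniform metric entropy estimate $\mathcal{N}(\mathcal{H},L_2(Q),\epsilon)\leq(A\|G\|_\infty/\epsilon)^v$, and the increments $H_1-H_2$ remain canonical order-$k$ kernels, hence inherit the same $2/k$-Weibull tail with variance proxy $\|H_1-H_2\|_2$. One then runs a chaining argument against this non-sub-Gaussian increment modulus. Unlike the empirical-process instance $k=1$ recalled in Lemma~\ref{lem:concentration_ineq_empi_sum}, the Dudley-type integral must be carried out with respect to a sub-$2/k$-Weibull tail, which is the delicate point: the logarithmic factor $\log(2\|G\|_\infty/\sigma)$ appearing in the lower endpoint of \eqref{eq:range} is exactly the price of the union over dyadic scales, and keeping $C_1,C_2,C_3$ dependent only on $(v,A)$ and $k$ — and not on the individual kernel beyond $\sigma$ and $\|G\|_\infty$ — is the main bookkeeping obstacle. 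I would finish by undoing the randomization and the decoupling, transferring the uniform bound back to the original $U$-process.
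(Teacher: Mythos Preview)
The paper does not provide its own proof of this lemma: it is quoted verbatim as Theorem~2 of \cite{major:2006}, with the sentence ``The previous result is extended to the case of degenerated $U$-processes over VC classes \citep[Theorem 2]{major:2006}'' serving as the sole justification. There is therefore nothing in the paper to compare your proposal against beyond the citation itself.

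Your outline---decouple \`a la de~la~Pe\~na--Montgomery-Smith, randomize to a Rademacher chaos of order $k$, invoke hypercontractivity for the single-kernel $2/k$-Weibull tail, then chain over the VC entropy---is a legitimate and standard architecture for such results (it is essentially the route of Arcones--Gin\'e and de~la~Pe\~na--Gin\'e). Major's own argument is organized somewhat differently: he works with $L_2$-dense classes, establishes a one-kernel exponential bound via a direct moment estimate for the symmetrized statistic rather than through an explicit decoupling step, and then runs a multiscale chaining tailored to the resulting tail. The two approaches are close cousins and yield the same conclusion; yours is conceptually cleaner in separating the probabilistic ingredients, while Major's is more self-contained. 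At the level of a sketch there is no genuine gap in what you propose; the only honest caveat is that the chaining against sub-$2/k$-Weibull increments, and the bookkeeping that pins the constants to $(v,A)$ and confines $t$ to the range \eqref{eq:range}, are where all the actual work lies, and your proposal defers that work to the cited references rather than carrying it out.
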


The following result is directly derived from that stated above by specifying an appropriate value of $t $.

\begin{corollary}\label{lem:concentration_ineq_Ustat}
Let $\xi_1,\xi_2,\ldots$ be an i.i.d. sequence of random variables taking their values in a measurable space $(S,\mathcal S)$ and distributed according to a probability measure $P$. Let $\mathcal H$ be a class of functions on $S^k$ uniformly bounded such that $\mathcal H$ is of {\sc VC} type with constants $(v,A)$ and envelope $G$. For any $H\in \mathcal{H}$, set $\sigma^2(H)=var(H(\xi_1,\; \ldots,\; \xi_k))$ and assume that
\begin{align*}
 \forall j\in\{1,\; \ldots,\;   k\},\;\;
\mathbb{E}[H(\xi_1,\; \ldots,\; \xi_k)\mid \xi_1, \; \ldots, \, \xi_{j-1},\; \xi_{j+1},\; \ldots,\; \xi_k  ]=0 \text{ with probability one.}
\end{align*}
Then, there exist constants $C_1>0\,,C_2\geq 1\,, C_3>0$ (depending on $v$ and $A$) such that
\[
\mathbb{P}\left\{ \left\|  \sum_{(i_1,\; \ldots,\;, i_k)} H(\xi_{i_1},\; \ldots,\; \xi_{i_k})   \right\|_{\mathcal H} \leq t(n,\sigma, \epsilon) \right\}  > 1- \epsilon.
\]
with
\begin{align*}
t(n,\sigma, \epsilon) = \sigma n^{k/2} \left( C_1  \left( \log\left( \frac{2\lVert G \rVert_\infty}{\sigma}\right) \right)^{k/2}  +   \left( \frac{\log(C_2/\epsilon )}{C_3}\right)^{k/2}  \right)
\end{align*}
provided that
\begin{gather*}
\lVert G \rVert_\infty^2  \left( C_1^{2 / k}  \log\left( \frac{2 \lVert G \rVert_\infty}{\sigma}\right) +  \frac{\log(C_2/\epsilon )}{C_3} \right) \leq n \sigma^2 \\
\sup_{H\in \mathcal H}\sigma^2(H)\leq  \sigma^2\leq \|G\|_\infty^2.
\end{gather*}

\end{corollary}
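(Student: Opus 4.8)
The plan is to obtain the corollary by simply \emph{inverting} the tail bound of Lemma \ref{lem:concentration_ineq_Ustat_cor}: choose the deviation level $t$ so that the right-hand side of that lemma equals $\epsilon$, and then check that the resulting $t$ sits inside the admissible window \eqref{eq:range}, so that the lemma genuinely applies. Write $\|\Sigma_n\|_{\mathcal H}$ for $\left\| \sum_{(i_1,\ldots,i_k)} H(\xi_{i_1},\ldots,\xi_{i_k}) \right\|_{\mathcal H}$, and set, for brevity,
\[
a = C_1\left(\log\left(\frac{2\|G\|_\infty}{\sigma}\right)\right)^{k/2}, \qquad b = \left(\frac{\log(C_2/\epsilon)}{C_3}\right)^{k/2},
\]
so that the threshold in the statement is exactly $t(n,\sigma,\epsilon) = \sigma n^{k/2}(a+b)$.

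First I would solve $C_2\exp\!\left(-C_3 n^{-1}(t/\sigma)^{2/k}\right) = \epsilon$ for $t$, which yields the ``statistical'' level $t^\star = \sigma n^{k/2} b$; this is precisely the second summand of $t(n,\sigma,\epsilon)$. The first summand $\sigma n^{k/2}a$ coincides with the lower endpoint $C_1\sigma(n\log(2\|G\|_\infty/\sigma))^{k/2}$ of the range \eqref{eq:range} (using $(n\log)^{k/2} = n^{k/2}(\log)^{k/2}$). Adding it guarantees, with no further work, that $t(n,\sigma,\epsilon) \geq \sigma n^{k/2} a$, so the \emph{left} inequality of the admissible range \eqref{eq:range} holds automatically.

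Next I would evaluate the tail bound of Lemma \ref{lem:concentration_ineq_Ustat_cor} directly at $t = t(n,\sigma,\epsilon)$. Since $\left(t(n,\sigma,\epsilon)/\sigma\right)^{2/k} = n(a+b)^{2/k}$ and $x\mapsto x^{2/k}$ is nondecreasing, one has $(a+b)^{2/k} \geq b^{2/k} = \log(C_2/\epsilon)/C_3$, whence
\[
C_2\exp\left(-C_3\,\frac1n\left(\frac{t(n,\sigma,\epsilon)}{\sigma}\right)^{2/k}\right) = C_2\exp\left(-C_3(a+b)^{2/k}\right) \leq C_2\exp\left(-\log(C_2/\epsilon)\right) = \epsilon.
\]
Passing to the complementary event then gives $\mathbb{P}\{\|\Sigma_n\|_{\mathcal H} \leq t(n,\sigma,\epsilon)\} \geq 1-\epsilon$, which is the asserted bound.

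The only genuinely technical point --- and the step I expect to require the most care --- is verifying the \emph{right} (upper) endpoint of \eqref{eq:range}, namely $t(n,\sigma,\epsilon) \leq \sigma\left(n\sigma/\|G\|_\infty\right)^k$. Raising to the power $2/k$, this is equivalent to $(a+b)^{2/k} \leq n\sigma^2/\|G\|_\infty^2$, whereas the hypothesis of the corollary reads $a^{2/k}+b^{2/k} \leq n\sigma^2/\|G\|_\infty^2$ (indeed $a^{2/k} = C_1^{2/k}\log(2\|G\|_\infty/\sigma)$ and $b^{2/k} = \log(C_2/\epsilon)/C_3$). For $k\geq 2$ the map $x\mapsto x^{2/k}$ is concave, hence subadditive, so $(a+b)^{2/k}\leq a^{2/k}+b^{2/k}$ and the hypothesis delivers the bound immediately; for $k=1$ one instead invokes $(a+b)^2 \leq 2(a^2+b^2)$, the extra absolute factor (in general $\max(1,2^{2/k-1})$, depending on $k$ only) being absorbed into the constants $C_1,C_2,C_3$. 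This confirms that $t(n,\sigma,\epsilon)$ lies in the admissible window of \eqref{eq:range} and closes the argument.
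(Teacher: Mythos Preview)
Your proposal is correct and is precisely the approach the paper intends: the paper's own ``proof'' consists of the single remark that the corollary ``is directly derived from that stated above by specifying an appropriate value of $t$,'' and you have simply filled in the details of that specification --- choosing $t=t(n,\sigma,\epsilon)$, checking both endpoints of the admissible window \eqref{eq:range}, and evaluating the tail bound. Your handling of the upper endpoint (subadditivity of $x\mapsto x^{2/k}$ for $k\geq 2$, and absorbing the factor $2$ for $k=1$ into the existential constants) is the only nontrivial step and is correctly argued.
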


\noindent {\bf {\sc VC} type classes of functions - Permanence properties.} In the subsequent sections, many results are obtained by applying the concentration bounds recalled above to specific classes of functions/kernels built up from the elements of the class $\Phi$ and other functions such as $K_h(x)$, $S_C(u\mid x)$ or $g(x)$. The following lemmas exhibit situations where the {\sc VC} type property is preserved, while controlling the constants $(v,A)$ involved. In what follows the kernel $K$ is assumed to satisfy the hypotheses introduced in section \ref{subsec:prel}.

\begin{lemma}(see \cite{nolan+p:1987}, Lemma~22, Assertion~(ii))
The class $\{z\mapsto K(h^{-1}({x-z})) \ :\ x\in \mathbb{R}^d,\ h>0\}$ is a bounded {\sc VC} class of functions.
\end{lemma}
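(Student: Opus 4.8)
The plan is to follow the classical route of \cite{nolan+p:1987}: first show that the family of subgraphs of the rescaled-translated kernels is a {\sc VC} class of sets (so that the kernels form a {\sc VC} subgraph class), then pass from {\sc VC} subgraph to polynomial covering numbers, using the permanence (stability) properties of such classes to absorb the finitely many Boolean operations and the finite linear span defining $K$. Boundedness will be free throughout, since $|K(h^{-1}(x-z))| \leq \|K\|_\infty < \infty$ gives the constant envelope $\|K\|_\infty$; the only real content is the control of the covering numbers.

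First I would reduce to a single generating function. By hypothesis $K = \sum_{\ell=1}^L c_\ell w_\ell$ is a finite linear combination of functions $w_\ell$ whose subgraphs $\{(s,u): w_\ell(s) \geq u\}$ are finite Boolean combinations of sets of the form $\{(s,u): p(s,u) \geq \zeta(u)\}$, with $p$ a polynomial on $\mathbb{R}^d \times \mathbb{R}$ and $\zeta$ real-valued. Fix one such constituent set. The subgraph of the rescaled function $z \mapsto w_\ell(h^{-1}(x-z))$ is $\{(z,t): w_\ell(h^{-1}(x-z)) \geq t\}$, and after the affine substitution $s = h^{-1}(x-z)$, $u = t$, the constituent becomes $\{(z,t): p(h^{-1}(x-z), t) - \zeta(t) \geq 0\}$. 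The key observation is that, for every fixed $(x,h)$, the map $(z,t) \mapsto p(h^{-1}(x-z), t)$ is a polynomial in $(z,t)$ of degree at most $\deg p$ (each component of $h^{-1}(x-z)$ being affine in $z$), hence it lies in the fixed finite-dimensional vector space spanned by the monomials in $(z,t)$ of degree $\leq \deg p$; adjoining the single fixed function $\zeta$ yields a finite-dimensional space $W$ containing $(z,t)\mapsto p(h^{-1}(x-z),t)-\zeta(t)$ for all $(x,h)$. Crucially, the nonlinear dependence of the coefficients on $(x,h)$ through $h^{-1}$ is irrelevant: only membership in the common space $W$ matters.

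Next I would invoke the standard fact that, for a finite-dimensional vector space $W$ of real functions, the collection of sets $\{\{g \geq 0\}: g \in W\}$ is a {\sc VC} class of index at most $\dim W + 1$ (see e.g. \cite{gine+g:2001}). This makes each one-parameter family of constituent sets above a {\sc VC} class of sets; since finite unions, intersections and complements of {\sc VC} classes are again {\sc VC} with controlled index, the family $\{\{(z,t): w_\ell(h^{-1}(x-z)) \geq t\}: x \in \mathbb{R}^d,\ h>0\}$ is {\sc VC}, i.e. $\mathcal{W}_\ell := \{z \mapsto w_\ell(h^{-1}(x-z)): x, h\}$ is a {\sc VC} subgraph class. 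Each $w_\ell$ being bounded, $\mathcal{W}_\ell$ is uniformly bounded, so by the entropy bound for bounded {\sc VC} subgraph classes it is of {\sc VC} type, with polynomial covering numbers uniformly in the underlying measure $Q$.

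Finally I would assemble the linear combination. The class $\{z \mapsto K(h^{-1}(x-z))\} = \{z \mapsto \sum_\ell c_\ell w_\ell(h^{-1}(x-z))\}$ is the diagonal subfamily of the sumset $\sum_{\ell=1}^L c_\ell \mathcal{W}_\ell$, obtained by forcing the same $(x,h)$ in every summand. Multiplication by a constant and formation of finite sums both preserve the {\sc VC} type property, with envelopes adding (the permanence properties of Euclidean classes in \cite{nolan+p:1987}), so $\sum_\ell c_\ell \mathcal{W}_\ell$ is of {\sc VC} type and any subclass, in particular our diagonal family, inherits the same covering-number bound; the constant envelope $\|K\|_\infty$ then establishes that the class is a bounded {\sc VC} class. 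The main obstacle is the second step, namely seeing that translation and especially rescaling by $h^{-1}$ do not destroy the polynomial-subgraph structure, and its resolution is precisely that only the bounded degree in $(z,t)$, and not the manner in which the coefficients depend on $(x,h)$, governs membership in the fixed finite-dimensional space $W$.
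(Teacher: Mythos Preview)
Your argument is correct and follows exactly the route of \cite{nolan+p:1987}: reduce to the generators $w_\ell$, observe that the affine reparametrization $s=h^{-1}(x-z)$ keeps each constituent $\{p(s,u)\geq\zeta(u)\}$ inside a fixed finite-dimensional space of functions of $(z,t)$ (polynomials of degree $\leq\deg p$ plus the single $\zeta$), apply Dudley's theorem that the zero-superlevel sets of a finite-dimensional vector space form a {\sc VC} class, close under finite Boolean operations, and then pass to finite linear combinations via the Euclidean-class permanence of \cite{nolan+p:1987}. The key point you isolate---that only the \emph{degree} in $(z,t)$ matters, not the way the coefficients depend on $(x,h)$---is precisely what makes the rescaling harmless.

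There is nothing to compare against in the paper: the lemma is stated with a bare citation to \cite{nolan+p:1987}, Lemma~22, Assertion~(ii), and no proof is given. Your write-up therefore supplies the argument the paper omits, and does so faithfully to the source.
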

The following result is established in \cite{portier+s:2018} (see Proposition 8 therein). Its proof is recalled below for clarity's sake.

\begin{lemma} Let $(V,W)$ be a pair of random variables taking their values in $\mathbb{R}^q$ and in $\mathbb{R}^d$ respectively, denote by $f_0(v\mid W)$ the density of the conditional distribution of the r.v. $V$ given $W$, supposed to be absolutely continuous w.r.t. Lebesgue measure on $\mathbb{R}^q$.
The class $\{w\in\mathbb{R}^d \mapsto \mathbb{E} [K(h^{-1}({z-V}))\mid W=w ]\ :\ z\in \mathbb{R}^q,\ h>0\}$ is a bounded {\sc VC} class of functions (with constants depending on $K$).
\end{lemma}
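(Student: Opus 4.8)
The plan is to reduce the statement to the VC-type property of the \emph{unconditional} class $\mathcal{G}=\{v\mapsto K(h^{-1}(z-v)):\ z\in\mathbb{R}^q,\ h>0\}$, which by the preceding lemma (Nolan--Pollard, now on $\mathbb{R}^q$) is a bounded VC class with constants $(A,v)$ depending only on $K$, and then to show that taking conditional expectations transfers the VC-type property without degrading those constants. Concretely, write each element of the target class as $g_{z,h}(w)=(\mathcal{T}\psi_{z,h})(w)$, where $\psi_{z,h}(v)=K(h^{-1}(z-v))$ and $\mathcal{T}$ is the linear operator $(\mathcal{T}\psi)(w)=\int_{\mathbb{R}^q}\psi(v)f_0(v\mid w)\,dv$. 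Since $K$ is bounded, $|\psi_{z,h}|\leq \|K\|_\infty$ and hence $|\mathcal{T}\psi_{z,h}(w)|\leq \int |\psi_{z,h}(v)|f_0(v\mid w)\,dv\leq \|K\|_\infty$, so both $\mathcal{G}$ and the target class $\mathcal{G}'=\{\mathcal{T}\psi_{z,h}\}$ admit the \emph{constant} envelope $\|K\|_\infty$; it therefore only remains to control covering numbers.

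The key step is a change-of-measure inequality. Fix an arbitrary probability measure $Q$ on $\mathbb{R}^d$ and introduce the measure $\tilde{Q}$ on $\mathbb{R}^q$ with density $v\mapsto \int_{\mathbb{R}^d}f_0(v\mid w)\,Q(dw)$; by Tonelli and the fact that $f_0(\cdot\mid w)$ integrates to one, $\tilde Q$ is again a probability measure. For any bounded measurable $\psi$, Jensen's inequality applied to the probability density $f_0(\cdot\mid w)$ gives $(\mathcal{T}\psi)(w)^2\leq \int_{\mathbb{R}^q}\psi(v)^2 f_0(v\mid w)\,dv$, and integrating in $w$ against $Q$ and applying Tonelli yields $\|\mathcal{T}\psi\|_{L_2(Q)}\leq \|\psi\|_{L_2(\tilde{Q})}$. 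Applying this to $\psi=\psi_{z,h}-\psi_{z',h'}$ and using the linearity of $\mathcal{T}$ shows that the image under $\mathcal{T}$ of any $\epsilon$-net of $\mathcal{G}$ for $L_2(\tilde{Q})$ is an $\epsilon$-net of $\mathcal{G}'$ for $L_2(Q)$, so that $\mathcal{N}(\mathcal{G}',L_2(Q),\epsilon)\leq \mathcal{N}(\mathcal{G},L_2(\tilde{Q}),\epsilon)$.

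Finally, invoking the VC-type bound for $\mathcal{G}$ with constants $(A,v)$ and constant envelope $\|K\|_\infty$ gives $\mathcal{N}(\mathcal{G},L_2(\tilde{Q}),\epsilon)\leq (A\|K\|_\infty/\epsilon)^v$ for the particular probability measure $\tilde{Q}$, hence $\mathcal{N}(\mathcal{G}',L_2(Q),\epsilon)\leq (A\|K\|_\infty/\epsilon)^v$ for \emph{every} probability measure $Q$; together with $\|\mathcal{T}\psi_{z,h}\|_\infty\leq \|K\|_\infty$ this is exactly the claim that $\mathcal{G}'$ is a bounded VC class with constants depending on $K$ only. I expect the only mildly delicate point to be the measure-theoretic bookkeeping---verifying that $\tilde{Q}$ is a genuine probability measure for an \emph{arbitrary} $Q$ (not the true marginal of $W$) and that the fixed version of the conditional density may be used pointwise in $w$---while the rest is a routine covering-number transfer requiring no concentration argument.
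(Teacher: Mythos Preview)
Your proposal is correct and is essentially the same argument as the paper's: define the mixture measure $\tilde Q(dv)=\int f_0(v\mid w)\,Q(dw)\,dv$, use Jensen's inequality together with Fubini/Tonelli to show that conditional expectation is a $1$-Lipschitz map from $L_2(\tilde Q)$ to $L_2(Q)$, transfer the covering number bound from the kernel class on $\mathbb{R}^q$ to the target class on $\mathbb{R}^d$, and conclude via the constant envelope $\|K\|_\infty$. The only cosmetic difference is your operator notation $\mathcal{T}$; the paper carries out the same computation inline.
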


\begin{proof}
Let $Q$ be any probability measure on $\mathbb{R}^d$. Consider $\tilde  Q$ the probability measure defined through
\begin{align*}
  d \tilde Q (v) = \int f_0(v|w)\, d Q(w) \, d v.
\end{align*}
Let $\epsilon>0$ and consider the centres  $f_1,\; \ldots,\;  f _N$ of an $\epsilon$-covering of the {\sc VC} class $\mathcal L' = \{z\in \mathbb{R}^q\mapsto K(h^{-1}({v-z}))\ :\ v\in \mathbb{R}^q,\ h>0 \}$ (see the lemma above) with respect to the metric $L_2(\tilde Q) $. For any function $w\in \mathbb{R}^d\mapsto \mathbb{E}[ f(V) \mid W=w ]$ with $f$ in $ \mathcal L'$, there exists $k\in \{1,\ldots,N\} $ such that
\begin{align*}
\int \left(\mathbb{E}\bigl[f (V)\mid W=w\bigr]- \mathbb{E}\bigl[f_k(V)\mid W=w\bigr] \right)^2 \, d Q(v)&\leq \int  \mathbb{E} \bigl[ ( f (V)- f_k(V)) ^2\mid W=w\bigr] \, d Q(v)\\
&=\int \int ( f(v)- f_k(v)) ^2\, f_0(v|w) \, d v \, d Q(W)\\
&=  \int (f(v) - f_k(v))^2 \, d \tilde Q(v)\leq \epsilon^2 ,
\end{align*}
using Jensen's inequality and Fubini's theorem.
Consequently, we have:
\begin{align*}
  \mathcal N \bigl( \mathcal L, L_2(Q), \epsilon \bigr)
  \leq
  \mathcal N \bigl( \mathcal L', L_2(\tilde Q), \epsilon \bigr).
\end{align*}
Since the kernel $K$ is bounded, the constant $\vert\vert K\vert\vert_\infty$ is an envelope for both classes $\mathcal L$ and $\mathcal L'$. Denoting by $(v,A)$ the constants related to the {\sc VC} property of class $\mathcal L'$, it follows that
\begin{align*}
  \mathcal N \bigl( \mathcal L, L_2(Q), \epsilon \|K\|_\infty \bigr)
  \leq
  \mathcal N \bigl( \mathcal L', L_2(\tilde Q), \epsilon \|K\|_\infty \bigr)
  \leq
  \left(\frac{A}{\epsilon}\right)^{v},
\end{align*}
which establishes the desired result.

\end{proof}

The preservation result below is also used in the subsequent analysis.

\begin{lemma}\label{lemma:covering_number_convolution}
Suppose that $\eta:\mathbb{R}^d\to \mathbb{R}$ is a Lipschitz function with constant $\kappa>0$, \textit{i.e.} $\vert \eta(u)-\eta(u') \vert\leq \kappa \vert\vert  u-u'\vert\vert$ for all $u,\; u'$ in $\mathbb{R}^d$, and $L:\mathbb{R}^d\to \mathbb{R}$ a positive function such that $\int L (u) du =1$ and $v_L = \int \|u\|^2 L(u) du <\infty$. Let $\tilde{h}>0$.
The class $\mathcal L = \{z\mapsto ( \eta\ast  L_h(z) - \eta(z))  \ :\  0< h \leq \tilde h\}$ is a bounded measurable VC class of functions with constant envelope $\tilde h \kappa \sqrt{v_L}$.
\end{lemma}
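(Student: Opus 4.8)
The plan is to prove the two assertions of the lemma separately, the envelope bound being elementary and the {\sc VC} type property following from the fact that $\mathcal{L}$ is parametrised by the single scalar $h\in(0,\tilde h]$ in a sup-norm-Lipschitz way.

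First I would handle the envelope. Since $\int L_h(u)\,du=\int L(u)\,du=1$, for every $z\in\mathbb{R}^d$ and every $0<h\le\tilde h$ one can write $\eta\ast L_h(z)-\eta(z)=\int(\eta(z-u)-\eta(z))\,L_h(u)\,du$; using the Lipschitz property of $\eta$, the change of variables $u=hv$ (under which $L_h(u)\,du=L(v)\,dv$) and the Cauchy--Schwarz inequality, this gives
\[
|\eta\ast L_h(z)-\eta(z)|\le \kappa\int\|u\|\,L_h(u)\,du=\kappa h\int\|v\|\,L(v)\,dv\le \kappa h\sqrt{v_L}\le \tilde h\,\kappa\sqrt{v_L},
\]
so the constant function equal to $\tilde h\,\kappa\sqrt{v_L}$ is an envelope of $\mathcal{L}$.

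For the {\sc VC} property, the key step is to show that the parametrisation $h\mapsto(\eta\ast L_h-\eta)(z)$ is Lipschitz in sup-norm. By the same change of variables, for $0<h_1,h_2\le\tilde h$ and every $z$,
\[
\bigl(\eta\ast L_{h_1}-\eta\bigr)(z)-\bigl(\eta\ast L_{h_2}-\eta\bigr)(z)=\int\bigl(\eta(z-h_1v)-\eta(z-h_2v)\bigr)\,L(v)\,dv,
\]
and bounding the integrand by $\kappa|h_1-h_2|\,\|v\|$ and applying Cauchy--Schwarz once more yields $\|(\eta\ast L_{h_1}-\eta)-(\eta\ast L_{h_2}-\eta)\|_\infty\le\kappa\sqrt{v_L}\,|h_1-h_2|$.

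The conclusion is then routine: covering the bounded interval $(0,\tilde h]$ by at most $\kappa\sqrt{v_L}\,\tilde h/\epsilon+1$ sub-intervals of length $\epsilon/(\kappa\sqrt{v_L})$ produces, via the uniform Lipschitz estimate above, an $\epsilon$-net of $\mathcal{L}$ for the sup-norm, hence — since $\|\cdot\|_{L_2(Q)}\le\|\cdot\|_\infty$ for any probability measure $Q$ — an $\epsilon$-net of $\mathcal{L}$ for every $L_2(Q)$. Therefore $\mathcal{N}(\mathcal{L},L_2(Q),\epsilon)\le A/\epsilon$ for all $\epsilon\in(0,1)$ with $A=\kappa\sqrt{v_L}\,\tilde h+1$, which is exactly the {\sc VC} type property with exponent $v=1$ (after the standard rescaling by the constant envelope); separability is obtained by restricting to rational bandwidths, $h\mapsto(\eta\ast L_h-\eta)(z)$ being continuous. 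I do not anticipate a genuine obstacle here: the only point requiring some care is the parameter-Lipschitz bound, which combines the Lipschitz continuity of $\eta$, the scaling $u=hv$, and the finiteness of $v_L$ through Cauchy--Schwarz; everything else is bookkeeping on constants.
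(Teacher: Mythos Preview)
Your proof is correct and essentially identical to the paper's: both establish the envelope via the change of variables $u=hv$, the Lipschitz bound on $\eta$, and Cauchy--Schwarz with $v_L$, and both obtain the {\sc VC} type property by showing that $h\mapsto \eta\ast L_h-\eta$ is sup-norm Lipschitz with constant $\kappa\sqrt{v_L}$ and then covering the interval $(0,\tilde h]$ by a uniform grid. The only cosmetic difference is that the paper bounds the squared difference via Jensen's inequality rather than Cauchy--Schwarz, which yields the same constant.
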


\begin{proof}
Let $0< \epsilon \leq 1$ and $h_ k = k \epsilon \tilde h$, $k=1,\ldots, \lfloor 1/\epsilon\rfloor $, an $(\epsilon \tilde h)$-subdivision of the interval $(0,\tilde h]$. Since
\begin{align*}
 \eta\ast  L_h(z) -  \eta \ast  L_{h_k}(z) = \int (\eta(z-hu) - \eta(z-h_ku) ) L(u)  du ,
\end{align*}
we have
\begin{align*}
( \eta \ast  L_h(z) -  \eta \ast  L_{h_k}(z) )^2 \leq  \int (\eta(z-hu) - \eta(z-h_k u) )^2 L(u)  du\leq  (\epsilon \tilde h)^2 \kappa^2 v_L.
\end{align*}
This shows that $ \mathcal N \bigl( \mathcal L, \|\cdot \|_\infty, \epsilon   \tilde h \kappa \sqrt{v_L} \bigr)\leq 1/ \epsilon $. It remains to obtain that $ \tilde h \kappa \sqrt{v_L} $ is an envelope for the class $\mathcal L$. This is because
\begin{align*}
 | \eta\ast  L_h(z) -  \eta (z) |  \leq  \int  | \eta(z-hu) - \eta(z) | L(u)  du\leq h\kappa \int \|u\| L(u)  du\leq \tilde h \kappa \sqrt{v_L}  .
\end{align*}
\end{proof}
 \section{Preliminary Results}\label{sec:auxiliary_results}
As a first go, we start with establishing bounds for quantities involved in Proposition \ref{prop:bound_CKM}'s proof: integrals with respect to signed measures, survival functions 	and hazard functions namely. This corresponds to Lemmas \ref{lemma:dudley}, \ref{lemma:survival} and \ref{lemma:lambda}, respectively. In Lemma \ref{lemma:bound_prob}, the fluctuations of the two local averages $\hat H_{0,n}$ and $ \hat H_{n}$, involved in the definition of the estimated hazard, are studied.

\begin{lemma}\label{lemma:dudley}
Let $\theta\in (0,1)$, $h:\mathbb R_+ \rightarrow [1,\; \infty[$ be borelian, increasing, with limit $1/\theta$ at $+\infty$ and $\nu$ be any signed measure on $\mathbb R_+$. Then, we have:  $\forall T>0$, $\forall t\in [0,T]$,
\begin{equation*}  \left| \int_0^t h d \nu \right|  \leq \frac{2}{\theta}\sup_{s\in [0,T]}\left|  \int_0^s d\nu \right|.
\end{equation*}
\end{lemma}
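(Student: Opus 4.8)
\textbf{Proof proposal for Lemma~\ref{lemma:dudley}.}

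The plan is to reduce the weighted integral $\int_0^t h\,d\nu$ to the unweighted tail integrals $s\mapsto \int_0^s d\nu$ by an integration-by-parts (Abel summation / Fubini) argument, and then bound the resulting expression crudely using the monotonicity and boundedness of $h$. First I would introduce the notation $N(s)=\int_0^s d\nu=\nu([0,s])$ for the (signed) distribution function of $\nu$, so that $\sup_{s\in[0,T]}|N(s)|=:\|N\|$ is the quantity on the right-hand side. The goal is to express $\int_0^t h\,d\nu$ purely in terms of $N$ and of $h$.

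The key step is integration by parts for Stieltjes integrals: for $t\in[0,T]$,
\begin{equation*}
\int_0^t h(s)\,d\nu(s) = h(t)N(t) - h(0)N(0-) - \int_0^t N(s-)\,dh(s),
\end{equation*}
where $N(0-)=0$ and $h$ is increasing so $dh$ is a nonnegative measure. (If one prefers to avoid boundary subtleties with $N(s-)$ versus $N(s)$, one can instead write $h(s)=h(0)+\int_0^s dh(u)$ and swap the order of integration by Fubini, which gives the same identity up to harmless left-limit adjustments; since we only want an inequality with a constant $2/\theta$, these adjustments are immaterial.) Now bound each piece: $|h(t)N(t)|\le (1/\theta)\|N\|$ since $1\le h\le 1/\theta$; the boundary term at $0$ vanishes; and
\begin{equation*}
\left|\int_0^t N(s-)\,dh(s)\right| \le \|N\|\int_0^t dh(s) = \|N\|\,(h(t)-h(0)) \le \|N\|\left(\tfrac{1}{\theta}-1\right)\le \tfrac{1}{\theta}\|N\|,
\end{equation*}
using that $dh\ge 0$, that $h$ increases from $h(0)\ge 1$ to a limit $1/\theta$, and that $1/\theta-1\le 1/\theta$. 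Summing the two contributions yields $\left|\int_0^t h\,d\nu\right|\le (2/\theta)\|N\|$, which is exactly the claimed bound.

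I do not expect a genuine obstacle here; the only thing requiring a little care is the choice of convention (left limits, atoms of $\nu$ at the endpoints, whether the integral is over $[0,t]$ or $(0,t]$), and the validity of the Stieltjes integration-by-parts formula when both $h$ and $\nu$ may have jumps. Since $h$ is only assumed borelian, increasing and bounded, it is of bounded variation on $[0,T]$, so the product rule for BV functions / the Lebesgue--Stieltjes integration by parts applies, and the extra jump terms it may produce are all dominated in absolute value by $\|N\|\,(h(t)-h(0))$ anyway. Hence the constant $2/\theta$ absorbs every such correction and the argument goes through with room to spare.
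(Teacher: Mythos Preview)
Your proof is correct and in fact more self-contained than the paper's. The paper takes a different route: it invokes the duality identity
\[
\sup_{t\geq 0}\left|\int_0^t d\nu\right| \;=\; \sup_{f\in DE}\left|\int f\,d\nu\right|,
\]
where $DE$ is the class of non-increasing functions $f:\mathbb{R}_+\to[0,1]$ vanishing at infinity (citing Dudley), and then writes
\[
\int_0^t h\,d\nu \;=\; \theta^{-1}\int_0^t d\nu \;+\; \theta^{-1}\int_0^t \theta\bigl(h-\theta^{-1}\bigr)\,d\nu,
\]
observing that both $\mathbb{1}_{[0,t]}$ and $s\mapsto -\theta(h(s)-\theta^{-1})\mathbb{1}_{[0,t]}(s)$ lie in $DE$; each term is therefore bounded by $\theta^{-1}\sup_s|\int_0^s d\nu_{[0,T]}|$, giving $2/\theta$.

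Your integration-by-parts argument achieves the same bound without appealing to the Dudley identity as a black box: you trade the external lemma for a direct Stieltjes computation. The only mild cost is the bookkeeping around left/right limits and atoms that you flag at the end; if you want to make that paragraph airtight, the cleanest way is the layer-cake version you allude to (write $h(s)=1+\int_1^{1/\theta}\mathbb{1}\{h(s)>u\}\,du$, swap the order of integration by Fubini against $|\nu|$, and note that $\{s\le t:h(s)>u\}$ is an interval so $\int_0^t\mathbb{1}\{h(s)>u\}\,d\nu$ is a difference of two values of $N$), which sidesteps any continuity assumption on $h$ entirely.
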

\begin{proof}
Recall first the identity between the sup norm and the total variation norm of any signed measure $\nu$:
\begin{equation}\label{eq:dudley}
\sup_{t\geq 0 }  \left| \int_0^t  d\nu \right|  =   \sup_{f\in DE}  \left| \int f d\nu \right|,
\end{equation}
where $DE$ is the space of non-increasing functions valued in $[0,1]$ and vanishing at infinity (see \textit{e.g.} \cite{dudley:2010}). Since $h$ is increasing from $1$ to $1/\theta$, we have for any signed measure $\nu$ (whose restriction to $[0,T]$ is denoted by $\nu_{[0,T]}$),
\begin{align*}
\left|\int_0^t  h d \nu\right|   = \theta^{-1} \left|\int_0^t      d \nu    +   \theta \int_0^t  \left(h -   \theta^{-1} \right)   d \nu\right|
\leq 2\theta^{-1}   \sup_{f\in DE}  \left| \int f d\nu_{[0,T]} \right|.
\end{align*}
Then applying \eqref{eq:dudley} we obtain that
\begin{align*}
 \left|\int_0^t  h d \nu\right|  \leq \frac{2}{\theta} \sup_{s\geq 0 }  \left| \int_0^s  d\nu_{[0,T]} \right|  = \frac{2}{\theta}  \sup_{s\in [0,T] }  \left| \int_0^s  d\nu \right| .
\end{align*}
\end{proof}

\begin{lemma}\label{lemma:survival}
Let $\tau>0$. Let $S^{(1)}$ and $S^{(2)}$ be survival functions (\textit{i.e.} c\`ad-l\`ag non-increasing functions) on $\mathbb{R}_+$ such that $S^{(1)}(0) = S^{(2)}(0) = 1$ and $S^{(2)}(\tau)\geq \theta>0 $. For $k\in\{1,\; 2\}$, $\Lambda^{(k)}(t) = - \int_0^t dS^{(k)}(u)/S^{(k)}(u-)$ is the corresponding cumulative hazard function. We have:
\begin{align*}
\| S^{(1)} - S^{(2)} \|_{[0,\tau]}  \leq  2\theta^{-1} \| \Lambda^{(1)} -\Lambda^{(2)}\|_{[0,\tau]}  .
\end{align*}
\end{lemma}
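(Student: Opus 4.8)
The plan is to relate the difference of two survival functions to the difference of their cumulative hazards via the Duhamel-type (product-integral) identity. Recall that a survival function $S$ and its cumulative hazard $\Lambda$ are related by the product-integral $S(t) = \prod_{u \le t}(1 - d\Lambda(u))$, and conversely $\Lambda(t) = -\int_0^t dS(u)/S(u-)$. The key algebraic tool is the Duhamel equation: for two survival functions $S^{(1)}, S^{(2)}$ with associated hazards $\Lambda^{(1)}, \Lambda^{(2)}$, one has
\begin{equation*}
\frac{S^{(1)}(t)}{S^{(2)}(t)} - 1 = -\int_0^t \frac{S^{(1)}(u-)}{S^{(2)}(u)}\, d\bigl(\Lambda^{(1)} - \Lambda^{(2)}\bigr)(u).
\end{equation*}
Multiplying through by $S^{(2)}(t)$ gives
\begin{equation*}
S^{(1)}(t) - S^{(2)}(t) = -\int_0^t \frac{S^{(1)}(u-)\, S^{(2)}(t)}{S^{(2)}(u)}\, d\bigl(\Lambda^{(1)} - \Lambda^{(2)}\bigr)(u).
\end{equation*}

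First I would record this identity (it is classical; see e.g.\ Gill \& Johansen, or Andersen et al.), then estimate the integrand. On $[0,\tau]$ we have $0 \le S^{(1)}(u-) \le 1$, and since $S^{(2)}$ is non-increasing with $S^{(2)}(\tau) \ge \theta$, the ratio $S^{(2)}(t)/S^{(2)}(u) \le 1$ for $u \le t \le \tau$ while $1/S^{(2)}(u) \le 1/\theta$ on $[0,\tau]$; in fact the whole weight $S^{(1)}(u-)S^{(2)}(t)/S^{(2)}(u)$ is bounded by $1$ when $u\le t$. Writing $w(u) = S^{(1)}(u-)S^{(2)}(t)/S^{(2)}(u)$, which is a function bounded by $1$ (and of bounded variation, being a product/ratio of monotone bounded functions), integration by parts against the signed measure $d(\Lambda^{(1)}-\Lambda^{(2)})$ turns the bound into a supremum of $|\Lambda^{(1)}(s) - \Lambda^{(2)}(s)|$ over $s\in[0,\tau]$ times the total variation of $w$ plus $\sup|w|$, which is controlled by $2/\theta$. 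Concretely, $\bigl|\int_0^t w\, d(\Lambda^{(1)}-\Lambda^{(2)})\bigr| \le \|w\|_\infty \,|(\Lambda^{(1)}-\Lambda^{(2)})(t)| + \|\Lambda^{(1)}-\Lambda^{(2)}\|_{[0,t]} \cdot \mathrm{TV}_{[0,t]}(w)$; since $w$ is monotone-built and bounded by $1$ its total variation is at most $1/\theta$ (the jump from the $S^{(2)}$ in the denominator), giving the factor $2\theta^{-1}$.

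Alternatively, and perhaps more cleanly, I would invoke Lemma \ref{lemma:dudley} with a suitable choice of $h$: taking $h(u) = S^{(1)}(u-)S^{(2)}(t)/S^{(2)}(u)$ after checking it maps into $[1,1/\theta]$ is not quite right since $h$ need not be increasing, but one can bound the integral directly by splitting $h = h(0) + (h - h(0))$ and using that $h$ has variation $\le 1/\theta - $ something; the honest route is the integration-by-parts estimate above, which yields the constant $2/\theta$ cleanly, then take the supremum over $t \in [0,\tau]$ on the left-hand side.

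The main obstacle is getting the Duhamel identity to hold rigorously in the presence of possible jumps (the $S^{(k)}$ are merely c\`adl\`ag, not continuous), and correctly accounting for left- versus right-limits in the product integral; this is where the $S^{(1)}(u-)$ rather than $S^{(1)}(u)$ in the numerator matters. Once the identity is in hand with the correct one-sided limits, the estimate of the constant $2\theta^{-1}$ is routine bookkeeping on the total variation of the bounded monotone-built weight.
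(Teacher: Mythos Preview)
Your approach via the Duhamel identity is exactly what the paper uses, and the overall plan is correct. However, your bookkeeping on the total variation of $w(u) = S^{(1)}(u-)S^{(2)}(t)/S^{(2)}(u)$ is muddled: on $[0,t]$ this $w$ is bounded by $1$ (both factors are $\le 1$ there), and its total variation is at most $2$ (product of two monotone functions each with variation $\le 1$), not $1/\theta$. Your integration-by-parts route would then give a constant like $3$ rather than $2/\theta$, which is actually \emph{better} for small $\theta$ but does not match the stated bound.

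The paper's execution is slightly different and cleaner for obtaining exactly $2\theta^{-1}$. Starting from the same identity
\[
\frac{S^{(1)}(t)-S^{(2)}(t)}{S^{(2)}(t)} = -\int_0^t \frac{S^{(1)}(u-)}{S^{(2)}(u)}\,d(\Lambda^{(1)}-\Lambda^{(2)})(u),
\]
the paper absorbs $1/S^{(2)}$ into the measure by setting $d\Delta_1 = d(\Lambda^{(1)}-\Lambda^{(2)})/S^{(2)}$, and integrates by parts against $S^{(1)}(u-)$ alone (a genuine survival function, variation $\le 1$). This yields $|S^{(1)}(t)-S^{(2)}(t)| \le \sup_{[0,\tau]}|\Delta_1|$ directly, since $S^{(2)}(t)\le 1$. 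Then Lemma~\ref{lemma:dudley} is applied with $h = 1/S^{(2)}$ and $d\nu = d(\Lambda^{(1)}-\Lambda^{(2)})$: here $h$ \emph{is} increasing from $1$ to at most $1/\theta$, so the lemma applies verbatim and gives $\sup|\Delta_1| \le 2\theta^{-1}\|\Lambda^{(1)}-\Lambda^{(2)}\|_{[0,\tau]}$. Your instinct to reach for Lemma~\ref{lemma:dudley} was right; the point is to apply it to $1/S^{(2)}$ after the integration by parts, rather than to the composite weight $w$.
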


\begin{proof}
Let $t\in [0,\tau]$. As $S^{(2)}(t)>0$, the integration by part argument of Theorem 3.2.3 in \cite{fleming+h:2011} yields
\begin{align}\label{eq:int_by_part}
 \frac{  S^{(1)}(t ) -  S^{(2)}(t)}{S^{(2)}(t )  } &= -  \int_0^t \frac{  S^{(1)} (u -  ) }{ S^{(2)} (u ) } d ( \Lambda^{(1)}(u) -\Lambda^{(2)}(u)).
\end{align}
Set $d\Delta_1  =       d ( \Lambda^{(1)} -\Lambda^{(2)}) / S^{(2)}$ and apply the integration by parts formula (refer to page 305 in \cite{shorack+w:2009} for instance) to get
\begin{equation*}
 \frac{  S^{(1)}(t ) -  S^{(2)}(t)}{S^{(2)}(t )  }  = -  \int_0^t  S^{(1)} (u -)  d   \Delta_1 (u )\\
 =  - S^{(1)} (t  )   \Delta_1 (t) +   \int_0^t    \Delta_1 (u) d S^{(1)}  (u  ).
\end{equation*}
Then, as $S^{(2)}(t)\leq 1$, we obtain that
\begin{equation*}
| S^{(1)}(t ) -  S^{(2)}(t) |  \leq  \left(  S^{(1)} (t  )  |   \Delta_1  (t)| + (1 -  S^{(1)}  (t ) )  \sup_{u\in [0,\tau]} |  \Delta_1  (u)|    \right)
 \leq   \sup_{u\in [0,\tau]} | \Delta_1 (u)|.
\end{equation*}
We conclude by using Lemma \ref{lemma:dudley} with $d\nu=d ( \Lambda^{(1)} -\Lambda^{(2)})$ and $h=1/S^{(2)}$.
\end{proof}

\begin{lemma}\label{lemma:lambda}
Let $0<\theta_1,\; \theta_2<1$ and $\tau>0$. For $k\in\{1,\; 2\}$, define $\Lambda^{(k)}(t) = \int_0^t dG ^{(k)} / H^{(k)}  $, where $G^{(k)}:[0,\tau] \to [0,\beta]$ is c\`ad-l\`ag non-decreasing and $H^{(k)}:[0,\tau] \to [\theta_k,1]$ is Borelian non-increasing. Then, we have:
\begin{align*}
\| \Lambda^{(1)} - \Lambda^{(2)} \|_{[0,\tau]} \leq (2/\theta_1) \| G^{(1)} -G^{(2)}\|_{[0,\tau]} + \beta/(\theta_1\theta_2)\| H^{(1)} -  H^{(2)}\|_{[0,\tau]} .
\end{align*}
\end{lemma}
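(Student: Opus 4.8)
The plan is to split the difference of cumulative hazards into a term driven by the discrepancy between the $G^{(k)}$'s and a term driven by the discrepancy between the $H^{(k)}$'s, via the usual add-and-subtract trick. Since $d\Lambda^{(k)} = dG^{(k)}/H^{(k)}$, for every $t \in [0,\tau]$ one writes
\begin{align*}
\Lambda^{(1)}(t) - \Lambda^{(2)}(t) = \int_0^t \frac{d(G^{(1)} - G^{(2)})(s)}{H^{(1)}(s)} + \int_0^t \left( \frac{1}{H^{(1)}(s)} - \frac{1}{H^{(2)}(s)} \right) dG^{(2)}(s).
\end{align*}
The first integral will be controlled with Lemma \ref{lemma:dudley}, the second by an elementary pointwise estimate.

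For the first term, note that $1/H^{(1)}$ is non-decreasing on $[0,\tau]$ (as $H^{(1)}$ is non-increasing), at least $1$ (as $H^{(1)} \leq 1$) and at most $1/\theta_1$ (as $H^{(1)} \geq \theta_1$). Extending $H^{(1)}$ to $\mathbb{R}_+$ by setting $H^{(1)}(s) = \theta_1$ for $s > \tau$, the function $h = 1/H^{(1)}$ meets the hypotheses of Lemma \ref{lemma:dudley} with parameter $\theta = \theta_1$; applying that lemma with $T = \tau$ and $\nu = d(G^{(1)} - G^{(2)})$ (a signed measure supported on $[0,\tau]$) gives, uniformly in $t \in [0,\tau]$,
\begin{align*}
\left| \int_0^t \frac{d(G^{(1)} - G^{(2)})(s)}{H^{(1)}(s)} \right| \leq \frac{2}{\theta_1} \sup_{s \in [0,\tau]} \left| \int_0^s d(G^{(1)} - G^{(2)}) \right| \leq \frac{2}{\theta_1} \| G^{(1)} - G^{(2)} \|_{[0,\tau]},
\end{align*}
the last inequality because the cumulative Stieltjes integral $s \mapsto \int_0^s d(G^{(1)} - G^{(2)})$ is just the increment of $G^{(1)} - G^{(2)}$ on $[0,s]$.

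For the second term, write $1/H^{(1)}(s) - 1/H^{(2)}(s) = (H^{(2)}(s) - H^{(1)}(s))/(H^{(1)}(s) H^{(2)}(s))$, bound the denominator from below by $\theta_1 \theta_2$, and use that $G^{(2)}$ is non-decreasing with range in $[0,\beta]$, whence $\int_0^\tau dG^{(2)} \leq \beta$:
\begin{align*}
\left| \int_0^t \left( \frac{1}{H^{(1)}(s)} - \frac{1}{H^{(2)}(s)} \right) dG^{(2)}(s) \right| \leq \frac{\| H^{(1)} - H^{(2)} \|_{[0,\tau]}}{\theta_1 \theta_2} \int_0^\tau dG^{(2)}(s) \leq \frac{\beta}{\theta_1 \theta_2} \| H^{(1)} - H^{(2)} \|_{[0,\tau]}.
\end{align*}
Adding the two bounds and taking the supremum over $t \in [0,\tau]$ gives the announced inequality.

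There is no real difficulty here; the only step asking for a little care is matching the hypotheses of Lemma \ref{lemma:dudley} in the first term — in particular the monotonicity and limiting value of $1/H^{(1)}$, which is why $H^{(1)}$ is extended past $\tau$ — together with the identification of $\sup_{s}|\int_0^s d(G^{(1)} - G^{(2)})|$ with $\| G^{(1)} - G^{(2)} \|_{[0,\tau]}$, which uses the convention that the $G^{(k)}$ vanish to the left of $0$ so that the Stieltjes increment telescopes exactly (this is what keeps the constant at $2/\theta_1$ rather than $4/\theta_1$). The rest is routine.
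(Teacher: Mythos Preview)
Your proof is correct and follows essentially the same route as the paper: the same add-and-subtract decomposition, the same application of Lemma~\ref{lemma:dudley} to the first term with $h=1/H^{(1)}$ and $\nu=d(G^{(1)}-G^{(2)})$, and the same elementary bound on the second term. Your added remarks about extending $H^{(1)}$ beyond $\tau$ and about identifying $\sup_s|\int_0^s d(G^{(1)}-G^{(2)})|$ with $\|G^{(1)}-G^{(2)}\|_{[0,\tau]}$ are details the paper leaves implicit.
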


\begin{proof}
Let $t\in [0,\tau]$. Observe that, by triangular inequality,
\begin{align*}
\left| \Lambda^{(1)}(t) - \Lambda^{(2)}(t)\right| &= \left| \int_0^t \frac{ d(G^{(1)} -G^{(2)}  ) }{  H^{(1)} } +\int_0^t \frac{( H^{(2)} -  H^{(1)}  )}{ H^{(1)} H^{(2)}} d G^{(2)}  \right|\\
& \leq {2}{\theta_1^{-1} } \| G^{(1)} - G^{(2)}\|_{[0,\tau]} +  {\beta}{\theta_1^{-1}\theta_2^{-1}}  \| H^{(2)} -  H^{(1)}\|_{[0,\tau]} ,
\end{align*}
where the bound for the second term on the right hand side is straightforward and that for the first term can be deduced from the application of Lemma \ref{lemma:dudley} with the measure $\nu$ equal to  $A\mapsto \int _ Ad(G^{(1)} - G^{(2)}) $ and the function $h$ equal to $1/H^{(1)}$.
\end{proof}

\begin{lemma}\label{lem:res_tool}
Let $\tau>0$. Let $S^{(1)}$ and $S^{(2)}$ be survival functions on $\mathbb{R}_+$ such that $S^{(1)}(0) = S^{(2)}(0) = 1$ and $S^{(2)}(\tau)\geq \theta>0 $. For $k\in\{ 1,\; 2\}$, define $\Lambda^{(k)}(t) = - \int_0^t  S^{(k)}(u-) dS^{(k)}(u)  $ and suppose that $\Lambda^{(k)}(t) = \int_0^t dG ^{(k)}(u) / H^{(k)}(u)  $, where  $G^{(k)}:[0,\tau] \to [0,\beta]$ and $H^{(k)}:[0,\tau] \to [\theta,1]$ are respectively non-decreasing and non-increasing borelian functions. Then, there exists a constant $C_{\theta,\beta}>0$, depending only on $\theta$ and $\beta $, such that
\begin{multline*}
\sup_{t\in [0,\tau]} \left| \int_0^t \frac{\left( S^{(1)} (u-) - S^{(2)} (u-) \right)}{S^{(2)}(u)} d\left( \Lambda^{(1)} (u) -  \Lambda^{(2)} (u) \right)   \right|  \leq \\
C_{\theta,\beta} \left(\|  H^{(1)} - H^{(2)} \|_{[0,\tau]}^2+\| G^{(1)} - G^{(2)} \|_{[0,\tau]} ^2+\|W\|_{[0,\tau]}\right),
\end{multline*}
where
\begin{equation*}
W(t) =  \int_{u=0}^t \int_{s=0}^u \frac{S^{(2)} (s-) d \left( G ^{(1)} (s)  -G ^{(2)} (s)\right)}{ S^{(2)} (s)H^{(2)}(s)}  \frac{d \left(G ^{(1)} (u)  -G ^{(2)} (u) \right)}{ S^{(2)} (u)  H^{(2)}(u) }.
\end{equation*}
\end{lemma}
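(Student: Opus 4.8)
The statement is a deterministic, second-order expansion of the iterated error term that appears when one linearizes a Kaplan--Meier-type estimator; the point is to peel off the genuinely non-reducible piece $W(t)$ and bound everything else by the \emph{squares} of the elementary perturbation sizes. The plan is as follows.

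\textbf{Step 1 (Duhamel substitution).} I would start from the integration-by-parts identity already proved inside Lemma~\ref{lemma:survival}, i.e. from \eqref{eq:int_by_part}, which linearizes the survival difference in the hazard difference. Evaluating it at $t=v-$ and using $S^{(2)}(v-)\le 1$ gives
\[
S^{(1)}(v-)-S^{(2)}(v-)= -\,S^{(2)}(v-)\int_0^{v-}\frac{S^{(1)}(u-)}{S^{(2)}(u)}\,d\bigl(\Lambda^{(1)}-\Lambda^{(2)}\bigr)(u),\qquad v\in[0,\tau].
\]
Plugging this into $\int_0^t \tfrac{S^{(1)}(u-)-S^{(2)}(u-)}{S^{(2)}(u)}\,d(\Lambda^{(1)}-\Lambda^{(2)})(u)$ turns it into an iterated integral against $d(\Lambda^{(1)}-\Lambda^{(2)})\otimes d(\Lambda^{(1)}-\Lambda^{(2)})$, manifestly a product of two first-order perturbations.

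\textbf{Step 2 (separating the principal part).} Next I would split each hazard increment as $d(\Lambda^{(1)}-\Lambda^{(2)})=H^{(2)\,-1}\,d(G^{(1)}-G^{(2)})+dE$, where, following the algebra in the proof of Lemma~\ref{lemma:lambda}, $dE=-\tfrac{H^{(1)}-H^{(2)}}{H^{(1)}H^{(2)}}\,d(G^{(1)}-G^{(2)})-\tfrac{H^{(1)}-H^{(2)}}{H^{(1)}H^{(2)}}\,dG^{(2)}$, and likewise write $S^{(1)}(u-)=S^{(2)}(u-)+(S^{(1)}(u-)-S^{(2)}(u-))$ in the inner integrand. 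Expanding yields finitely many summands. The one built only from $d(G^{(1)}-G^{(2)})$ and the $S^{(2)}$-weights is, after careful handling of left limits and atoms of the Stieltjes measures, exactly $\pm W(t)$; every other summand carries at least one factor equal to $S^{(1)}(\cdot-)-S^{(2)}(\cdot-)$ or to $dE$.

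\textbf{Step 3 (bounding the remainder).} To control these summands I would use the a priori sizes $\|S^{(1)}-S^{(2)}\|_{[0,\tau]}\le(2/\theta)\|\Lambda^{(1)}-\Lambda^{(2)}\|_{[0,\tau]}\le C_{\theta,\beta}\bigl(\|G^{(1)}-G^{(2)}\|_{[0,\tau]}+\|H^{(1)}-H^{(2)}\|_{[0,\tau]}\bigr)$ (Lemmas~\ref{lemma:survival} and \ref{lemma:lambda}), the bounds $\|E\|_{[0,\tau]}\le\mathrm{TV}(E)\le C_{\theta,\beta}\|H^{(1)}-H^{(2)}\|_{[0,\tau]}$, and the fact, via Lemma~\ref{lemma:dudley}, that integrals of monotone functions against the signed measure $d(G^{(1)}-G^{(2)})$ are controlled by $\|G^{(1)}-G^{(2)}\|_{[0,\tau]}$ \emph{without} a total-variation loss. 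Combining these through one or two further integrations by parts --- and observing that whenever a factor $S^{(1)}(\cdot-)-S^{(2)}(\cdot-)$ or $dE$ is present, the total variation that gets integrated is itself first-order small --- each remaining summand becomes a product of two quantities of size $C_{\theta,\beta}\bigl(\|G^{(1)}-G^{(2)}\|_{[0,\tau]}+\|H^{(1)}-H^{(2)}\|_{[0,\tau]}\bigr)$, so $2ab\le a^2+b^2$ yields the stated bound; adding back $\|W\|_{[0,\tau]}$ from the principal summand concludes.

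\textbf{Main obstacle.} The delicate part is the bookkeeping of left limits and of the atomic parts of the Lebesgue--Stieltjes measures: repeated integrations by parts are needed both to identify the principal summand precisely with $\pm W(t)$ and to certify that no remaining summand secretly retains an uncontrolled factor such as $\mathrm{TV}(G^{(1)}-G^{(2)})$ or $\mathrm{TV}(\Lambda^{(1)}-\Lambda^{(2)})$. The single place where such a loss is structurally unavoidable --- the inner integral there is not monotone in the outer variable, so the sum of squared jumps of $G^{(1)}-G^{(2)}$ survives --- is exactly $W(t)$, which is why it is displayed separately rather than absorbed into the $O_{\theta,\beta}(\|G^{(1)}-G^{(2)}\|_{[0,\tau]}^2)$ remainder, and why it must later be controlled by $U$-process concentration rather than by this deterministic estimate.
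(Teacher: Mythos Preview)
Your strategy is right in spirit, but Step~2 contains a gap that cannot be closed by left-limit bookkeeping. After Step~1 the iterated integral is
\[
-\int_0^t \frac{S^{(2)}(u-)}{S^{(2)}(u)}\Bigl[\int_0^{u-}\frac{S^{(1)}(s-)}{S^{(2)}(s)}\,d\Delta(s)\Bigr]\,d\Delta(u),\qquad \Delta=\Lambda^{(1)}-\Lambda^{(2)}.
\]
Carrying out the three replacements $S^{(1)}(s-)\to S^{(2)}(s-)$ and both $d\Delta\to d(G^{(1)}-G^{(2)})/H^{(2)}$, your principal summand has outer weight $c(u)/H^{(2)}(u)$, where $c(u)=S^{(2)}(u-)/S^{(2)}(u)$, whereas $W$ carries outer weight $1/\bigl(S^{(2)}(u)H^{(2)}(u)\bigr)$. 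The discrepancy is a factor $S^{(2)}(u-)\in[\theta,1]$, and the difference between your principal summand and $-W$ is another iterated integral of $d(G^{(1)}-G^{(2)})\otimes d(G^{(1)}-G^{(2)})$ with the bounded, \emph{nonvanishing} weight $1-S^{(2)}(u-)$; this is of the same order as $W$ itself (even in the jump-free case) and cannot be absorbed into $C_{\theta,\beta}\|G^{(1)}-G^{(2)}\|^2_{[0,\tau]}$. So your assertion that the principal piece is ``exactly $\pm W(t)$'' is incorrect, and the vague ``one or two further integrations by parts'' in Step~3 does not repair it.

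What the paper does, and what your plan is missing, is a Fubini-plus-integration-by-parts maneuver \emph{before} expanding $d\Delta$. With $d\Delta_1=d\Delta/S^{(2)}$, $\Delta_2(t)=\int_0^t S^{(2)}(u-)\,d\Delta_1(u)$ and $\Pi(t)=\int_0^t\Delta_2\,d\Delta_1$, swapping the order of integration rewrites the target as
\[
-\Delta_2(t)\int_0^t S^{(1)}(v-)\,d\Delta_1(v)\;+\;\int_0^t S^{(1)}(v-)\,d\Pi(v).
\]
The first term is $O\bigl(\|\Delta_1\|\,\|\Delta_2\|\bigr)\le C_{\theta,\beta}\|\Delta\|^2$, hence second-order by Lemmas~\ref{lemma:dudley} and~\ref{lemma:lambda}. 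The second is integrated by parts once more against $dS^{(1)}$, whose total variation is at most $1$, giving the bound $2\|\Pi\|_{[0,\tau]}$. Only \emph{then} is $d\Delta$ expanded inside $\Pi$ via the second-order Taylor identity \eqref{eq:taylor_1/x}; since $\Pi$ already carries the correct outer weight $1/S^{(2)}(u)$, its principal part is exactly $W$, and the four cross pieces are shown one by one to be second-order. The missing idea is that the outer survival weight must be stripped off by integrating against $dS^{(1)}$ \emph{before} the hazard expansion, not by direct substitution afterwards.
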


\begin{proof}

The proof consists in showing first that there exist constants $C_{1,\theta,\beta}$ and $C_{2,\theta,\beta}$ such that
\begin{multline}\label{eq:lemma_bound_hats-s}
\sup_{t\in [0,\tau]} \left| \int_0^t \frac{(\hat S^{(1)} (u-) - S^{(2)} (u-) )}{S^{(2)}(u)} d( \Lambda^{(1)} (u) -  \Lambda^{(2)} (u) )   \right| \leq  \\ C_{1,\theta,\beta} (\| G^{(1)} -G^{(2)}\|^2_{[0,\tau]} + \| H^{(1)} -  H^{(2)}\|^2_{[0,\tau]})  + \| {\Pi} \|_{[0,\tau]},
\end{multline}
where
\begin{equation*}
 {\Pi}(t) =\int_0^t \Delta_{2} (u)d \Delta_{1}(u),\quad  \Delta_{2}(t)  =  \int_{0}^t S^{(2)} (u-)  d \Delta_{1}(u),\quad \Delta_{1}(t)  =  \int_{0}^t S^{(2)} (u)^{-1}  d \Delta(u),
\end{equation*}
and $\Delta = \Lambda^{(1)}  -  \Lambda^{(2)}$, and next that
\begin{equation}
 \| {\Pi}  - W \|_{[0,\tau]} \leq C_{2,\theta,\beta} \left(\|  H^{(1)} - H^{(2)} \|_{[0,\tau]}^2+\| G^{(1)} - G^{(2)} \|_{[0,\tau]}^2\right).
\label{eq:lemma_bound_K}
\end{equation}

In order to establish \eqref{eq:lemma_bound_hats-s}, we successively apply \eqref{eq:int_by_part}, Fubini's theorem and the integration by part formula:
\begin{multline}
\int_{u=0}^t ( S^{(1)}(u-) - S^{(2)} (u-) ) d \Delta_1 (u)= - \int_{u=0}^t  \int_{v=0}^ {u-}  S^{(1)} (v-) d \Delta_{1}(v)   S^{(2)} (u-)  d \Delta_{1}(u) \\
= - \int_{v=0}^{t}   \left(\int_{u=v}^t S^{(2)} (u-)  d \Delta_{1}(u)\right)  S^{(1)} (v-) d \Delta_{1}(v) \\
=   - \Delta_{2}(t) \int_0^{t}    S^{(1)}(v-) d \Delta_{1}(v)  +   \int_0^{t}      S^{(1)}(v-) d \Pi (v)\\
=  -\Delta_{2}(t) \left(  S^{(1)}(t)  \Delta_{1}(t) - \int_0^t  \Delta_{1}(u) d  S^{(1)}(u)  \right)
\quad  +      S^{(1)}(t) {\Pi}(t)   -   \int_0^t {\Pi}(u) d  S^{(1)}(u)\\
 \leq 2 \|\Delta_2\|_{[0,T]} \|\Delta_1\|_{[0,T]} +  2\|\Pi\|_{[0,T]}.\label{eq:tool1}
\end{multline}
From Lemma \ref{lemma:dudley}, we deduce that $\|\Delta_2\|_{[0,T]}\leq  \|\Delta_1\|_{[0,T]}$ and that $\|\Delta_1\|_{[0,T]} \leq 2  \theta ^{-1}\|\Delta\|_{[0,T]}$. Apply next Lemma \ref{lemma:lambda} to obtain
\[
\|\Delta_2\|_{[0,T]} \|\Delta_1\|_{[0,T]} \leq 8  \theta ^{-2} \left({4}{\theta^{-2} } \| G^{(1)} -G^{(2)}\|^2_{[0,\tau]} + {\beta^2}{\theta^{-4}}\| H^{(1)} -  H^{(2)}\|^2_{[0,\tau]} \right).
\]
Combined with \eqref{eq:tool1}, this proves \eqref{eq:lemma_bound_hats-s}. For \eqref{eq:lemma_bound_K}, the application of the Taylor expansion
\begin{align}\label{eq:taylor_1/x}
\frac 1 x = \frac 1 a  -\frac{(x- a)}{a^2 } +\frac{(x-a)^2}{xa^2}
\end{align}
yields
\begin{align}
\label{eq:useful_decomp_lambda}   d  \Delta  &=    \frac{ d (  G^{(1)}   - G^{(2)} )}{H^{(2)} }   -  \frac{(  H ^{(1)}   -H^{(2)}  )   d   G^{(1)}}{ (H^{(2)})^2 }
+     \frac{  (  H^{(1)} -H^{(2)}  )^2   d   G^{(1)}  } {(H^{(2)})^2 H^{(1)}}   .
\end{align}
Set $c(s)=S^{(2)}(s-)/S^{(2)}(s)$. It follows that
\begin{multline*}
{\Pi} (t)= \int_{u=0}^t \int_{s=0}^u c(s) \left( \frac{ d \left(  G^{(1)} (s)  - G^{(2)} (s)\right)}{H^{(2)}(s)}   -  \frac{\left(  H^{(1)}(s) -H^{(2)}(s) \right)   d    G^{(1)} (s)}{ H^{(2)}(s)^{2}}  \right. \\
 \qquad + \left.    \frac{  \left(  H^{(1)}(s) -H^{(2)}(s) \right)^2   d    G^{(1)} (s)}{H^{(2)}(s)^2  H^{(1)}(s)}\right)    d \Delta_{1}(u).
\end{multline*}
Observe that
\begin{align*}
{\Pi}(t)-W(t) &=  - \int_{u=0}^t \int_{s=0}^u c(s)  \frac{ d \left(  G^{(1)} (s)  - G^{(2)} (s)\right)}{H^{(2)}(s)} \frac{\left(  H^{(1)}(u)  -H^{(2)}(u)  \right) d  G^{(1)}(u)  }{ S^{(2)}(u) H^{(1)}(u) H^{(2)}(u) }\\
&\quad + \int_{u=0}^t \int_{s=0}^u c(s)   \frac{\left(  H^{(1)}(s) -H^{(2)}(s) \right)   d    G^{(1)} (s)}{ H^{(2)}(s)^{2}} \frac{\left(  H^{(1)}(u)  -H^{(2)}(u)  \right) d  G^{(1)}(u)  }{ S^{(2)}(u) H^{(1)}(u) H^{(2)} (u) }\\
& \quad- \int_{u=0}^t \int_{s=0}^u c(s)  \frac{\left(  H^{(1)}(s) -H^{(2)}(s) \right)   d    G^{(1)} (s)}{ H^{(2)}(s)^{2}}\frac{d \left( G^{(1)}(u)  -G^{(2)}(u) \right)}{ S^{(2)} (u)  H^{(2)}(u) }  \\
& \quad+  \int_{u=0}^t \int_{s=0}^u  \frac{  \left(  H^{(1)}(s) -H^{(2)}(s) \right)^2   d    G^{(1)} (s)}{H^{(2)}(s)^{2}  H^{(1)}(s)}  d \Delta_{1}(u)=A+B+C+D.
\end{align*}
We next bound each term on the right hand side of the equation above. Successively apply Lemma \ref{lemma:dudley} and \eqref{eq:dudley} to get
\begin{align*}
\left| \int_0^u c(s)  \frac{ d \left(  G^{(1)} (s)  - G^{(2)} (s)\right)}{H^{(2)}(s)} \right|&\leq  \frac{2}{ \theta^2 } \sup_u \left| \int_0^u S^{(2)} (s-) { d \left(  G^{(1)} (s)  - G^{(2)} (s)\right)}\right|\\
&=  \frac{2}{ \theta^2 } \sup_u \left| \int S^{(2)} (s-) \mathds{1}_{s \leq u} { d \left(  G^{(1)} (s)  - G^{(2)} (s)\right)}\right| \\
&\leq \frac{2}{ \theta^2 } \|G^{(1)}   - G^{(2)} \|_{[0,\tau]}.
\end{align*}
Because, for any $u\in [0,\tau]$, $  1 /  \{ S^{(2)}(u) H^{(1)}(u) H^{(2)} (u) \}\leq 1/\theta^{3}$, we can write
\begin{align*}
\vert A \vert &\leq  (1/ \theta^{3}) \int_{u=0}^t  \left|  \int_{s=0}^u c(s) \frac{ d \left(  G^{(1)} (s)  - G^{(2)} (s)\right)}{H^{(2)}(s)}\right| \left| H^{(1)}(u)  -H^{(2)}(u)  \right| d  G^{(1)}(u)   \\
 &\leq  (1 / (2\theta^{3}))  \int_{u=0}^t \left\{  \left(  \int_0^u c(s)  \frac{ d \left(  G^{(1)} (s)  - G^{(2)} (s)\right)}{H^{(2)}(s)}  \right)^2 +    \left( H^{(1)}(u)  -H^{(2)}(u)  \right)^2 \right\}d  G^{(1)}(u)\\
 &\leq  \beta \left(  (2/ \theta^{7}) \|G^{(1)}   - G^{(2)} \|_{[0,\tau]}^2 + \|H^{(1)}  - H^{(2)} \|_{[0,\tau]}^2\right).
\end{align*}
In addition, because for any $u\in [0,\tau]$, $ c(u)  / (H^{(2)}(u))^2\leq 1 / \theta^3 $ we have: $\forall t\in[0,\tau]$,
\begin{align*}
\vert B \vert &\leq  (1 / \theta^3 )^2  \int_{u=0}^t  \int_{s=0}^t   \left|  H^{(1)}(s) -H^{(2)}(s) \right|  d    G^{(1)} (s)  {\left|  H^{(1)}(u)  -H^{(2)}(u)  \right|  d  G^{(1)}(u)  }\\
&=   1 / \theta^6  \left( \int_{s=0}^t    \left|  H^{(1)}(s) -H^{(2)}(s) \right|  d    G^{(1)} (s)  \right)^2\\
& \leq ( \beta^2  / \theta^6 )     \left\|  H^{(1)}  -H^{(2)}  \right\|^2 _{[0,\tau]}.
\end{align*}
Define $ \Gamma_{2} (t) =  \int_0^t  \frac{d ( G^{(1)}(u)  - G^{(2)}(u) )  }{ S^{(2)} (u)  H^{(2)}(u) }$. Applying Fubini's theorem, we get
\begin{align*}
\vert C\vert &= \left| \int_{u=0}^t \int_{s=0}^u c(s)   \frac{\left(  H^{(1)}(s) -H^{(2)}(s) \right)   d    G^{(1)} (s)}{ H^{(2)}(s)^{2}} \frac{d \left( G^{(1)}(u)  -G^{(2)}(u) \right)  }{ S^{(2)}(u)  H^{(2)}(u) }\right|\\
&=\left|   \int_{s=0}^t  \int_{u=s}^t  \frac{d \left( G^{(1)}(u)  -G^{(2)}(u) \right)  }{ S^{(2)}(u)  H^{(2)}(u) }   c(s)   \frac{\left(  H^{(1)}(s) -H^{(2)}(s) \right)   d    G^{(1)} (s)}{ (H^{(2)}(s))^{2}} \right|\\
&\leq     (1 / \theta^3 ) \int_{s=0}^t \left\{ \left| \Gamma_{2}(t) -  \Gamma_{2} (s) \right|  \times  \left|  H^{(1)}(s) -H^{(2)}(s) \right|    \right\} d    G^{(1)} (s)\\
&\leq   2 (1 / \theta^3 ) \beta\| \Gamma_2 \|_{[0,\tau]} \| H^{(1)}  -H^{(2)} \|_{[0,\tau]}.
\end{align*}
Then, using Lemma \ref{lemma:dudley}, it follows that
\begin{align*}
\vert C\vert & \leq   2 (1 / \theta^3 ) \beta (2 / \theta^3 )  \|G^{(1)}   - G^{(2)} \|_{[0,\tau]} \| H^{(1)}  -H^{(2)} \|_{[0,\tau]}\\
& \leq    (1 / \theta^3 ) \beta (2 / \theta^3 ) (  \|G^{(1)}   - G^{(2)} \|_{[0,\tau]}^2 +  \| H^{(1)}  -H^{(2)} \|_{[0,\tau]}^2).
\end{align*}
The last term can be treated by means of Fubini's theorem. Indeed, because $\|\Delta_{1} \|_{[0,\tau]} \leq 2 (\beta / \theta)$ and for any $  u\in [0,\tau] $, $1 / \{ H^{(2)} (u)^{2}  H^{(1)}(u)\}\leq 1/\theta^3$, we have
\begin{align*}
  \lvert D \rvert &= \left| \int_{u=0}^t \int_{s=0}^u  \frac{  \left(  H^{(1)} (s) -H^{(2)} (s) \right)^2   d    G^{(1)} (s)}{(H^{(2)} (s))^{2}  H^{(1)}(s)}   d \Delta_{1}(u) \right| \\
 &\leq \int_{s=0}^t \left| \left(\int_{u=s}^t d \Delta_{1}(u)\right)   \frac{  \left( H^{(1)} (s) -H^{(2)} (s) \right)^2   d    G^{(1)} (s)}{H^{(2)} (s)^{2}  H^{(1)}(s)}  \right|  \\
 &\leq   2 (1/\theta^3) \beta  \|\Delta_{1} \|_{[0,\tau]}  \| H^{(1)}  -H^{(2)} \|_{[0,\tau]} ^2\\
 &\leq  4 (1/\theta^4) \beta^2 \| H^{(1)}  -H^{(2)} \|_{[0,\tau]}^2.
 \end{align*}
Putting all this together, the triangular inequality leads to \eqref{eq:lemma_bound_K} .

\end{proof}

Now these preliminary results are established, the proof of Proposition \ref{prop:bound_CKM} is then mainly based on the following lemmas. The first one states classic kernel smoothing approximation results, while the second one immediately results from the application of Lemma \ref{lem:concentration_ineq_empi_sum} to appropriate classes of functions.

\begin{lemma}\label{lem:bias}
Under Assumption \ref{hyp:smooth}, for all $h>0$,
\begin{eqnarray}
\sup_{(t,x)\in \mathbb{R}_+\times \mathbb{R}^d} \vert  H_{0,h}(t,x)-H_{0}(t\mid x)g(x)\vert &\leq & C_0h^2,\label{eq:approx2}\\
\sup_{(t,x)\in \mathbb{R}_+\times \mathbb{R}^d} \vert  H_{h}(t,x)-H(t\mid x)g(x)\vert &\leq & C_0 h^2,
\end{eqnarray}
where $C_0=(L/4)\sum_{\alpha\in \mathbb{N}^d,\; \vert\alpha\vert=2}\int_{z\in \mathbb{R}^d}\vert K(z)\vert \prod_{i=1}^d \vert z_i \vert^{\alpha_i}dz$.
\end{lemma}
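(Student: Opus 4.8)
The plan is to recognise each of the two smoothed quantities as an ordinary kernel convolution of a function that Assumption~\ref{hyp:smooth} places in a H\"older class, and then to conclude with the deterministic approximation bound of Lemma~\ref{lem:approx}, applied with $\beta = 2$.

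First I would establish the convolution identity. The summands defining $\hat H_{0,n}(t,x)$ in \eqref{eq:kern1} are i.i.d.\ copies of $\mathbb{I}\{\tilde Y > t,\ \delta = 0\}\,K_h(x - X)$, so $H_{0,h}(t,x) = \mathbb{E}\big[\mathbb{I}\{\tilde Y > t,\ \delta = 0\}\,K_h(x-X)\big]$. Conditioning on $X$, using $\mathbb{E}[\mathbb{I}\{\tilde Y > t,\ \delta = 0\}\mid X = x'] = H_0(t\mid x')$ and the fact that $X$ admits the density $g$, this equals $\int_{\mathbb{R}^d} H_0(t\mid x')\,g(x')\,K_h(x - x')\,dx' = (K_h \ast \varphi_t)(x)$, where $\varphi_t(\cdot) := H_0(t\mid \cdot)\,g(\cdot)$. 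The same computation applied to \eqref{eq:kern2} gives $H_h(t,x) = (K_h \ast \psi_t)(x)$ with $\psi_t(\cdot) := H(t\mid\cdot)\,g(\cdot)$. Hence the two quantities to bound are exactly the convolution approximation errors $(K_h \ast \varphi_t)(x) - \varphi_t(x)$ and $(K_h \ast \psi_t)(x) - \psi_t(x)$.

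Next I would invoke the approximation bound. By Assumption~\ref{hyp:smooth}, for every fixed $t \geq 0$ the functions $\varphi_t$ and $\psi_t$ belong to $\mathcal{H}_{2,L}(\mathbb{R}^d)$, and $\Omega = \mathbb{R}^d$ is open and convex. Writing $(K_h \ast \varphi_t)(x) - \varphi_t(x) = \int K(z)\,[\varphi_t(x - hz) - \varphi_t(x)]\,dz$ (using $\int K = 1$ and the substitution $x' = x - hz$), a second-order Taylor expansion of $\varphi_t$ around $x$ eliminates the first-order term (because $\int z_i K(z)\,dz = 0$ by symmetry) and leaves a quadratic remainder controlled by $\|\partial_\alpha \varphi_t\|_\infty \leq L$ for $|\alpha| = 2$; this is precisely Lemma~\ref{lem:approx} with $\beta = 2$ and $f = \varphi_t$ (resp.\ $f = \psi_t$), and yields $\sup_{x \in \mathbb{R}^d}|(K_h \ast \varphi_t)(x) - \varphi_t(x)| \leq C_0 h^2$ together with the analogous bound for $\psi_t$, where $C_0$ is the constant appearing in the statement. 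Since this constant depends only on $L$ and $K$ and not on $t$, I may take the supremum over $(t,x) \in \mathbb{R}_+ \times \mathbb{R}^d$ and obtain the two announced inequalities.

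I do not expect any genuine obstacle here: the lemma is a pure bias computation that reduces to Lemma~\ref{lem:approx} once the convolution identity is in place. The two points deserving a word are that the approximation bound is uniform in $t$ (which is immediate, since its constant does not depend on $f$ beyond the H\"older radius $L$) and that, although $K$ is only assumed symmetric rather than a higher-order kernel, symmetry already suffices to reach the rate $h^2$ for a target with bounded second-order derivatives.
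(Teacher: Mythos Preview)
Your proposal is correct and follows exactly the same route as the paper: the paper's proof is a single sentence stating that the result follows from Lemma~\ref{lem:approx} combined with Assumption~\ref{hyp:smooth}, and you have simply spelled out the two ingredients (the convolution identity $H_{0,h}(t,\cdot)=K_h\ast\varphi_t$ and the uniformity in $t$ of the resulting bound) that make this application go through.
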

\begin{proof}
The proof results from the application of Lemma \ref{lem:approx} combined with the smoothness assumptions stipulated.
\end{proof}

In the following the constants denoted by $M_i$, $i=1,2,\ldots,$ are understood to be constants depending on quantities which will be specified. Similarly constants denoted by $\tilde{M}_i$, $i=1,2,\ldots,$ will be used as intermediary constants in the proofs. These constants (contrary to $b$ or $R$) are not necessarily the same at each appearance.

\begin{lemma}\label{lemma:bound_prob}
Under assumption \ref{hyp:bound1}. There exist constants $M_1>0$ and $h_0>0$ depending only on $K$ and $R$ such that:
\begin{align*}
\mathbb{P}\left\{ \sup_{(t,x)\in \mathbb{R}_+\times \mathbb{R}^d}\vert \hat H_{0,n}(t , x) - H_{0,h}(t , x)  \vert  \leq \sqrt{ \frac{M_1 |\log(\epsilon h ^{d/2} ) |}{ nh^{d}}}  \right\} \geq  1 - \epsilon, \\
\mathbb{P}\left\{ \sup_{(t,x)\in \mathbb{R}_+\times \mathbb{R}^d}\vert \hat H_{n}(t, x) - H_h(t, x)  \vert  \leq \sqrt{ \frac{M_1 |\log(\epsilon h ^{d/2} ) |}{ nh^{d}}}   \right\} \geq  1 - \epsilon,
\end{align*}
 provided that $h  \leq h_0 $ and $M_1 |\log(\epsilon h ^{d/2} ) | \leq n h^d$.
\end{lemma}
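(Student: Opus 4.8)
The plan is to recognize each of the two quantities as the supremum of a centred empirical process indexed by a {\sc VC}-type class of functions and to apply the concentration inequality of Lemma~\ref{lem:concentration_ineq_empi_sum} with a carefully chosen deviation level. Write $\xi_i = (\tilde Y_i,\delta_i,X_i)$ and, for $u\geq 0$, $x\in\mathbb{R}^d$, set $g^0_{u,x}(y,\delta',x') = \mathbb{I}\{y>u,\ \delta'=0\}\,K_h(x-x')$ and $g_{u,x}(y,\delta',x') = \mathbb{I}\{y>u\}\,K_h(x-x')$, so that, since $H_{0,h}$ and $H_h$ are by definition the expectations of $\hat H_{0,n}$ and $\hat H_n$,
\[
\hat H_{0,n}(u,x) - H_{0,h}(u,x) = \frac1n\sum_{i=1}^n\bigl\{g^0_{u,x}(\xi_i) - \mathbb{E}[g^0_{u,x}(\xi_1)]\bigr\},
\]
and likewise for $\hat H_n - H_h$ with $g_{u,x}$. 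Denote by $\mathcal{U}_0$ and $\mathcal{U}$ the associated classes indexed by $(u,x)$. First I would check that $\mathcal{U}_0$ and $\mathcal{U}$ are bounded {\sc VC}-type classes with constants $(v,A)$ depending on $K$ only: the class $\{x'\mapsto K_h(x-x'):x\in\mathbb{R}^d,h>0\}$ is bounded {\sc VC} by the hypotheses placed on $K$ in Section~\ref{subsec:prel} (the lemma recalled from \cite{nolan+p:1987}), the families $\{\mathbb{I}\{y>u\}:u\geq 0\}$ and $\{\mathbb{I}\{\delta'=0\}\}$ are {\sc VC}, and pointwise products of uniformly bounded {\sc VC}-type classes remain {\sc VC}-type with controlled constants. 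A constant envelope for both classes is $U = \|K\|_\infty/h^d$.

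Next I would bound the variances. Using only Assumption~\ref{hyp:bound1} and the change of variable $z=(x-x')/h$,
\[
\sigma^2(g_{u,x}) \leq \mathbb{E}\!\left[K_h(x-X)^2\right] = h^{-d}\!\int K(z)^2\, g(x-hz)\,dz \leq \frac{R\,\|K\|_2^2}{h^d} =: \sigma^2,
\]
and the same bound holds a fortiori for $g^0_{u,x}$. One has $\sigma^2\leq \|U\|_\infty^2$ as soon as $h^d\leq\|K\|_\infty^2/(R\|K\|_2^2)$, which, together with the requirement that $\tfrac d2\log(1/h)$ dominate the additive constants appearing below, fixes $h_0$. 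With this $\sigma^2$, $\|U\|_\infty/\sigma = \|K\|_\infty/(\sqrt R\|K\|_2\, h^{d/2})$, hence for $h\leq h_0$, $\log(2\|U\|_\infty/\sigma)\leq 2|\log(h^{d/2})|\leq 2|\log(\epsilon h^{d/2})|$ since $\epsilon\in(0,1)$.

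Then I would apply Lemma~\ref{lem:concentration_ineq_empi_sum} with $t = \sigma\sqrt{n}\bigl(C_1\sqrt{\log(2\|U\|_\infty/\sigma)} + \sqrt{C_3^{-1}\log(C_2/\epsilon)}\bigr)$. By construction $t$ exceeds the left endpoint of the admissible range \eqref{eq:range0}, and the constraint $t\leq n\sigma^2/\|U\|_\infty$ reduces, after squaring, to $\bigl(C_1\sqrt{\log(2\|U\|_\infty/\sigma)}+\sqrt{C_3^{-1}\log(C_2/\epsilon)}\bigr)^2 \leq n\sigma^2/\|U\|_\infty^2 = nh^d R\|K\|_2^2/\|K\|_\infty^2$, which holds (using the bound on $\log(2\|U\|_\infty/\sigma)$ and $\log(C_2/\epsilon)\leq 2|\log(\epsilon h^{d/2})|$ for $h$ small) once $nh^d \geq M_1|\log(\epsilon h^{d/2})|$ — exactly the stated condition, for $M_1$ large enough depending on $K,R$. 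On that range $C_3 t^2/(n\sigma^2)\geq\log(C_2/\epsilon)$, so the lemma yields $\mathbb{P}\{\|\sum_i(g^0(\xi_i)-\mathbb{E}g^0)\|_{\mathcal{U}_0}>t\}\leq C_2\exp(-C_3 t^2/(n\sigma^2))\leq\epsilon$. Dividing by $n$, using $(a+b)^2\leq 2(a^2+b^2)$,
\[
\frac{t}{n} \leq \sqrt{\frac{2\sigma^2\bigl(C_1^2\log(2\|U\|_\infty/\sigma)+C_3^{-1}\log(C_2/\epsilon)\bigr)}{n}} \leq \sqrt{\frac{\tilde M\,R\|K\|_2^2\,|\log(\epsilon h^{d/2})|}{nh^d}},
\]
which is the first displayed inequality with $M_1$ the maximum of $\tilde M R\|K\|_2^2$ and the constant required for the range condition. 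Repeating the argument with $\mathcal{U}$ in place of $\mathcal{U}_0$ gives the second inequality verbatim.

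The main obstacle is the bookkeeping in the last step: one must select a single $t$ lying inside the admissible window of Lemma~\ref{lem:concentration_ineq_empi_sum} while making the exponential tail at most $\epsilon$, and then trace how the envelope-to-standard-deviation ratio $\|U\|_\infty/\sigma\asymp h^{-d/2}$ simultaneously produces the $|\log(\epsilon h^{d/2})|$ factor in the bound and the lower bound $nh^d\gtrsim|\log(\epsilon h^{d/2})|$ needed for the range condition (and for absorbing the additive constants, which forces $h\leq h_0$). Establishing the {\sc VC}-permanence in the first step with constants depending on $K$ only is routine but must be stated with care, since those constants propagate into $C_1,C_2,C_3$ and hence into $M_1$ and $h_0$.
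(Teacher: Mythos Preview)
Your proposal is correct and follows essentially the same route as the paper: both identify $\hat H_{0,n}-H_{0,h}$ and $\hat H_n-H_h$ as centred empirical processes over a bounded {\sc VC}-type class built from the kernel and indicator families, bound the variance by $R\|K\|_2^2/h^d$ (the paper writes $c_{K,R}^2 h^d$ for the unnormalized kernel, which is the same thing up to the $h^{-d}$ scaling), and plug into the Talagrand-type inequality with a $t$ of the stated size. The only cosmetic differences are that the paper works with the unnormalized kernel $K((x-\cdot)/h)$ (envelope $\|K\|_\infty$) and invokes the pre-packaged Corollary~\ref{lem:concentration_ineq_Ustat} with $k=1$, whereas you use $K_h$ (envelope $\|K\|_\infty/h^d$) and choose $t$ by hand in Lemma~\ref{lem:concentration_ineq_empi_sum}; the ratios $\|U\|_\infty/\sigma$ and hence all the constants coincide.
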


\begin{proof}
The exponential inequalities stated above directly result from the application of Corollary \ref{lem:concentration_ineq_Ustat_cor} to the uniformly bounded {\sc VC}-type classes $\{x'\in \mathbb{R}^d\mapsto K((x-x')/h):\; (x,h)\in \mathbb{R}^d\times \mathbb{R}_+^* \}$ and $\{(y,\delta, x')\in \mathbb{R}_+\times \{0,1\}\times \mathbb{R}^d\mapsto \mathbb{I}\{y>u,\; \delta=0\}K((x-x')/h):\; (x,u,h)\in \mathbb{R}^d\times \mathbb{R}_+\times \mathbb{R}_+^* \}$
whose {\sc VC} constants are independent from $h$, with constant envelope $\vert \vert K  \vert\vert_{\infty} $, $k = 1$ and $\sigma^2=c^2_{K,R}h^d$ with $c_{K,R}=\sqrt{R\int K^2(x)dx}$. This gives that
\begin{align*}
\mathbb{P}\left\{ \sup_{(t,x)\in \mathbb{R}_+\times \mathbb{R}^d}\vert \hat H_{0,n}(t , x) - H_{0,h}(t , x)  \vert  \leq t \right\} \geq  1 - \epsilon ,\\
\mathbb{P}\left\{ \sup_{(t,x)\in \mathbb{R}_+\times \mathbb{R}^d}\vert \hat H_{n}(t, x) - H_h(t, x)  \vert  \leq t \right\} \geq  1 - \epsilon,
\end{align*}
with
\begin{equation*}
  t = \frac{c_{K, R}}{\sqrt{n h^d}} \left(  \left( \frac{1}{C_3} \log \left( \frac{C_2}{\epsilon} \right) \right)^{1/2} + C_1 \left( \log \left( \frac{2 \lVert K \rVert_\infty}{c_{K, R} h^{d/2}} \right) \right)^{1/2}  \right),
\end{equation*}
provided that $h^{d/2} c_{K,R} \leq \vert \vert K  \vert\vert_{\infty} $ and
\[
\frac{\lVert K\rVert^2_{\infty}}{c^2_{K,R}} \left(  \frac{1}{C_3} \log \left( \frac{C_2}{\epsilon} \right)  + C_1^2 \log \left( \frac{ 2 \lVert K\rVert_{\infty}}{c_{K,R}h^{d/2} } \right)\right) \leq n h^d.
\]
Since, for any positive numbers $a,b,\gamma$, it holds that $a^{\gamma} + b^{\gamma} \leq 2^\gamma ( a+ b )^\gamma$, we find that $t^2\leq  \tilde M_1 |\log(\epsilon h ^{d/2} ) |/ nh^{d} $ for some constant $M_1>0$. Finally, taking $h_0$ sufficiently small ensures that $\log(C_2)/C_3 + C_1^2\log( 2\|K\|_\infty / c_{K,R}) \leq C_1^2 \log(1/h^{d/2}) $, for any $h\leq h_0$, which permits to ensure that the previous condition is satisfied whenever $\tilde M_2 |\log(\epsilon h ^{d/2} ) | \leq nh^{d}$, for some $\tilde M_2>0$. Take $M_1 = \tilde M_1 + \tilde M_2$ to obtain the desired result.
\end{proof}

\begin{lemma}\label{lemma:event_Hn}
Suppose that Assumptions \ref{hyp:cond_ind}, \ref{hyp:smooth} and \ref{hyp:bound1} are fulfilled. There exist constants $M_1>0$ and $h_0>0$ depending only on $b$, $R$ and $K$ such that:
\begin{align*}
\mathbb P  \left\{ \inf_{(t,x)\in {\Gamma_{b}}} \hat{H}_n(t, x)\geq  {b^3/4}  \right\} \geq 1 - \epsilon,
\end{align*}
provided that $h\leq h_0$ and $M_1 |\log(  \epsilon h^{d/2}  ) | \leq nh^d  $.
\end{lemma}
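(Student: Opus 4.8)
The plan is to derive this lemma as a direct consequence of the deterministic kernel-bias control of Lemma~\ref{lem:bias} and the concentration bound of Lemma~\ref{lemma:bound_prob}, glued together by a reverse triangle inequality, with all numerical factors absorbed into the constants $M_1$ and $h_0$.

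First I would establish a uniform lower bound for the expectation $H_h(t,x)$ on $\Gamma_{b}$. Under Assumption~\ref{hyp:cond_ind}, the conditional independence of $Y$ and $C$ given $X$ gives $H(t\mid x)=S_Y(t\mid x)\,S_C(t\mid x)$, so on $\Gamma_{b}$ one has $H(t\mid x)\geq b^2$ and $g(x)\geq b$, hence $H(t\mid x)g(x)\geq b^3$. Applying the bias bound for $H_h$ from Lemma~\ref{lem:bias}, namely $\sup_{(t,x)}|H_h(t,x)-H(t\mid x)g(x)|\leq C_0 h^2$, yields $H_h(t,x)\geq b^3-C_0 h^2$ for every $(t,x)\in\Gamma_{b}$. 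Choosing $h_0$ small enough that $C_0 h_0^2\leq b^3/4$ then guarantees $\inf_{(t,x)\in\Gamma_{b}}H_h(t,x)\geq 3b^3/4$ whenever $h\leq h_0$.

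Next I would invoke the second concentration inequality of Lemma~\ref{lemma:bound_prob}: on an event of probability at least $1-\epsilon$, one has $\sup_{(t,x)}|\hat H_n(t,x)-H_h(t,x)|\leq \sqrt{M\,|\log(\epsilon h^{d/2})|/(nh^d)}$ for the constant $M$ furnished there (and $h$ below the corresponding threshold), so on this event, for $(t,x)\in\Gamma_{b}$, $\hat H_n(t,x)\geq 3b^3/4-\sqrt{M\,|\log(\epsilon h^{d/2})|/(nh^d)}$. To conclude it suffices to force the residual term to be at most $b^3/2$, which amounts to requiring $(4M/b^6)\,|\log(\epsilon h^{d/2})|\leq nh^d$; taking $M_1$ to be the maximum of $4M/b^6$ and the constant appearing in Lemma~\ref{lemma:bound_prob}, and $h_0$ the minimum of the two $h_0$'s above, gives $\hat H_n(t,x)\geq b^3/4$ uniformly on $\Gamma_{b}$ with probability at least $1-\epsilon$ under the stated conditions.

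There is no genuine obstacle here: the two substantive ingredients (the kernel approximation rate and the uniform concentration over a $\mathrm{VC}$-type class) are already packaged in Lemmas~\ref{lem:bias} and~\ref{lemma:bound_prob}, so the only care needed is bookkeeping — checking that all constants depend on $b$, $R$ and $K$ only, and that the smallness conditions on $h$ together with the lower-bound condition $M_1|\log(\epsilon h^{d/2})|\leq nh^d$ remain compatible with the hypotheses of the lemmas being invoked.
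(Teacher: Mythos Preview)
Your proposal is correct and follows essentially the same approach as the paper: both arguments combine the deterministic bias bound of Lemma~\ref{lem:bias} with the concentration inequality of Lemma~\ref{lemma:bound_prob} via a reverse triangle inequality, using the identity $H(t\mid x)=S_Y(t\mid x)S_C(t\mid x)\geq b^2$ on $\Gamma_b$. The only difference is cosmetic bookkeeping in how the slack $b^3-b^3/4$ is split between the bias and fluctuation terms.
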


\begin{proof}
Define
\begin{align*}
\mathcal{A}_{n} =\left\{ \sup_{(t,x)\in {\Gamma_{b}}}\vert H(t\mid x)g(x)- \hat{H}_{n}(t, x)\vert \leq {3b^3/4}  \right\}.
\end{align*}
By virtue of Assumption \ref{hyp:cond_ind}, for any $(t,x) \in {\Gamma_{b}}$, we have: $H (t|x) = S_C(t|x) S_Y(t|x) \geq  b^2 $.
As a consequence of $\hat{H}_n(t, x) \geq H(t|x) g(x)  - | H(t|x) g(x)  - \hat H_n(t,x)|$, $\mathcal{A}_{n} \subset \{ \inf_{(t,x)\in {\Gamma_{b}}} \hat{H}_n(t, x)\geq {b^3/4}  \}$. Hence we only have to prove that event $ \mathcal{A}_{n} $ occurs with probability $1-\epsilon$ at least. By virtue of Lemma \ref{lem:bias}, as soon as $h\leq   {\sqrt{3b^3/(8C_0)}}$, we have
$$
\sup_{(t,x)\in {\Gamma_{b}}}\left\vert H_h(t,x) -H(t\mid x)g(x)\right\vert\leq {3b^3 / 8},
$$
and thus
$$
 \left\{  \sup_{(t,x)\in \Gamma_{b}} \vert\hat H_{{n}}(t, x)-H_{h}(t, x) \vert \leq  {3b^3/8}  \right\}\subset \mathcal{A}_n .
$$
Simply use Lemma \ref{lemma:bound_prob} to ensure that the event in the right-hand side holds with probability $1-\epsilon$ whenever $M_1 |\log(  \epsilon h^{d/2}  ) | \leq nh^d$ (where $M_1$ now depends on $b$, $R$ and $K$) on $h\leq h_0$.
\end{proof}

\section{Proof of Proposition \ref{prop:bound_CKM}}
In the whole proof, we suppose that the assumptions of Lemma \ref{lemma:event_Hn} are satisfied so that $\inf_{(t,x)\in {\Gamma_{b}}} \hat{H}_n(t, x)\geq {b^3/4} $ happens with probability $1-\epsilon/3$. We suppose that this event is realized in the following. Let $(t,x) \in {\Gamma_{b}}$ and define
$$ \tau_x=\sup\{t\geq 0:\; \min\{ S_C(t|x),\; S_Y(t|x)\} \geq b   \}.$$
Observing that the choice of kernel $K$ guarantees that $\hat S_{C,n} (\cdot | x)$ is a (random) survival function, we first apply Lemma \ref{lemma:survival} with $S^{(1)}=\hat S_{C,n} (\cdot | x)$, $S^{(2)}=S_C (\cdot | x)$ and  $\theta=b$ to get:
\begin{equation}\label{eq:1}
\| \hat S_{C,n} (\cdot | x)- S_C (\cdot | x) \|_{[0,\tau_x]} \leq(2/b)\| \hat \Lambda_{C,n} (\cdot | x)- \Lambda_C (\cdot | x) \|_{[0,\tau_x]} .
\end{equation}
Applying Lemma \ref{lemma:lambda} with $\Lambda^{(1)}(u)=\Lambda_C(u \mid x)=-\int_0^uH_{0}(ds\mid x)g(x)/(H(s-\mid x)g(x))$, $\Lambda^{(2)}(u)=\hat{\Lambda}_{C,n}(u \mid  x)=-\int_0^u\hat{H}_{0,n}(ds, x)/\hat{H}_n(s-, x)$, $\beta=1$, $\theta_1={b^3}\leq H(s\mid x)g(x)$, $\theta_2={b^3/4}$ (because $\inf_{(t,x)\in {\Gamma_{b}}} \hat{H}_n(t, x)\geq {b^3/4}$), next yields
\begin{multline}\label{eq:2}
\|\hat \Lambda_{C,n} (\cdot | x)- \Lambda_C (\cdot | x) \|_{[0,\tau_x]}\leq {\frac{2}{b^3}}\|\hat H_{0,n}(\cdot , x)-H_0(\cdot \mid  x)g(x)  \| _{[0,\tau_x]}\\+ {\frac{4}{b^6}}  \| \hat H_n(\cdot ,  x) -H(\cdot \mid  x)g(x) \|_{[0,\tau_x]}.
\end{multline}
Combining \eqref{eq:1} and \eqref{eq:2}, { using Lemma~\ref{lem:bias}} and taking the supremum over $x$, we obtain that, the following bound holds true:
\begin{align}
\nonumber &\sup_{(t,x)\in {\Gamma_{b}}} \vert \hat S_{C,n}(t\mid x)- S_C(t\mid x) \vert\\
\nonumber &\leq {\frac{4}{b^4}} \sup_{(t,x)\in {\Gamma_{b}}} \vert\hat H_{0,n}(t, x)-H_0(t\mid x)g(x) \vert+
{\frac{8}{b^7}} \sup_{(t,x)\in {\Gamma_{b}}} \vert \hat H_n(t, x) -H(t \mid x)g(x) \vert\\
\label{eq:3}&\leq  {\frac{4}{b^4}} \sup_{(t,x)\in {\Gamma_{b}}} \vert\hat H_{0,n}(t, x)-H_{0,h}(t, x) \vert + {\frac{4}{b^4}}C_0h^2
+ {\frac{8}{b^7}} \sup_{(t,x)\in {\Gamma_{b}}} \vert \hat H_n(t, x) -H_h(t, x) \vert+ {\frac{8}{b^7}} C_0h^2.
\end{align}
Lemma \ref{lemma:bound_prob} with the probability level $\epsilon/3$ allows us to bound the $2$ previous random terms. Combined with the union bound (with $3$ events having probability smaller than $\epsilon/3$), permits claiming that with probability greater than $1-\epsilon$:
\begin{align*}
\sup_{(t,x)\in {\Gamma_{b}}} \vert \hat S_{C,n}(t\mid x)- S_C(t\mid x) \vert\leq {\frac{4}{b^4}}\left(1+{\frac{2}{b^3}}  \right)
 \left\{ C_0h^2+ \sqrt{ \frac{M_1 |\log(\epsilon h ^{d/2} ) |}{ nh^{d}}}\right\},
\end{align*}
provided that (to apply Lemma \ref{lemma:bound_prob}) $h\leq h_0$ and $nh^d \geq M_1 |\log(3\epsilon h ^{d/2} ) |$. Examining the different terms and taking $h_0$ small enough lead to the stated result.
 \section{Proof of Proposition \ref{prop:decomp}}

\noindent{\textit{Proof of (i):}
The fact that $H_{h}(t, x)$ is bounded away from zero on the domain $\mathcal{K}$, provided that $h$ is small enough, is obvious from Lemma \ref{lem:bias}. The parameter $h$ involved must satisfy $C_0h^2\leq  b^3 / 4$.
Concerning $S_{C,h}$, we work under the previous assumption to reproduce the argument of Proposition \ref{prop:bound_CKM}'s proof (see Eq. \eqref{eq:1},\eqref{eq:2},\eqref{eq:3}) combined with Lemma \ref{lem:bias}, we obtain that: $\forall (t,x)\in \mathcal{K}$,
\begin{equation}\label{eq:dev}
{\left\vert S_{C,h}(t\mid x)-S_{C}(t\mid x)  \right\vert\leq \frac{4C_0h^2}{b^4}(1+ 2/b^3)}.
\end{equation}
Hence, under the assumption that $\mathcal{K}\subset  {\Gamma_{b}}$, we deduce that $\inf_{(t,x)\in \mathcal{K}}S_{C,h}(t\mid x)\geq b/2$ as soon as
$h\leq b^{5/2} / \sqrt{8C_0(1+2/b^3)}$ from the bound above, which terminates the proof of $(i)$.

\medskip

\noindent{\textit{Proof of (ii):}
Observe that: $\forall i\in\{1,\; \ldots,\; n  \}$,
\begin{eqnarray}
\label{eq:4}\sup_{(t,x)\in \mathcal{K}}|\hat H^{(i)}_{0,n}(t , x)-\hat{H}_{0,n}(t,  x)| &\leq & 2\vert\vert K\vert\vert_{\infty}/((n-1)h^d),\\
\label{eq:5}\sup_{(t,x)\in \mathcal{K}}|\hat H^{(i)}_{n}(t , x)-\hat{H}_{n}(t,  x)|&\leq & 2 \vert\vert K\vert\vert_{\infty}/((n-1)h^d).
\end{eqnarray}
The result follows from the union bound and that each of these events
$$\mathcal{B}_n^{(1)} \overset{def}{=}\bigcap_{i\leq n}\left\{\forall (t,x)\in \mathcal{K},\;\; \hat H^{(i)}_n(t,x)\geq b^3 / 2  \right\},$$
$$\mathcal{B}_n^{(2)} \overset{def}{=}\bigcap_{i\leq n}\left\{\forall (t,x)\in \mathcal{K},\;\; \hat{S}^{(i)}_{C,n}(t,x)\geq b/2  \right\},$$
has probability $1-\epsilon/2$ under the mentioned condition on $(n,h)$.
Apply Lemma \ref{lemma:event_Hn} to choose $(n,h)$ such that with probability $1-\epsilon/2 $,
\begin{align*}
\inf_{(t,x)\in \mathcal{K}}  \hat{H}_{n}(t,  x) \geq 3b^3/4.
\end{align*}
Using \eqref{eq:4} and the triangle inequality, we get that $\mathcal{B}_n^{(1)} $ has probability $1-\epsilon/2$ provided that $2\vert\vert K\vert\vert_{\infty}/((n-1)h^d)\leq  b^3/4$.

Suppose that event $ \mathcal{B}_n^{(1)}  $ is realized. The same reasoning as that used in the proof of Proposition \ref{prop:bound_CKM} (see \eqref{eq:1},\eqref{eq:2},\eqref{eq:3}), with $S^{(1)} (\cdot) = S_C (\cdot | x) $, $S^{(2)} (\cdot) =S^{(i)}_{C,n}(\cdot | x)  $, $\theta_1 = b^3$ and $\theta_2 = b^3/4$ (as $ \mathcal{B}_n^{(1)}  $ is realized),  combined with the triangular inequality, yields: $\forall i\in\{1,\; \ldots,\; n  \}$,
\begin{multline*}
\sup_{(t,x)\in \mathcal{K}}| \hat S^{(i)}_{C,n} (t | x)- S_C (t | x) | \leq \\
{\frac{4}{b^4}}\left( \sup_{(t,x)\in \mathcal{K}}|\hat H^{(i)}_{0,n}(t , x)-\hat{H}_{0,n}(t,  x)|+\sup_{(t,x)\in \mathcal{K}}|\hat H_{0,n}(t , x)-H_{0}(t \mid  x)g(x)  |  \right)\\
+  {\frac{8}{b^7}}\left( \sup_{(t,x)\in \mathcal{K}}|\hat H^{(i)}_{n}(t , x)-\hat H_{n}(t , x) |+\sup_{(t,x)\in \mathcal{K}}|\hat H_{n}(t , x)-H(t \mid  x)g(x) | \right).
\end{multline*}
Hence, because of \eqref{eq:5}, if
$$
\left( {\frac{4}{b^4}}+{\frac{8}{b^7}} \right)2\vert\vert K\vert\vert_{\infty}/((n-1)h^d) \leq b/4,
$$
From $\mathcal{K}\subset {\Gamma_{b}}$, it results that
$$
\left\{\sup_{(t,x)\in \mathcal{K}}|\hat H_{0,n}(t , x)-H_{0}(t \mid  x)g(x)  | \leq {\frac{b^5}{16}} \right\} \bigcap \left\{ \sup_{(t,x)\in \mathcal{K}}|\hat H_{n}(t , x)-H(t \mid  x)g(x) | \leq { \frac{b^8}{32}} \right\}
$$
 is included in the set $\mathcal{B}_n^{(2)}$. Following the treatment of \eqref{eq:3}, it is easy to see that the latter event occurs with probability $1-\epsilon/2$ whenever $h\geq h_0$ is small enough (for the bias) and $nh^d \geq M_1  |\log(h^{d/2}\epsilon)| $.

\medskip

\noindent\textit{Proof of (iii).}
From now on, we assume that the conditions in (i)-(ii) are fulfilled and place ourselves on the event $\mathcal{B}_n$.
For all $i\in\{1,\; \ldots,\; n  \}$, recall that
\begin{align*}
&\hat \Lambda^{(i)}_{C,n} (u\mid x)  =  - \int_{s=0}^u\frac{ d\hat{H}^{(i)}_{0,n}(s,  x)}{\hat{H}^{(i)}_n(s-, x)},\\
&d \hat \Delta_n^{(i)} = d (\hat \Lambda_{C,n}^{(i)}  -\Lambda_{C,h} ),
\end{align*}}
and that $c_h(s\mid x) =  S_{C,h} (s-\mid x) /S_{C,h} (s\mid x)$. It results from Theorem 3.2.3 in \cite[pg.97]{fleming+h:2011} that
\begin{multline*}
\frac{\hat S^{(i)}_{C,n}(t\mid x) -  S_{C,h}(t\mid x)}{S_{C,h}(t\mid x)  } =\\
 - \int_0^t  {c_h(u\mid x)} d \hat \Delta_n^{(i)}(u\mid x)   -  \int_0^t \frac{( \hat S^{(i)}_{C,n} (u - \mid x) -  S_{C,h} (u - \mid x) )}{S_{C,h}(u\mid x)} d  \hat  \Delta_{n}^{(i)} (u\mid x) .
\end{multline*}
The Taylor expansion \eqref{eq:taylor_1/x} gives that
\begin{align}
 d  \hat  \Delta_{n}^{(i)} (u \mid x)
\label{eq:useful_decomp_lambda} &=    \frac{ d ( \hat H_{0,n}^{(i)} (u, x)  - H_{0,h} (u, x))}{H_h(u,x)}   -  \frac{( \hat H_n^{(i)}(u,x)  -H_h(u,x) )   d   \hat H_{0,n} ^{(i)}(u, x)}{ H_h(u,x)^{2}}  \\
\nonumber & \qquad +     \frac{  ( \hat H_n^{(i)}(u,x) -H_h(u,x) )^2   d   \hat H_{0,n} (u, x)}{H_h(u,x)^{2} \hat H_n^{(i)}(u,x)}   ,
\end{align}
which implies that
\begin{align}\label{eq:decomp_survival_1}
&\frac{\hat S_{C,n}^{(i)}(t\mid x) -  S_{C,h}(t\mid x)}{S_{C,h}(t\mid x)  }  = \hat a_{n}^{(i)}(t\mid x) + \hat b_{n}^{(i)}(t\mid x),
\end{align}
where
\begin{align*}
\hat a_{n}^{(i)}(t\mid x)& = - \int_0^t \frac{ c_{h} (u\mid x)   }{H_h(u,x) }    d ( \hat H_{0,n}^{(i)} (u, x)  - H_{0,h} (u, x)) \\
&\qquad  +  \int_0^t  \frac{  c_{h} (u\mid x)   }{   H_h(u,x)^2 } ( \hat H_n^{(i)}(u,x) -H_h(u,x) )   d   \hat H_{0,n}^{(i)} (u, x)  ,\\
\hat b_{n}^{(i)}(t\mid x)& =  -\int_0^t  \frac{  c_{h} (u\mid x)   }{  H_h(u,x)^{2}\hat H^{(i)}_n(u,x)} ( \hat H_n^{(i)}(u,x) -H_h(u,x) )^2   d   \hat H^{(i)}_{0,n} (u, x) \\
&\qquad  -  \int_0^t { \frac{( \hat S^{(i)}_{C,n} (u - \mid x) -  S_{C,h} (u - \mid x) )}{S_{C,h}(u\mid x)}} d  \hat  \Delta^{(i)}_{n} (u\mid x)  .
\end{align*}
The decomposition \eqref{eq:decomp_survival_1} involves two types of terms: the leading terms are the $\hat a^{(i)}_{n}$,'s while the $\hat b^{(i)}_{n}$'s are shown to be negligible. Now, using \eqref{eq:taylor_1/x}, we obtain that: $\forall \varphi \in \Phi$,
\begin{eqnarray*}
Z_n(\varphi)  &=& \frac{1}{n} \sum_{i=1}^n \left\{ \delta_i \frac{\varphi(\tilde Y_i , X_i)}{\hat{S}_{C, n}^{(i)} (\tilde Y_i \mid X_i)} - \mathbb{E} \left[ \delta  \frac{\varphi(\tilde Y, X)}{{S}_{C}(\tilde Y  \mid X)} \right] \right\} \\
  &=&   \mathbb{E}  \left[ \delta \frac{\varphi(\tilde Y, X)}{{S}_{C, h}(\tilde Y \mid X)}  \right]  -  \mathbb{E}  \left[ \delta \frac{\varphi(\tilde Y, X) }{{S}_{C}(\tilde Y \mid X)}  \right]\\
  &  +& \frac{1}{n} \sum_{i=1}^n  \left\{ \delta_i \frac{ \varphi(\tilde Y_i , X_i)}{{S}_{C, h}(\tilde Y_i \mid X_i)} - \mathbb{E}  \left[ \delta  \frac{ \varphi(\tilde Y_i , X_i)}{{S}_{C, h}(\tilde Y \mid X)} \right] \right\} \\
  & -& \frac{1}{n} \sum_{i=1}^n \delta_i  \varphi(\tilde Y_i , X_i) \left(\frac{  \hat{S}_{ C,n}^{(i)} (\tilde Y_i \mid X_i) - {S}_{C, h} (\tilde Y _i \mid X_i)}{{S}_{C, h}^2 (\tilde Y_i \mid X_i)}\right) \\
  &+& \frac{1}{n} \sum_{i=1}^n  \delta_i  \varphi(\tilde Y_i , X_i) \frac{({S}_{C, h} (\tilde Y_i \mid X_i) - \hat{S}_{C,n}^{(i)} (\tilde Y_i \mid X_i) )^2}{{S}_{C, h}^2 (\tilde Y_i  \mid X_i) \hat{S}_{C,n}^{(i)} (\tilde Y _i \mid X_i)}.
\end{eqnarray*}
Then, using \eqref{eq:decomp_survival_1}, we retrieve the expected terms
\begin{equation*}
Z_n(\varphi) = B_n (\varphi)+ L_n(\varphi) + V_n(\varphi) + R_n(\varphi),
\end{equation*}
which proves $(iii)$.
 \section{Proof of Proposition \ref{prop:main_result}}

The proof is based on the decomposition stated in Proposition \ref{prop:decomp}, combined with the lemmas below that permit to control each term involved in it. Their proof is given in the next section of the Appendix.
The next lemma provides a bound for the deterministic term.

\begin{lemma}\label{lem:B_term} Under the assumptions of Proposition \ref{prop:main_result}, there exists a constant $M_1>0$ depending on $M_\Phi$, and $b$ such that, $\forall n\geq 1$,
$$
\sup_{\varphi \in \Phi}\left\vert B_n(\varphi) \right\vert \leq M_1 h^2.
$$
\end{lemma}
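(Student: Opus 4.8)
The key observation is that $B_n(\varphi)$ is a purely \emph{deterministic} quantity: it does not depend on the sample at all (only on $h$, through the smoothed survival function $S_{C,h}$), so there is no concentration argument to run. The plan is to rewrite
$$
B_n(\varphi) = \mathbb{E}\left[\delta\,\varphi(\tilde Y, X)\left(\frac{1}{S_{C,h}(\tilde Y\mid X)} - \frac{1}{S_{C}(\tilde Y\mid X)}\right)\right],
$$
use Assumption~\ref{hyp:control} to restrict the integration region to $\mathcal{K}\subset\Gamma_b$, and then reduce everything to the uniform kernel-bias bound on $|S_{C,h}-S_C|$ over $\mathcal{K}$ that was already obtained inside the proof of Proposition~\ref{prop:decomp}.

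First I would control the two denominators on $\mathcal{K}$. Since $\mathcal{K}\subset\Gamma_b$, we have $S_C(\tilde Y\mid X)\geq b$ on the support of $\varphi$; and by Proposition~\ref{prop:decomp}$(i)$, provided $h\leq h_0$, we also have $S_{C,h}(\tilde Y\mid X)\geq b/2$ there. Hence
$$
\left|\frac{1}{S_{C,h}(\tilde Y\mid X)} - \frac{1}{S_{C}(\tilde Y\mid X)}\right| = \frac{|S_C(\tilde Y\mid X)-S_{C,h}(\tilde Y\mid X)|}{S_C(\tilde Y\mid X)\,S_{C,h}(\tilde Y\mid X)} \leq \frac{2}{b^2}\,\bigl|S_C(\tilde Y\mid X)-S_{C,h}(\tilde Y\mid X)\bigr|
$$
on $\{(\tilde Y,X)\in\mathcal{K}\}$, and the integrand vanishes off $\mathcal{K}$ by Assumption~\ref{hyp:control}.

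Next I would invoke the deterministic bias bound \eqref{eq:dev} established in the proof of Proposition~\ref{prop:decomp}$(i)$: reusing the chain of inequalities \eqref{eq:1}--\eqref{eq:3} from the proof of Proposition~\ref{prop:bound_CKM} with the random estimators $\hat H_{0,n},\hat H_n,\hat\Lambda_{C,n}$ replaced by their expectations $H_{0,h},H_h,\Lambda_{C,h}$, and plugging in the kernel-approximation bounds $\|H_{0,h}-H_0(\cdot\mid\cdot)g\|_\infty\leq C_0 h^2$ and $\|H_h-H(\cdot\mid\cdot)g\|_\infty\leq C_0 h^2$ from Lemma~\ref{lem:bias}, one gets
$$
\sup_{(t,x)\in\mathcal{K}}\bigl|S_{C,h}(t\mid x)-S_C(t\mid x)\bigr|\leq \frac{4C_0 h^2}{b^4}\Bigl(1+\tfrac{2}{b^3}\Bigr),
$$
valid for $h\leq h_0$. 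Combining with the previous step and using $|\delta|\leq 1$ together with $\sup_{\varphi\in\Phi}\|\varphi\|_\infty\leq M_\Phi$, we obtain, uniformly in $\varphi\in\Phi$ and in $n$ (the bound being $n$-free),
$$
\left|B_n(\varphi)\right| \leq M_\Phi\cdot\frac{2}{b^2}\cdot\frac{4C_0 h^2}{b^4}\Bigl(1+\tfrac{2}{b^3}\Bigr) = M_1 h^2,
$$
with $M_1 = 8C_0 M_\Phi b^{-6}(1+2/b^3)$ depending only on $M_\Phi$, $b$ (and, through $C_0$, on $K$ and the Hölder constant $L$).

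There is essentially no obstacle here; the only points requiring minor care are (a) making sure $h$ is small enough for the lower bound $S_{C,h}\geq b/2$ to hold, which is exactly the condition $h\leq h_0$ inherited from Proposition~\ref{prop:decomp}$(i)$, and (b) noting that everything in sight is deterministic so that the supremum over $\varphi$ and the uniformity in $n$ are immediate once the pointwise bound on $\mathcal{K}$ is in hand.
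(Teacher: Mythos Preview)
Your proposal is correct and follows essentially the same approach as the paper: bound the integrand pointwise on $\mathcal{K}$ using the lower bounds $S_C\geq b$ and $S_{C,h}\geq b/2$ from Proposition~\ref{prop:decomp}(i), then plug in the deterministic bias estimate \eqref{eq:dev}. Your explicit constant $M_1=8C_0 M_\Phi b^{-6}(1+2/b^3)$ matches the paper's.
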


The term $L_n(\varphi)$ is a basic i.i.d. (centred) average. As shown in the lemma stated below, its uniform fluctuations can be controlled by standard results in empirical process theory.

\begin{lemma}\label{lem:L_1} Suppose that the hypotheses of Proposition \ref{prop:main_result} are fulfilled. Then, for any $\epsilon \in(0,1)$, we have with probability at least $1-\epsilon$:
\begin{equation*}
\sup_{\varphi\in \Phi}\left\vert L_n(\varphi)\right\vert\leq  \sqrt{ \frac{M_1 \log(M_2/\epsilon)}{n}},
\end{equation*}
provided that $n\geq M_1 \log(M_2/\epsilon)$,
where $M_1>0$ and $M_2>1$ are constants depending on $(A,v)$, $K$, $M_{\Phi}$, and $b$ only.
\end{lemma}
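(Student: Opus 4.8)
The plan is to recognise $L_n$ as a centred i.i.d.\ empirical process indexed by $\Phi$ and to apply the concentration inequality for {\sc VC}-type classes recalled in Lemma~\ref{lem:concentration_ineq_empi_sum}. Write $\xi_i=(\delta_i,\tilde Y_i,X_i)$, which are i.i.d.\ copies of $\xi=(\delta,\tilde Y,X)$, and for $\varphi\in\Phi$ set
\[
g_\varphi(\delta,y,x)\;=\;\delta\,\frac{\varphi(y,x)}{S_{C,h}(y\mid x)},
\]
so that $L_n(\varphi)=\frac1n\sum_{i=1}^n\bigl\{g_\varphi(\xi_i)-\mathbb{E}[g_\varphi(\xi)]\bigr\}$. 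By Assumption~\ref{hyp:control} every $\varphi\in\Phi$ vanishes off $\mathcal{K}\subset\Gamma_{b}$, and, by part~(i) of Proposition~\ref{prop:decomp}, $S_{C,h}(y\mid x)\ge b/2$ on $\mathcal{K}$ as soon as $h\le h_0$; consequently the class $\mathcal{G}=\{g_\varphi:\varphi\in\Phi\}$ admits the constant envelope $2M_\Phi/b$.

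First I would establish, via an elementary permanence argument in the spirit of Appendix~\ref{AppendixA}, that $\mathcal{G}$ is a separable bounded {\sc VC}-type class with constants controlled by $(A,v)$ and $b$ only. Indeed $g_\varphi$ is the product of $\varphi$ with the fixed, deterministic, measurable weight $w(\delta,y,x)=\delta\,\mathbb{I}\{(y,x)\in\mathcal{K}\}/S_{C,h}(y\mid x)$, which satisfies $\|w\|_\infty\le 2/b$ ($S_{C,h}$ being built from the \emph{deterministic} functions $H_{0,h}=\mathbb{E}[\hat H_{0,n}]$ and $H_h=\mathbb{E}[\hat H_n]$). For any probability measure $Q$ on $\mathbb{R}_+\times\{0,1\}\times\mathbb{R}^d$ one has $\|g_{\varphi_1}-g_{\varphi_2}\|_{L_2(Q)}\le (2/b)\,\|\varphi_1-\varphi_2\|_{L_2(\tilde Q)}$ for a suitable image probability measure $\tilde Q$ on $\mathbb{R}_+\times\mathbb{R}^d$, so covering numbers of $\mathcal{G}$ are bounded by those of $\Phi$ after rescaling the radius; Assumption~\ref{hyp:complex} then gives $\mathcal{N}(\mathcal{G},L_2(Q),\epsilon)\le (A'M_\Phi/\epsilon)^v$ with $A'$ a function of $A$ and $b$, and separability is inherited from $\Phi$.

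I would then apply Lemma~\ref{lem:concentration_ineq_empi_sum} to $\mathcal{G}$ with the constant envelope $\|U\|_\infty=2M_\Phi/b$ and the crude variance bound $\sigma^2\le\|U\|_\infty^2$ (a sharper variance estimate is not needed for the $\sqrt{\log(n)/n}$ rate). Taking
\[
t\;=\;\sigma\sqrt n\left(C_1\sqrt{\log\!\bigl(2\|U\|_\infty/\sigma\bigr)}\;+\;\sqrt{\log(C_2/\epsilon)/C_3}\right),
\]
the first summand ensures the left inequality of the admissibility range \eqref{eq:range0}, while the right inequality $t\le n\sigma^2/\|U\|_\infty$ is equivalent to a lower bound of the form $n\ge M_1\log(M_2/\epsilon)$ once the constant $\log(2\|U\|_\infty/\sigma)$ — which depends only on $b$ and $M_\Phi$ — is absorbed into $M_1$. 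With this choice $C_2\exp\bigl(-C_3 t^2/(n\sigma^2)\bigr)\le\epsilon$, so with probability at least $1-\epsilon$,
\[
\sup_{\varphi\in\Phi}|L_n(\varphi)|\;=\;\frac1n\Bigl\|\sum_{i=1}^n\{g(\xi_i)-\mathbb{E}[g(\xi)]\}\Bigr\|_{\mathcal{G}}\;\le\;\frac tn,
\]
and using $(a+b)^2\le 2(a^2+b^2)$ together with $\sigma\le\|U\|_\infty$ and the fact that $\log(2\|U\|_\infty/\sigma)$ is bounded by a constant depending only on $b,M_\Phi$, one gets $t/n\le\sqrt{M_1\log(M_2/\epsilon)/n}$ for suitable constants $M_1>0$, $M_2>1$ depending only on $(A,v)$, $K$, $M_\Phi$ and $b$ (the dependence on $K$ entering only through $h_0$ in Proposition~\ref{prop:decomp}(i)).

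The only step that is not pure bookkeeping is the permanence claim of the second paragraph: one must make sure that multiplying $\Phi$ by the weight $\delta/S_{C,h}(\cdot\mid\cdot)$ restricted to $\mathcal{K}$ leaves the {\sc VC} constants bounded uniformly over $h\le h_0$. This is immediate from the covering-number inequality above, since the relevant sup-norm of the weight is at most $2/b$ for every admissible $h$. Everything else is a routine specialisation of the empirical-process concentration bound and a tracking of constants.
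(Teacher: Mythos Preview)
Your proposal is correct and follows essentially the same route as the paper: recognise $L_n$ as a centred empirical process over the class $\mathcal{G}=\{\delta\varphi/S_{C,h}:\varphi\in\Phi\}$, observe the constant envelope $2M_\Phi/b$ via Proposition~\ref{prop:decomp}(i), check the {\sc VC}-type property by permanence (multiplication by a single fixed bounded function), and apply the concentration inequality with $\sigma=\|U\|_\infty$. The paper invokes Corollary~\ref{lem:concentration_ineq_Ustat} with $k=1$ rather than Lemma~\ref{lem:concentration_ineq_empi_sum} directly, and indexes the class over $(\varphi,h)\in\Phi\times(0,h_0]$ rather than $\varphi$ alone, but these are cosmetic differences; your covering-number argument already shows the {\sc VC} constants are independent of $h$, which is all that is needed.
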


We now turn to the term $V_n(\varphi)$. Observe it can be decomposed as $V_n(\varphi)=V_{n,1}(\varphi)+V_{n,2}(\varphi)$, where
\begin{eqnarray*}
V_{n,1}(\varphi)&=& \frac{1}{n} \sum_{i=1} ^n \frac{\delta_i \varphi(\tilde{Y}_i, X_i)}{S_{C,h}(\tilde{Y}_i \mid X_i)} \int_0^{\tilde Y_i}\frac{c_h (u \mid X_i)}{H_h(u,X_i)^{2} }   \left( \hat H_n^{(i)}(u,X_i) -H_h(u,X_i)\right)  d   \hat H_{0,n}^{(i)} (u, X_i) ,\\
V_{n,2}(\varphi)&=& - \frac{1}{n} \sum_{i=1} ^n \frac{\delta_i \varphi(\tilde{Y}_i, X_i)}{S_{C,h}(\tilde{Y}_i \mid X_i)} \int_0^{\tilde Y_i}\frac{c_h (u \mid X_i)}{H_h(u,X_i) }   d \left( \hat H^{(i)}_{0,n} (u, X_i)  - H_{0,h} (u, X_i)\right).
\end{eqnarray*}
We first consider $V_{n,1}(\varphi)$. For simplicity, we set $K_{i,j}=K_h(X_i-X_j)$ for $1\leq i,\; j\leq n$. We have:
\begin{multline*}
V_{n,1}(\varphi)= \\\frac{1}{n(n-1)}\sum_{\scriptsize \begin{array}{c} (i,j) \\
i\neq j \end{array}} \frac{\delta_i  \varphi(\tilde{Y}_i, X_i)}{S_{C,h}(\tilde{Y}_i \mid X_i)} \mathbb{I}\{\tilde{Y}_j \leq \tilde{Y}_i\}\frac{ (1 - \delta_j) K_{ij}  c_h (\tilde{Y}_j  \mid X_i) }{ H_h(\tilde{Y}_j , X_i)^{2}} \left( \hat H_n^{(i)}(\tilde{Y}_j ,X_i) -H_h(\tilde{Y}_j ,X_i) \right)\\
=\frac{1}{n(n-1)^2} \sum_{\scriptsize \begin{array}{c} (i,j,k) \\
i\neq j, i\neq k\end{array}} v_{i,j,k}(\varphi)= V'_{n,1}(\varphi)+V''_{n,1}(\varphi),
\end{multline*}
where, for all $1\leq i,\; j,\; k\leq n$,
$$
v_{i,j,k}(\varphi)=\frac{\delta_i \varphi(\tilde{Y}_i, X_i)}{S_{C,h}(\tilde{Y}_i \mid X_i)} \mathbb{I}\{\tilde{Y}_j \leq \tilde{Y}_i\} \frac{(1 - \delta_j) K_{ij}  c_h (\tilde{Y}_j  \mid X_i)}   {H_h(\tilde{Y}_j , X_i)^{2}} \left( \mathbb{I}\{\tilde{Y}_k > \tilde{Y}_j\}  K_{ik} - H_h(\tilde{Y}_j , X_i) \right)
$$
and
\begin{eqnarray*}
V'_{n,1}(\varphi)&=& \frac{1}{n(n-1)^2} \sum_{\scriptsize \begin{array}{c} (i,j,k) \\
 i\neq j, i\neq k, j\neq k \end{array}} v_{i,j,k}(\varphi) ,\\
 V''_{n,1}(\varphi)&=&\frac{1}{n(n-1)^2} \sum_{\scriptsize \begin{array}{c} (i,j) \\
   i\neq j  \end{array}} v_{i,j,j}(\varphi).
\end{eqnarray*}
The lemma stated below provides a uniform bound for $V''_{n,1}(\varphi)$.

\begin{lemma} \label{lem:V''_term}
Under the assumptions of Proposition \ref{prop:main_result}, we have, for any $\epsilon \in(0,1)$, with probability greater than $1-\epsilon$:
\begin{equation*}
\sup_{\varphi \in \Phi}\left\vert V''_{n,1}(\varphi) \right\vert \leq  M_1  \left(\frac{1}{(n-1) } + \sqrt{\frac{ \log(2 / \epsilon)}{  n}} \frac{ 1}{(n-1) h^d } \right).
\end{equation*}
whenever $ h\leq h_0$ where $h_0>0$ and $M_1 >0$ depends only on $M_{\Phi}$, $K $, $R$ and $b$.
\end{lemma}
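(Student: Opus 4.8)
The plan is to bound $\sup_{\varphi\in\Phi}|V''_{n,1}(\varphi)|$, uniformly in $\varphi$, by $(n-1)^{-1}$ times a single second-order $U$-statistic in the $X_i$'s, and then to concentrate that $U$-statistic around its bounded mean. Since $\mathbb{I}\{\tilde Y_j>\tilde Y_j\}=0$, the summand collapses to
\[
v_{i,j,j}(\varphi) = -\,\frac{\delta_i\,\varphi(\tilde Y_i,X_i)}{S_{C,h}(\tilde Y_i\mid X_i)}\;\mathbb{I}\{\tilde Y_j\leq\tilde Y_i\}\;\frac{(1-\delta_j)\,K_{ij}\,c_h(\tilde Y_j\mid X_i)}{H_h(\tilde Y_j,X_i)},
\]
with no leave-one-out estimator remaining. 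On the support of $\varphi$ one has $(\tilde Y_i,X_i)\in\mathcal{K}\subset\Gamma_{b}$, and since $S_Y(\cdot\mid x)$ and $S_C(\cdot\mid x)$ are non-increasing, $\tilde Y_j\leq\tilde Y_i$ forces $(\tilde Y_j,X_i)\in\Gamma_{b}$ too; hence, for $h\leq h_0$, the estimates $S_{C,h}\geq b/2$, $H_h\geq 3b^3/4$ and $c_h\leq 2/b$ of Proposition \ref{prop:decomp}(i) apply (its proof in fact delivers these bounds on all of $\Gamma_{b}$). Together with $|\varphi|\leq M_\Phi$ this gives $|v_{i,j,j}(\varphi)|\leq C_1\,K_h(X_i-X_j)$ uniformly in $\varphi$, for a constant $C_1$ depending only on $M_\Phi$ and $b$, so that
\[
\sup_{\varphi\in\Phi}\bigl|V''_{n,1}(\varphi)\bigr|\ \leq\ \frac{C_1}{n-1}\,U_n,\qquad U_n:=\frac{1}{n(n-1)}\sum_{i\neq j}K_h(X_i-X_j).
\]
Note that $U_n$ is a function of $X_1,\dots,X_n$ only, and that the complexity constants $(A,v)$ of $\Phi$ have dropped out --- which is why $M_1$ may depend only on $M_\Phi,K,R,b$.

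It remains to control $U_n$. By Assumption \ref{hyp:bound1}, $\mathbb{E}[U_n]=\mathbb{E}[(K_h\ast g)(X_1)]\leq\|g\|_\infty\leq R$, which after the $(n-1)^{-1}$ prefactor yields the $\tfrac1{n-1}$ term. For the fluctuation, I would use McDiarmid's bounded-differences inequality: modifying a single $X_i$ changes $U_n$ by at most $2\|K_h\|_\infty/n=2\|K\|_\infty/(nh^d)$, whence $|U_n-\mathbb{E}[U_n]|\leq \|K\|_\infty h^{-d}\sqrt{2\log(2/\epsilon)/n}$ with probability at least $1-\epsilon$, and therefore
\[
\sup_{\varphi\in\Phi}\bigl|V''_{n,1}(\varphi)\bigr|\ \leq\ M_1\left(\frac{1}{n-1}+\frac{1}{(n-1)h^d}\sqrt{\frac{\log(2/\epsilon)}{n}}\right)
\]
on that event, which is the assertion. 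Alternatively, one may regard $U_n-\mathbb{E}[U_n]$ as a degenerate $U$-statistic over a singleton class and invoke Corollary \ref{lem:concentration_ineq_Ustat} with $k=2$, using the \emph{localised} variance proxy $\mathbb{E}[K_h(X_1-X_2)^2]=O(h^{-d})$ together with the envelope $O(h^{-d})$; this requires the mild side condition $\log(1/\epsilon)\lesssim nh^d$, which the bounded-differences route avoids.

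The crux is really the first step: spotting the algebraic cancellation that removes every leave-one-out estimator from $v_{i,j,j}$, and then extending the deterministic lower bounds of Proposition \ref{prop:decomp}(i) from $\mathcal{K}$ to $\Gamma_{b}$ so that the factors $c_h(\tilde Y_j\mid X_i)$ and $H_h(\tilde Y_j,X_i)$ --- evaluated at a point $(\tilde Y_j,X_i)$ that need not belong to $\mathcal{K}$ --- stay bounded. Once the dominating $\varphi$-free $U$-statistic is isolated, the concentration step is routine; the only subtlety, if one insists on the $U$-process machinery of Appendix \ref{AppendixA} for consistency, is to feed in the localised second moment rather than the crude sup-norm so that the final bound carries $h^{-d}$ and not $h^{-2d}$.
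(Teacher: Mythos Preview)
Your proof is correct and follows essentially the same approach as the paper: both simplify $v_{i,j,j}$ via the cancellation $\mathbb{I}\{\tilde Y_j>\tilde Y_j\}=0$, bound the remaining factors using the deterministic estimates of Proposition~\ref{prop:decomp}(i), and then apply McDiarmid's bounded differences inequality to obtain the fluctuation term while bounding the mean by $R/(n-1)$. The only cosmetic difference is that the paper applies McDiarmid directly to $\sup_{\varphi}|V''_{n,1}(\varphi)|$ as a function of the full triples $(X_i,\tilde Y_i,\delta_i)$, whereas you first pass to the $\varphi$-free majorant $(n-1)^{-1}U_n$ before concentrating; your observation that the bounds must be read on $\Gamma_{b}$ (not just $\mathcal{K}$) to handle the point $(\tilde Y_j,X_i)$ is a detail the paper leaves implicit.
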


We now consider $V'_{n,1}(\varphi)$. Set $Z_k=(X_k,\tilde{Y}_k,\delta_k)$ for $k\in\{1,\; \ldots,\; n  \}$. It can be decomposed as follows:
\begin{equation}
 V'_{n,1}(\varphi)=\frac{n-2}{n-1}\left\{U^{(1)}_{n,1}(\varphi)+U^{(2)}_{n,1}(\varphi)+U^{(3)}_{n,1}(\varphi)+L_n'(\varphi)\right\},
\end{equation}
with
\begin{eqnarray*}
&& U^{(1)}_{n,1}(\varphi)=\frac{1}{n(n-1)(n-2)}\times\\
&& \sum_{\scriptsize \begin{array}{c} (i,j,k) \\
 i\neq j, i\neq k,j\neq k \end{array}} \left\{ v_{i,j,k}(\varphi) - \mathbb{E} [ v_{i,j,k}(\varphi) | Z_j,\;  Z_k ] - \mathbb{E} [ v_{i,j,k}(\varphi) | Z_i,\; Z_k ] + \mathbb{E} [ v_{i,j,k}(\varphi) |  Z_k ] \right\} ,\\
&& U^{(2)}_{n,1}(\varphi)=\frac{1}{n(n-1)}  \sum_{\scriptsize \begin{array}{c} (j,k) \\
  j\neq  k \end{array}} \left\{  \mathbb{E} [ v_{i,j,k}(\varphi)  | Z_j,\; Z_ k ] - \mathbb{E} [ v_{i,j,k}(\varphi)  | Z_k ] \right\} ,\\
&& U^{(3)}_{n,1}(\varphi)= \frac{1}{n(n-1)} \sum_{\scriptsize \begin{array}{c} (i,k) \\
 i\neq k  \end{array}}\left\{ \mathbb{E} [ v_{i,j,k}(\varphi) | Z_i,\; Z_ k ] - \mathbb{E} [ v_{i,j,k}(\varphi) | Z_k ] \right\} ,\\
&& L_n'(\varphi)=\frac{1}{n} \sum_k \mathbb{E} [ v_{i,j,k}(\varphi) | Z_k],\\
\end{eqnarray*}
where $i$, $j$ and $k$ always denote pairwise distinct indexes.
Observe that, for all $\varphi\in \Phi$ and pairwise distinct indexes $i,\; j$ and $k$ in $\{1,\; \ldots,\; n\}$, we have with probability one:
$$
\mathbb{E}[v_{i,j,k}(\varphi)\mid Z_i,\; Z_j  ]=\mathbb{E}[v_{i,j,k}(\varphi)\mid Z_i ]=\mathbb{E}[v_{i,j,k}(\varphi)\mid Z_j  ]=0.
$$
The quantities $U_{n,1}^{(k)}(\varphi)$, $k\in\{1,\; 2,\; 3  \}$ are thus degenerate $U$-statistics of degree $3$, $2$ and $2$ respectively, whereas $L_n'(\varphi)$ is a basic (centred) i.i.d. average. Maximal deviation inequalities for the latter can be obtained by means of classical results in empirical process theory, like for $L_n(\varphi)$.

\begin{lemma}\label{lem:L_2}
Suppose that the hypotheses of Proposition \ref{prop:main_result} are fulfilled. Then, for any $\epsilon\in(0,1)$, we have with probability at least $1-\epsilon$:
\begin{equation*}
 \sup_{\varphi\in \Phi}\lvert L_n'(\varphi)\rvert \leq  \sqrt{ \frac{M_1 \log(M_2/\epsilon )}{ n }  },
\end{equation*}
as soon as $n\geq M_1 \log(M_2/\epsilon)$ and $h\leq h_0$
where $h_0$, $M_1>0$ and $ M_2>1$ are constants depending on $(A,v)$, $K$, $M_{\Phi}$, $R$ and $b$ only.

\end{lemma}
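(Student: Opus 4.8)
The plan is to recognize $L_n'(\varphi)=\frac1n\sum_{k=1}^n g_\varphi(Z_k)$ as a \emph{centred} i.i.d.\ empirical process indexed by the class $\mathcal G=\{g_\varphi:\varphi\in\Phi\}$ of functions of a single observation $Z=(X,\tilde Y,\delta)$, where $g_\varphi(z)=\mathbb{E}[v_{i,j,k}(\varphi)\mid Z_k=z]$ (which does not depend on the particular pair $i,j$ integrated out); centring follows from the identity $\mathbb{E}[v_{i,j,k}(\varphi)\mid Z_i,Z_j]=0$ noted just before the statement, which forces $\mathbb{E}[g_\varphi(Z)]=0$. Once $\mathcal G$ is shown to be uniformly bounded and of {\sc VC} type with constants controlled by $(A,v)$, $K$, $M_\Phi$, $R$ and $b$ only, the result will follow by applying Lemma~\ref{lem:concentration_ineq_empi_sum} (its $k=1$ instance) to the sample $Z_1,\dots,Z_n$ and dividing by $n$.

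First I would bound $\mathcal G$ uniformly. Splitting the last factor of $v_{i,j,k}(\varphi)$, namely $\mathbb{I}\{\tilde Y_k>\tilde Y_j\}K_{ik}-H_h(\tilde Y_j,X_i)$, into its two pieces, the contribution of $-H_h(\tilde Y_j,X_i)$ does not depend on $Z_k$ and is thus a harmless additive constant after conditioning, while in the remaining piece one integrates out $Z_i$ and $Z_j$. Since $\delta_i\varphi(\tilde Y_i,X_i)\neq0$ forces $(\tilde Y_i,X_i)\in\mathcal K$ (Assumption~\ref{hyp:control}) and $\mathbb{I}\{\tilde Y_j\le\tilde Y_i\}$ together with the non-increasingness of $\Gamma_b$ in the $y$ variable forces $(\tilde Y_j,X_i)\in\Gamma_b$, the argument of Proposition~\ref{prop:decomp}(i) --- which in fact yields the same bounds on all of $\Gamma_b$ --- gives $S_{C,h}\ge b/2$, $H_h\ge 3b^3/4$ and hence $c_h\le 2/b$ at the relevant arguments as soon as $h\le h_0$; combining this with $\|\varphi\|_\infty\le M_\Phi$, Assumptions~\ref{hyp:bound1} and~\ref{hyp:bound2}, and $\int K=1$, the two kernel factors $K_h(X_i-X_j)$ and $K_h(X_i-X_k)$ get integrated against densities bounded by $R$ and return $O(1)$ quantities, so that $\|g_\varphi\|_\infty\le M_\Phi'$ for a constant $M_\Phi'$ depending on $M_\Phi$, $b$ and $R$ only --- with, crucially, no leftover factor of $h^{-d}$. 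In particular $\sup_{\varphi\in\Phi}\sigma^2(g_\varphi)\le(M_\Phi')^2$.

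Next I would establish the {\sc VC}-type property of $\mathcal G$. The function $v_{i,j,k}(\varphi)$ is linear in $\varphi$, which enters only through $\varphi(\tilde Y_i,X_i)$, with $\varphi$ ranging over the bounded {\sc VC}-type class $\Phi$ (Assumption~\ref{hyp:complex}); its other factors are, as $(\tilde Y_i,X_i)$, $(X_i,X_j)$, $(X_i,X_k)$ and $h$ vary, finite products of bounded {\sc VC}-type classes --- indicator classes and the kernel classes of the lemmas recalled in Appendix~\ref{AppendixA} --- times the fixed bounded functions $1/S_{C,h}$, $c_h$ and $1/H_h^2$. Permanence of the {\sc VC}-type property under products and under multiplication by fixed bounded functions keeps the constants controlled by $(A,v)$ and $K$, and permanence under the conditional-expectation operation $\mathbb{E}[\,\cdot\mid Z_k]$ is precisely the stability lemma of Appendix~\ref{AppendixA} (Proposition~8 in \cite{portier+s:2018}); hence $\mathcal G$ is {\sc VC} type with constants $(A',v')$ and envelope $M_\Phi'$ depending on $(A,v)$, $K$, $M_\Phi$, $R$, $b$ only. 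I expect this step --- together with the boundedness --- to be the main obstacle: one cannot transport the $O(h^{-2d})$ envelope of $v_{i,j,k}$ through the conditional expectation, so the uniform boundedness of $\mathcal G$ has to be obtained ``by hand'' via the smoothing, as in the previous paragraph, before the permanence machinery for {\sc VC} classes can be invoked.

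Finally, I would apply Lemma~\ref{lem:concentration_ineq_empi_sum} with $\mathcal U=\mathcal G$, envelope $\|U\|_\infty=M_\Phi'$ and $\sigma^2=(M_\Phi')^2$: for $t$ in the range~\eqref{eq:range0} it yields
\[
\mathbb{P}\Bigl\{\sup_{\varphi\in\Phi}\Bigl|\sum_{k=1}^n\bigl(g_\varphi(Z_k)-\mathbb{E}[g_\varphi(Z_k)]\bigr)\Bigr|>t\Bigr\}\le C_2\exp\!\bigl(-C_3\,t^2/(n\sigma^2)\bigr).
\]
Choosing $t=M_\Phi'\sqrt{n\,(C_1^2\log 2+C_3^{-1}\log(C_2/\epsilon))}$ makes the right-hand side at most $\epsilon$; this $t$ meets the lower bound in~\eqref{eq:range0} automatically and meets the upper bound $t\le n\sigma^2/\|U\|_\infty=nM_\Phi'$ exactly when $n\ge C_1^2\log2+C_3^{-1}\log(C_2/\epsilon)$. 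Since $L_n'(\varphi)=\frac1n\sum_k g_\varphi(Z_k)$ and $\mathbb{E}[g_\varphi(Z_k)]=0$, dividing by $n$ gives $\sup_{\varphi\in\Phi}|L_n'(\varphi)|\le\sqrt{M_1\log(M_2/\epsilon)/n}$ on an event of probability at least $1-\epsilon$, for constants $M_1>0$ and $M_2>1$ absorbing $M_\Phi'$, $C_1$, $C_2$, $C_3$, provided $n\ge M_1\log(M_2/\epsilon)$ and $h\le h_0$, which is exactly the claimed bound.
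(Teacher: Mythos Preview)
Your proposal is correct and follows essentially the same route as the paper: recognise $L_n'(\varphi)$ as a centred i.i.d.\ average of $\mathbb{E}[v_{i,j,k}(\varphi)\mid Z_k]$, obtain an $h$-free uniform bound on this conditional expectation by integrating the two kernel factors $K_{ij}$ and $K_{ik}$ against bounded densities (the paper does this via the inequality $|v_{i,j,k}|\le C\,K_{ij}(K_{ik}+R)$ and the direct computation $\mathbb{E}[K_{ij}K_{ik}\mid Z_k]\le R^2$), and then apply the $k=1$ concentration result with $\sigma=\|G\|_\infty$. The paper invokes Corollary~\ref{lem:concentration_ineq_Ustat} with $k=1$ rather than Lemma~\ref{lem:concentration_ineq_empi_sum}, and is terser about the {\sc VC}-type verification, but the argument is the same.
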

The following result is essentially proved by applying Corollary \ref{lem:concentration_ineq_Ustat}, once the complexity assumptions related to the classes of kernels involved in the definition of these degenerate $U$-processes have been established. It shows that the terms $U_{n,1}^{(k)}(\varphi)$'s are uniformly negligible.

\begin{lemma}\label{lem:Ustat1} Suppose that the hypotheses of Proposition \ref{prop:main_result} are fulfilled. There exist constants $M_1$, $M_2$ and $h_0$ depending on $(A,v)$, $M_\Phi$, $R$, $K$ and $b$ only, such that for any $\epsilon\in(0,1)$, each of the following events holds true with probability at least $1-\epsilon$:
\begin{align}\label{eq:U1}
&\sup_{\varphi\in \Phi}\lvert U_{n,1}^{(1)}(\varphi)\rvert\leq    \left( \frac{M_1    |\log(\epsilon h^{d/2} )| }{ n h^{d}  } \right)^{3/2},  \\
&\sup_{\varphi\in \Phi}\lvert U_{n,1}^{(2)}(\varphi)\rvert\leq  \frac{M_1   |\log(\epsilon h^{d/2} )| }{ n h^{d}  }, \label{eq:U2} \\
&\sup_{\varphi\in \Phi}\lvert U_{n,1}^{(3)}(\varphi)\rvert\leq  \frac{M_1    |\log(\epsilon h^{d/2} )| }{ n h^{d}  } \label{eq:U3},
\end{align}
as soon as $h\leq h_0$  and $M_2   |\log(\epsilon h^{d/2} )|\leq nh^d $.
\end{lemma}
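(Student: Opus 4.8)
The three quantities $U_{n,1}^{(1)}$, $U_{n,1}^{(2)}$, $U_{n,1}^{(3)}$ are precisely the non-trivial components of the Hoeffding decomposition of the $U$-statistic with kernel $v_{i,j,k}(\varphi)$, and the plan is to bound each of them by a direct application of Corollary \ref{lem:concentration_ineq_Ustat}. The first step is to confirm that they are completely degenerate $U$-statistics of the announced degrees. Using the three identities $\mathbb{E}[v_{i,j,k}(\varphi)\mid Z_i,Z_j]=\mathbb{E}[v_{i,j,k}(\varphi)\mid Z_i]=\mathbb{E}[v_{i,j,k}(\varphi)\mid Z_j]=0$ recorded just before the statement, the kernels simplify to $\pi_{ijk}=v_{i,j,k}-\mathbb{E}[v_{i,j,k}\mid Z_i,Z_k]-\mathbb{E}[v_{i,j,k}\mid Z_j,Z_k]+\mathbb{E}[v_{i,j,k}\mid Z_k]$ for $U_{n,1}^{(1)}$, to $\pi_{jk}=\mathbb{E}[v_{i,j,k}\mid Z_j,Z_k]-\mathbb{E}[v_{i,j,k}\mid Z_k]$ for $U_{n,1}^{(2)}$, and to $\pi_{ik}=\mathbb{E}[v_{i,j,k}\mid Z_i,Z_k]-\mathbb{E}[v_{i,j,k}\mid Z_k]$ for $U_{n,1}^{(3)}$; by construction each is canonical (its conditional expectation with respect to any single argument vanishes), of respective degrees $3$, $2$ and $2$.

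Before applying the concentration inequality I would check that, for $h$ fixed, the classes $\{\pi_{ijk}(\cdot\,;\varphi):\varphi\in\Phi\}$, $\{\pi_{jk}(\cdot\,;\varphi):\varphi\in\Phi\}$ and $\{\pi_{ik}(\cdot\,;\varphi):\varphi\in\Phi\}$ are of {\sc VC} type with constants independent of $h$. The raw kernel $v_{i,j,k}(\varphi)$ is a product of the factor $\delta_i\varphi(\tilde Y_i,X_i)/S_{C,h}(\tilde Y_i\mid X_i)$ (VC by Assumption \ref{hyp:complex}, the denominator being a fixed bounded function), the ordering indicators (VC), the two kernel factors $K_{ij},K_{ik}$ (VC by the first permanence lemma of Appendix \ref{AppendixA}), and the fixed bounded function $c_h/H_h^2$. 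Permanence of the {\sc VC} property under products and sums of uniformly bounded classes then makes $\{v_{i,j,k}(\cdot\,;\varphi)\}$ VC type, and the conditional-expectation kernels inherit the property through the permanence lemma for $w\mapsto\mathbb{E}[K(h^{-1}(z-V))\mid W=w]$ recalled in Appendix \ref{AppendixA}, so that $\pi_{ijk}$, $\pi_{jk}$, $\pi_{ik}$ are VC type with $h$-free constants.

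The heart of the argument is the bookkeeping of envelopes and variance proxies. Using Proposition \ref{prop:decomp}(i) ($S_{C,h}\ge b/2$ and $H_h\ge 3b^3/4$ on $\mathcal K$), the bound $c_h\le 2/b$, the boundedness of $\varphi$ by $M_\Phi$, and Assumption \ref{hyp:bound2}, the raw kernel carries two kernel factors and so admits an envelope of order $h^{-2d}$ with $\mathbb{E}[v_{i,j,k}^2]=O(h^{-2d})$; the degree-$2$ kernels $\pi_{jk}$, $\pi_{ik}$ lose one kernel factor upon integration of one argument and thus have envelope and variance both of order $h^{-d}$. For $U_{n,1}^{(1)}$ I would deliberately \emph{inflate} the variance proxy to $\sigma^2\asymp h^{-3d}$ (admissible since $h^{-2d}\le h^{-3d}\le\|G\|_\infty^2\asymp h^{-4d}$): then $\log(2\|G\|_\infty/\sigma)\asymp|\log(h^{d/2})|$, the applicability condition $\|G\|_\infty^2(\cdots)\le n\sigma^2$ of Corollary \ref{lem:concentration_ineq_Ustat} reduces exactly to $nh^d\gtrsim|\log(\epsilon h^{d/2})|$, and dividing $t(n,\sigma,\epsilon)$ by $n(n-1)(n-2)$ yields a bound of order $\sigma n^{-3/2}(\cdots)^{3/2}\asymp(|\log(\epsilon h^{d/2})|/(nh^d))^{3/2}$, which is \eqref{eq:U1}. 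For $U_{n,1}^{(2)}$ and $U_{n,1}^{(3)}$ I would apply the corollary with $k=2$ and $\sigma^2\asymp h^{-d}$: the same computation gives the condition $nh^d\gtrsim|\log(\epsilon h^{d/2})|$ and, after division by $n(n-1)$, a bound of order $|\log(\epsilon h^{d/2})|/(nh^{d/2})$, which is even smaller than, hence implies, the announced $|\log(\epsilon h^{d/2})|/(nh^d)$ of \eqref{eq:U2}--\eqref{eq:U3}.

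The main obstacle I anticipate is twofold. First, verifying the {\sc VC}-type property of the conditional-expectation kernels with constants that do not deteriorate as $h\to0$, which forces one to track the permanence constants through each product and conditioning step rather than invoking permanence only qualitatively. Second, choosing the variance proxies so that the rather rigid two-sided range condition \eqref{eq:range} of the underlying $U$-process inequality is met precisely under the weak bandwidth restriction $nh^d\gtrsim|\log(\epsilon h^{d/2})|$: the inflation of $\sigma^2$ to $h^{-3d}$ in the degree-$3$ case is exactly the device that trades a slightly looser deviation bound for this weaker condition. The remaining simplifications (combining the $\log(\|G\|_\infty/\sigma)$ and $\log(1/\epsilon)$ terms into a single $|\log(\epsilon h^{d/2})|$, and absorbing numerical constants into $M_1,M_2$) are routine.
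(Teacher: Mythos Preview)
Your proposal is correct and follows essentially the same approach as the paper: both apply Corollary~\ref{lem:concentration_ineq_Ustat} to each degenerate $U$-process after verifying the {\sc VC}-type property via permanence, and both identify the same key device of inflating the variance proxy in the degree-$3$ case so that the range condition of the corollary reduces to $nh^d\gtrsim|\log(\epsilon h^{d/2})|$. The only cosmetic difference is that the paper pre-multiplies the kernels by $h^{2d}$ (resp.\ $h^d$) to normalize envelopes to $O(1)$ and then takes $\sigma^2\asymp h^d h_0^d$ (resp.\ $h^d$), whereas you work with the unscaled kernels and take $\sigma^2\asymp h^{-3d}$ (resp.\ $h^{-d}$); these are identical after the change of scale, and your observation that the degree-$2$ pieces actually yield the stronger bound $|\log(\epsilon h^{d/2})|/(nh^{d/2})$ is correct and consistent with the paper's computation.
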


The two preceding lemmas combined with the union bound directly yield the following result.

\begin{corollary} Suppose that the hypotheses of Proposition \ref{prop:main_result} are fulfilled. There exist constants $M_1$, $M_2$, $M_3$, $M_4$, $M_5$ and $h_0$ depending on $(A,v)$, $M_\Phi$, $R$, $K$ and $b$ only such that for any $\epsilon\in (0,1)$, we have with probability greater than $1-\epsilon$:
\begin{equation*}
\sup_{\varphi\in \Phi}\left\vert V'_{n,1}(\varphi) \right\vert \leq M_1 \left( \sqrt{ \frac{\log(M_2/\epsilon )}{ n }  } + \frac{  |\log(\epsilon h^{d/2} )| }{ n h^{d}  } +  \left( \frac{    |\log(\epsilon h^{d/2} )| }{ n h^{d}  } \right)^{3/2}
 \right),
\end{equation*}
as soon as $h\leq h_0$, $M_3   |\log(\epsilon h^{d/2} )|\leq nh^d $ and $M_4 \log(M_5/\epsilon) \leq n $.
\end{corollary}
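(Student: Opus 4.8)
The plan is to read the bound straight off the decomposition
\[
V'_{n,1}(\varphi)=\frac{n-2}{n-1}\left\{U^{(1)}_{n,1}(\varphi)+U^{(2)}_{n,1}(\varphi)+U^{(3)}_{n,1}(\varphi)+L_n'(\varphi)\right\},
\]
established immediately above the statement, by combining Lemmas~\ref{lem:L_2} and~\ref{lem:Ustat1} through the triangle inequality and a union bound. Since $0\le (n-2)/(n-1)\le 1$, the triangle inequality gives
\[
\sup_{\varphi\in\Phi}\bigl|V'_{n,1}(\varphi)\bigr|\le \sup_{\varphi\in\Phi}\bigl|L_n'(\varphi)\bigr|+\sum_{k=1}^{3}\sup_{\varphi\in\Phi}\bigl|U^{(k)}_{n,1}(\varphi)\bigr|,
\]
so it suffices to control the four suprema simultaneously on a common high-probability event.

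First I would invoke each of Lemmas~\ref{lem:L_2} and~\ref{lem:Ustat1} at confidence level $1-\epsilon/4$, i.e.\ substituting $\epsilon/4$ for $\epsilon$ in their statements; the intersection of the four resulting events then has probability at least $1-\epsilon$ by the union bound. On this event, Lemma~\ref{lem:L_2} yields $\sup_\varphi|L_n'(\varphi)|\le\sqrt{M_1\log(4M_2/\epsilon)/n}$, while Lemma~\ref{lem:Ustat1} yields $\sup_\varphi|U^{(1)}_{n,1}(\varphi)|\le (M_1|\log((\epsilon/4)h^{d/2})|/(nh^d))^{3/2}$ together with $\sup_\varphi|U^{(k)}_{n,1}(\varphi)|\le M_1|\log((\epsilon/4)h^{d/2})|/(nh^d)$ for $k=2,3$. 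Collapsing the two identical $U^{(2)}$ and $U^{(3)}$ contributions into a single term of the same order recovers precisely the three-term right-hand side of the corollary.

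The only genuine manipulation is harmonising the constants so that the $\epsilon/4$ inside the logarithms is replaced by $\epsilon$. For the linear term this is immediate, $\log(4M_2/\epsilon)=\log(M_2'/\epsilon)$ with $M_2'=4M_2$. For the $U$-statistic terms one writes $|\log((\epsilon/4)h^{d/2})|\le |\log(\epsilon h^{d/2})|+\log 4$ and notes that, once $h\le h_0$ and $\epsilon<1$ force $\epsilon h^{d/2}\le 1$, the quantity $|\log(\epsilon h^{d/2})|$ stays bounded below away from $0$, so the additive $\log 4$ is absorbed into a multiplicative constant and folded into $M_1$. Relabelling $M_1$ as the sum of the finitely many constants arising across the two lemmas then gives the claimed estimate.

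The main obstacle here is purely bookkeeping: I must verify that the side conditions of both lemmas hold at once. Lemma~\ref{lem:L_2} requires $n\ge M_1\log(M_2/\epsilon)$ and $h\le h_0$, whereas Lemma~\ref{lem:Ustat1} requires $h\le h_0$ and $M_2|\log(\epsilon h^{d/2})|\le nh^d$ (all evaluated at level $\epsilon/4$). Taking $h_0$ to be the smaller of the two bandwidth thresholds and choosing $M_3,M_4,M_5$ as the appropriate maxima of the constants appearing in these provisos guarantees that both lemmas apply simultaneously; this is exactly the content of the three conditions $h\le h_0$, $M_3|\log(\epsilon h^{d/2})|\le nh^d$ and $M_4\log(M_5/\epsilon)\le n$ listed in the corollary. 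No concentration argument beyond those already used in the two lemmas is required.
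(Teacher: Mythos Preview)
Your proposal is correct and mirrors the paper's own argument exactly: the paper simply states that ``the two preceding lemmas combined with the union bound directly yield'' the corollary, and your write-up spells out precisely that union-bound-and-absorb-constants bookkeeping. Nothing further is needed.
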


We next deal with the term $V_{n,2}(\varphi)$.
\begin{lemma}\label{lem:V2} Suppose that the hypotheses of Proposition \ref{prop:main_result} are fulfilled. There exist constants $M_1$, $M_2$, $M_3$, $M_4$, $M_5$ and $h_0$ depending on $(A,v)$, $M_\Phi$, $R$, $K$ and $b$ only such that for any $\epsilon\in (0,1)$, we have with probability greater than $1-\epsilon$:
\begin{equation*}
\sup_{\varphi\in \Phi}\left\vert V_{n,2}(\varphi)\right\vert\leq M_1 \left( \sqrt{ \frac{\log(M_2/\epsilon )}{ n }  } + \frac{  |\log(\epsilon h^{d/2} )| }{ n h^{d}  } \right),
\end{equation*}
as soon as $h\leq h_0$, $M_3   |\log(\epsilon h^{d/2} )|\leq nh^d $ and $M_4 \log(M_5/\epsilon) \leq n $.
\end{lemma}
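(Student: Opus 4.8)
The plan is to treat $V_{n,2}$ along the same two-step route as $V'_{n,1}$: first unfold the leave-one-out estimator so that $V_{n,2}$ becomes a generalized (non-symmetric) $U$-statistic of degree two, then form its Hoeffding decomposition, and finally bound its linear part by empirical process theory --- as in Lemmas~\ref{lem:L_1} and \ref{lem:L_2} --- and its degenerate part by Corollary~\ref{lem:concentration_ineq_Ustat} --- as in Lemma~\ref{lem:Ustat1}. Throughout, part~(i) of Proposition~\ref{prop:decomp} (which is deterministic) supplies $S_{C,h}\ge b/2$, $H_h\ge 3b^3/4$ and hence $c_h\le 2/b$ on the relevant region, so no high-probability event is needed here: the denominators appearing in $V_{n,2}$ are deterministic.

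First I would write $\hat H^{(i)}_{0,n}(u,x)=(n-1)^{-1}\sum_{j\ne i}\mathbb{I}\{\tilde Y_j>u,\ \delta_j=0\}K_h(x-X_j)$, so that the Stieltjes integral against $d\hat H^{(i)}_{0,n}(\cdot,X_i)$ reduces to a finite sum of point-mass evaluations and $V_{n,2}(\varphi)=\frac{1}{n(n-1)}\sum_{i\ne j}h_{i,j}(\varphi)$, where, setting $g_i(u;\varphi)=\frac{\delta_i\varphi(\tilde Y_i,X_i)\,c_h(u\mid X_i)}{S_{C,h}(\tilde Y_i\mid X_i)\,H_h(u,X_i)}$,
\[
h_{i,j}(\varphi)=(1-\delta_j)K_h(X_i-X_j)\,g_i(\tilde Y_j;\varphi)\,\mathbb{I}\{\tilde Y_j\le\tilde Y_i\}+\int_0^{\tilde Y_i}g_i(u;\varphi)\,d_uH_{0,h}(u,X_i).
\]
The crucial step is the identity $\mathbb{E}[h_{i,j}(\varphi)\mid Z_i]=0$: by the definition $H_{0,h}(u,x)=\mathbb{E}[\mathbb{I}\{\tilde Y>u,\ \delta=0\}K_h(x-X)]$ and Fubini's theorem, the deterministic integral above is exactly $-\mathbb{E}[(1-\delta_j)K_h(X_i-X_j)g_i(\tilde Y_j;\varphi)\mathbb{I}\{\tilde Y_j\le\tilde Y_i\}\mid Z_i]$, and a further expectation gives $\mathbb{E}[h_{i,j}(\varphi)]=0$. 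Consequently the Hoeffding decomposition collapses to $V_{n,2}(\varphi)=L''_n(\varphi)+W_n(\varphi)$ with
\[
L''_n(\varphi)=\frac1n\sum_{j=1}^n\ell_\varphi(Z_j),\qquad \ell_\varphi(z)=\mathbb{E}[h_{1,2}(\varphi)\mid Z_2=z],\qquad W_n(\varphi)=\frac{1}{n(n-1)}\sum_{i\ne j}\bigl(h_{i,j}(\varphi)-\ell_\varphi(Z_j)\bigr),
\]
where $L''_n$ is a centred i.i.d. average and $W_n$ is a completely degenerate degree-two $U$-process (one checks $\mathbb{E}[h_{i,j}-\ell_\varphi(Z_j)\mid Z_i]=\mathbb{E}[h_{i,j}-\ell_\varphi(Z_j)\mid Z_j]=0$ using $Z_i\perp Z_j$ and $\mathbb{E}[h_{i,j}]=0$).

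It then remains to bound the two pieces. For $L''_n$, the permanence results of Appendix~\ref{AppendixA} --- products of VC-type classes with bounded functions, stability under conditional expectation, and the kernel conditional-expectation lemma --- show that $\{\ell_\varphi:\varphi\in\Phi\}$ is a VC-type class with constants depending only on $(A,v)$, $K$ and $b$, and with envelope and variance of order $O(1)$ (the $K_h$ factor being integrated out: $\mathbb{E}[K_h(X_i-X_j)\mid Z_j]=(K_h*g)(X_j)\le\|g\|_\infty\le R$ by Assumptions~\ref{hyp:bound1}--\ref{hyp:bound2}, while all remaining factors are bounded by $M_\Phi$ and powers of $b^{-1}$); Lemma~\ref{lem:concentration_ineq_empi_sum} then gives $\sup_{\varphi\in\Phi}|L''_n(\varphi)|\le\sqrt{M_1\log(M_2/\epsilon)/n}$ provided $n\ge M_4\log(M_5/\epsilon)$, exactly as in Lemmas~\ref{lem:L_1}--\ref{lem:L_2}. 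For $W_n$, the same permanence results give a VC-type kernel class with constant envelope $G\lesssim M_\Phi\|K\|_\infty b^{-4}h^{-d}$ and $\sup_\varphi\sigma^2\lesssim M_\Phi^2 R\,b^{-8}h^{-d}\int K^2$, so that $\log(2\|G\|_\infty/\sigma)\lesssim|\log(h^{d/2})|$; Corollary~\ref{lem:concentration_ineq_Ustat} with $k=2$, after dividing by $n(n-1)$, yields $\sup_{\varphi\in\Phi}|W_n(\varphi)|\lesssim\sigma n^{-1}\bigl(|\log(h^{d/2})|+\log(1/\epsilon)\bigr)\lesssim|\log(\epsilon h^{d/2})|/(nh^{d/2})\le|\log(\epsilon h^{d/2})|/(nh^d)$ (using $h\le h_0<1$) as soon as $M_3|\log(\epsilon h^{d/2})|\le nh^d$, exactly as for the term $U^{(2)}_{n,1}$ in Lemma~\ref{lem:Ustat1}. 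A union bound over the two events and the triangle inequality then give the announced bound.

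The main obstacle is the identity $\mathbb{E}[h_{i,j}(\varphi)\mid Z_i]=0$: it is what makes the Hoeffding decomposition reduce to a linear term plus a \emph{genuinely} degenerate remainder --- so that $V_{n,2}$ introduces no bias beyond what is already collected in $B_n$ --- and it rests on recognising the deterministic integral $\int_0^{\tilde Y_i}g_i(u;\varphi)\,d_uH_{0,h}(u,X_i)$ as a conditional expectation, via the definition of $H_{0,h}$ and an interchange of integration. The remainder is routine bookkeeping: tracking VC constants, envelopes and variances in order to feed Lemma~\ref{lem:concentration_ineq_empi_sum} and Corollary~\ref{lem:concentration_ineq_Ustat}, and keeping in mind that the leave-one-out normalisation $1/(n-1)$ is precisely what turns $V_{n,2}$ into a bona fide degree-two $U$-statistic rather than a mere empirical average.
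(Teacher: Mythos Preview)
Your proposal is correct and follows essentially the same route as the paper's proof: unfold the leave-one-out estimator to write $V_{n,2}$ as a degree-two $U$-statistic, use the key identity that the deterministic integral against $dH_{0,h}$ coincides with the conditional expectation (your $\mathbb{E}[h_{i,j}\mid Z_i]=0$, which the paper expresses as recognising $\int_0^{\tilde Y_i}\frac{c_h}{H_h}\,dH_{0,h}$ as $\mathbb{E}[w_{ij}\mid Z_i]$), perform the Hoeffding decomposition into a centred i.i.d.\ average plus a degenerate degree-two $U$-process, and bound the two pieces respectively via Corollary~\ref{lem:concentration_ineq_Ustat} with $k=1$ and $k=2$. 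Your identification of the envelope and variance orders matches the paper's choice $\sigma^2\asymp h^d$ for the (rescaled) degenerate part, and your final step of bounding $1/(nh^{d/2})$ by $1/(nh^d)$ for $h<1$ is harmless.
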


Finally, we consider the residual $R_n(\varphi)$. Recall first that, for all $\varphi\in \Phi$, we have $R_n (\varphi)= R'_n(\varphi)+R''_n(\varphi)$, where
\begin{eqnarray}
R'_n(\varphi)&=&-\frac{1}{n} \sum_{i=1}^n  \frac{ \delta_i   \varphi(\tilde Y_i , X_i) }{ {S}_{C, h} (\tilde Y_i \mid X_i)}  \hat b_{n}^{(i)} (\tilde Y_i \mid X_i),\label{eq:res1}\\
R''_n(\varphi)&=&   \frac{1}{n} \sum_{i=1}^n  \delta_i   \varphi(\tilde Y_i , X_i)   \frac{\left( {S}_{C, h} (\tilde Y_i \mid X_i) - \hat{S}^{(i)}_{C,n} (\tilde Y_i \mid X_i) \right)^2}{{S}^2_{C, h} (\tilde Y_i  \mid X_i) \hat{S}_{C,n}^{(i)} (\tilde Y _i \mid X_i)}.\label{eq:res2}
\end{eqnarray}
Each of the quantities, $R'_n(\varphi)$ and $R''_n(\varphi)$, is treated separately. We start with $R''_n(\varphi)$.

\begin{lemma}\label{lemma:res''}
Suppose that the assumptions of Proposition \ref{prop:main_result} are satisfied. Then, for all $\epsilon\in (0,1)$, we have with probability greater than $1-\epsilon$
$$
\sup_{\varphi\in \Phi}\left\vert R''_n(\varphi) \right\vert \leq  M_1 \left( \frac{ \vert\log(\epsilon h^{d/2} )  \vert }{nh^d} + \frac{ 1 }{(nh^d)^2}  \right),
$$
as soon as $h\leq h_0$ and $
M_2 |\log(\epsilon h^{d/2})|  \leq {nh^d}$, where $M_1$ and $M_2$ are nonnegative constants depending on $K$, $R$, $M_{\Phi}$ and $b$ only.
\end{lemma}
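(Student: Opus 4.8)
The plan is to exploit that $R''_n(\varphi)$ is a nonnegative weighted average whose only dependence on $\varphi$ is through the bounded factor $\delta_i\varphi(\tilde Y_i,X_i)$, so that taking the supremum over $\varphi$ reduces the problem to a uniform-in-$i$ control of the \emph{squared} deviation between $\hat S^{(i)}_{C,n}$ and the smoothed survival function $S_{C,h}$. First I would place myself on the event $\mathcal{E}_n$ of Proposition \ref{prop:decomp}(ii), on which $\hat S^{(i)}_{C,n}(\tilde Y_i\mid X_i)\geq b/2$ and, by part (i), $S_{C,h}(\tilde Y_i\mid X_i)\geq b/2$ whenever $(\tilde Y_i,X_i)\in\mathcal{K}$. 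Since Assumption \ref{hyp:control} forces $\varphi$ to vanish off $\mathcal{K}$, $|\varphi|\leq M_\Phi$ and $\delta_i\in\{0,1\}$, each summand of \eqref{eq:res2} is at most $\tfrac{8M_\Phi}{b^3}\bigl(S_{C,h}(\tilde Y_i\mid X_i)-\hat S^{(i)}_{C,n}(\tilde Y_i\mid X_i)\bigr)^2\mathbb{I}\{(\tilde Y_i,X_i)\in\mathcal{K}\}$, so bounding the pointwise deviation by its supremum over $\mathcal{K}$ yields
\[
\sup_{\varphi\in\Phi}\lvert R''_n(\varphi)\rvert\leq \frac{8M_\Phi}{b^3}\,\frac1n\sum_{i=1}^n\sup_{(t,x)\in\mathcal{K}}\bigl(S_{C,h}(t\mid x)-\hat S^{(i)}_{C,n}(t\mid x)\bigr)^2 .
\]

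Next I would reproduce the chain of inequalities of Proposition \ref{prop:bound_CKM}'s proof, but comparing $\hat S^{(i)}_{C,n}$ with $S_{C,h}$ rather than with $S_C$; this is the crucial change, as it removes the kernel-bias term of order $h^2$ and leaves only stochastic contributions. Concretely, applying Lemma \ref{lemma:survival} (with $S^{(1)}=\hat S^{(i)}_{C,n}(\cdot\mid x)$, $S^{(2)}=S_{C,h}(\cdot\mid x)$, $\theta=b/2$) and then Lemma \ref{lemma:lambda} (with $G^{(1)}=\hat H^{(i)}_{0,n}$, $H^{(1)}=\hat H^{(i)}_n$, $G^{(2)}=H_{0,h}$, $H^{(2)}=H_h$, and lower bounds $\theta_1,\theta_2\asymp b^3$ valid on $\mathcal{E}_n$ and by part (i)) bounds $\sup_{(t,x)\in\mathcal{K}}\lvert\hat S^{(i)}_{C,n}-S_{C,h}\rvert$ by a constant multiple of $\sup_{\mathcal{K}}\lvert\hat H^{(i)}_{0,n}-H_{0,h}\rvert+\sup_{\mathcal{K}}\lvert\hat H^{(i)}_{n}-H_{h}\rvert$. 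Each of these two leave-one-out deviations I would split by the triangle inequality into a deterministic perturbation controlled by \eqref{eq:4}--\eqref{eq:5}, of order $\|K\|_\infty/((n-1)h^d)$, and a full-sample fluctuation controlled by Lemma \ref{lemma:bound_prob}, the latter holding at level $\epsilon/c$ \emph{simultaneously for all} $i$ because the full-sample averages $\hat H_{0,n},\hat H_n$ do not depend on $i$. A union bound combining $\mathcal{E}_n$ with the two instances of Lemma \ref{lemma:bound_prob} then gives, on an event of probability at least $1-\epsilon$ and uniformly in $i$,
\[
\sup_{(t,x)\in\mathcal{K}}\lvert\hat S^{(i)}_{C,n}(t\mid x)-S_{C,h}(t\mid x)\rvert\leq \tilde M_1\left(\frac{1}{(n-1)h^d}+\sqrt{\frac{\lvert\log(\epsilon h^{d/2})\rvert}{nh^d}}\right).
\]

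Finally I would square this bound, use $(a+b)^2\leq 2a^2+2b^2$ together with $1/((n-1)h^d)^2\leq 4/(nh^d)^2$ for $n\geq 2$, and feed the result into the reduction of the first paragraph; the factor $\frac1n\sum_i$ is harmless since the per-$i$ estimate is already uniform. This produces exactly
\[
\sup_{\varphi\in\Phi}\lvert R''_n(\varphi)\rvert\leq M_1\left(\frac{\lvert\log(\epsilon h^{d/2})\rvert}{nh^d}+\frac{1}{(nh^d)^2}\right),
\]
under $h\leq h_0$ and $M_2\lvert\log(\epsilon h^{d/2})\rvert\leq nh^d$ (needed both for Lemma \ref{lemma:bound_prob} and for $\mathcal{E}_n$). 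The main obstacle, and the only genuinely delicate point, is to justify invoking Lemmas \ref{lemma:survival} and \ref{lemma:lambda} with $S_{C,h}$ in the role of the reference survival function: one must verify that $S_{C,h}(\cdot\mid x)$ is a bona fide c\`adl\`ag non-increasing survival function with value $1$ at the origin and lower bound $b/2$ on $\mathcal{K}$ (the latter supplied by Proposition \ref{prop:decomp}(i)), and that $H_{0,h},H_h$ satisfy the monotonicity and lower-bound hypotheses of Lemma \ref{lemma:lambda} there. Everything else is the routine propagation of first-order deviations, the squaring step being precisely what upgrades the first-order rate into the second-order rate appearing in the statement.
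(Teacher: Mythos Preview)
Your proposal is correct and follows essentially the same route as the paper's proof: bound the $\varphi$-dependence by $8M_\Phi/b^3$ on the high-probability event where both $S_{C,h}$ and $\hat S^{(i)}_{C,n}$ are bounded below by $b/2$, then control $\sup_{\mathcal{K}}\lvert\hat S^{(i)}_{C,n}-S_{C,h}\rvert$ via Lemmas~\ref{lemma:survival} and~\ref{lemma:lambda}, split the leave-one-out deviations with \eqref{eq:4}--\eqref{eq:5} and Lemma~\ref{lemma:bound_prob}, and square. Your write-up is arguably a bit more careful than the paper's (you make the average over $i$ explicit rather than leaving a dangling index, and you flag the verification that $S_{C,h}$ is a legitimate survival function for Lemma~\ref{lemma:survival}), but the argument is the same.
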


We now state a uniform bound for $R'_n(\varphi)$.

\begin{lemma}\label{lemma:res'} Suppose that the assumptions of Proposition \ref{prop:main_result} are satisfied. Then, for all $\epsilon\in (0,1)$, we have with probability greater than $1-\epsilon$
$$
\sup_{\varphi\in \Phi}\left\vert R'_{n}(\varphi)\right\vert \leq M_1 \left( \frac{ \vert\log(\epsilon h^{d/2} )  \vert }{nh^d} +  \frac{ \sqrt{|\log(\epsilon  h^{d/2} )|} }{ (nh^d)^{3/2}}+ \frac{ 1 }{nh^d}  + \frac{ 1 }{(nh^d)^2} \right) ,
$$
as soon as $h\leq h_0$ and $
M_2 |\log(\epsilon h^{d/2})|  \leq {nh^d}$, where $M_1$ and $M_2$ are nonnegative constants depending on $K$, $R$, $M_{\Phi}$ and $b$ only.

\end{lemma}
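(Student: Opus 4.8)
The plan is to exploit that $R'_n(\varphi)$ is, up to a uniformly bounded prefactor, an average of the second-order quantities $\hat b_n^{(i)}(\tilde Y_i\mid X_i)$, so that it suffices to control $\max_{i\le n}\sup_{(t,x)\in\mathcal K}|\hat b_n^{(i)}(t\mid x)|$ on a high-probability event. First I would place myself on the event $\mathcal E_n$ of Proposition \ref{prop:decomp}(ii), together with an event on which $\sup_x\hat g_n^{(i)}(x)\le 2R$ for every $i$ (obtained as in Lemma \ref{lemma:bound_prob} and a union bound). On $\mathcal E_n$, Assumption \ref{hyp:control} forces $\varphi$ to vanish off $\mathcal K$, so $|\varphi|\le M_\Phi$ and $S_{C,h}(\tilde Y_i\mid X_i)\ge b/2$ by Proposition \ref{prop:decomp}(i) whenever the $i$-th weight is nonzero; hence $\sup_{\varphi\in\Phi}|R'_n(\varphi)|\le (2M_\Phi/b)\,\max_{i\le n}\sup_{(t,x)\in\mathcal K}|\hat b_n^{(i)}(t\mid x)|$. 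It then remains to bound the two integrals defining $\hat b_n^{(i)}$ separately.

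For the first integral (the one carrying the squared deviation $(\hat H_n^{(i)}-H_h)^2$), I would use that on $\mathcal E_n$ the factor $c_h/(H_h^2\hat H_n^{(i)})$ is bounded by a constant depending only on $b$ (via $c_h\le 2/b$ and $H_h\wedge\hat H_n^{(i)}\ge 3b^3/4$), that the total variation of $d\hat H_{0,n}^{(i)}(\cdot,x)$ is at most $\hat g_n^{(i)}(x)\le 2R$, and that $(\hat H_n^{(i)}-H_h)^2\le\|\hat H_n^{(i)}-H_h\|_\infty^2$. Combining the leave-one-out estimate \eqref{eq:5} with Lemma \ref{lemma:bound_prob} gives $\|\hat H_n^{(i)}-H_h\|_\infty\le 2\|K\|_\infty/((n-1)h^d)+\sqrt{M|\log(\epsilon h^{d/2})|/(nh^d)}$, so its square is $O(|\log(\epsilon h^{d/2})|/(nh^d)+1/(nh^d)^2)$, producing the first and last terms of the claimed bound.

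For the second integral, of the form $\int_0^t(\hat S_{C,n}^{(i)}(u-\mid x)-S_{C,h}(u-\mid x))/S_{C,h}(u\mid x)\,d\hat\Delta_n^{(i)}(u\mid x)$, I would apply Lemma \ref{lem:res_tool} with $S^{(1)}=\hat S_{C,n}^{(i)}(\cdot\mid x)$, $S^{(2)}=S_{C,h}(\cdot\mid x)$, $\Lambda^{(1)}=\hat\Lambda_{C,n}^{(i)}$, $\Lambda^{(2)}=\Lambda_{C,h}$ and $\theta=b/2$, taking $(G^{(k)},H^{(k)})$ to be the numerator and denominator processes $(\hat H_{0,n}^{(i)},\hat H_n^{(i)})$ and $(H_{0,h},H_h)$. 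The lemma bounds this term by a constant times $\|\hat H_n^{(i)}-H_h\|^2+\|\hat H_{0,n}^{(i)}-H_{0,h}\|^2+\|W^{(i)}\|$; the first two sup-norm squares are again $O(|\log(\epsilon h^{d/2})|/(nh^d)+1/(nh^d)^2)$ by \eqref{eq:4}--\eqref{eq:5} and Lemma \ref{lemma:bound_prob}.

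The hard part is the bilinear remainder $\|W^{(i)}\|$. Writing $\Gamma^{(i)}(t)=\int_0^t d(\hat H_{0,n}^{(i)}-H_{0,h})/(S_{C,h}H_h)$, integration by parts together with Lemma \ref{lemma:dudley} reduces $\|W^{(i)}\|$ to the square of $\|\Gamma^{(i)}\|$ up to bounded factors; but squaring the crude sup-norm of $\Gamma^{(i)}$ only reproduces the coarse summands. To recover the finer rates I would decompose the signed measure $d(\hat H_{0,n}^{(i)}-H_{0,h})$, evaluated at $x=X_i$, into its genuinely fluctuating (off-diagonal, centred) part and its self-interaction (diagonal $j=k$) part. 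The fluctuating part, a centred kernel average, has sup-norm $O(\sqrt{|\log(\epsilon h^{d/2})|/(nh^d)})$ by Lemma \ref{lemma:bound_prob}, while the diagonal part is bounded directly by $(n-1)^{-2}\sum_{j\ne i}K_h(X_i-X_j)^2\le c/(nh^d)$ on a high-probability event; squaring the sum of the two contributions yields exactly the cross terms $\sqrt{|\log(\epsilon h^{d/2})|}/(nh^d)^{3/2}$ and $1/(nh^d)$, alongside $|\log(\epsilon h^{d/2})|/(nh^d)$ and $1/(nh^d)^2$, which is where the two extra summands of the statement come from. Uniformity over $(t,x)\in\mathcal K$ for the off-diagonal degenerate part would follow from Corollary \ref{lem:concentration_ineq_Ustat} once the VC-type permanence properties of Appendix \ref{AppendixA} are invoked for the associated kernel classes. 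Finally, a union bound over $i=1,\dots,n$ upgrades the per-$i$ estimates to the stated maximum, its $\log n$ cost being absorbed into $|\log(\epsilon h^{d/2})|$ because the bandwidth condition $nh^d\ge M_2|\log(\epsilon h^{d/2})|$ forces $\log n=O(|\log(\epsilon h^{d/2})|)$.
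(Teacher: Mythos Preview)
Your skeleton is right and mirrors the paper's: on $\mathcal E_n$ reduce to $\max_i\sup_{(t,x)\in\mathcal K}|\hat b_n^{(i)}(t\mid x)|$, bound the first integral by $\|\hat H_n^{(i)}-H_h\|_\infty^2$ times constants, and feed the second integral into Lemma~\ref{lem:res_tool}; the two squared sup-norms are controlled exactly as you say via \eqref{eq:4}--\eqref{eq:5} and Lemma~\ref{lemma:bound_prob}. The divergence from the paper is in how you handle the bilinear term $W^{(i)}$.

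The paper does \emph{not} treat $W^{(i)}$ index by index. Instead it passes once and for all from the leave-one-out object $\hat W_n^{(i)}$ to the single, $i$-free quantity $\hat W_n$ via the deterministic bound $\max_i\sup_{(t,x)\in\mathcal K}|\hat W_n(t,x)-\hat W_n^{(i)}(t,x)|\le C/(nh^d)$ (this is where the $1/(nh^d)$ summand comes from). It then expands $\hat W_n(t,x)=n^{-2}\sum_{j,k}v_{jk}(t,x)$, splits into the off-diagonal degenerate order-$2$ $U$-process $U_n$ (Corollary~\ref{lem:concentration_ineq_Ustat} with $k=2$ gives the $|\log(\epsilon h^{d/2})|/(nh^d)$ term) and the diagonal $M_n=L_n+R_n$, where the centred empirical mean $L_n$ yields the $\sqrt{|\log(\epsilon h^{d/2})|}/(nh^d)^{3/2}$ term via Corollary~\ref{lem:concentration_ineq_Ustat} with $k=1$, and $R_n=n^{-1}\mathbb E[v_{11}]$ is $O(1/(nh^d))$. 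All of this is uniform in $(t,x)\in\mathcal K$, so no union over $i$ is ever taken.

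Your last paragraph contains a genuine gap. The claim that the bandwidth condition $nh^d\ge M_2|\log(\epsilon h^{d/2})|$ forces $\log n=O(|\log(\epsilon h^{d/2})|)$ is false: the condition is a \emph{lower} bound on $n$, and for fixed $h\le h_0$ and $\epsilon$ the right-hand side is a constant while $\log n$ is unbounded. So the union bound over $i=1,\dots,n$ cannot be absorbed as you propose. The cure is precisely the paper's trick: collapse to the non-leave-one-out $\hat W_n$ \emph{before} applying any concentration inequality, so that there is only one random object to control. A second, smaller issue: your ``integration by parts together with Lemma~\ref{lemma:dudley} reduces $\|W^{(i)}\|$ to $\|\Gamma^{(i)}\|^2$'' does not go through as stated, since Lemma~\ref{lemma:dudley} requires a monotone integrand and neither $\Gamma^{(i)}$ nor the inner integral of $W^{(i)}$ is monotone; the paper avoids this by writing $\hat W_n$ directly as a double sum rather than trying to bound it by a product of sup-norms.
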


Now we can conclude the proof of Proposition \ref{prop:main_result} by gathering each of the previous results. First note that they all are valid under the condition that  $h\leq h_0$ and $
M_1 |\log(\epsilon h^{d/2})|  \leq {nh^d}$ and $n\geq M_2 \log(M_3/\epsilon)$. By taking $h_0$ small enough, the last requirement is no longer necessary as it ensures that $nh^d>1$ and $ |\log(\epsilon h^{d/2}) |>1$ which implies that $|\log(\epsilon h^{d/2}) | / nh^d \geq { 1 } / {nh^d}  \geq { 1 }/ (nh^d)^2$ and $|\log(\epsilon h^{d/2}) |^{1/2} \leq |\log(\epsilon h^{d/2}) |^{3/2}$. Now choose $M_2\geq 1$, it holds that
\begin{align*}
\frac{ \sqrt{|\log(\epsilon  h^{d/2} )|} }{ (nh^d)^{3/2}}\leq  \left( \frac{ |\log(\epsilon  h^{d/2} )| }{ nh^d}\right)^{3/2} \leq \frac{ |\log(\epsilon  h^{d/2} )| }{ nh^d} .
\end{align*}
 \section{Intermediary Results}
Here we prove lemmas involved in the argument of Proposition \ref{prop:main_result}'s proof.

\subsection{Proof of Lemma \ref{lem:B_term}}
Under the assumptions stipulated, using point (i) of Proposition \ref{prop:decomp} and \eqref{eq:dev}, observe that we have with probability one:
\begin{align*}
\left\vert \delta  \frac{\varphi(\tilde Y, X)}{{S}_{C, h}(\tilde Y \mid X)}  -  \delta  \frac{\varphi(\tilde Y, X) }{{S}_{C}(\tilde Y \mid X)}\right\vert &\leq \frac{2M_{\Phi}}{b^2}\sup_{(y,x)\in \mathcal{K}}\left\vert S_{C,h}(y\mid x)-S_C(y\mid x) \right\vert\\
&\leq {\frac{8M_{\Phi}C_0h^2}{b^6}(1+2/b^3)} .
\end{align*}

\subsection{Proof of Lemma \ref{lem:L_1}}

The proof is a direct application of Corollary \ref{lem:concentration_ineq_Ustat}  to the i.i.d. sequence $\{(X_n,\tilde{Y}_n,\delta_n):\; n\geq 1\}$ and the class of functions (bounded by $2M_{\Phi}/b$)
$$
((x,u),\delta)\in \mathcal{K}\times \{0,1\}\mapsto \frac{\delta\varphi(u, x)}{S_{C,h}(u\mid x)},
$$
indexed by $(\varphi,\; h)\in \Phi\times ]0,h_0]$, the latter being of {\sc VC} type by virtue of classic permanence properties of {\sc VC} type classes of functions (refer to section 2.6.5 in \cite{vandervaart+w:1996}) combined with those recalled in Appendix \ref{AppendixA}. Choosing $\sigma = \|G\|_\infty = 2M_{\Phi}/b$, the bound obtained for $L_n(\varphi)$ is simply
\begin{align*}
&(2M_{\Phi}/b) n^{-1/2} \left(  \left( C_1 ^2\log\left( 2\right) \right)^{1/2}  +   \left( \frac{\log(C_2/\epsilon )}{C_3}\right)^{1/2}  \right)\\
&\leq (4M_{\Phi}/b) n^{-1/2}    \left( C_1 ^2\log\left( 2\right)   +  \frac{\log(C_2/\epsilon )}{C_3}  \right)^{1/2}  ,
\end{align*}
where the constants $C_1,C_2,C_3$ are the ones of Corollary \ref{lem:concentration_ineq_Ustat}. Easy manipulations give the result.

\subsection{Proof of Lemma \ref{lem:V''_term}}
Observe that, for $i\neq j$, we have
$$
v_{i,j,j}(\varphi)=-\frac{\delta_i \varphi(\tilde{Y}_i, X_i)}{S_{C,h}(\tilde{Y}_i \mid X_i)} \mathbb{I}\{\tilde{Y}_j \leq \tilde{Y}_i\} \frac{(1 - \delta_j) K_{ij}  c_n (\tilde{Y}_j  \mid X_i)}   {H_h(\tilde{Y}_j , X_i)},
$$
and recall that, under the assumptions stipulated ($h\leq { h_0}$): $\forall (t,x)\in \mathcal{K}$,
\begin{align}\label{eq:lower_bound1}
H_h(t,x)& \geq { 3 b^3/4},\\
S_{C,h}(t\mid x) &\geq b/2, \\
c_h(t\mid x) &\leq 2/b. \label{eq:lower_bound2}
\end{align}
Considering $\sup_{\varphi\in \Phi}\vert V''_n(\varphi) \vert$ as a function of the $n\geq 1$ independent random pairs $(\tilde{Y}_i,X_i,\delta_i)$ and observing that changing the value of one of them, say $(\tilde{Y}_j,X_j,\delta_j)$ is replaced by $(\tilde{Y}_j',X_j',\delta_j')$, can change its value by at most
\begin{align*}
&\frac{1}{n(n-1)^2} \sum_{i\neq j} | v_{i,j,j} (\varphi)- v_{i,j',j'}(\varphi) | + \sum_{i\neq j} | v_{j',i,i}(\varphi) - v_{j',i,i}(\varphi) |\\
& \leq \frac{1}{n(n-1)} \sup_{1\leq i\leq n } \{ |   v_{i,j,j} (\varphi)- v_{i,j',j'}(\varphi) | + | v_{j',i,i}(\varphi) - v_{j',i,i}(\varphi) | \} \\
& \leq \frac{4}{n(n-1)} \sup_{1\leq i,j\leq n} |v_{i,j,j}(\varphi) | 
\leq \frac{64 M_{\Phi} \lVert K \rVert_\infty}{3 b^5 h^d n (n-1)}.
\end{align*}
The application of the bounded differences inequality, see \cite{mcdiarmid_1989}, yields: $\forall t>0$,
\begin{equation}\label{eq:prob1}
\mathbb{P}\left\{ \left| \sup_{\varphi\in \Phi}\left\vert V''_n(\varphi) \right\vert-\mathbb{E}\left[ \sup_{\varphi\in \Phi}\left\vert V''_n(\varphi) \right\vert \right] >t \right|   \right\}\leq { 2 \exp\left( -\frac{2  t^2 n (n-1)^2 b^{10} h^{2 d}}{(64/3)^2 M_{\Phi}^2 \lVert K \rVert_\infty^2} \right)}.
\end{equation}
In addition, we have:
\begin{equation}\label{eq:exp1}
\mathbb{E}\left[ \sup_{\varphi\in \Phi}\left\vert V''_n(\varphi) \right\vert\right]\leq \frac{1}{(n-1)}\mathbb{E}[ v_{1,2,2}(\varphi)]\leq { \frac{16 M_{\Phi} }{3 (n-1) b^5} \mathbb{E}[K_h(X_1-X_2)]\leq  \frac{16 M_{\Phi} R}{3 (n-1) b^5}}.
\end{equation}
Combining \eqref{eq:prob1} and \eqref{eq:exp1}, we obtain that, for any $\epsilon \in (0,1)$, we have with probability at least $1-\epsilon$:
$$
\sup_{\varphi\in \Phi}\vert V''_n(\varphi) \vert\leq { \frac{16 M_{\Phi} R}{3 (n-1) b^5} + \sqrt{\frac{ \log(2 / \epsilon)}{ 2 n}} \frac{ (64/3)M_{\Phi} \lVert K \rVert_\infty}{(n-1) b^5 h^d} },
$$
and the stated result follows.

\subsection{Proof of Lemma \ref{lem:L_2}}

Let $(i, j, k)\in \{1,\ldots, n\}^3$ be pairwise distinct. Using \eqref{eq:lower_bound1}-\eqref{eq:lower_bound2}, we have that
\begin{align}\label{eq:bound_useful_v}
|v_{i,j,k}|  \leq \left(\frac{64M_\Phi}{9b^8}\right)  K_{ij} (K_{ik} +R).
\end{align}
Based on conditioning arguments, we have
\begin{align*}
 \mathbb E [ v_{i, j, k} (\varphi) |Z_k]  =  A_{k} (\varphi) - \mathbb E [ A_k(\varphi) ],
\end{align*}
where $A_k $ follows directly from the definition of $v_{i, j, k}$. Hence we are in position to apply Corollary \ref{lem:concentration_ineq_Ustat}. Observe that for all $(\varphi, h) \in \Phi \times ]0, h_0]$, using \eqref{eq:lower_bound1}-\eqref{eq:lower_bound2}, we have almost surely,
\begin{align*}
  \lvert A_{k} (\varphi) \rvert  \leq \frac{64 M_{\Phi}}{9 b^8}  \mathbb E [K_{ij}   K_{ik} | Z_k] .
\end{align*}
Because
\begin{align*}
  \mathbb E [K_{ij}   K_{ik} | Z_k] &= \iint K_{h}(x-y) K_{h}(x-X_k) g(x) g(y) dx dy \\
  &= \iint K(z) K_{h}(x-X_k)  g(x) g(x-hz) dx dz \\
  &\leq  R \int K_{h}(x-X_k) g(x) \underbrace{\left(\int K(z) dz \right)}_{1}  dx \leq R^2,
\end{align*}
applying Corollary \ref{lem:concentration_ineq_Ustat} with $ \|G\|_\infty ^2 = \sigma^2 =   ( {64 M_{\Phi}} / {9 b^8} )^2  R^4$ and $k = 1$ yields the bound
\begin{align*}
\sup_{\varphi\in \Phi}\lvert L_n'(\varphi)\rvert\leq \frac{ 64 M_{\Phi} R^2 }{ {9 b^8}    \sqrt n} \left(  C_1 \sqrt{ \log(2  )  } + \sqrt{\log(C_2/\epsilon ) /  C_3}  \right),
\end{align*}
with probability $1-\epsilon$, provided that $ n \geq  C_1^2 \log(2) + \log(C_2/\epsilon)/C_3$. Straightforward calculations then give the desired result.

\subsection{Proof of Lemma \ref{lem:Ustat1}}
The elements of the collection of kernels related to the $U$-process $\{h^{2d}U_{n,1}^{(1)}(\varphi)\}$ are bounded by {$4 \times 64 M_{\Phi}\lVert K \rVert_{\infty} (\lVert K\rVert_{\infty} + h^d R)/( 9 b^8)$} (because of (\ref{eq:bound_useful_v})). The permanence properties recalled in Appendix allows \ref{AppendixA}  establishing that this class of {\sc VC} type with constants depending on $(v,A)$, $K$ and $h_0$ only. Observe also that
\begin{align*}
\mathbb E \left[ h^{4d} v_{i,j,k}(\varphi)^2  \right] &\leq   \left( \frac{64M_{\Phi}}{9 b^{8}} \right)^2   \mathbb E [K((X_1-X_2)/h)^2 (K((X_1-X_3)/h)     +  Rh^{d} )^2] \\
&\leq  2 \times   \left( \frac{64M_{\Phi}}{9 b^{8}} \right)^2   \mathbb E [K((X_1-X_2)/h)^2 (K((X_1-X_3)/h)^2     + R ^2 h^{2d} )] \\
&\leq 2 \times  \left( \frac{64M_{\Phi}}{9 b^{8}} \right)^2 c^2_{K,R} h^{d} \left(c^2_{K,R} h^{d} +R^2 h^{2d}\right)\leq 4 \times  \left( \frac{64M_{\Phi}}{9 b^{8}} \right)^2 c^4_{K,R} h^{2d}   ,
\end{align*}
assuming $R^2 h^{d} \leq c^2_{K,R} $. Since we have a sum of $4$ terms in the $U$-statistics of interest, $h^{2d}U_{n,1}^{(1)}(\varphi)$, each having a $L_2$-norm smaller that $ \mathbb E [ h^{4d} v_{i,j,k}(\varphi)^2  ]$ (by Jensen's inequality), we obtain a bound, for the resulting variance, in $4^2 \times  4 ( {64M_{\Phi}} / {9 b^{8}} )^2 c^4_{K,R} h^{2d} $.
We apply Corollary \ref{lem:concentration_ineq_Ustat} with $k=3$ and a value for $\sigma$ larger than the previous bound. We take $\sigma = 4^2 \times  4 ( {64M_{\Phi}} / {9 b^{8}} )^2 c^4_{K,R} h^{d} h_0^d$ (note that $h\leq h_0$)  and $\lVert G \rVert_\infty = 2 \times 64 M_{\Phi}\lVert K \rVert_{\infty} ^2  /( 9 b^8)$ (assuming that $h^dR\leq \|K\|_\infty$). The conditions are
\begin{equation}
\frac{ \lVert K \rVert_\infty^4}{c^4_{K, R} h_0^d } \left( C_{1}^{2/3} \log\left( \frac{ \lVert K \rVert_\infty^2 }{h^{d/2} c^2_{K, R} } \right) + \frac{\log(C_{2}/\epsilon)}{C_{3}} \right) \leq n h^d \label{lambda1}
\end{equation}
and
\[
 c^4_{K, R}  h^d h_0 ^d \leq  \lVert K \rVert_\infty^4,
\]
where $C_1$, $C_2$ and $C_3$ are the constants in Corollary \ref{lem:concentration_ineq_Ustat}. The latter conditions are indeed of the type $h\leq h_0$ and $nh^d\geq M_2|\log(\epsilon h^{d/2})| $. To recover the stated result, one just needs to multiply the bound (obtained in Corollary  \ref{lem:concentration_ineq_Ustat}) by $1 / (n(n-1)(n-2)h^{2d})$. This gives
\begin{align*}
&\sup_{\varphi\in \Phi}\lvert U_{n,1}^{(1)}(\varphi)\rvert\\
&\leq \frac{\tilde{M}_1}{h^{3d/2} n^{-1/2}(n-1)(n-2)} \times  \left( C_1  \left( \log\left( \frac{\tilde{M}_2}{h^{d/2}}\right) \right)^{3/2} + \left(\frac{\log(C_2/\epsilon)}{C_3}\right)^{3/2} \right),
\end{align*}
where $\tilde{M}_1$ and $\tilde{M}_2$ are constants depending on $M_\Phi$, $R$, $K$, $b$, and $h_0$. Using similar manipulations as the ones presented at the end of the proof of Lemma \ref{lemma:bound_prob}, we obtain the stated result.

By virtue of permanence properties given in Appendix \ref{AppendixA}, the kernels related to the degenerate $U$-process $\{ h^d U_{n,1}^{(2)}(\varphi):\; (\varphi,h)\in \Phi\times ]0,h_0]\}$ form a {\sc VC} class of functions.
Now we derive some bounds on the variance and the uniform norm. Start by noticing that
\begin{align*}
\mathbb{E} \left[ K_{ij} K_{ik} \mid Z_j, Z_k \right] &= \int K_h(x - X_j) K_h(x - X_k) g(x) \mathrm{d}x \\
&\leq R \int K(u) K_h(X_j - X_k + h u) \mathrm{d}u \leq R L_h(X_k - X_j),
\end{align*}
where $L = K \ast K$ and $L_h(u) = L(u / h)/h^d$ (note that $\int L (u) \, du = 1$ and $\|L\|_\infty \leq \|K\|_\infty$). Using (\ref{eq:bound_useful_v}) we obtain that
\begin{align*}
h^d \mathbb{E}[v_{i,j,k}(\varphi)\mid Z_j,\; Z_k] &\leq \left(\frac{64 M_{\Phi}}{9 b^8}\right) h^{d}  R  \left(  L_{jk} +  \mathbb{E} \left[ K_{ij} \mid Z_j \right] \right) \\
& \leq \left(\frac{64 M_{\Phi}}{9 b^8}\right) h^{d}  R  \left(  L_{jk} +  R \right).
\end{align*}
This implies that the terms involved in the $U$-process  $\{ h^d U_{n,1}^{(2)}(\varphi):\; (\varphi,h)\in \Phi\times ]0,h_0]\}$ are bounded by $2 \times 64M_{\Phi}R (\lVert K \rVert_\infty + Rh^d)/(9 b^8)$. We also have
\begin{align*}
\var\left(h^d \mathbb{E}[v_{i,j,k}(\varphi)\mid Z_j,\; Z_k] \right)&\leq \left(\frac{64 M_{\Phi}}{9 b^8}\right)^2 h^{2d} R^2\mathbb{E} \left[ ( L_{jk}  + R )^2 \right]\\
&\leq 2\times \left(\frac{64 M_{\Phi}}{9 b^8}\right)^2 h^{d} R^2 \left( c_{L, R}^2 + h^d R^2 \right)\\
& \leq 4\times \left(\frac{64 M_{\Phi}}{9 b^8}\right)^2 h^{d} R^2  c_{L, R}^2,
\end{align*}
where $c_{L, R} ^2=  R \int L^2(x) \mathrm{d}x $ and using that $ h^d R^2\leq c_{L, R}^2  $. The bound \eqref{eq:U2} is thus obtained by applying Corollary \ref{lem:concentration_ineq_Ustat} with $k=2$, $\sigma^2=4\times 4 \times \left(64 M_{\Phi} / (9 b^8) \right)^2 h^{d} R^2  c_{L, R}^2   $ and $\lVert G \rVert_\infty = 4 \times 64M_{\Phi}R \lVert K \rVert_\infty /(9 b^8)$ (using that $h^dR\leq \|K\|_\infty$). The details are similar to the one given before concerning $  U_{n,1}^{(1)}(\varphi)$.

The kernels related to the degenerate $U$-process $\{ h^d U_{n,1}^{(3)}(\varphi):\; (\varphi,h)\in \Phi\times ]0,h_0]\}$ form a {\sc VC} class of functions. Because of (\ref{eq:bound_useful_v}), it holds that
\begin{align*}
h^d  \mathbb{E}[v_{i,j,k}(\varphi)\mid Z_i,\; Z_k]  &\leq \left(\frac{64 M_{\Phi}}{9 b^8}\right) h^{d}  R  \left( K_{ik} +  R \right) ,
\end{align*}
which gives the uniform bound $2 \times 64M_{\Phi}R ( \lVert K \rVert_\infty + h^d R)/(9 b^8)$ and the variance bound
\begin{align*}
\var\left( h^d \mathbb{E}[v_{i,j,k}(\varphi)\mid Z_i,\; Z_k] \right) &\leq h^{2d} 2^2\mathbb E \left[ \mathbb{E}[v_{i,j,k}(\varphi)\mid Z_i,\; Z_k]^2\right] \\
&\leq 2^2 \times \left(\frac{64 M_{\Phi}}{9 b^8}\right)^2 h^{d} R^2 ( R c^2_{K, R} + h^dR^2 )\\
&\leq 2^3 \times \left(\frac{64 M_{\Phi}}{9 b^8}\right)^2 h^{d} R^2 c^2_{K, R}  ,
\end{align*}
because $R^2 h^{d} \leq c^2_{K,R} $. Setting $k=2$, $\sigma^2= 8  \times \left(64 M_{\Phi} / (9 b^8) \right)^2 h^{d} R^2 c^2_{K, R}$ and $\lVert G \rVert_\infty = 4 \times 64M_{\Phi}R \lVert K \rVert_\infty /(9 b^8)$ (using that $h^dR\leq \|K\|_\infty$) in Corollary \ref{lem:concentration_ineq_Ustat} yields \eqref{eq:U3}.

\subsection{Proof of Lemma \ref{lem:V2}}

 For all $\varphi\in \Phi$, we first set
 $$
 w_{ij}(\varphi) = \frac{\delta_i \varphi(\tilde{Y}_i, X_i)\mathbb{I}\{\tilde{Y}_j \leq \tilde{Y}_i\} (1 - \delta_j)  K_{ij}  c_h(\tilde Y_j  \mid X_i)}{S_{C,h}(\tilde{Y}_i \mid X_i)H_h(\tilde{Y}_j \mid X_i)}
 $$
 and observe next that
\begin{multline*}
  V_{n, 2}(\varphi) =  - \frac{1}{n} \sum_{i=1} ^n \frac{\delta_i \varphi(\tilde{Y}_i, X_i)}{S_{C,h}(\tilde{Y}_i \mid X_i)} \int_0^{\tilde Y_i}\frac{c_h (u \mid X_i)}{H_h(u,X_i) }   d \left( \hat H^{(i)}_{0,n} (u, X_i)  - H_{0,h} (u, X_i)\right)\\
  = -  \frac{1}{n(n-1)}\times\\
   \sum_{
  i\neq j} \frac{\delta_i  \varphi(\tilde{Y}_i, X_i)}{S_{C,h}(\tilde{Y}_i \mid X_i)} \times \left( \frac{\mathbb{I}\{\tilde{Y}_j \leq \tilde{Y}_i\} (1 - \delta_j)  K_{ij}  c_h(\tilde Y_j  \mid X_i)  }{H_h(\tilde{Y}_j \mid X_i)} -  \int_0^  {\tilde{Y}_i} \frac{c_h(u  \mid X_i) }{H_h(u,X_i)} d H_{0,h} (u, X_i) \right) \\
  = -  \frac{1}{n(n-1)} \sum_{
  i\neq j} \left\{ w_{ij}(\varphi)  - \mathbb E[  w_{ij}(\varphi)\mid Z_j] \right\}
  = U^{(1)}_{n, 2} (\varphi) + U^{(2)}_{n , 2}(\varphi) ,
\end{multline*}
where
 \begin{align}
  U^{(1)}_{n, 2}(\varphi)  &= -  \frac{1}{n(n-1)} \sum_{
  i\neq j}  \left\{ w_{ij}(\varphi)  - \mathbb E[  w_{ij}(\varphi) \mid Z_j]  - \mathbb{E} [ w_{ij}(\varphi)  \mid Z_i ] + \mathbb{E} [ w_{12}(\varphi)  ]  \right\},\label{eq:U21}\\
  U^{(2)}_{n, 2} (\varphi) &= \frac{1}{n} \sum_{i=1}^n  \left\{\mathbb{E} [ w_{ij}(\varphi)  \mid Z_i ] - \mathbb{E} [ w_{12}(\varphi)   ]\right\}.\label{eq:U22}
\end{align}
Hence, $V_{n, 2}(\varphi)$ can be decomposed as the sum of a degenerate $U$-statistic \eqref{eq:U21} and an i.i.d. average \eqref{eq:U22}. Note also that, by (\ref{eq:bound_useful_v}), we have
\begin{align*}
|w_{ij}(\varphi)| \leq \frac{16M_{\Phi} }{3 b^5} K_{ij}.
\end{align*}

Using the same arguments as in the proof of Lemma \ref{lem:Ustat1}, the kernels of the collection of degenerate $U$-statistics $\{  h^dU^{(1)}_{n, 2}(\varphi): \; (\varphi,h)\in\Phi\times ]0,h_0] \}$ form a class of {\sc VC} type with constants depending only on $(v,A)$, $K$ and $h_0$. In addition, these terms are all bounded by $4 \times  {16 M_\Phi}\lVert K \rVert_\infty /{3 b^5} $ and we have:
$$
\var\left( h^d w_{ij}(\varphi)  \right)\leq \left( \frac{16 M_\Phi}{3 b^5} \right)^2 h^d c^2_{K, R}.
$$
It thus results from the application of Corollary \ref{lem:concentration_ineq_Ustat} with $k=2$ and $\sigma^2=4^2 \times \left( 16 M_\Phi / (3 b^5) \right)^2 h^d c^2_{K, R}$ that, with probability greater than $1-\epsilon$
\begin{equation}\label{eq:bound_U21}
\sup_{\varphi\in \Phi}\left\vert U_{n,1}^{(2)}(\varphi) \right\vert \leq \frac{\tilde{M}_1}{h^d (n-1)} \times  \left( C_1  \log\left( \frac{\tilde{M}_2}{h^{d/2}} \right) + \frac{\log(C_2/\epsilon)}{C_3} \right),
\end{equation}
where $\tilde{M}_1$ and $\tilde{M}_2$ depends on $ M_\Phi$, $ K$, $R$, $b$, provided a condition of the type $ h\leq h_0$ and $nh^d \leq M_2 |\log(\epsilon h^{d/2})|$. We next deal with the uniform control of $U_{n,2}^{(2)}(\varphi)$. Observe that, with probability one,
$$
\mathbb{E} [ w_{ij}(\varphi)  \mid Z_i ] \leq \frac{16M_{\Phi} R}{3 b^5},
$$
and that
$$
\var\left( \mathbb{E} [ w_{ij}(\varphi)  \mid Z_i ] \right)\leq \left( \frac{16M_{\Phi} R}{3 b^5} \right)^2.
$$
 Applying thus Corollary  \ref{lem:concentration_ineq_Ustat} with $k = 1$, $\sigma^2 = 4 \times \left( 16M_{\Phi} R / (3 b^5) \right)^2$ and $\lVert G \rVert_\infty = 2 \times 16M_{\Phi} R / (3 b^5)$, we obtain that, with probability $1 - \epsilon$,
\begin{equation}\label{eq:bound_U22}
\sup_{\varphi\in \Phi}\left\vert U_{n,2}^{(2)}(\varphi) \right\vert \leq \frac{C}{ \sqrt{n} }  \left( C_{1}  \sqrt{ \log( 2 )} + \sqrt{\frac{\log(C_{2}/\epsilon)}{C_{3}} }\right),
\end{equation}
where $C$ depends on $ M_\Phi$, $ K$, $R$, $b$, provided a condition of the type $n \geq M_4 |\log(M_5 / \epsilon )|$ holds true.
The bound stated in the lemma results from rearranging the bounds \eqref{eq:bound_U21} and \eqref{eq:bound_U22}.

\subsection{Proof of Lemma \ref{lemma:res''}}

Observe that, on the event $\mathcal{E}_n$: $\forall \varphi\in \Phi$,
$$
\left\vert R''_n(\varphi) \right\vert\leq \frac{8M_{\Phi}}{b^3}\sup_{(t,x)\in \mathcal{K}}\left| {S}_{C, h} (t \mid x) - \hat{S}^{(i)}_{C,n} (t \mid x) \right|^2.
$$
Then, the event
$$\mathcal{B}_n^{(3)} \overset{def}{=}\bigcap_{i\leq n}\left\{\forall (t,x)\in \mathcal{K},\;\; \hat H^{(i)}_n(t,x)\geq b^3 / 2,\, H_h(t,x) \geq b^3/2 , S_{C,h}(t|x ) > b/2 \right\},$$
occurs with probability at least $1-\epsilon/2$ under the mentioned condition on $(n,h)$. Suppose that $ \mathcal{B}_n^{(3)}  $ is realized.  In a similar fashion as in the proof of Proposition \ref{prop:bound_CKM} (see Eq. \eqref{eq:1},\eqref{eq:2},\eqref{eq:3}), we apply Lemma \ref{lemma:survival} to get that
\begin{align*}
&\sup_{(t,x)\in \mathcal{K}} | \hat S^{(i)}_{C,n} (t | x)- S_{C,h} (t | x) |  \leq ( 4 / b ) \sup_{(t,x)\in \mathcal{K}} | \hat \Lambda_{C,n}^{(i)} (t|x) - \Lambda _{C,h} (t|x)  |  .
\end{align*}
Then, we apply Lemma \ref{lemma:lambda},  with $\theta_1 = b^3/2$, $\theta_2 = b^3/2$, $\beta = 1$, to finally obtain that: $\forall i\in\{1,\; \ldots,\; n  \}$,
\begin{align*}
&\sup_{(t,x)\in \mathcal{K}} | \hat S^{(i)}_{C,n} (t | x)- S_{C,h} (t | x) |  \\
&\leq \frac 4  b   \left( {\frac{4}{b^3}} \sup_{(t,x)\in \mathcal{K}}|\hat H^{(i)}_{0,n}(t , x)- H_{0,h}(t,  x)| +  {\frac{4}{b^6}} \sup_{(t,x)\in \mathcal{K}}|\hat H^{(i)}_{n}(t , x) - H_{h}(t , x) |\right).
\end{align*}
Hence using the triangle inequality and (\ref{eq:4}) and (\ref{eq:5}) with Lemma \ref{lemma:bound_prob}, we obtain that, with probability $1-\epsilon$:
\begin{align*}
\sup_{(t,x)\in \mathcal{K}} | \hat S^{(i)}_{C,n} (t | x)- S_{C,h} (t | x) |  \leq  M_1 \left( \frac{1}{nh^d} + \sqrt{ \frac{ |\log(\epsilon h^{d/2} )|  }{ {nh^d} } }\right),
\end{align*}
provided that $h\leq h_0$ and $nh^d \geq M_2  |\log(\epsilon h^{d/2} )| $.

\subsection{Proof of Lemma \ref{lemma:res'}}

Recall first that
$$
R'_n(\varphi)=-\frac{1}{n} \sum_{i=1}^n  \frac{ \delta_i   \varphi(\tilde Y_i , X_i) }{ {S}_{C, h} (\tilde Y_i \mid X_i)}  \hat b_{n}^{(i)} (\tilde Y_i \mid X_i),
$$
where
\begin{multline*}
\hat b_{n}^{(i)}(t\mid x) =  -\int_0^t  \frac{  c_{h} (u\mid x)   }{  H_h(u,x)^{2}\hat H^{(i)}_n(u,x)} ( \hat H_n^{(i)}(u,x) -H_h(u,x) )^2   d   \hat H^{(i)}_{0,n} (u, x) \\
  -  \int_0^t ( \hat S_{C,n}^{(i)} (u - \mid x) -  S_{C,h} (u - \mid x) ) d  \hat  \Delta^{(i)}_{n} (u\mid x)  .
\end{multline*}
and
\begin{equation*}
 d  \hat  \Delta_{n}^{(i)} (u \mid x)
=   \hat{\Lambda}^{(i)}_{C,n}(u\mid x)-\Lambda_{C,h}(u\mid x).
\end{equation*}
The argument is based on Lemma \ref{lem:res_tool}, stated in section \ref{sec:auxiliary_results}. Note that, on the event $\mathcal{B}_n$, we have:
\begin{multline*}
\left\vert \hat b_{n}^{(i)}(t\mid x) \right\vert\leq \frac{8}{b^5} \int \left( \hat H_n^{(i)}(u,x) -H_h(u,x) \right)^2   d   \hat H_n^{(i)}   (u,x)   \\
+  \left|  \int_0^t \left(\hat S_{C,n}^{(i)} (u - \mid x) -  S_{C,h} (u - \mid x) \right)  d  \hat  \Delta_{n}^{(i)} (u\mid x)\right|\\
\leq (8/b^5)\sup_{(u,x)\in\Gamma_b}\left\vert  \hat H_n^{(i)}(u,x) -H_h(u,x)   \right\vert^2\\
+ \left\vert  \int_0^t \frac{\left(\hat S_{C,n}^{(i)} (u - \mid x) -  S_{C,h} (u - \mid x)\right)}{ S_{C,h} (u  \mid x) }  d  \hat  \Delta_{n}^{(i)} (u\mid x)\right\vert.
\end{multline*}
The application of the Lemma \ref{lem:res_tool},  with $S^{(2)}(u)=S_{C,h}(u\mid x)$, $S^{(1)}(u)=\hat S_{C,n}^{(i)} (u \mid x) $, $\beta=1$,  $\theta=b/2$ and
\begin{eqnarray*}
\Lambda^{(1)}(u)&=&\hat{\Lambda}_{C,n}^{(i)}(u\mid x)=-\int_{s=0}^u \frac{\hat{H}^{(i)}_{0,n}(ds,x)}{\hat{H}^{(i)}_n(s-,x)},\\
\Lambda^{(2)}(u)&=& \Lambda_{C,h}(u\mid x)= -\int_{s=0}^u \frac{H_{0,h}(ds,x)}{H_h(s-,x)},
\end{eqnarray*}
 yields,
\begin{multline*}
\left\vert  \int_0^t \frac{\left(\hat S_{C,n}^{(i)} (u - \mid x) -  S_{C,h} (u - \mid x)\right)}{ S_{C,h} (u  \mid x) }  d  \hat  \Delta_{n}^{(i)} (u\mid x)\right\vert \leq \\
C\left(\sup_{(u,x)\in\Gamma_b}\left\vert \hat H^{(i)}_{n}(u,x) - H_h(u,x) \right\vert^2 +\sup_{(u,x)\in\Gamma_b}\left\vert \hat{H}^{(i)}_{0,n}(u,x) - H_{0,h}(u,x) \right\vert^2+\sup_{(u,x)\in\Gamma_b}\left\vert \hat W^{(i)}_n(u,x) \right\vert \right),
\end{multline*}
where $C>0$ depends on $b$ and
\begin{align*}
 \hat W^{(i)}_n(t,x) =  \int_0^t \int_0^u c_h(s\mid x)  \frac{ d \left( \hat H^{(i)}_{0,n} (s, x)  - H_{0,h} (s, x)\right)}{H_h(s,x)}  \frac{d \left(\hat H^{(i)}_{0,n}(u,x)  -H_{0,h}(u,x) \right)}{ S_{C,h}(u,x)  H_{h}(u,x) } .
\end{align*}
Using \eqref{eq:4} and \eqref{eq:5} combined with Lemma \ref{lemma:bound_prob}, we obtain that with probability at least $1-\epsilon$:
\begin{align*}
\sup_{(u,x)\in\Gamma_b}\left\vert \hat H^{(i)}_{n}(u,x) - H_h(u,x) \right\vert^2 +\sup_{(u,x)\in\Gamma_b}\left\vert \hat{H}^{(i)}_{0,n}(u,x) - H_{0,h}(u,x) \right\vert^2\\
  \leq  M_1 \left( \frac{1}{(n h^d)^2} + \frac{ |\log( \epsilon h^{d/2})|  } {nh^d}  \right),
\end{align*}
as soon as $ h\leq h_0$ and $M_2 |\log(\epsilon h^{d/2})| \leq nh^d$. It remains to show that, with probability at least $1-\epsilon$:
\begin{equation*}
\max_{i\in\{1,\; \ldots,\; n \}}\sup_{(u,x)\in\Gamma_b}\left\vert \hat W^{(i)}_n(u,x) \right\vert  \leq M_1 \left(  \frac{|\log( h^{d/2} \epsilon)| } {nh^d}  \right).
\end{equation*}
We first set, for all $(t,x)\in \mathcal{K}$,
$$
\hat{W}_n(t,x)=  \int_0^t \int_0^u c_h(s\mid x)  \frac{ d \left( \hat H_{0,n} (s, x)  - H_{0,h} (s, x)\right)}{H_h(s,x)}  \frac{d \left(\hat H_{0,n}(u,x)  -H_{0,h}(u,x) \right)}{ S_{C,h}(u,x)  H_{h}(u,x) }
$$
and notice that, since $c_h(s\mid x)/(H_h(s,x)S_{C,h}(u,x)H_h(u,x))\leq 64 / 9b^8$ (using \eqref{eq:lower_bound1}-\eqref{eq:lower_bound2}), we have by virtue of \eqref{eq:4}
$$
\max_{i\in\{1,\; \ldots,\; n \}}\sup_{(t,x)\in \mathcal{K}}\left\vert \hat{W}_n(t,x)-\hat{W}^{(i)}_n(t,x) \right\vert \leq C/(nh^d),
$$
where $C$ is a constant depending on $b$ and $K$ only. In addition, observe that
\begin{equation}\label{eq:decW}
\hat W_n(t,x) =  n^{-2} \sum_{i=1} ^n \sum_{j=1} ^n v_{ij}(t,x)=n^{-2} \sum_{i\neq j}  v_{ij}(t,x) + n^{-2} \sum_{i=1}^n  v_{ii}(t,x):=U_n(t,x)+M_n(t,x),
\end{equation}
where, for all $1\leq i,\; j\leq n$, we set
\begin{align*}
& v_{ij}(t,x) =  u_{ij}(t,x) - \mathbb{E}[u_{ij} (t,x) |Z_i] - \mathbb{E}[u_{ij}(t,x) | Z_j] + \mathbb{E}[u_{1,2}(t,x)],\\
& u_{ij}(t,x) =  \xi_{i,j}(x) \mathbb I \{\tilde Y_i \leq t\}    K_h(X_i-x) K_h(X_j-x),\\
&\xi_{i,j}(x) = \frac{\delta_i \delta_j  c_h(\tilde Y_j \mid x) }{ S_{C,h}(\tilde Y_i,x)  H_{h}(\tilde Y_i,x) H_h(\tilde Y_j ,x) }\mathbb I \{\tilde Y_j \leq \tilde Y_i\}.
\end{align*}
Because we have, for all $(t,x)\in \mathcal{K}$,
\begin{align*}
\mathbb{E}  [v_{12}(t,x)|Z_1]= \mathbb{E} [v_{12}(t,x)|Z_2] = 0 ,
\end{align*}
the collection of random variables $\{n^{2} h^{2d}U_n(t,x):\;\; (t,x, h)\in \mathcal{K}\times ]0,h_0]\}$ is a degenerate $U$-process of order $2$. The related class of kernels is uniformly bounded by $4 \times 64\vert\vert K\vert\vert_{\infty}^2/ (9b^8)$ (\textit{cf} \eqref{eq:lower_bound1}-\eqref{eq:lower_bound2}) and of {\sc VC} type, by virtue of classic permanence properties (\textit{i.e.} a class formed of products of two elements of functions in two given bounded {\sc VC} classes is still of {\sc VC} type, see \textit{e.g.} Chapter 2 in \cite{vandervaart+w:1996}) combined with the results recalled in Appendix \ref{AppendixA}. Observe in addition that
$$
\var\left( h^{2d} v_{12}(t,x) \right) \leq h^{4d}  4 ^2\mathbb{E}[u^2_{12}(t,x)] \leq h^{4d}  4 ^2  \left(\frac{64}{9b^2 }\right)^2  E[K_{1x}^2K_{2x}^2 ]  \leq   h^{2d} 4 ^2  \left(\frac{64}{9b^2 }\right)^2 c^4_{K,R} .
$$
Applying Corollary \ref{lem:concentration_ineq_Ustat} with $k=2$, $\sigma^2=h^{2d} 4 ^2 ({64} / {9b^2 }) c^4_{K,R} $ and $ \|G\|_\infty = 4 \times 64\vert\vert K\vert\vert_{\infty}^2/ (9b^8)$, we obtain that, with probability greater than $1-\epsilon$,
\begin{equation}\label{eq:boundU}
\sup_{(t,x)\in \mathcal{K}}\left\vert U_n(t,x) \right\vert\leq M_1 \frac{ |\log(\epsilon h^{d/2})|  }{nh^d},
\end{equation}
as soon as
$
M_2  |\log( \epsilon h^{d/2})|  \leq {nh^d}
$ and $ h  \leq h_0$. Notice now that, for all $(t,x)\in\mathcal{K}$, $M_n(t,x) =L_n(t,x)+R_n(t,x)$, where
\begin{eqnarray*}
L_n(t,x) &=& n^{-2} \sum_{i=1}^n  \left\{v_{ii}(t,x) - \mathbb{E}[v_{11}(t,x) ]\right\},\\
R_n(t,x)&=&  n^{-1}\mathbb{E}[v_{11}(t,x) ].
\end{eqnarray*}
Observing that, for all $(t,x)\in\mathcal{K}$,  $\vert h^{2d} v_{11} (t,x) \vert\leq 4 \times  64 \vert\vert K\vert\vert_{\infty}^2/ (9b^8)$ and
$$
\var\left(h^{2d} v_{11} (t,x)\right)\leq h^{4d} 4^2 \mathbb{E}[u^2_{11} (t,x) ] \leq h^{4d} 4^2  \left(\frac{64}{9b^2 }\right)^2 \mathbb E [ K_{1x}^4]\leq h^{d} 4^2  \left(\frac{64}{9b^2 }\right)^2 R \int K^4(x) dx.
$$
Hence, the application of Corollary \ref{lem:concentration_ineq_Ustat} with $k=1$ to the empirical sums $\{n^2 h^{2d} L_n(t,x) \, : \, (t,x, h)\in \mathcal{K}\times ]0,h_0]\}$ permits to get that, with probability at least $1-\epsilon$,
\begin{equation}\label{eq:boundL}
\sup_{(t,x)\in \mathcal{K}}\left\vert L_n(t,x) \right\vert\leq \tilde \tilde{M}_1 \frac{ \sqrt{|\log(\tilde{M}_2 / h^{d/2} )|} + \sqrt{\log(C_{2}/\epsilon) / C_3} }{ (nh^d)^{3/2}},
\end{equation}
where $\tilde{M}_1$ and $\tilde{M}_2$ are constants depending on $K$, $b$ and $R$. The previous bound is valid whenever $
M_2 |\log(\epsilon h^{d/2})|  \leq {nh^d}$ and $ h  \leq h_0$.  We also have $n^{-1}\mathbb{E}[\vert u_{11}(t,x)\vert ] \leq (16/b^6)c^2_{K,R}/(nh^d)$. This leads to the stated results.

\clearpage
\section{Numerical Results \label{appendix:results}}
\begin{figure}[htbp]
  \centering
  \begin{subfigure}[b]{0.9\textwidth}
      \centering
        \includegraphics[width=\textwidth]{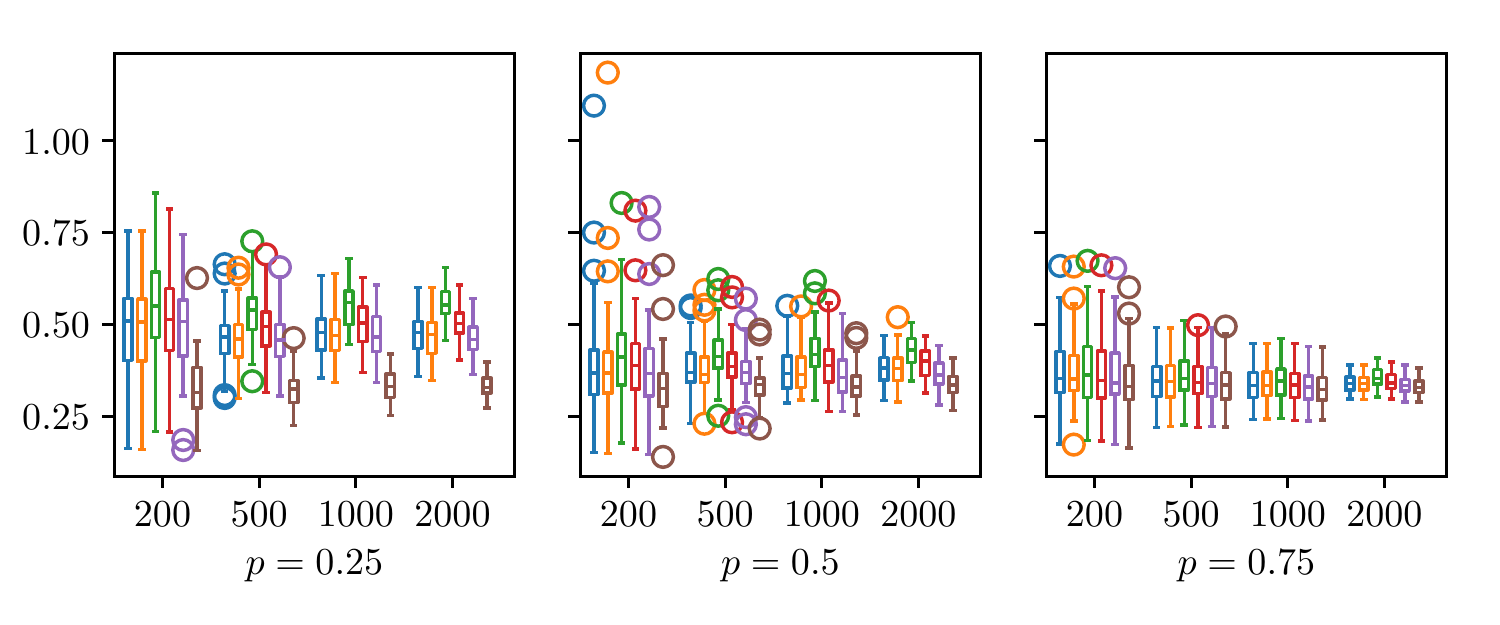}
        \caption{Linear Regression}
  \end{subfigure}
  \begin{subfigure}[b]{0.9\textwidth}
      \centering
        \includegraphics[width=\textwidth]{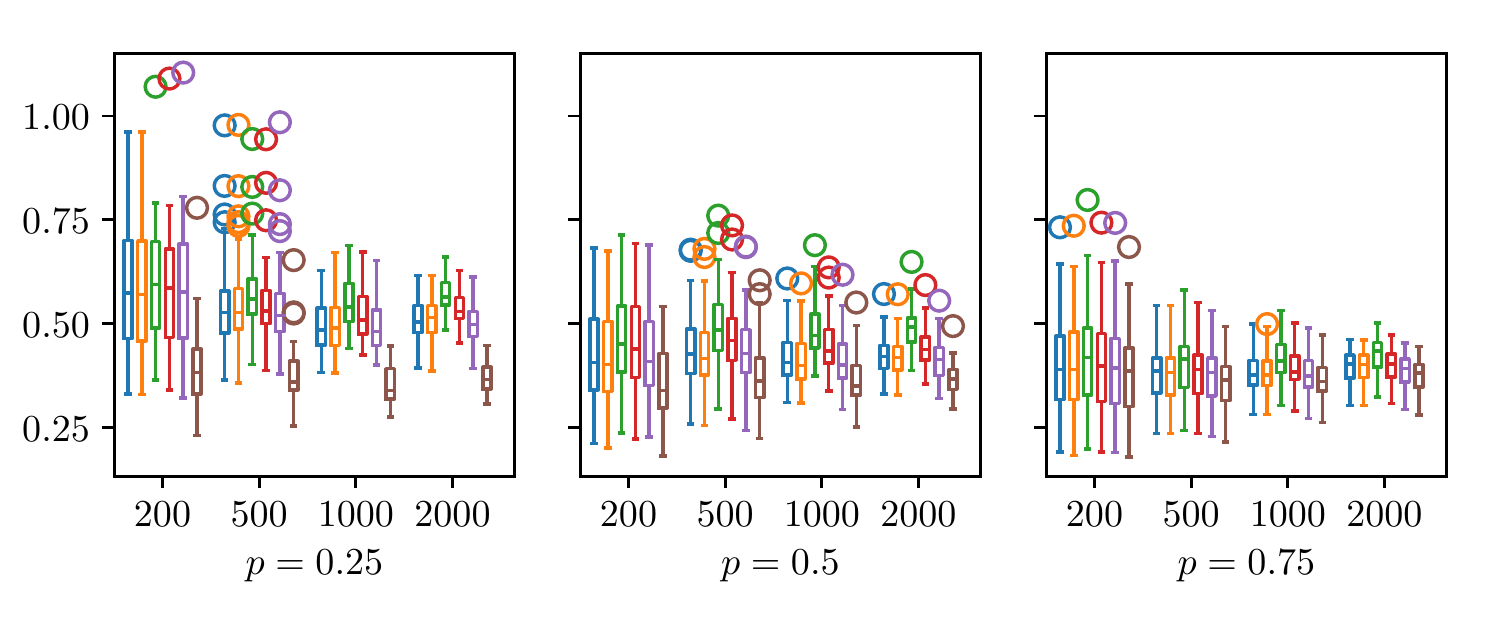}
        \caption{SVR}
  \end{subfigure}
  \begin{subfigure}[b]{0.9\textwidth}
      \centering
        \includegraphics[width=\textwidth]{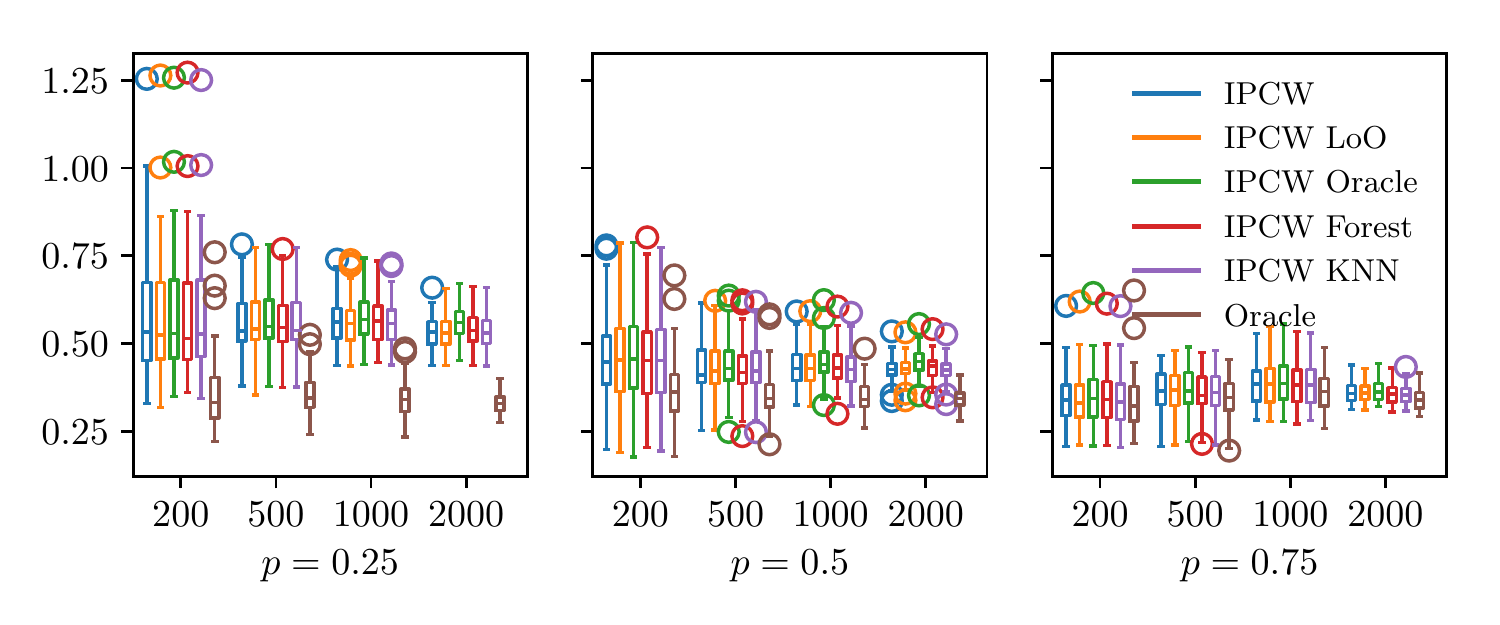}
        \caption{Random Forest}
  \end{subfigure}
  \caption{$L^2$ error of the different IPCW estimators for $d=2$}
\end{figure}

\begin{figure}[htbp]
  \centering
  \begin{subfigure}[b]{0.9\textwidth}
      \centering
        \includegraphics[width=\textwidth]{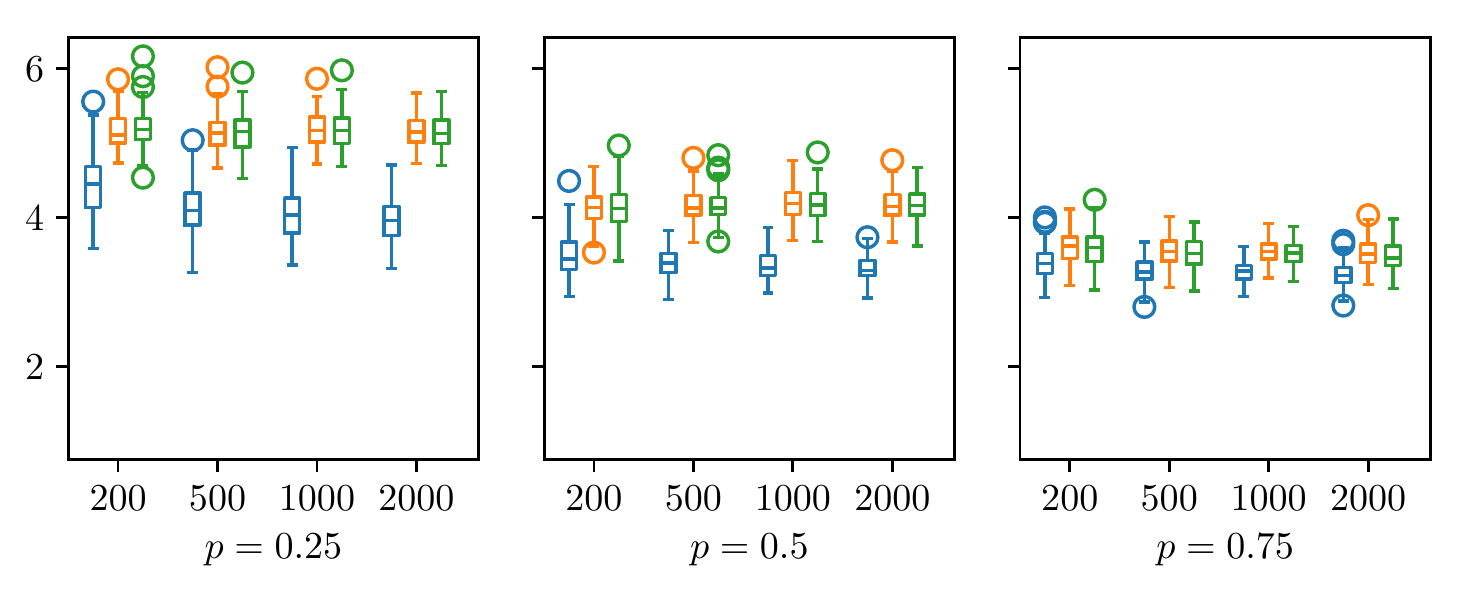}
        \caption{Linear Regression}
  \end{subfigure}
  \begin{subfigure}[b]{0.9\textwidth}
      \centering
        \includegraphics[width=\textwidth]{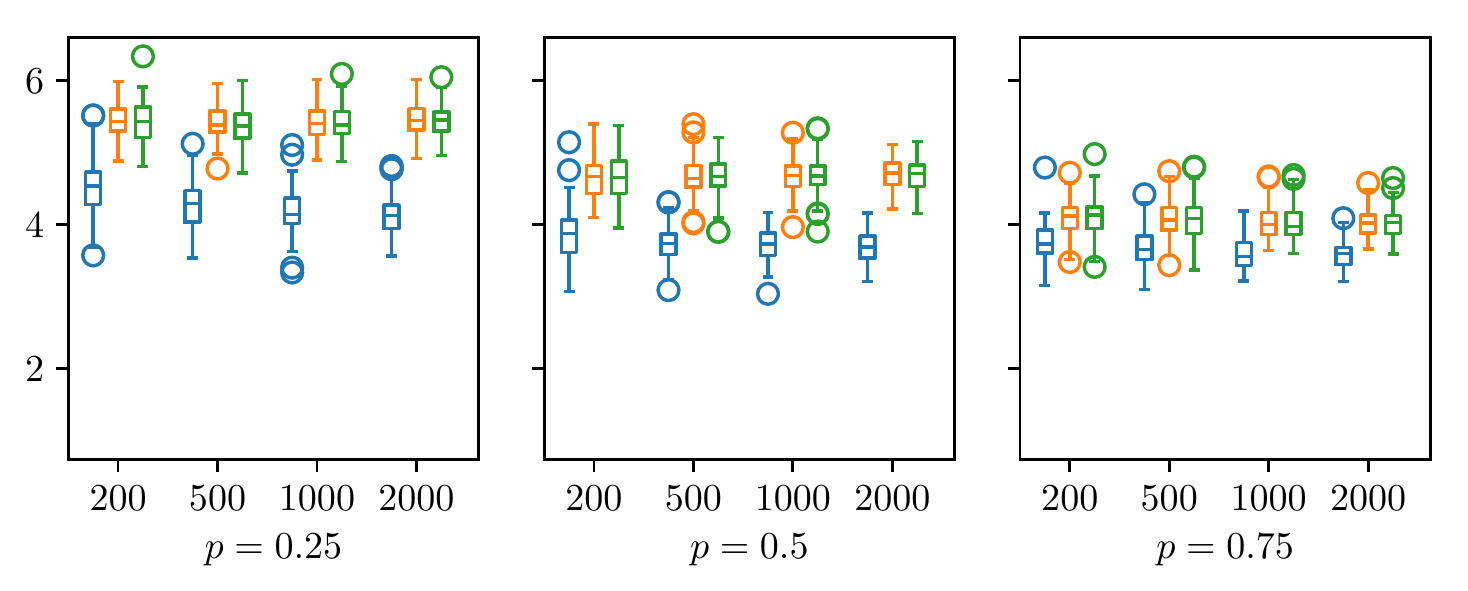}
        \caption{SVR}
  \end{subfigure}
  \begin{subfigure}[b]{0.9\textwidth}
      \centering
        \includegraphics[width=\textwidth]{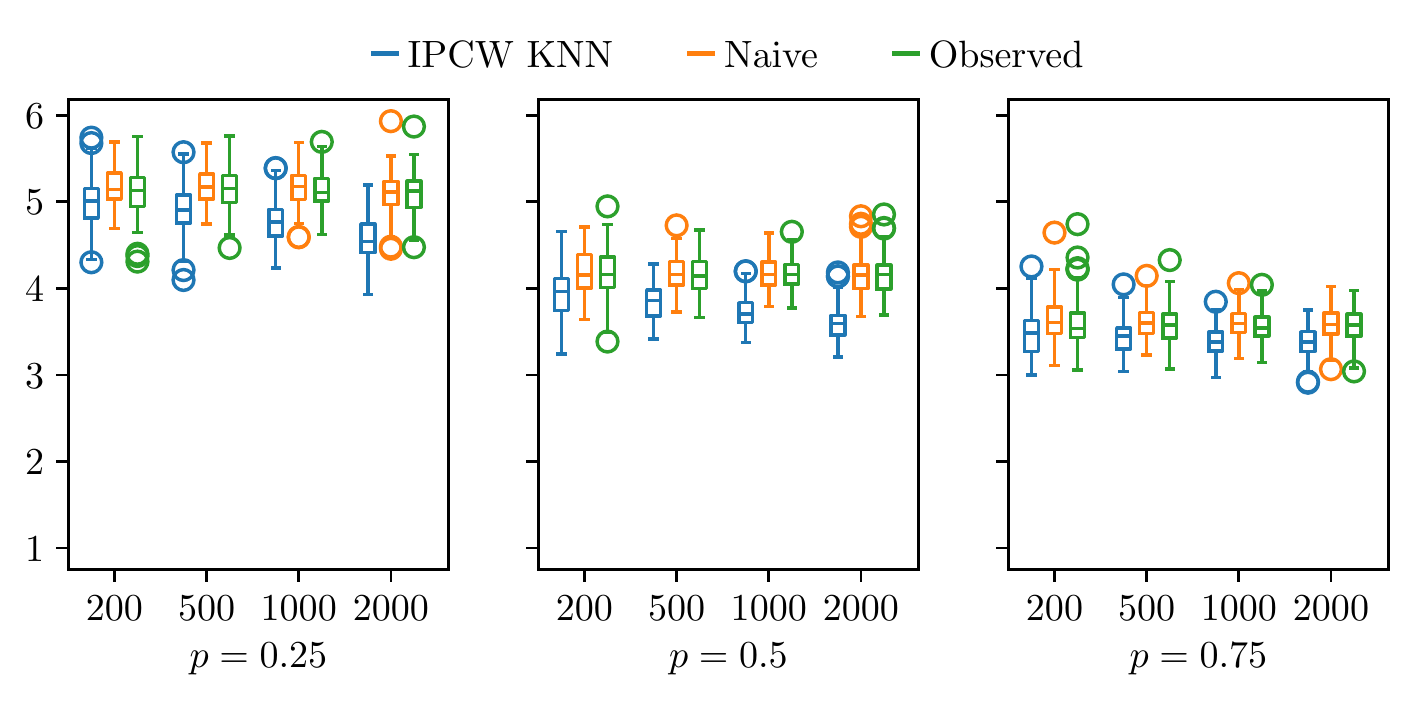}
        \caption{Random Forest}
  \end{subfigure}
  \caption{$L^2$ error of the IPCW estimator compared to naive methods for $d=2$}
\end{figure}

\begin{figure}[htbp]
  \centering
  \begin{subfigure}[b]{0.9\textwidth}
      \centering
        \includegraphics[width=\textwidth]{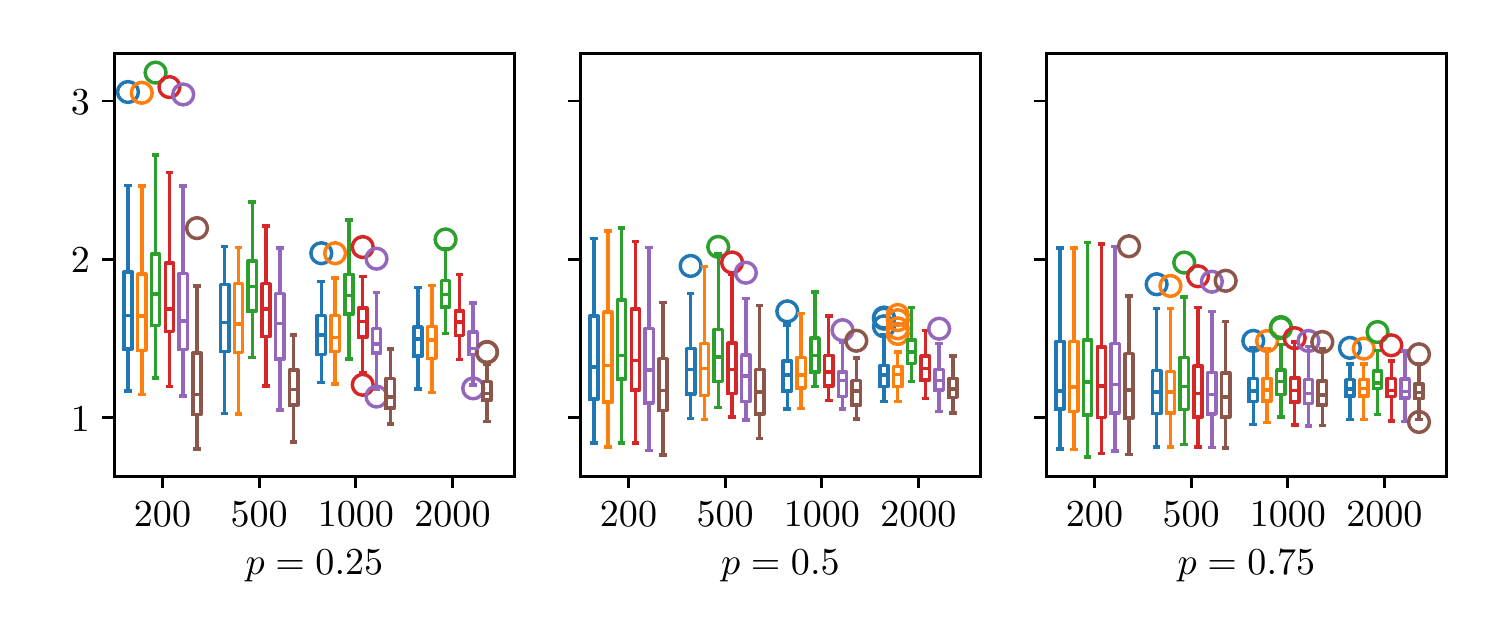}
        \caption{Linear Regression}
  \end{subfigure}
  \begin{subfigure}[b]{0.9\textwidth}
      \centering
        \includegraphics[width=\textwidth]{figs/error_d4_svr}
        \caption{SVR}
  \end{subfigure}
  \begin{subfigure}[b]{0.9\textwidth}
      \centering
        \includegraphics[width=\textwidth]{figs/error_d4_rf}
        \caption{Random Forest}
  \end{subfigure}
  \caption{$L^2$ error of the different IPCW estimators for $d=4$}
\end{figure}

\begin{figure}[htbp]
  \centering
  \begin{subfigure}[b]{0.9\textwidth}
      \centering
        \includegraphics[width=\textwidth]{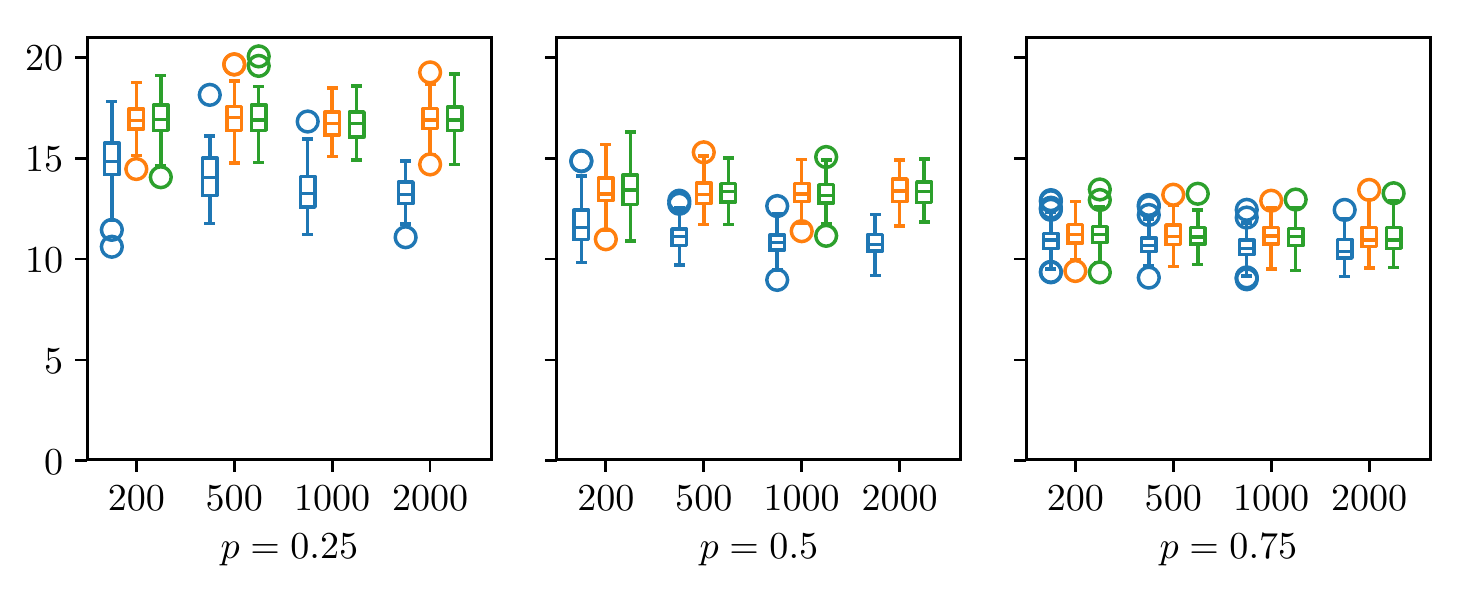}
        \caption{Linear Regression}
  \end{subfigure}
  \begin{subfigure}[b]{0.9\textwidth}
      \centering
        \includegraphics[width=\textwidth]{figs/error_vs_d4_svr}
        \caption{SVR}
  \end{subfigure}
  \begin{subfigure}[b]{0.9\textwidth}
      \centering
        \includegraphics[width=\textwidth]{figs/error_vs_d4_rf}
        \caption{Random Forest}
  \end{subfigure}
  \caption{$L^2$ error of the IPCW estimator compared to naive methods for $d=4$}
\end{figure}

\begin{figure}[htbp]
  \centering
  \begin{subfigure}[b]{0.9\textwidth}
      \centering
        \includegraphics[width=\textwidth]{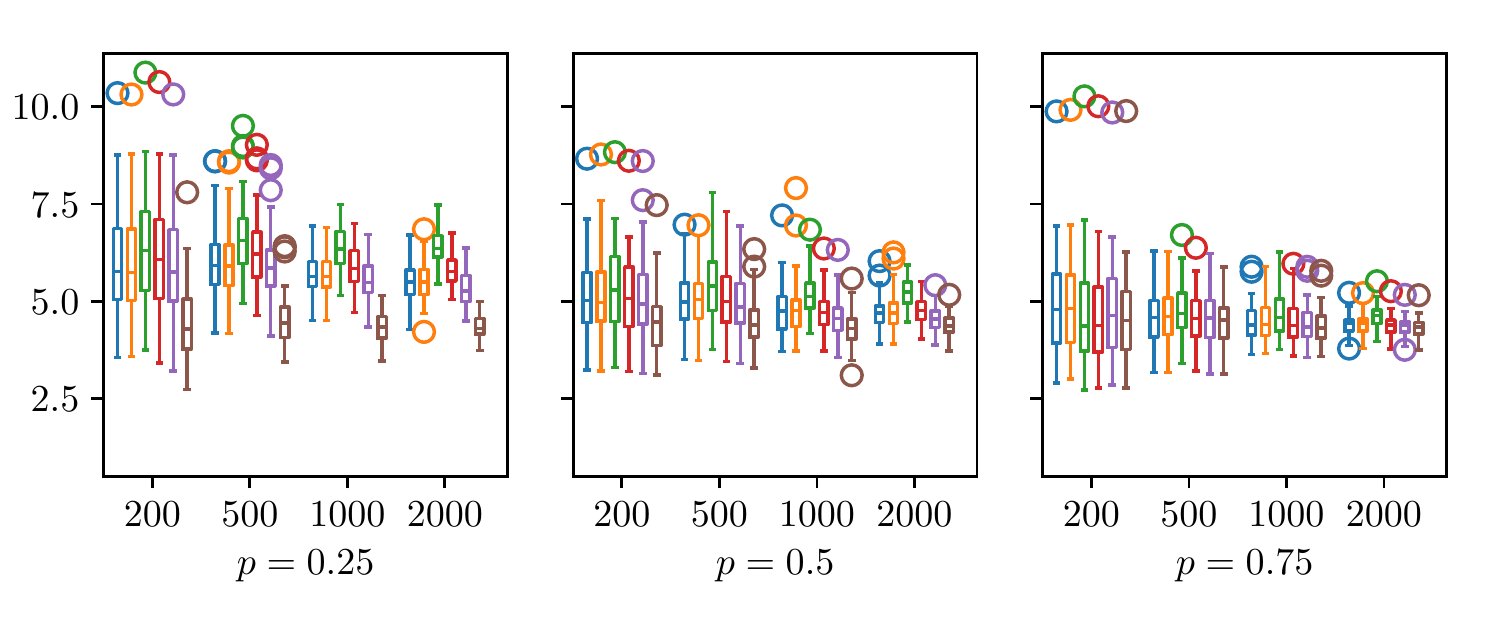}
        \caption{Linear Regression}
  \end{subfigure}
  \begin{subfigure}[b]{0.9\textwidth}
      \centering
        \includegraphics[width=\textwidth]{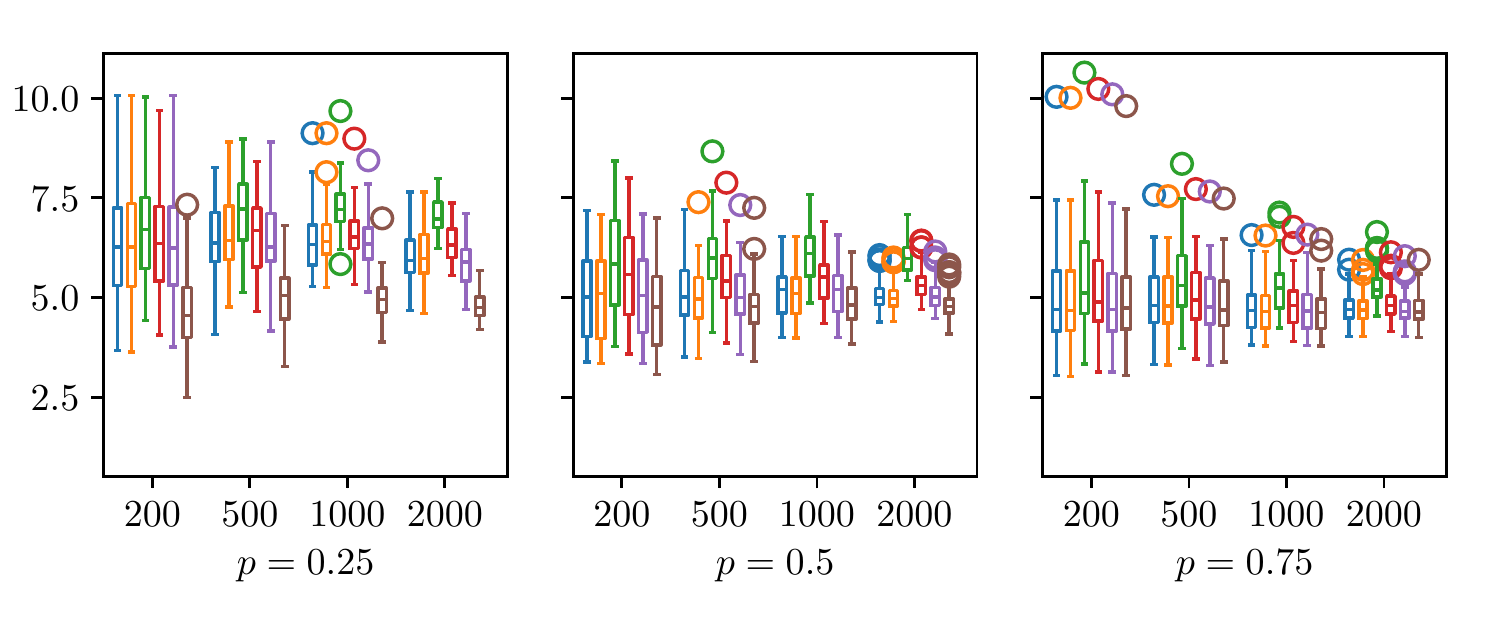}
        \caption{SVR}
  \end{subfigure}
  \begin{subfigure}[b]{0.9\textwidth}
      \centering
        \includegraphics[width=\textwidth]{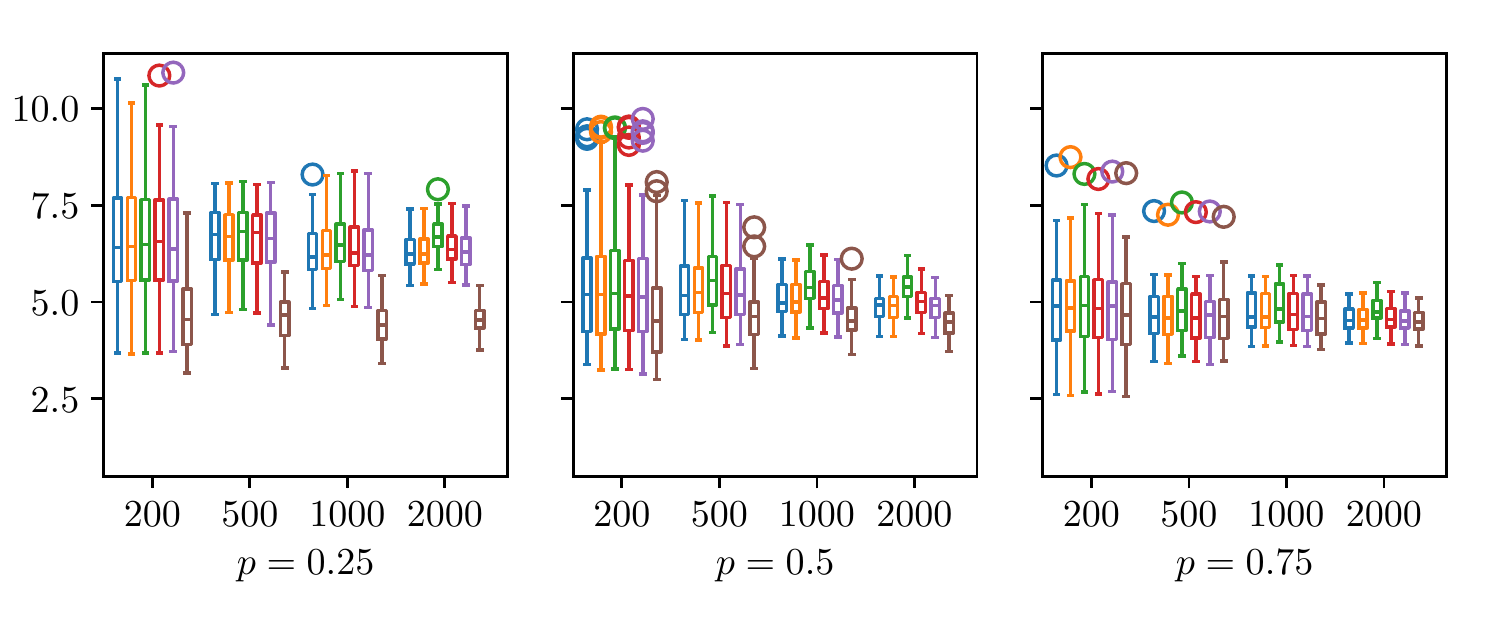}
        \caption{Random Forest}
  \end{subfigure}
  \caption{$L^2$ error of the different IPCW estimators for $d=8$}
\end{figure}

\begin{figure}[htbp]
  \centering
  \begin{subfigure}[b]{0.9\textwidth}
      \centering
        \includegraphics[width=\textwidth]{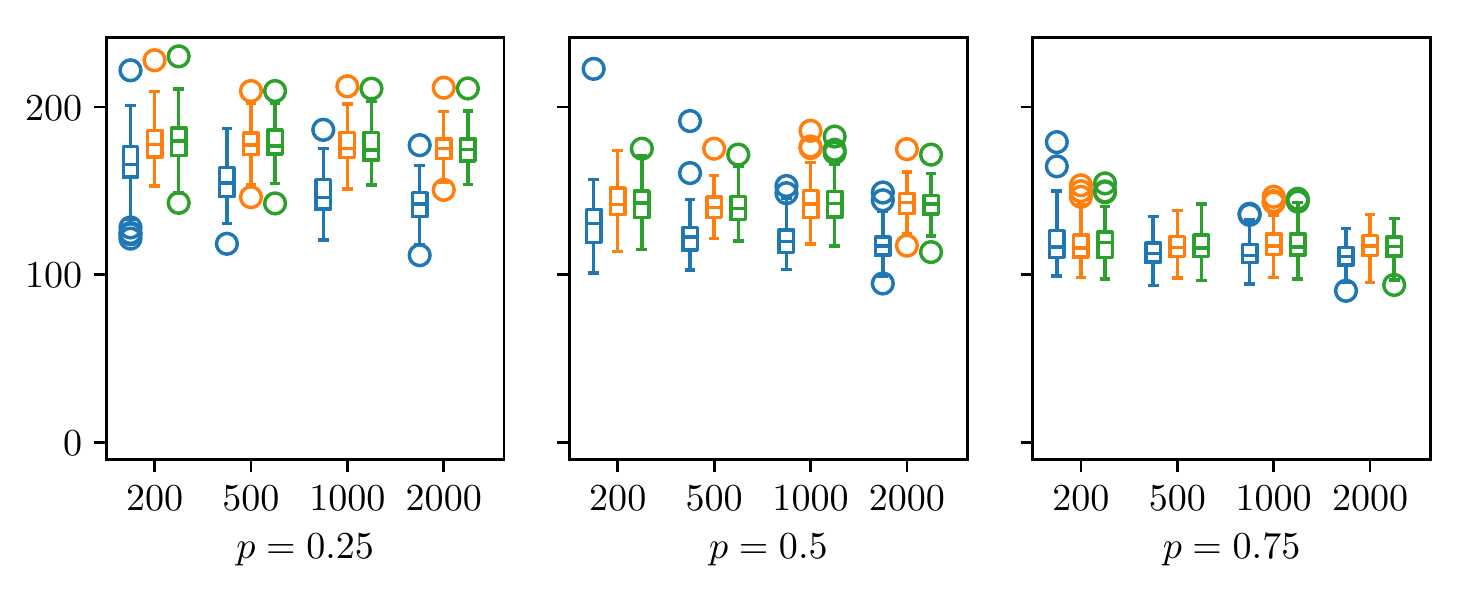}
        \caption{Linear Regression}
  \end{subfigure}
  \begin{subfigure}[b]{0.9\textwidth}
      \centering
        \includegraphics[width=\textwidth]{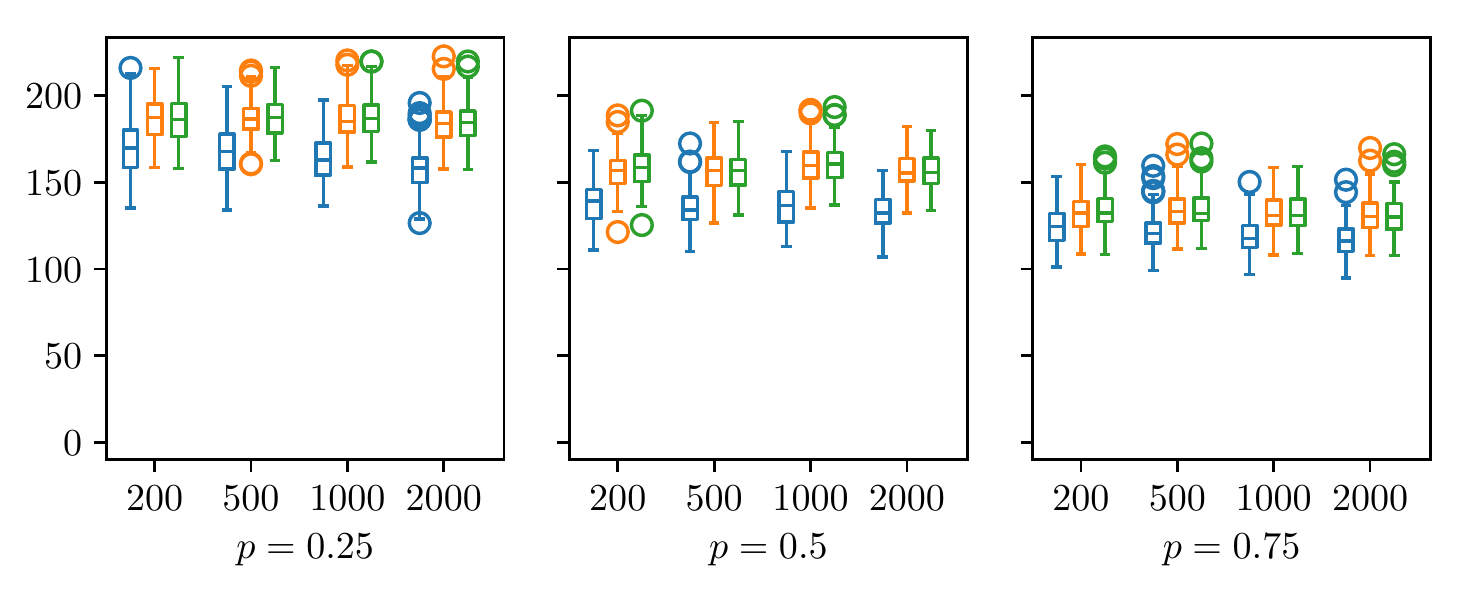}
        \caption{SVR}
  \end{subfigure}
  \begin{subfigure}[b]{0.9\textwidth}
      \centering
        \includegraphics[width=\textwidth]{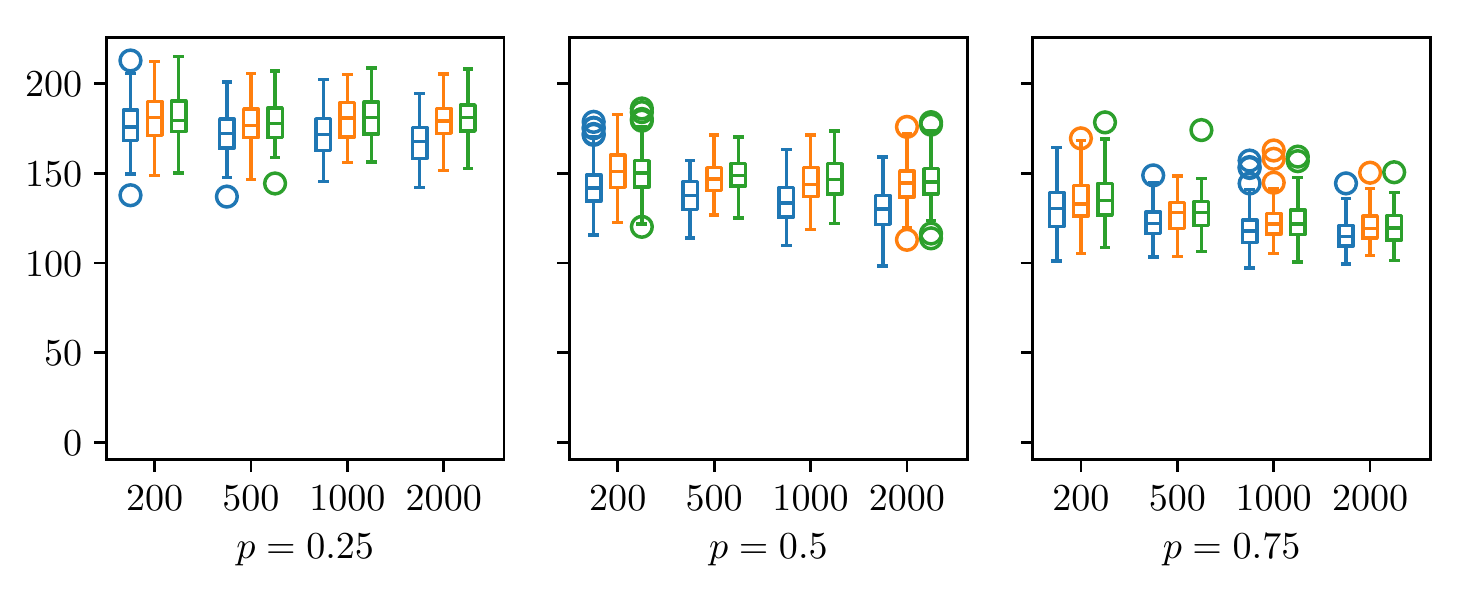}
        \caption{Random Forest}
  \end{subfigure}
  \caption{$L^2$ error of the IPCW estimator compared to naive methods for $d=8$}
\end{figure}

\begin{figure}[htbp]
      \centering
      \begin{subfigure}[b]{0.9\textwidth}
      \centering
            \includegraphics[width=0.9\textwidth]{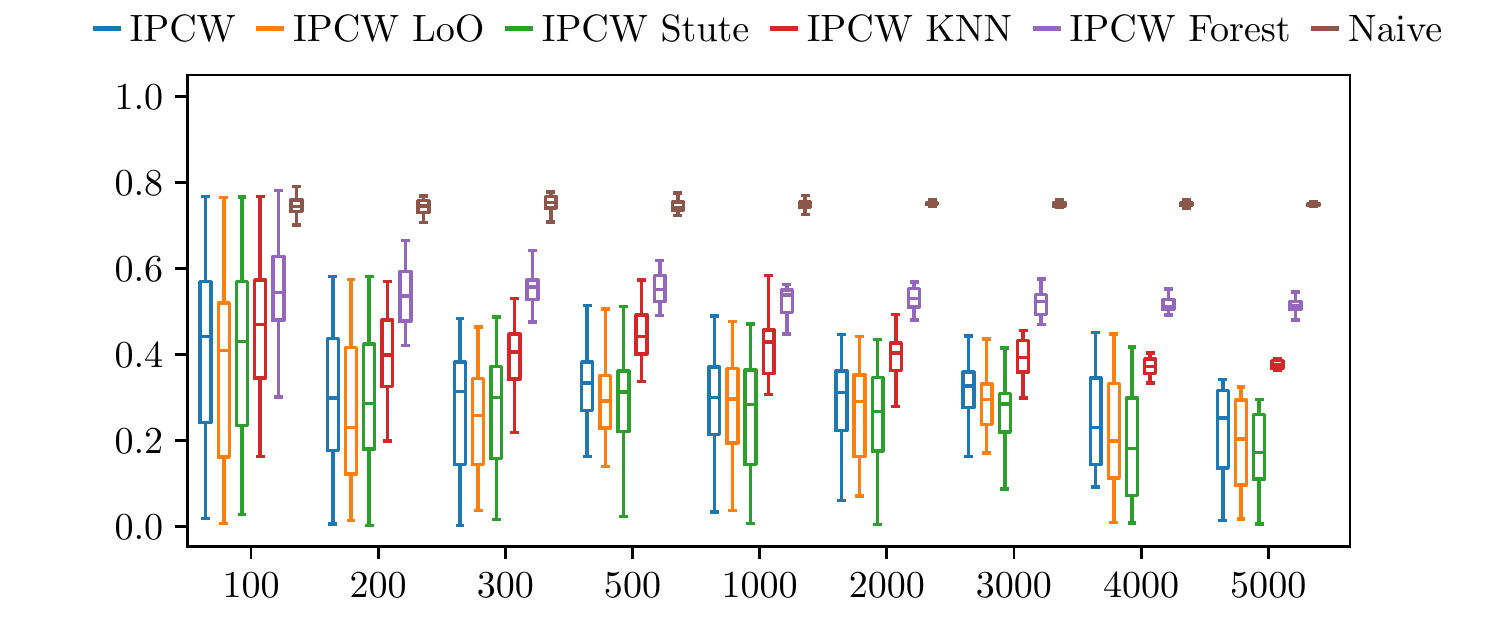}
            \caption{$d = 2$}
      \end{subfigure}
      \begin{subfigure}[b]{0.9\textwidth}
      \centering
            \includegraphics[width=0.9\textwidth]{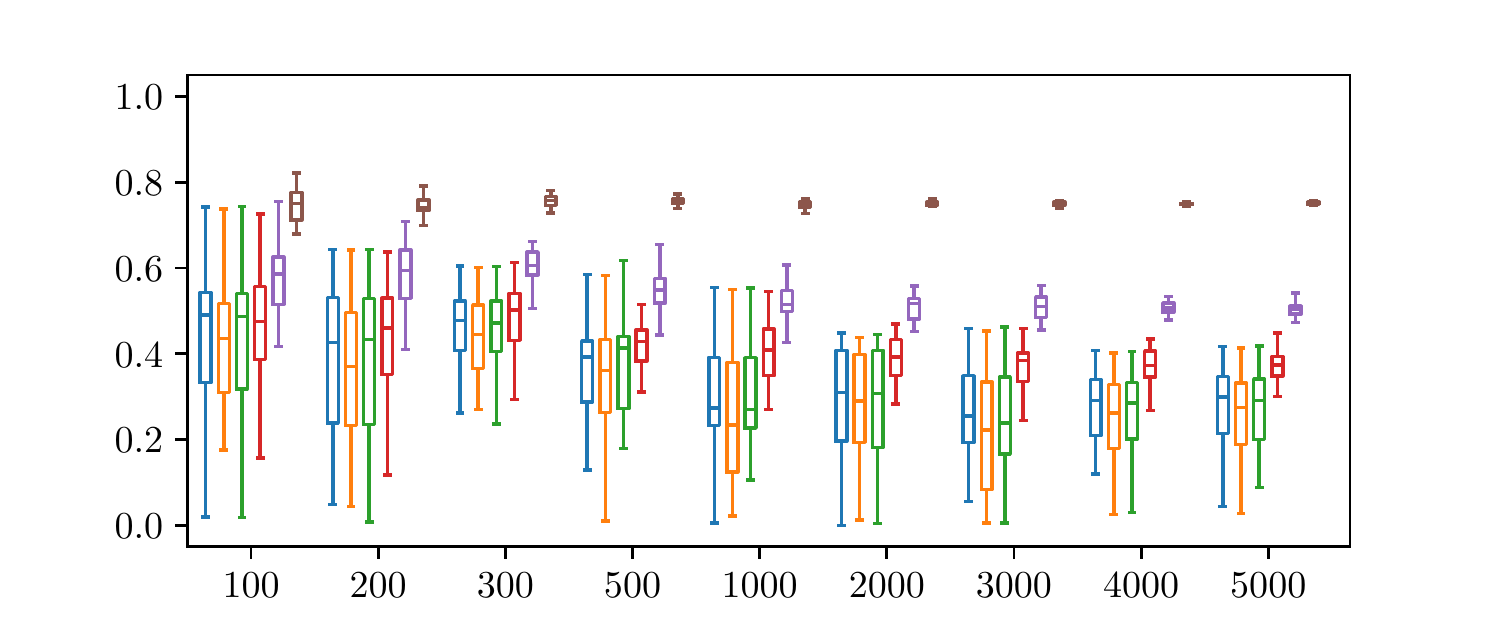}
            \caption{$d = 4$}
      \end{subfigure}
      \begin{subfigure}[b]{0.9\textwidth}
      \centering
            \includegraphics[width=0.9\textwidth]{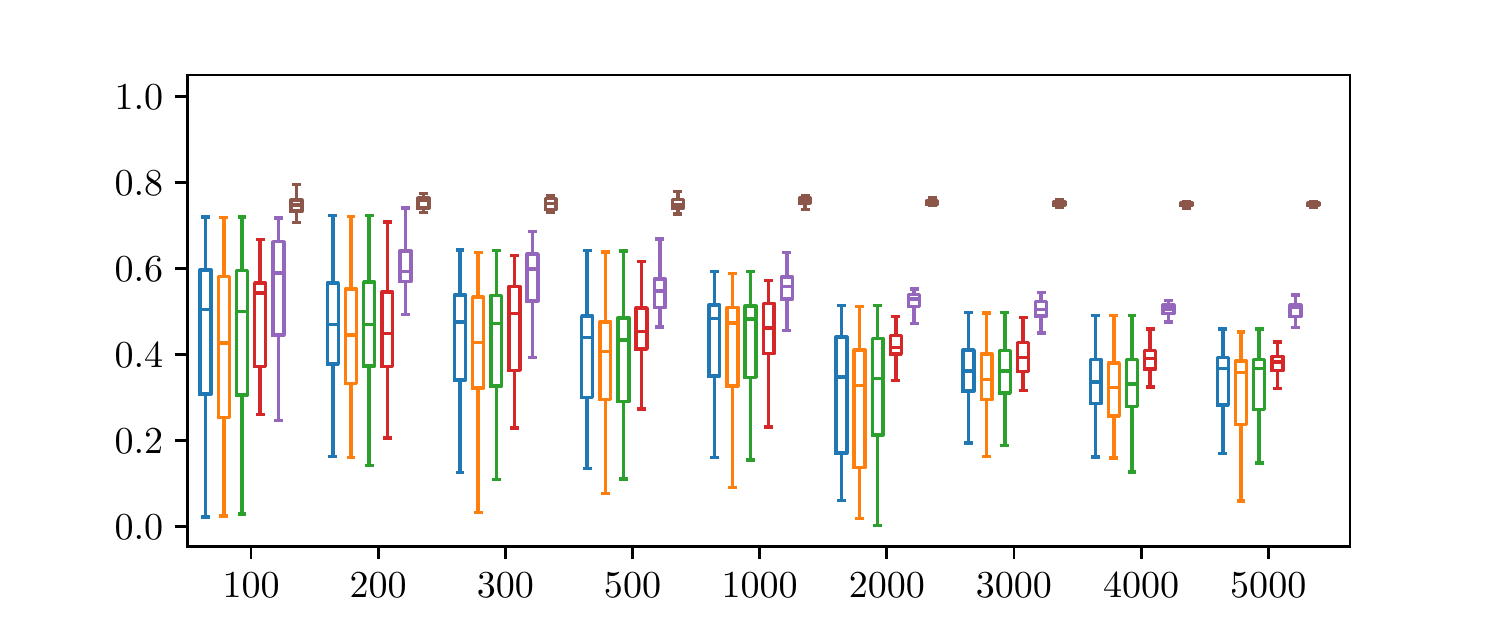}
            \caption{$d = 8$}
      \end{subfigure}
      \caption{Estimated $L^2$ error of the IPCW estimator compared to naive methods for $p = 0.25$}
\end{figure}

\clearpage
\bibliography{censor_erm}

\end{document}